\documentclass[10pt]{article}
\usepackage[preprint]{Style/tmlr}

\usepackage{Style/style}
\usepackage{Style/mdpmacros}
\usepackage[english]{babel}
\usepackage{microtype} 
\usepackage{array}

\def\Pr{\text{\upshape\bfseries\sffamily P}}
\def\EE{\text{\upshape\bfseries\sffamily E}}
\def\NN{\text{\upshape\bfseries\sffamily N}}
\def\PP{\text{\upshape\bfseries\sffamily P}}
\def\RR{\text{\upshape\bfseries\sffamily R}}

\usepackage{hyperref}
\usepackage{url}
\usepackage{subfiles}
\usepackage{multicol}
\usepackage{enumitem}

\title{Learning in Markovian bandits with non-observable states and constrained decision epochs}

\author{%
    \name Thomas Hira\thanks{Equal contribution.} \thanks{IRIT, Université de Toulouse, CNRS, Toulouse INP, Toulouse, France}
    \email thomas.hira@irit.fr
    \\
    \name Victor Boone\footnotemark[1] \footnotemark[2]
    \email victor.boone@irit.fr
    \\
    \name Urtzi Ayesta\footnotemark[2] \thanks{Ikerbasque-UPV/EHU, University of the Basque Country, Bilbao, Spain}
    \email urtzi.ayesta@irit.fr
    \\
    \name Ina Maria Verloop\footnotemark[2]
    \email ina-maria.verloop@irit.fr
}

\date{
    March, 2026
}

\def\month{MM}  
\def\year{YYYY} 
\def\openreview{\url{https://openreview.net/forum?id=XXXX}} 

\begin{document}
\allowdisplaybreaks

\maketitle

\begin{abstract}

    This paper studies the problem of regret minimization in Markovian bandits with \emph{non-observable states} and possibly \emph{constrained} decision epochs. 
    The focus is restricted to a ``pure'' regret benchmark, that compares the performance of the learning algorithm to the best \emph{pure policy} which---akin to optimal policies of stochastic bandits---picks the optimal arm from start to finish without ever switching.
    We introduce a generalization of rested Markovian bandits, \emph{self-degrading Markovian bandits}, for which pure policies are always asymptotically optimal.
    We show that without prior knowledge on the underlying bandit, the regret of algorithms that switch arms rarely necessarily scales super-logarithmically for every bandit, i.e., as $\omega(\log(T))$, where $T$ is the learning horizon.
    Despite the unreachability of the logarithmic regime, we design \algname{}, an optimistic algorithm inspired by \texttt{UCB}, of which the regret is nearly logarithmic. 
    Lastly, we show that given prior knowledge on the Markovian bandit in the form of a bound on the bias functions of its arm, a proper instantiation of \algname{} achieves $\OH(\log(T))$ regret.
    We further show that this prior knowledge allows for a $\OH(\sqrt{T \log(T)})$ worst-case regret bound for \algname{}.
    Notably, our regret bounds do not depend on the number of states of the underlying Markov chains.
    Our findings suggest that the non-observability of states is a mild inconvenience in self-degrading Markovian bandits. 
\end{abstract}

\section{Introduction}
    
        \ifSubfilesClassLoaded{
            \allowdisplaybreaks
            \onecolumn
        }{}

        Multi-armed bandits (MABs) provide a canonical framework for sequential decision-making under uncertainty, balancing exploration and exploitation in environments with unknown reward distributions \citep{lattimore_bandit_2020}. 
        In standard stochastic MABs, the controller pulls arms sequentially and receives rewards generated by the reward probability distributions associated to the arms it pulls. 
        In this work, we are interested in a variant of MABs in which 
        (1) arms are seen as third-party services, that hold a state that we cannot observe; and 
        (2) the controller may be locked to its decision until the reception of a specific signal. 
        This model is motivated by communication networks, where arms are communication channels through which the controller wants to send data, similarly to \cite{liu2010dynamic,hira2025optimal}, although our model can also be interpreted in terms of medical trials or machine maintenance problems as well. 

        For our problem, the relevant Reinforcement Learning (RL) framework is the one of \emph{Markovian bandits} \citep{gittins2011multi}, that is, bandits where the reward mechanism of every arm is determined by a Markov chain with an evolving state.
        We supplement the Markovian bandit structure with 
        (1) the \emph{non-observability of states} and
        (2) \emph{constrained decision epochs}, where the constraints come from the Markovian bandit itself.
        The assumption (1) means that the only information available to the controller are the rewards it obtained through the interaction;
        the other assumption (2) enforces that once an arm is chosen, the controller is to remain committed to it until a special reward is earned.
        Notably, we assume that a single arm is activated at a time---this meets the framework of standard stochastic bandits, although in opposition to the asymptotic theory of Markovian bandits where the controller pulls a \emph{positive proportion} of arms \citep{gittins2011multi}.
        In the end, the resulting setting is especially favorable to the design of learning algorithms inspired from the setting of \emph{rested} Markovian bandits \citep{liu2011logarithmic, liu2012learning, tekin_online_2011, tekin2012online}.
    
        In this paper, we present a learning algorithm for our problem, \algname{}, discuss the importance of the availability of prior information on the properties of the Markovian bandit, and provide regret guarantees for \algname{} depending on the degree of strength of that prior information.
        By ``\emph{regret}'', we mean the regret as defined in stochastic bandits \citep{lattimore_bandit_2020}: the difference between the reward accumulated by the controller (that is learning the best arms) and what the optimal policy can do (that has access to a complete description of the bandit before-hand).

        \subsection{Related work}

        Our work is at the intersection of several literature: 
        (1) it is at the interplay of \emph{rested} and \emph{restless} Markovian bandits;
        (2) it incorporates considerations of \emph{partial observability}---although the latter are pretty common already in the literature on restless Markovian bandits;
        and
        (3) it belongs to the family of RL problems with constrained decision epochs.
        These three literature are discussed in separated paragraphs below. 

        \paragraph{Markovian bandits (MBs).}
        The literature to which this work is the closest to are Markovian bandits, as they stand as the foundation of our learning problem.
        The literature of RL on Markovian bandits can be organized along two axes: the \emph{observation model}, ranging from full states observation to reward-only feedback; and the \emph{regret benchmark}, ranging from the best pure policy (that plays the same arm from start to finish) to the best general policy (that may switch from one arm to another in a complex fashion). 
        The regret benchmark is directly related to the ``type'' of Markovian bandit one is looking at; pure policies are optimal for \emph{rested} bandits (for which non-activated arms stay idle), while they are sub-optimal for \emph{restless} bandits (for which the states of non-activated arms may keep updating).
        In the sequel, we speak of \emph{pure regret} when the performance of the controller is compared to those of \emph{pure} policies, and we speak of \emph{general regret} otherwise.

        \begin{itemize}
            \item \emph{Fully observable MBs.}
                In the fully observable setting, several works have derived logarithmic upper-bounds on the pure regret by extending the principles of \texttt{UCB} \citep{auer_using_2002}, by relying on regeneration arguments (\cite{tekin_online_2011, tekin2012online}) or forced exploration mechanisms \citep{liu2011logarithmic, liu2012learning}. 
                The algorithms displayed by those works are arguably the closest to ours (particularly \texttt{RCA} of \citep{tekin_online_2011,tekin2012online}), but unlike us, they do rely on the observation of states.
                In the fully observable setting, other works worth mentioning include \citep{dai2011non,dai2014online}, that compare to the general optimal policy for a very specific family of Markovian bandits.
                We intend to work with general (although finite) Markovian bandits in this paper.

            \item \emph{Partially observable MBs.}
                A second line of work assumes that only the state of the currently activated arm is observed. 
                For this observation model, \cite{ortner_regret_2012} compares against the best general policy and shows that regret is, in general, $\Omega(\sqrt{T})$. 
                Then, they design \texttt{\upshape colored-UCRL2}, that matches this lower bound up to logarithmic factors.
                On the same observation setting, \cite{jung_regret_2019} develop an algorithm based on Thompson sampling that achieves $\smash{\widetilde{\OH}(\sqrt{T})}$ Bayesian regret in the presence of periodic resets. 
                This was later extended by \cite{jung_thompson_2019}, removing the periodic resets' mechanism.   
                These two works highlight an important aspect of this observation model: when competing against the best general policy and if only the state of the currently played arm is observed, one cannot do better than $\sqrt{T}$ general regret guarantees.
                The pure regret benchmark, although weaker, allows for regret of order $\log(T)$, even in more frugal observation settings. 

            \item \emph{``Non-observable'' MBs, or, MBs with ``full bandit feedback''.}
                Assuming that only the rewards are observed, sometimes referred to as \emph{full bandit feedback}, is the most severe observation model that one can take. 
                It is adopted by \cite{jiang2023online} in particular, who consider restless bandits with reward-only observations and a single activated arm, similarly to this work.
                Their algorithm achieves a Bayesian regret upper bound of $\smash{\widetilde \OH (\sqrt{T})}$ with respect to the best \emph{general} policy. 
                Their algorithm is based on Thompson sampling with episodic explore-then-commit (\texttt{\upshape TSEETC}).
                However, in opposition to our work, they do not have constraints on decision epochs. 
        \end{itemize}

        \paragraph{Constrained decision epochs.}
        Decision epochs are more common in the setting of multi-armed bandits.
        For instance, \cite{komiyama2013multi} propose a framework where decision epochs are predetermined, while \cite{NEURIPS2019_88fee042} propose one where playing an arm makes it unavailable for a fixed number of time slots thereafter. 
        That second work is close in spirit to \emph{batched bandits}, where the decision maker has an upper bound on the number of times it is allowed to change arms.
        Batched bandits have a growing rich literature \citep{gao2019batched,esfandiari2021regret,cesa2013online,dekel2014bandits,pmlr-v139-rouyer21a}.
        In these works however, the constraints are \emph{exogenous} to the bandit.
        This is in opposition to the setting that we design in this paper, where the constraints emerge from the Markovian structure of arms directly, making the constraints \emph{endogenous}.
        In particular, we will see that these constraints eventually disappear in the asymptotic regret bounds that we derive for our algorithm.

        \subsection{Contributions and outline of the paper}
        We first motivate the choice of \emph{pure policies}, as a comparison point for our regret benchmark, that we call \emph{pure regret} (simplified to \emph{regret} in the text). 
        In particular, we provide a general class of restless Markovian bandits in which such policies are asymptotically optimal: \emph{self-degrading} Markovian bandits.
        This class generalizes rested Markovian bandits, but further includes non-trivial restless ones, ranging from the scheduling of non-preemptive Gilbert-Elliott channels from \cite{hira2025optimal} to bandits with infinitely many states inspired by the framework of \cite{nerlove_optimal_1962}.  
        In \Cref{section_learning_partial_information}, we show that in general, the expected regret is $\omega(\log(T))$ where $T$ is the number of learning steps (\Cref{theorem_lowerbound_dependent_universal}).\footnote{
            The notation $f = \omega(g)$ means that $f$ grows strictly faster than $g$, i.e., that $g = \oh(f)$.
        }
        In particular, the well-known regret guarantees of $\OH(\log(T))$ from stochastic bandits \citep{lattimore_bandit_2020} cannot be achieved.
        We then design \texttt{UCB} for \texttt{N}on-\texttt{O}bservable \texttt{M}arkovian bandits (\algname, \Cref{algo:UCB-POMB}), that approaches the idealistic $\log(T)$-regret guarantees arbitrarily closely (\Cref{theorem_upperbound_f_diverging}). 
        Finally, in \Cref{section_bounded_bias}, we discuss a structural assumption under which regret guarantees can be improved: If the biases of the Markov reward processes underlying every arm are uniformly bounded, \algname{} achieves regret $\OH(\log(T))$ with the proper tuning (\Cref{theorem_upperbound_log}).
        We further provide a worst-case regret bound $\OH(\sqrt{T \log(T)})$ under this assumption (\Cref{theorem_worst_case_sqrt}).
        At last, \Cref{sec:confidence_tuning} addresses the tuning of \algname{} with respect to the confidence parameters. 
        An important take-away from our results is that constrained decision epochs do not harm the achievable regret guarantees, and that the non-observability of states does not require involved techniques from the POMDP literature as long as one sticks to the pure regret benchmark, or to self-degrading restless bandits.
    
    \ifSubfilesClassLoaded{
        
        \bibliographystyle{plainnat}
        \bibliography{biblio}
    }{}
    
\end{document}

\section{Learning non-observable Markovian bandits}
\label{sec:POMB}

We begin this paper by describing the learning problem and introducing notations.
Our learning problem is a class of Markovian bandits (\Cref{section_model_description}) in which states are non-observable and decision epochs are possibly constrained. 
In \Cref{section_pure_policies}, we motivate \emph{pure policies} as a benchmark by identifying subclasses of restless bandits in which they are actually optimal. 
Lastly, in \Cref{section_rarely_switching}, we motivate the focus on learning algorithms that switch arms rarely.

\subsection{Non-observable Markovian bandits with decision epochs}
\label{section_model_description}

We consider a Markovian bandit with arms $\arms$, where every arm $\arm \in \arms$ is modeled by a Markov decision process $\model_\arm \equiv (\states_\arm, \braces{0, 1}, \rewardDistribution_\arm, \kernel_\arm^0, \kernel_\arm^1, \initstate_\arm)$ with \emph{activated/non-activated} actions, states $\states_\arm$, a fixed initial state $\initstate_\arm$, a reward function agnostic to action choices $\rewardDistribution_\arm : \states_\arm \to \probabilities([0, 1])$, and two transition kernels $\kernel_\arm^0, \kernel_\arm^1 : \states_\arm \to \probabilities(\states_\arm)$ that govern a state's evolution depending on its activation or non-activation. 
Mean rewards are denoted by $\reward_\arm (\state_\arm) := \integral x\,\dd \rewardDistribution_\arm (\state_\arm) (x)$.
The resulting system is obtained by bundling arms in parallel as $\model \equiv (\model_\arm)_{\arm \in \arms}$.

The interaction between the system and the controller goes as follows.
At time $t \ge 1$, the controller selects a \emph{single} arm $\Arm(t) \in \arms$ to activate, and the system's state evolves as
\begin{equation*}
    \State_{\arm}(t+1) 
    \sim 
    \begin{cases}
        \kernel_{\arm}^1(\State_{\arm} (t)) & \text{if $\Arm(t) = \arm$;}
        \\
        \kernel_{\arm}^0(\State_{\arm} (t)) & \text{if $\Arm(t) \ne \arm$;}
    \end{cases}
    \; .
\end{equation*}
Moreover, one receives an immediate reward $\Reward(t+1) \sim \rewardDistribution_{\arm}(\State_{\arm}(t))$ for $\arm = \Arm(t)$,\footnote{
    Because the activated arm is the only one to produce a reward, the assumption that rewards are agnostic to actions can be made without loss of generality. 
    Equivalently, we could have assumed non-activated arms produce no reward. 
} independently of the state's generation. 
By convention, $\Reward(1) = 0$.
All arms evolve independently conditionally on the current state and activated arm. 
In the sequel,  the \emph{number of pulls} of arm $\arm$ will be denoted $\visits_\arm (T) := \sum_{t=1}^{T-1} \indicator{\Arm(t) = \arm}$.
To simplify the exposition, we further assume that transition kernels under activations $\kernel_\arm^1$ are unichain --- this assumption is fairly standard in restless Markovian bandits. 

\begin{assumption}
    For every $\arm \in \arms$, $\kernel_\arm^1$ is \emph{unichain}, i.e., it has a unique recurrent component of states. 
\end{assumption}

%
 
\subsubsection{Constrained decision epochs}

In opposition to standard restless Markovian bandits, we are interested in systems in which the controller is constrained to switch arms \emph{only at decision epochs}, induced by the stream of rewards.
More specifically, epochs consist of an increasing sequence $(\tau_i)_{i \ge 1}$ of stopping times under the filtration induced by the system's rewards $(\Reward(t))_{t \ge 1}$.
In-between two decision epochs, the controller is not allowed to switch arms, that is,  
\begin{equation}
\label{equation_lockin_constraint}
    \forall i,
    \forall t \in \braces{\tau_i, \ldots, \tau_{i+1}-1},
    \quad
    \Arm(t) = \Arm(\tau_i). 
\end{equation}
Throughout, we make two assumptions on the structure of decision epochs.
In \Cref{assumption_strong_decision_epochs} below, $H(t) = (S(1), A(1), R(2), S(2), A(2), R(3), \ldots, S(t))$ is the \emph{history of play}, i.e., the vector of every state, every played arm and every emitted reward up to time $t$. 

\begin{assumption}
\label{assumtion_activates_at_service}
    There exists a measurable set $\activationset \subseteq [0, 1]$ such that
    $
        \tau_{i+1} 
        =
        \inf \braces*{
            t > \tau_i
            :
            \Reward(t) \in \activationset
        }.
    $
\end{assumption}

\begin{assumption}
\label{assumption_strong_decision_epochs}
    There exists $\strongdelayconstant \in \RR_+$ such that, for all $t \ge 1$,
    $
        \EE \brackets*{
            \inf \braces*{
                \tau_i - t
                :
                \tau_i \ge t
            }
            |
            \History(t)
        }
        \le
        \strongdelayconstant
        .
    $
\end{assumption}

\Cref{assumtion_activates_at_service} means that decision epochs happen when the immediate reward falls within a special set.
For instance, if $\activationset = [0, 1]$, then the decision epochs are trivial with $\tau_i = i$ for all $i \ge 1$, so that \Cref{assumtion_activates_at_service} captures systems without constraints on decision epochs in particular.
\Cref{assumption_strong_decision_epochs} claims that the expected time before the next decision epoch is bounded uniformly over time conditionally on the history.
This assumption holds, for example, if, for every arm $\arm \in \arms$, there exists a state $\state_\arm \in \states_\arm$ that is recurrent under $\kernel_\arm^1$ and such that $\rewardDistribution(\state_\arm)(\activationset) > 0$.
Under that scenario, $C_\tau$ can typically be bounded in terms of the diameter of $\kernel_\arm^1$ (expected time required to travel from a state to another under $\kernel_\arm^1$) and the probability $\rewardDistribution(\state_\arm)(\activationset)$ to emit the right reward from favorable states. 

\subsubsection{Full bandit feedback: non-observability of states}
While the number of arms is known to the controller, the latter is unaware of the current states $\State_{\arm}(t)$ of arms, nor of how many states $\abs{\states_\arm}$ they can possibly be in. 
In other words, states are non-observable.

Non-observability implies that the choice of arm $\Arm(t)$ depends exclusively on the past \emph{history of observations} $\ObservedHistory(t) := (\Arm(1), \Reward(2), \Arm(2), \ldots, \Reward(t))$.
In formal terms, the random arm $\Arm(\tau_i)$ is $\sigma(\ObservedHistory(t))$-measurable.
Note that for each decision epoch $\tau_i$, we have $\braces{\tau_i \le t} \in \sigma (\Reward(2), \ldots, \Reward(t))$. That is, decision epochs are measurable with respect to the aggregate reward vector. 
This assumption is absolutely necessary for our work, as the controller would be unable to properly change arm otherwise.

    \ifSubfilesClassLoaded{
        \allowdisplaybreaks
        \onecolumn
        
        In this file, we detail the learning setting.
    }{}

\subsection{Our learning framework: pure policies and regret}
\label{section_pure_policies}

Our objective is to design learning algorithms that maximize aggregate rewards $\Reward(1) + \ldots + \Reward(t)$, for $t \to \infty$, without knowledge of the underlying environment. 

Let $\models$ be a collection of Markovian bandits as in \Cref{section_model_description}, that is,
\begin{equation*}
    \models 
    \subseteq
    \bigcup_{
        \begin{subarray}{c}
            \states_\arm \subseteq \NN
            \\ 
            \arm \in \arms
        \end{subarray}
    }
    \braces*{
        \parens*{
            \states_\arm,
            \initstate_\arm,
            \braces{0, 1}, 
            \rewardDistribution_\arm, 
            \kernel_\arm^0, 
            \kernel_\arm^1
        }_{\arm \in \arms}
        :
        \text{$\kernel_\arm^1$ is unichain}
    } \; .
\end{equation*}
In particular, all instances of $\models$ have the same set of arms $\arms$, but a given arm may have different state spaces from one instance to another.
The collection $\models$ shall contain every environment for which we want the learning algorithm to converge to optimal play.

\begin{definition}[Pure policies]
    A policy is \emph{pure} if it plays a single action over time, i.e., if $\Arm(t) = \Arm(t+1)$ for all $t \ge 1$. 
    The pure policy that plays $\arm \in \arms$ is denoted~$\policy_\arm$.
\end{definition}

The usual performance measure of a learning algorithm under a model $\model \in \models$ is to compare its aggregate rewards $\Reward(1) + \ldots + \Reward(t)$ against the best algorithm specifically designed for $\model$. 
Motivated by applications where \emph{pure policies} are optimal, we choose those as a benchmark in order to provide much stronger guaranties for those problems than what could be attained in all generality.  
As such, the difference in aggregate reward between the best pure policy and the learning algorithm is called the \emph{regret}.
Formally, the (expected) regret $\regret(T) \equiv \regret(T; \model, \learner$) of a learner $\learner$ under a model $\model$ up to $T \ge 1$ is 
\begin{equation}
\label{equation_regret_definition}
    \regret(T)
    :=
    \max_{\arm \in \arms} \braces*{
        V^{\policy_\arm}(T; \model)
        -
        V^{\learner}(T; \model)
    }
\end{equation}
where $V^{\learner}(T; \model) := \EE^{\learner, M} \brackets{\sum_{t=1}^T \Reward(t)}$ is the expected amount of reward gathered by $\learner$ under $\model$ within $T$ steps. 
Obviously, the smaller the regret is, the closer the algorithm is to optimal play. 

\paragraph{Rewriting policy values via the gain.}
The value function of the policy $\policy_\arm$ can be expressed with respect to the \emph{gain} and the \emph{bias}, with the well-known formula $V^{\policy_\arm}(t) = (t-1) \gain_\arm (\initstate_\arm) + \EE^{\policy_\arm}[\bias_\arm (\initstate_\arm) - \bias_\arm (\State_\arm (t))]$, proved in \Cref{appendix_score_of_pure_policies} for completeness.
Both the gain and the bias depend on the initial state.
Writing $\EE_{\state_\arm}^{\policy_\arm}[-]$ the expectation under the process induced by playing $\policy_\arm$ in $\model$ with $\model_\arm$ initialized in $\state_\arm \in \states_\arm$, the gain and the bias functions are given by\footnote{
    The gain and the bias, as defined in \Cref{equation_gain,equation_bias}, depend on the instance $\model$. 
    The dependency has been dropped for typographic convenience.
    When the dependency may lead to confusion, we will write $\gain_\arm (\state_0; \model)$ and $\bias_\arm (\state_0; \model)$.
}
\begin{align}
\label{equation_gain}
    \gain_\arm (\state_\arm)
    & :=
    \lim_{T \to \infty}
    \EE_{\state_\arm}^{\policy_\arm}
    \brackets*{
        \frac 1T
        \sum_{t=1}^T 
        \Reward(t)
    },
    \\
\label{equation_bias}
    \bias_\arm (\state_\arm)
    & :=
    \lim_{T \to \infty}
    \EE_{\state_\arm}^{\policy_\arm}
    \brackets*{
        \sum_{t=1}^T 
        (\Reward(t) - \gain_\arm (\state_\arm))
    }.
\end{align}
Since $\kernel_\arm^1$ is unichain, the gain $\gain_\arm(\state_\arm)$ is independent of the initial state $\state_\arm \in \states_\arm$, and we can simply write $\gain_\arm$.
We denote the \emph{optimal gain} by 
\begin{equation}
    \gain_* := \max_{\arm \in \arms} \gain_\arm.
\end{equation}
and \emph{bias span} of arm $\arm$ by $\vecspan(\bias_\arm) := \max_{\state \in \states_\arm}\bias_\arm(\state) - \min_{\state \in \states_\arm}\bias_\arm(\state)$.   

    \ifSubfilesClassLoaded{
        
        \bibliographystyle{plainnat}
        \bibliography{biblio}
    }{}

\end{document}

\subsection{Rarely switching algorithms and pseudo-regret}
\label{section_rarely_switching}

In this section, we define the concept of rarely switching policies, which provide a natural class of learning rules when considering regret with respect to pure policies. Indeed, such a learner should be able to identify the best arm by constructing a good estimator for the gain $\gain_\arm$ for every arm.
The gain of a pure policy $\policy_\arm$ only depends on the states that are visited infinitely many times under $\policy_\arm$; the set of \emph{recurrent states} of $\policy_\arm$.
When playing an arm to estimate $\gain_\arm$, the process may initially lie outside this set, requiring several steps before reaching it and thus before the estimation of $\gain_\arm$ can begin. 
The task is further complicated by the fact that $\abs{\states_\arm}$, the number of states of arm $\arm$, is unknown.

\begin{figure}[ht]
    \centering
    \resizebox{0.5\linewidth}{!}{\begin{tikzpicture}[smallmdp]
        \node[small state] (0) at (0, 0) {$0$};
        \node (a) at (0) [right=8.5mm] {$\ldots$};
        \node[small state] (i) at (0) [right=20mm] {$i$};
        \node[small state] (i+1) [right of=i] {$i+1$};
        \node (b) at (i+1) [right=8.5mm] {$\ldots$};
        \node[small state] (n) at (i+1) [right=20mm] {$n$};

        \draw[->] (0) to (a);
        \draw[->] (a) to (i);
        \draw[->] (i) to
            node[midway, above] {\tiny $\reward = \frac 12$}
            (i+1);
        \draw[->] (i+1) to (b);
        \draw[->] (b) to (n);
        \draw[->] (n) 
            to[loop right]
            node[midway, right] {\tiny $\reward = \frac 23$}
            (n);
        \draw[red, ->] (i) to[in=-25, out=180+25] (0); 
        \draw[red, ->] (i+1) to[in=-40, out=180+40] (0); 
        \draw[red, ->] (n) to[in=-55, out=180+55] (0); 
    \end{tikzpicture}}
    \vspace{-2em}
    \caption{
    \label{figure_reach_me}
        An arm that requires to be activated at least $n$ times in a row to be estimated correctly. 
        All transitions are deterministic, with $\kernel_\arm^1$ represented in black and $\kernel_\arm^0$ in \textcolor{red}{red}.
    }
\end{figure}

An example is provided in \Cref{figure_reach_me}, where state~$n$ is absorbing under $\policy_\arm$ and the gain of the arm is $\gain_\arm = \frac 23$.
The latter can only be estimated accurately if, infinitely often during the learning process, the controller activates the arm more than $n$ times in a row.
As $n \ge 1$ can be arbitrarily large in general, it becomes tempting to reduce the number of times one switches from one arm to another as much as possible, and to measure the regret by \emph{counting} the number of times bad arms are pulled.
This leads to the introduction of the \emph{pseudo-regret}:
\begin{equation}
\label{equation_pseudo_regret}
    \pseudoregret(T)
    :=
    \sum_{\arm \in \arms}
    \parens*{\gain_* - \gain_\arm}
    \visits_{\arm} (T).
\end{equation}
It is named after its eponym version in multi-armed bandits \cite[§4.9]{lattimore_bandit_2020}. 

The pseudo-regret measures the performance of the learning algorithm as a sum of instantaneous costs: the individual cost of arm $\arm$ is given by the \emph{gain-gap} $\gain_* - \gain_\arm$ (how far its gain is from $\gain_*$), and the pseudo regret is the sum of gain-gaps over time. 
The pseudo-regret is much easier to analyze than the regret, because it is directly expressed in terms of the number of pulls to sub-optimal arms. 
If the pseudo-regret is large, then the regret is large.
The converse is false in general. We now go over a class of learning algorithms for which the regret can nevertheless be upper-bounded in terms of the pseudo-regret: \emph{rarely-switching algorithms}. 

\begin{definition}[Rarely switching algorithms]
\label{definition_rarely_switching_arms}
    Given a sublinear smooth concave function $\switchingfunction : \RR_+ \to \RR_+$, we say that the algorithm is \emph{$\switchingfunction$-rarely switching} if for all $\model \in \models$, it holds 
    \begin{equation*}
        \forall \arm \in \arms,
        \;
        \forall T \ge 1,
        \quad
        \sum_{t=1}^{T-1} 
        \indicator{\Arm(t) = \arm, \Arm(t+1) \ne \arm}
        \le
        \switchingfunction(\visits_{\arm}(T)).
    \end{equation*}
    In other words, an algorithm is \emph{rarely switching} if the number of switches to arm $\arm$ grows sub-linearly with respect to $\visits_\arm (t)$, the number of pulls to arm $\arm$.
\end{definition}

When algorithms are $\switchingfunction$-rarely switching, the \emph{pseudo-regret} \eqref{equation_pseudo_regret} becomes a natural proxy for the regret:
If the algorithm switches arms rarely, then the two quantities are approximately the same (at first order), as stated by \Cref{proposition_regret_and_pseudo_regret} thereafter.
This result encourages to fully focus on the study of pseudo-regret.

\begin{proposition}[Regret and pseudo-regret]
\label[proposition]{proposition_regret_and_pseudo_regret}
    Let $\model \in \models$ be an arbitrary model with a unique optimal arm.
    Let $\learner$ be a $\psi$-rarely switching algorithm.
    Then, there exists a positive smooth concave function $\switchingfunction_0 = \OH(\switchingfunction)$ such that, for every horizon $T \ge 1$, we have 
    \begin{equation*}
        \abs*{
            \regret(T) 
            - \EE [\pseudoregret(T)]
        }
        \le
        \switchingfunction_0 \parens*{
            \EE [\pseudoregret(T)]
        }.
    \end{equation*}
\end{proposition}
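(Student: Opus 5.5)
The plan is to decompose the learner's trajectory into maximal runs (``sojourns'') on a single arm. On each sojourn, the reward collected is compared with the gain of that arm via the bias decomposition $V^{\policy_\arm}(t) = (t-1)\gain_\arm + \EE[\bias_\arm(\initstate_\arm) - \bias_\arm(\State_\arm(t))]$ recalled above. The argument then has three steps: bound the gap between a single sojourn's reward and its gain-times-length, control the number of sojourns via the rarely switching property, and convert that count back into a function of $\EE[\pseudoregret(T)]$.

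\textbf{Sojourn decomposition.} Write the learner's reward as $\sum_t \Reward(t) = \sum_{\arm}\sum_{k} \sum_{t \in I_{\arm,k}} \Reward(t+1)$, where $I_{\arm,k}$ is the $k$-th maximal interval on which $\Arm(t)=\arm$. The Poisson equation for $\kernel_\arm^1$ holds on the finite (or suitably controlled) state space $\states_\arm$ because $\kernel_\arm^1$ is unichain. Applying it with a martingale (optional stopping) argument on each sojourn gives, in expectation,
\begin{equation*}
    \EE\Big[\sum_{t\in I_{\arm,k}} \Reward(t+1)\Big] = \gain_\arm\,\EE\abs{I_{\arm,k}} + \EE\big[\bias_\arm(\text{entry state}) - \bias_\arm(\text{exit state})\big],
\end{equation*}
and the bracket is at most $\vecspan(\bias_\arm)$. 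Summing over sojourns and comparing with $V^{\policy_{\arm_*}}(T) = (T-1)\gain_* + \OH(\vecspan(\bias_{\arm_*}))$ for the unique optimal arm $\arm_*$ yields
\begin{equation*}
    \abs{\regret(T) - \EE[\pseudoregret(T)]} \le C\big(1 + \EE[\textstyle\sum_\arm \#\{\text{sojourns on }\arm\}]\big),
\end{equation*}
with $C = \max_\arm \vecspan(\bias_\arm)$. The max in \eqref{equation_regret_definition} is attained at $\arm_*$ up to constants: for other arms the difference is linear in $T$ and dominated by the pseudo-regret term. The non-activated dynamics $\kernel_\arm^0$ only change entry states, which the span bound already absorbs.

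\textbf{Counting sojourns.} For a suboptimal arm $\arm$, the rarely switching property gives at most $\switchingfunction(\visits_\arm(T))+1$ sojourns. Each sojourn on $\arm_*$ except possibly the last is followed by a switch to a suboptimal arm, so the number of sojourns on $\arm_*$ is at most $1+\sum_{\arm\ne\arm_*}(\switchingfunction(\visits_\arm(T))+1)$. Taking expectations and applying Jensen's inequality to the concave $\switchingfunction$ gives $\EE\,\switchingfunction(\visits_\arm(T)) \le \switchingfunction(\EE\visits_\arm(T)) \le \switchingfunction(\EE[\pseudoregret(T)]/\Delta_\arm)$, where $\Delta_\arm=\gain_*-\gain_\arm>0$ by uniqueness of the optimal arm. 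This leads to the choice $\switchingfunction_0(x) := C'\big(1 + \sum_{\arm\ne\arm_*}\switchingfunction(x/\Delta_\arm)\big)$, which is positive, smooth and concave. Since $\switchingfunction$ is sublinear and concave, $\switchingfunction(x/\Delta)\le \max(1,1/\Delta)\,\switchingfunction(x)+\OH(1)$, so $\switchingfunction_0=\OH(\switchingfunction)$.

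\textbf{Main obstacle.} The delicate step is the per-sojourn bias identity under the learner's adaptive, data-dependent stopping: sojourn endpoints are stopping times of the observed filtration, which is contained in the full filtration. Optional stopping therefore requires integrability, which follows from $\EE\abs{I_{\arm,k}}\le T$ and boundedness of $\bias_\arm$. Bounded bias holds for finite $\states_\arm$; for infinite state spaces I would assume finite span, as the statement implicitly does for the fixed model $\model$. A second subtlety is that the constant term must be absorbed by a positive $\switchingfunction_0$, which is why $\switchingfunction_0$ carries an additive constant.
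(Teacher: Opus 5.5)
Your proof is correct and follows the paper's route. The sojourn decomposition is \Cref{lemma_regret_and_switches}, bounding the optimal arm's sojourns by those of the suboptimal arms is \Cref{lemma_bound_switches}, and the conversion to $\EE[\pseudoregret(T)]$ uses monotonicity of $\switchingfunction$ and Jensen, exactly as in the proof of \Cref{proposition_regret_and_pseudo_regret_appendix}.

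The one difference is in your favor. By keeping per-arm gaps and building the rescaling into $\switchingfunction_0$, you only need $\switchingfunction(\lambda x)\le\lambda\switchingfunction(x)$ for $\lambda\ge 1$, which is immediate from concavity and $\switchingfunction(0)\ge 0$. The paper instead passes through the minimal gap $\Delta_0$ and \Cref{lemma_homogeneity_concave}, whose printed proof differentiates the inequality $f(x)\le x g(x)$ as if it were an identity.
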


The proof of \Cref{proposition_regret_and_pseudo_regret} is deferred to \Cref{appendix_rarely_switching}.

Because $\switchingfunction_0 = \OH(\psi)$ is a sublinear function, \Cref{proposition_regret_and_pseudo_regret} states that the expected regret of a rarely switching algorithm is $\EE[\pseudoregret(T)] + \oh(\EE[\pseudoregret(T)])$.
Accordingly, the regret of such algorithms can be analyzed by controlling the pseudo-regret directly, which is much easier to handle.

        \ifSubfilesClassLoaded{
            \allowdisplaybreaks
            \onecolumn
        }{}
    \section{Why comparing to pure policies is relevant: self-degrading restless bandits}
    \label{section:pure_policies}
    
    For restless bandits in general, comparing the performance of the learning algorithm to that of pure policies is much weaker than comparing to general policies, because optimal policies are typically non-pure.
    The only apparent advantage is that if one asks for regret guarantees exclusively against pure policies, then one allows for simpler learning algorithms, which will make decision epochs much easier to manage.
    This being said, although they are sub-optimal in general, pure policies \emph{can} be asymptotically optimal.
    This is what we discuss in this section:
    We exhibit a sub-class of restless bandits, that we name \emph{self-degrading restless bandits} (\Cref{definition_self_degrading}), for which the set of optimal policies intersects the one of pure policies. 
    As such, defining the regret in comparison with pure policies does not weaken the learning benchmark for those classes of restless bandits.

    \begin{definition}[Self-degrading restless bandits]
    \label{definition_self_degrading}
        Given an instance of restless bandit $\model \in \models$ together with an activation set $\activationset$, an arm $\arm \in \arms$ is said \emph{self-degrading} if, whatever the controller $\learner$ respecting $\activationset$,
        \begin{equation*}
            \Arm(t) \neq \arm 
            \implies
            \EE^{\model, \learner} \brackets*{
                \bias_\arm(\State_\arm (t+1)) 
                \middle|
                \ObservedHistory(t)
            }
            \le
            \EE^{\model, \learner} \brackets*{
                \bias_\arm(\State_\arm (t))
                \middle|
                \ObservedHistory(t)
            }
            \; .
        \end{equation*}
        The instance $\model$ is said \emph{self-degrading} if all of its arms are. 
        The set of all self-degrading instances of Markovian bandits is written $\models_{\mathrm{sd}}$.
    \end{definition}

    Said differently, an arm is \emph{self-degrading} if when left nonactivated between consecutive decision epochs, it tends to move
    towards worse and worse states conditionally on what the controller observes (the subtleties of that conditioning will appear clearly in the example on Gilbert-Elliott channels). 
    This property is enough to make switches between arms undesirable, so that pure policies become asymptotically optimal, see \Cref{proposition_self_degrading} below.
    As a matter of fact, this result is a bit stronger than saying that pure policies are asymptotically optimal: it states that whatever the way the controller chooses its actions, it cannot perform better than the best pure policy up to a \emph{constant} additive factor. 

    \begin{proposition}[Pure policies are optimal in self-degrading instances]
    \label[proposition]{proposition_self_degrading}
        The regret is bounded below on self-degrading instances.
        That is, for every self-degrading instance of restless bandits $\model \in \models_\mathrm{sd}$, there exists a constant $C_\model < \infty$ such that, whatever the learning algorithm $\learner$, we have
        \begin{equation*}
            \EE^{\model, \learner}\brackets*{
                \sum_{t=1}^T
                \Reward(t)
            }
            \le
            \max_{\arm \in \arms}
            V^{\policy_\arm} (T; \model)
            + C_\model
            \; .
        \end{equation*}
    \end{proposition}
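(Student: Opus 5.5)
We plan a potential-function argument based on the Poisson equation of each arm. Fix $\model \in \models_{\mathrm{sd}}$ and write, for each arm $\arm$, the Poisson equation under activation: for every $\state \in \states_\arm$,
\begin{equation*}
    \reward_\arm(\state) + \sum_{\state'} \kernel_\arm^1(\state, \state') \bias_\arm(\state') = \gain_\arm + \bias_\arm(\state),
\end{equation*}
which holds since $\kernel_\arm^1$ is unichain (this is the identity underlying the formula $V^{\policy_\arm}(t) = (t-1)\gain_\arm + \EE[\bias_\arm(\initstate_\arm) - \bias_\arm(\State_\arm(t))]$ of \Cref{appendix_score_of_pure_policies}). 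We then introduce the potential $\Phi(t) := \sum_{\arm \in \arms} \bias_\arm(\State_\arm(t))$ and show that, under any controller $\learner$, the process $\sum_{s \le t} \Reward(s) - (t-1)\gain_* - \Phi(t)$ is a supermartingale in expectation. The required bound is
\begin{equation*}
    \EE\brackets*{\Reward(t+1) + \Phi(t+1) - \Phi(t) \mid \ObservedHistory(t)} \le \gain_* .
\end{equation*}

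To prove this one-step inequality, we condition on $\ObservedHistory(t)$ and split according to the arm played.
\begin{itemize}
    \item For the activated arm $\arm = \Arm(t)$, we first condition on the full history $\History(t)$. The Poisson equation gives $\EE[\Reward(t+1) + \bias_\arm(\State_\arm(t+1)) - \bias_\arm(\State_\arm(t)) \mid \History(t)] = \gain_\arm \le \gain_*$. The tower property then passes this to $\ObservedHistory(t)$, because $\Arm(t)$ is $\sigma(\ObservedHistory(t))$-measurable.
    \item For each non-activated arm $b \ne \Arm(t)$, \Cref{definition_self_degrading} gives $\EE[\bias_b(\State_b(t+1)) - \bias_b(\State_b(t)) \mid \ObservedHistory(t)] \le 0$, applied on the event $\{\Arm(t) \neq b\}$, which is $\ObservedHistory(t)$-measurable.
\end{itemize}
Summing these contributions gives the inequality. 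Taking expectations and telescoping from $t = 1$ to $T - 1$, with $\Reward(1) = 0$, yields
\begin{equation*}
    \EE^{\model,\learner}\brackets*{\sum_{t=1}^T \Reward(t)} \le (T-1)\gain_* + \EE[\Phi(T)] - \Phi(1).
\end{equation*}

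It then remains to compare with the best pure policy. Let $\arm^*$ attain $\gain_*$. By the value formula, $V^{\policy_{\arm^*}}(T) = (T-1)\gain_* + \bias_{\arm^*}(\initstate_{\arm^*}) - \EE[\bias_{\arm^*}(\State_{\arm^*}(T))]$. Therefore
\begin{equation*}
    \EE^{\model,\learner}\brackets*{\sum_{t=1}^T \Reward(t)} - \max_\arm V^{\policy_\arm}(T) \le \sum_{\arm} \vecspan(\bias_\arm) + \vecspan(\bias_{\arm^*}) =: C_\model,
\end{equation*}
which is a constant independent of $\learner$ and $T$.

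The main obstacle is integrability and finiteness of the biases. When the state spaces are finite, $\vecspan(\bias_\arm) < \infty$ automatically and all conditional expectations are well defined. If infinite (countable) state spaces are allowed, as the mention of \cite{nerlove_optimal_1962} suggests, I would first try to assume, or extract from $\models_{\mathrm{sd}}$, that each $\bias_\arm$ has finite span, and then take $C_\model$ as above. Failing that, I would bound the terminal terms $\EE[\Phi(T)]$ and $\EE[\bias_{\arm^*}(\State_{\arm^*}(T))]$ uniformly in $T$ through the structure of the chains. A secondary subtlety is the conditioning mismatch: the Poisson identity holds given $\History(t)$, whereas self-degradation is stated given $\ObservedHistory(t)$. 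This is handled by the tower property, as above, since $\sigma(\ObservedHistory(t)) \subseteq \sigma(\History(t))$.
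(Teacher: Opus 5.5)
Your proof is correct and is essentially the paper's argument. The paper first proves the stronger bound $\regret(T) \ge \EE[\pseudoregret(T)] - C_\model$ by cutting each arm's timeline into activation blocks, using the Poisson equation inside a block and self-degradation across the passive gaps, and then drops the nonnegative pseudo-regret; your potential $\Phi$ does the same telescoping one step at a time, and keeping $\gain_{\Arm(t)}$ instead of bounding it by $\gain_*$ in your one-step inequality would recover that pseudo-regret version. There is one sign slip: your one-step inequality makes $\sum_{s \le t}\Reward(s) - (t-1)\gain_* + \Phi(t)$ the supermartingale, so the telescoped bound is $(T-1)\gain_* + \Phi(1) - \EE[\Phi(T)]$, which is harmless since it is still at most $(T-1)\gain_* + \sum_{\arm}\vecspan(\bias_\arm)$.
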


    The proof of \Cref{proposition_self_degrading} is deferred to \Cref{appendix:self_degrading_regret}. 

    \subsection{Regret and pseudo-regret in self-degrading instances}
    For self-degrading instances, the regret can be lower-bounded with respect to the pseudo-regret, as shown by \Cref{proposition_self_degrading_regret} below, proved in \Cref{appendix:self_degrading_regret}. This result can be seen as complementary to \Cref{proposition_regret_and_pseudo_regret} above, as it establishes a relation between regret and pseudo-regret without relying on the \emph{rarely switching} property described in \Cref{section_rarely_switching}.

    \begin{proposition}[Regret and pseudo-regret in self-degrading instances]
    \label[proposition]{proposition_self_degrading_regret}
        The regret is bounded below by the pseudo-regret on self-degrading instances.
        That is, for every self-degrading instance of restless bandits $\model \in \models_\mathrm{sd}$, there exists a constant $C_\model < \infty$ such that, whatever the learning algorithm $\learner$, 
        \begin{equation*}
            \forall T \ge 1,
            \qquad
            \regret(T; \model, \learner)
            \ge
            \EE^{\model,\learner} \brackets*{
                \pseudoregret(T)
            }
            - C_\model
            \; .
        \end{equation*}
    \end{proposition}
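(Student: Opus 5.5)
The plan is a potential-function argument: build, for each arm, a martingale-like decomposition of its rewards through the Poisson equation of $\kernel_\arm^1$, and use the self-degrading property to sign the terms contributed by idle arms.

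\textbf{Step 1 (Poisson equation).} Since $\kernel_\arm^1$ is unichain, the bias satisfies
\[
\reward_\arm(\state) + \sum_{\state'} \kernel_\arm^1(\state'\mid\state)\,\bias_\arm(\state') = \gain_\arm + \bias_\arm(\state) .
\]
Hence, whenever $\Arm(t)=\arm$,
\[
\EE\brackets*{\Reward(t+1) \middle| \History(t)} = \gain_\arm + \bias_\arm(\State_\arm(t)) - \EE\brackets*{\bias_\arm(\State_\arm(t+1)) \middle| \History(t)} .
\]
Conditioning down to $\ObservedHistory(t)$ by the tower property, this holds with conditional expectations of $\bias_\arm(\State_\arm(t))$ and $\bias_\arm(\State_\arm(t+1))$ given $\ObservedHistory(t)$.

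\textbf{Step 2 (idle arms).} Define the potential
\[
\Phi(t) := \sum_{\arm\in\arms} \bias_\arm(\State_\arm(t)) .
\]
When $\Arm(t)\neq\arm$, Definition~\ref{definition_self_degrading} gives
\[
\EE\brackets*{\bias_\arm(\State_\arm(t+1)) \middle| \ObservedHistory(t)} \le \EE\brackets*{\bias_\arm(\State_\arm(t)) \middle| \ObservedHistory(t)} .
\]
Adding the idle-arm inequalities to the Step 1 identity for the active arm yields
\[
\EE\brackets*{\Reward(t+1) \middle| \ObservedHistory(t)} \le \gain_{\Arm(t)} + \EE\brackets*{\Phi(t) - \Phi(t+1) \middle| \ObservedHistory(t)} .
\]
Taking expectations and summing over $t=1,\dots,T-1$ telescopes to
\[
\EE\brackets*{\sum_{t=1}^T \Reward(t)} \le \EE\brackets*{\sum_\arm \gain_\arm \visits_\arm(T)} + \EE[\Phi(1)] - \EE[\Phi(T)] .
\]
The right-hand side equals $(T-1)\gain_* - \EE[\pseudoregret(T)] + \EE[\Phi(1)-\Phi(T)]$, because $\sum_\arm \visits_\arm(T) = T-1$.

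\textbf{Step 3 (comparison with the best pure policy).} By the formula for $V^{\policy_\arm}$ recalled in Section~\ref{section_pure_policies},
\[
\max_\arm V^{\policy_\arm}(T) \ge (T-1)\gain_* + \bias_{\arm_*}(\initstate_{\arm_*}) - \EE^{\policy_{\arm_*}}\brackets*{\bias_{\arm_*}(\State_{\arm_*}(T))} .
\]
Subtracting the bound of Step 2 gives
\[
\regret(T) \ge \EE[\pseudoregret(T)] - C_\model ,
\]
with $C_\model$ collecting $\sum_\arm \vecspan(\bias_\arm)$ together with the initial-state terms. The same chain, with the pseudo-regret term dropped since it is nonnegative, is essentially the proof of Proposition~\ref{proposition_self_degrading}.

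\textbf{Main obstacle: boundedness of the biases.} This is where the proof may need care.
\begin{itemize}
\item For finite state spaces, $\vecspan(\bias_\arm)<\infty$ follows directly from the unichain assumption.
\item For infinite state spaces, which the paper allows, I would need $\sup_t \EE[\bias_\arm(\State_\arm(t))]$ to be bounded below. I would try to use the self-degrading inequality in the reverse direction, or assume that $\bias_\arm$ is bounded for instances in $\models_{\mathrm{sd}}$.
\item A milder route is available: only $-\EE[\Phi(T)]$ needs an upper bound, and the bias is bounded above along recurrent behaviour.
\end{itemize}

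A secondary subtlety is the order of conditioning. The self-degrading property is stated given $\ObservedHistory(t)$, not $\History(t)$, so all steps must use the $\ObservedHistory$ filtration. This is consistent because $\Arm(t)$ is $\ObservedHistory(t)$-measurable, and the decision epoch constraint plays no role in the argument.
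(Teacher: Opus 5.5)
Your argument is correct and is essentially the paper's proof. The paper splits each arm's timeline into blocks of consecutive activations, telescopes the Poisson equation inside each block, and uses the self-degrading property to sign the bias change over the idle gaps; this is your per-step inequality regrouped arm by arm, and your potential $\Phi(t)=\sum_{\arm}\bias_\arm(\State_\arm(t))$ just spares the stopping-time bookkeeping. The paper likewise absorbs the boundary bias terms into an $\OH(1)$, i.e.\ it implicitly assumes finite bias spans, which is the right resolution of your ``main obstacle''---the self-degrading inequality only pushes idle biases down, so it cannot supply the lower bound on $\EE[\Phi(T)]$ you would otherwise need.
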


    The proof of \Cref{proposition_self_degrading_regret} invokes the same arguments as \Cref{proposition_self_degrading}---in fact, it is easy to see that \Cref{proposition_self_degrading} is an immediate consequence of \Cref{proposition_self_degrading_regret}, for $\pseudoregret(T) \ge 0$.

    \subsection{Examples of self-degrading restless bandits}
    \label{section_self-degrading_examples}

    We now provide three examples of self-degrading restless bandits: rested bandits with \Cref{example:rested_bandits}, Gilbert-Elliott channels with \Cref{example:GE_channels}, and a toy version of the Nerlove-Arrow model with \Cref{example:NA_toy_model}.

    \begin{example}[Rested bandits]
    \label[example]{example:rested_bandits}
        A Markovian bandit is said \emph{rested} if inactive arms remain frozen in their current state.
        That is, for all arm $\arm \in \arms$,
        \begin{equation*}
            \kernel_\arm^0 = I
        \end{equation*}
        where $I$ is the identity matrix.
        For nonactivated arms, $\State_\arm (t+1) = \State_\arm (t)$, so such instances are self-degrading for every activation set $\activationset$.
    \end{example}
    
    \begin{example}[Gilbert-Elliott channels]
    \label[example]{example:GE_channels}
        A Gilbert-Elliott channel \citep{gilbert60} is a Markov chain with states \texttt{ON}/\texttt{OFF} representing the availability of the channel, and a non-null reward function from the state \texttt{ON}.
        Those channels evolve with the same transition kernel whether active or inactive.
        \Cref{figure_gilbert_elliot} below represents such a channel.
        Gilbert--Elliott channels have been widely studied in the literature \citep{Elliott1963EstimatesOE, Mushkin1989CapacityAC, Halinger2011TheGM, polyanskiy2011dispersion} etc... We present one instantiation from \cite{hira2025optimal} where the authors study non-observable Gilbert--Elliott channels with a non-preemptive scheduling constraint, that is, where decision epochs correspond to times at which the received reward is not zero. 
        Those assumptions on the observation framework and the decision epochs define a \emph{self-degrading} restless bandit.
        
        \begin{figure}[ht]
            \centering
            \begin{tikzpicture}
                \node[state, minimum width=3em] (on) at (0, 0) {\texttt{ON}};
                \node[state, minimum width=3em] (off) at (4, 0) {\texttt{OFF}};

                \draw[->, thick] (on) to[bend left] node[midway, above] {$y_\arm$} (off);
                \draw[->, thick] (off) to[bend left] node[midway, below] {$x_\arm$} (on);
                \draw[->, thick] (on) to[in=90+33,out=90-33,looseness=5] node[midway, above] {$1 - y_\arm$} (on);
                \draw[->, thick] (off) to[in=90+33,out=90-33,looseness=5] node[midway, above] {$1 - x_\arm$} (off);

                \draw[->, decorate, decoration=snake] (on) to (-1.4, 0) node[anchor=east] {$R \sim \mathrm{Bernoulli}(\mu_\arm)$};
                \draw[->, decorate, decoration=snake] (off) to (5.4, 0) node[anchor=west] {$R \sim \mathrm{Dirac}(0)$};
            \end{tikzpicture}
            \caption{
                \label{figure_gilbert_elliot}
                A Gilbert-Elliott channel with parameters $x_\arm, y_\arm, \mu_\arm$.
            }
        \end{figure}
        
        Formally, given $x_\arm, y_\arm, \mu_\arm \in (0, 1)$, the associated Gilbert-Elliott channel has transition kernels and reward distributions given by
        \begin{equation*}
            \kernel_\arm^1 
            = 
            \kernel_\arm^0 
            =
            \begin{pmatrix}
                1-y_\arm & y_\arm \\
                x_\arm & 1-x_\arm
            \end{pmatrix},
            \qquad 
            \rewardDistribution_\arm (\texttt{ON}) = \mathrm{Bernoulli}(\mu_\arm)
            \quad \mathrm{and} \quad
            \rewardDistribution_\arm (\texttt{OFF}) = \mathrm{Dirac}(0), 
            \;
        \end{equation*}
        with an initialization $\initstate_\arm = \texttt{ON}$.\footnote{
            This choice of initial state is made for simplicity.
            The channel is not self-degrading if initialized in state \texttt{OFF}, but it can be shown that it satisfies the self-degrading property as soon as \emph{one} non-null reward is observed.
            In other words, Gilbert-Elliott channels are \emph{eventually self-degrading} regardless of their initialization.
        }
        Although the state of the channel cannot be observed, receiving a positive reward reveals that the channel was \texttt{ON}.
        On the other hand, the absence of a reward does not unequivocally identify the state of the channel.
        The controller can choose a different channel only after receiving a reward, that models a successful transmission through the channel.
        Accordingly, the activation set is $\activationset = \braces{1}$ (see \Cref{assumtion_activates_at_service}).
        We prove in \Cref{appendix_GE_channels} that if the chain $\kernel_\arm$ is \emph{positively auto-correlated}, the resulting restless bandit is self-degrading. 
        \emph{Positive auto-correlation} is a more general concept that translates, in this case, to the algebraic condition $1-x_\arm-y_\arm>0$ (every state is absorbing enough). 
        This condition represents some inertia to change: a channel has a tendency to stick to the state it is on rather than chattering between states.
        This inertia is only relevant when states are non-observable.
        Indeed, if $\bias_\arm$ is the bias function of the channel, then the inequality
        \begin{equation*}
            \bias_\arm (\State_\arm (t+1))
            \le
            \bias_\arm (\State_\arm (t))
        \end{equation*}
        is \emph{not} true in general (when $\State_\arm (t) = \texttt{OFF}$).
        This implies that pure policies are \emph{not} optimal for Gilbert-Elliott channels in the setting where states are observed.
        However, for $\initstate_\arm = \texttt{ON}$, we show in the Appendix that
        \begin{equation*}
            \EE^{\model, \learner} \brackets*{
                \bias_\arm (\State_\arm (t+1))
                |
                \ObservedHistory(t)
            }
            \le
            \EE^{\model, \learner} \brackets*{
                \bias_\arm (\State_\arm (t))
                |
                \ObservedHistory(t)
            }
        \end{equation*}
        holds indeed.
        As controllers that do not observe states take their decisions from $\ObservedHistory(t)$ alone, the above property is enough to guarantee that such controllers can never outperform pure policies.
    \end{example}

    \begin{example}[Nerlove-Arrow toy model]
    \label[example]{example:NA_toy_model}
        For the last example, we introduce a Markovian bandit inspired from the framework of \cite{nerlove_optimal_1962}, that models the growth of skills or interest of human subjects. 

        \begin{figure}[ht]
            \centering
            \begin{tikzpicture}
                \node at (-3, 0) {transitions \textcolor{black}{$\kernel_\arm^0$}, \textcolor{red}{$\kernel_\arm^1$}};
                \node[anchor=north] at (-3, -1.2) {rewards};
            
                \node[state, thick] (0) at (0, 0) {$0$};
                \node[state, thick] (1) at (2.5, 0) {$1$};
                \node[state, thick] (2) at (5, 0) {$2$};
                \node[state, thick] (3) at (7.5, 0) {$3$};
                \node (4) at (10, 0) {$\ldots$};

                \draw[->, thick, black] (0) to[in=90+25, out=90-25, looseness=4] node[midway, above] {$1$} (0);
                \draw[->, thick, black] (1) to[in=90+25, out=90-25, looseness=4] node[midway, above] {$\frac 12$} (1);
                \draw[->, thick, black] (1) to[bend left] node[midway, below] {$\frac 12$} (0);
                \draw[->, thick, black] (2) to[in=90+25, out=90-25, looseness=4] node[midway, above] {$\frac 12$} (2);
                \draw[->, thick, black] (2) to[bend left] node[midway, below] {$\frac 12$} (1);
                \draw[->, thick, black] (3) to[in=90+25, out=90-25, looseness=4] node[midway, above] {$\frac 12$} (3);
                \draw[->, thick, black] (3) to[bend left] node[midway, below] {$\frac 12$} (2);
                \draw[->, thick, black] (4) to[bend left] node[midway, below] {$\frac 12$} (3);

                \draw[->, thick, red] (0) to[in=90+55, out=90-55, looseness=12] node[midway, above] {$\frac 12$} (0);
                \draw[->, thick, red] (0) to node[midway, above] {$\frac 12$} (1);
                \draw[->, thick, red] (1) to[in=90+55, out=90-55, looseness=12] node[midway, above] {$\frac 12$} (1);
                \draw[->, thick, red] (1) to node[midway, above] {$\frac 12$} (2);
                \draw[->, thick, red] (2) to[in=90+55, out=90-55, looseness=12] node[midway, above] {$\frac 12$} (2);
                \draw[->, thick, red] (2) to node[midway, above] {$\frac 12$} (3);
                \draw[->, thick, red] (3) to[in=90+55, out=90-55, looseness=12] node[midway, above] {$\frac 12$} (3);
                \draw[->, thick, red] (3) to node[midway, above] {$\frac 12$} (4);

                \draw[->, decorate, decoration=snake] (0) to (0, -1.2) node[anchor=north] {$0$};
                \draw[->, decorate, decoration=snake] (1) to (2.5, -1.2) node[anchor=north] {$\frac 34$};
                \draw[->, decorate, decoration=snake] (2) to (5.0, -1.2) node[anchor=north] {$\frac 89$};
                \draw[->, decorate, decoration=snake] (3) to (7.5, -1.2) node[anchor=north] {$\frac {15}{16}$};
                \node[anchor=north] at (10.0, -1.2) {$\gain_\arm = 1$};
            \end{tikzpicture}
            \caption{
                \label{figure_nerlove_arrow}
                An instance of Nerlove-Arrow toy model. 
            }
        \end{figure}
        
        In human resources or advertisement, skill or goodwill is earned by investing effort into it. 
        When a high quantity is accumulated, more reward is earned. 
        But when abandoned, those fade away. 
        This idea can be modeled as a restless bandit as such: 
        Arms are subjects, states $\states_\arm \equiv \NN$ model the amount of skills, and rewards model how well the target task is performed.
        For each arm $\arm \in \arms$, if $\Arm(t)=\arm$ then the state of this arm tends to increase in expectation, otherwise it tends to decrease.
        We assume $\activationset = [0,1]$, so that every step $t \in \NN$ is a decision epoch. 
        Formally, the evolution of skills satisfies
        \begin{equation*}
            \EE \brackets*{
                \State_\arm (t+1)
                \middle|
                \State_\arm (t),
                \Arm(t) = \arm
            }
            \ge \State_\arm (t)
            \qquad \text{and} \qquad
            \EE \brackets*{
                \State_\arm (t+1)
                \middle|
                \State_\arm (t),
                \Arm(t) \ne \arm
            }
            \le \State_\arm (t)
            \; .
        \end{equation*} 
        The asymptotic skill threshold is $\mu_\arm > 0$ and represents how well the subject $\arm$ performs when fully trained. 
        From state $\state \in \states_\arm$, the subject does not perform at its peak, so that mean reward is of the form
        \[
            \reward_\arm(\state) 
            = 
            \mu_\arm - f_\arm (\state) 
        \]
        where $f_\arm : \states_\arm \to \RR_+$ is non-negative. 
        So that subject $\arm \in \arms$ approaches its peak in finite time, we assume that from any initial point $\state \in \states_\arm$,
        \[
            \EE_{\state}^{\policy_\arm} \brackets*{
                \sum_{t=1}^\infty \parens*{
                    \mu_\arm - \Reward(t+1)
                }
            }
            =
            \parens*{
                \sum_{t=1}^\infty
                \parens{\kernel^1_\arm}^t
                f_\arm
            } (\state)
            < \infty
            \; ,
        \]
        i.e., that the chain has finite bias. 
        The induced restless bandit is \emph{self-degrading}, as shown in \Cref{appendix:NA_toy_model}.
        An example of our Nerlove-Arrow toy model is displayed in \Cref{figure_nerlove_arrow}.
    \end{example}
    
    \ifSubfilesClassLoaded{
        
        \bibliographystyle{plainnat}
        \bibliography{biblio}
    }{}
    
\end{document}

\section{General regret bounds}
\label{section_learning_partial_information}

As a first sketch of the landscape of the theory, we attack the problem from an \emph{instance-dependent} viewpoint. 
That is, given an instance $\model \in \models$ and a learning algorithm $\learner$, we aim at bounding $\regret(T; \model, \learner)$ as accurately as possible when $T \to \infty$.
This is in opposition to \emph{worst-case}, or \emph{minimax} approaches, for which we would bound $\sup_{\model \in \models} \regret(T; \model, \learner)$.
The minimax approach is proved to be ill-behaved when $\models$ is too large, while non-trivial instance-dependent bounds can always be obtained.

Starting with the instance-dependent setting, we show in \Cref{subsec:lower_bound} that the expected regret \eqref{equation_regret_definition} of every rarely-switching algorithm grows super-logarithmically (\Cref{theorem_lowerbound_dependent_universal}). We further conjecture that the rarely-switching assumption can even be dropped on the space of self-degrading instances.
These lower bounds provide a baseline for learning algorithms.
In particular, regret guarantees of order $\OH(\log(T))$ are not achievable. 
In \Cref{subsec:optimistic}, we present our algorithm \texttt{UCB-NOM}, a variant of \texttt{UCB2} (\cite{auer_using_2002}), that takes as input a diverging \emph{span-proxy} function $\spanproxy$ as well as a confidence function $\delta : \NN \to [0,1]$. 
In~\Cref{subsec:general_upperbound}, we show that the instance-dependent regret of \texttt{UCB-NOM} is $\OH(\spanproxy(T)^2 \log(T))$, which implies that it can approximate the idealistic logarithmic regret regime arbitrarily close.
We conclude in \Cref{subsec:worst_case_vacuous} by showing that minimax regret bounds are unattainable for general $\models$.

\subsection{A super-logarithmic regret lower bound}
\label{subsec:lower_bound}

To establish \emph{instance-dependent} lower bounds, we must assume that the learning algorithm of interest is \emph{consistent}, that is, that the algorithm learns the optimal policy in every possible instance in $\models$. 
Formally, a learning algorithm $\learner$ is \emph{consistent} on $\models$ if, for all $\model \in \models$ and $\epsilon > 0$, it achieves regret 
$
    \regret(T; \model, \learner) 
    = \oh(T^\epsilon), 
$
when $T \to \infty$.
This definition is fairly standard in the multi-armed bandit literature \cite[§16]{lattimore_bandit_2020} and is necessary to derive the famous lower bound of \cite{lai1985asymptotically}.
Below, we show that consistency is enough to derive a super-logarithmic lower bound, provided that the algorithm is $\switchingfunction$-rarely switching in addition. 

\begin{theorem}[Lower bound, rarely switching algorithms]
\label{theorem_lowerbound_dependent_universal}
    Let $\learner$ be a learning algorithm that 
    (1) is consistent on $\models$, and
    (2) is $\switchingfunction$-rarely switching.
    For every instance $\model \in \models$ with $\gain_* < 1$ and unique optimal pure policy, it holds that
    \begin{equation}
    \label{equation_lower_bound}
        \forall \state_0 \in \states,
        \quad
        \mathrm{Reg}(T; \model, \learner)
        =
        \omega(\log(T))
        .
    \end{equation}
\end{theorem}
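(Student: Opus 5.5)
The plan is a Lai--Robbins style change-of-measure argument. Alternative instances will be built by exploiting the non-observability of states, so that the per-pull information separating $\model$ from a ``dangerous'' alternative can be made arbitrarily small. Fix $\model$ with unique optimal arm $\arm^*$ and $\gain_* < 1$, and fix a sub-optimal arm $\arm$ with $\Delta_\arm = \gain_* - \gain_\arm > 0$. By \Cref{proposition_regret_and_pseudo_regret}, and because $\switchingfunction_0 = \OH(\switchingfunction)$ is sublinear, $\regret(T) \ge (1-\oh(1))\, \EE[\pseudoregret(T)] \ge (1-\oh(1))\, \Delta_\arm \EE^{\model}[\visits_\arm(T)]$. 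It therefore suffices to show $\EE^{\model}[\visits_\arm(T)]/\log(T) \to \infty$. I will do this by exhibiting, for each $n \ge 1$, an instance $\model'_n \in \models$ such that
\begin{equation*}
    \liminf_{T\to\infty} \frac{\EE^{\model}[\visits_\arm(T)]}{\log(T)} \ge \frac{1}{\kappa_n},
    \qquad \kappa_n \xrightarrow[n\to\infty]{} 0 .
\end{equation*}

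\textbf{Construction of $\model'_n$.} The instance $\model'_n$ agrees with $\model$ on every arm except $\arm$. Arm $\arm$ is augmented with hidden states in the spirit of \Cref{figure_reach_me}. The original chain of $\arm$ is kept, but at each activation the arm moves with small probability $\epsilon_n$ into an absorbing-under-$\kernel^1$ ``good'' component, and is sent back by $\kernel^0$ when left idle. The good component emits rewards whose law is a mixture: with probability $1-\eta_n$ the reward is distributed as under $\model$'s stationary reward law of arm $\arm$, and with probability $\eta_n$ it is a point mass at $1$. Since $\gain_* < 1$, the slack $1-\gain_*>0$ lets me choose $(\epsilon_n,\eta_n)$ so that the gain of $\arm$ in $\model'_n$ strictly exceeds $\gain_*$, making $\arm$ the unique optimal arm of $\model'_n$. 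The chain stays unichain, and the decision-epoch assumptions are preserved because the good component still emits rewards in $\activationset$ with positive probability. Since states are not observed, the two instances induce identical observation laws except through the reward stream of arm $\arm$. The divergence between the observation laws of $\model$ and $\model'_n$ is then, by the chain rule, $\EE^{\model}[\sum_t \indicator{\Arm(t)=\arm}\, k_t]$, where $k_t$ is the conditional KL of the next reward given $\ObservedHistory(t)$. The aim is to bound this by $\kappa_n\, \EE^{\model}[\visits_\arm(T)] + \OH(\EE^{\model}[\#\text{runs of }\arm])$. The run count is at most $\switchingfunction(\visits_\arm(T))$ by the rarely switching property, which is $\oh(\visits_\arm(T))$.

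\textbf{Change of measure.} Consistency on $\model'_n$ forces $\EE^{\model'_n}[T - \visits_\arm(T)] = \oh(T^\epsilon)$ for every $\epsilon>0$. Consistency on $\model$ forces $\EE^{\model}[\visits_\arm(T)] = \oh(T^\epsilon)$. The Bretagnolle--Huber inequality applied to the event $\{\visits_\arm(T) > T/2\}$ then gives the standard bound: the KL divergence between the two observation laws up to time $T$ is at least $(1-\oh(1))\log(T)$. Combined with the KL bound above and the sublinearity of $\switchingfunction$, this yields $\kappa_n \EE^{\model}[\visits_\arm(T)] \ge (1-\oh(1))\log T$. Letting $n \to \infty$ gives the claim, and hence $\regret(T) = \omega(\log T)$.

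\textbf{Main obstacle.} The delicate step is the per-pull KL bound. The conditional law of the hidden state given $\ObservedHistory(t)$ is a belief, not a state. Under $\model$ the reward after $L$ consecutive pulls comes from the true chain, while under $\model'_n$ it is a mixture weighted by the belief of being in the good component. That belief grows roughly like $\epsilon_n L$ within a run, so naively the KL within a run scales like $\epsilon_n\eta_n^2 L^2$ rather than linearly in $L$. I would first try to tune $\epsilon_n,\eta_n$ so that $\eta_n\to0$ while the good component's gain still exceeds $\gain_*$, and then use the convexity bound $\mathrm{KL}(P\,\|\,(1-w)P + wQ) \le w\,\chi^2$-type estimates to obtain a linear-in-$\visits_\arm$ bound with small constant. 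If that fails, I would use the rarely switching property more aggressively: runs of $\arm$ must be long on average, and $\kernel^0$ resets the hidden component, so the quadratic growth can be charged against the number of runs. This is where I expect the real work to lie.
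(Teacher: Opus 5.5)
\textbf{There is a genuine gap, exactly at the step you defer, and neither of your repairs closes it.} Your skeleton matches the paper's: an alternative in which the sub-optimal arm $\arm$ leaks, with probability $\epsilon_n$ per activation, into a hidden absorbing high-reward component; a change of measure using consistency on both instances; then $\epsilon_n\to0$.

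Your first repair is infeasible. The good component is absorbing under $\kernel^1_\arm$ and is reached almost surely, so the gain of $\arm$ in $\model'_n$ is $(1-\eta_n)\gain_\arm+\eta_n$. Making $\arm$ optimal therefore forces $\eta_n>\Delta_\arm/(1-\gain_\arm)$, a fixed positive constant, so you cannot send $\eta_n\to0$.

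Your second repair runs against you. Take your own per-run estimate $\epsilon_n\eta_n^2L_r^2$ for a run of length $L_r$. By Cauchy--Schwarz, $\sum_r L_r^2\ge \visits_\arm(T)^2/(\text{number of runs})$. This total is \emph{superlinear} in $\visits_\arm(T)$ precisely because the learner is rarely switching: few runs means long runs. So the belief-level route gives no bound of the form $\kappa_n\EE^{\model}[\visits_\arm(T)]+O(\text{runs})$, and the Bretagnolle--Huber step then yields nothing.

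\textbf{The missing idea is that no belief analysis is needed.} The divergence between the laws of $\ObservedHistory(T)$ is at most the divergence between the laws of the full histories $\History(T)$, hidden states included. This is data processing. Equivalently, perform the change of measure directly on the full-history space: $\visits_\arm(T)$ is measurable there, and the learner's kernels cancel in the likelihood ratio because they depend on observations only.

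On full histories the Markov chain rule (the paper's relative-entropy lemma) is explicit. Along $\model$-trajectories the hidden chain never enters the added component, and rewards and $\kernel^0$ on original states are unchanged. Each activation of $\arm$ from an original state $s$ therefore contributes exactly $\mathrm{KL}\bigl(\kernel^1_\arm(s)\,\|\,(1-\epsilon_n)\kernel^1_\arm(s)+\epsilon_n\nu\bigr)=\log\frac{1}{1-\epsilon_n}$.

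So the divergence equals $\log\frac{1}{1-\epsilon_n}\,\EE^{\model}[\visits_\arm(T)]$. This is linear in pulls, has no run term, and holds for any fixed $\eta_n$. Your Bretagnolle--Huber argument then gives $\EE^{\model}[\visits_\arm(T)]\ge(1-o(1))\log(T)/\log\frac{1}{1-\epsilon_n}$, and letting $\epsilon_n\to0$ concludes.

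This is the paper's proof. It uses a reward-$1$ absorbing state $\state_\infty$, and in place of Bretagnolle--Huber it uses the inequality $\log\EE'[X]+\mathrm{KL}\ge\EE[\log X]$ with $X_T=T-\visits_\arm(T)$; both work.

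One point in your favour: keeping part of arm $\arm$'s own reward law in the good component leaves rewards in $\activationset$ possible there. Your alternative therefore still satisfies the decision-epoch assumption, whereas a deterministic reward-$1$ state silently needs $1\in\activationset$.
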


In particular, \Cref{equation_lower_bound} implies that no algorithm achieves $\OH(\log(T))$ regret, the gold standard for a large variety of RL problems, ranging from multi-armed bandits (\cite{lai1985asymptotically,auer_using_2002}) to Markov decision processes (\cite{auer_logarithmic_2006}).

We further believe that the rarely switching property of $\learner$ can be changed to $\model \in \models_\mathrm{sd}$, i.e., that the regret scales super-logarithmically on the class of self-degrading instances, without requiring the rarely switching property.
We note that super-logarithmic lower bounds have also been found in other settings, including  multi-armed bandits problems for which the eligible set of reward distributions is too large, see \cite[Exercise 16.4]{lattimore_bandit_2020}.

\begin{proof}[Proof sketch of \Cref{theorem_lowerbound_dependent_universal} (see complete proof in \Cref{appendix_lowerbound_instance_dependent}.)] 
    The proof involves the construction of an alternative model $\model_\arm^\epsilon$ that  (\emph{1}) is nearly indistinguishable from the original $\model_\arm$ from observations only, and (\emph{2}) has a different optimal arm. 
    The alternative model $\model_\arm^\epsilon$ is constructed by adding an $\epsilon$-transition to an absorbing state $\state_\infty$ that yields maximal reward $\Reward = 1$, see \Cref{figure_transformation}.
    We construct $\model^\epsilon$ from $\model$ by picking a suboptimal arm $\arm$ and changing $\model_\arm$ to $\model_\arm^\epsilon$, hence making $\arm$ the best arm for $\model^\epsilon$.
    
    Due to consistency, the algorithm learns the optimal policy in both $\model$ and in $\model^\epsilon$---yet the two instances do not have the same optimal policy. 
    Hence, to play optimal in $\model$, the algorithm must reject the plausibility that the true underlying environment is $\model^\epsilon$.
    As $\model$ and $\model^\epsilon$ only differ at arm $\arm$, the algorithm can only reject $\model^\epsilon$ by activating arm $\arm$. 
    As a result, this gives a lower bound on the number of pulls of~$\arm$. 

    \begin{figure}[ht]
        \centering
        \begin{tikzpicture}
            \begin{scope}[shift={(0, 0)}]
                \node at (0, 1.2) {Initial, true $\model_\arm$}; 
                \node[draw, rectangle, minimum width=1.5cm, minimum height=1.5cm, rounded corners=0.25cm] (Ma) at (0, 0) {$\model_\arm$};
            \end{scope}
            \begin{scope}[shift={(3, 0)}]
                \node at (0.9, 1.2) {Possible alternative $\model_\arm^\epsilon$}; 
                \node[draw, rectangle, minimum width=1.5cm, minimum height=1.5cm, rounded corners=0.25cm] (Ma) at (0, 0) {$\model_\arm$};
                \node[draw, circle, inner sep=0.2em] (soo) at (2, -.4) {$\state_\infty$};
                \draw[->, dashed] (Ma.east) ++ (0,-0.4) to node[midway, above] {$\epsilon$} (soo);
                \draw[->, loop, looseness=5] (soo) to node[midway, above] {$R = 1$} (soo);
            \end{scope}
        \end{tikzpicture}
        \caption{
        \label{figure_transformation}
            An illustration of the transformation $\model_\arm^\epsilon$.
        }
    \end{figure}

    More precisely, we show that $\EE^{\model,\learner} [\visits_\arm(T)] \ge \frac 1\epsilon \log(T)$.
    Since $\epsilon > 0$ is arbitrary, we deduce that $\EE^{\model, \learner} [\visits_{\arm}(T)] = \omega(\log(T))$, which directly lower bounds the expected pseudo-regret.
    \Cref{equation_lower_bound} now follows by linking the pseudo-regret to the regret using the $\switchingfunction$-rarely switching property (\Cref{proposition_regret_and_pseudo_regret}).
\end{proof}

\subsection{An indexed optimistic algorithm: \texttt{UCB-NOM}}
\label{subsec:optimistic}
In this \Cref{subsec:optimistic}, we present \algname{}   (Algorithm~\ref{algo:UCB-POMB}).
\algname{} follows the \emph{optimism-in-the-face-of-uncertainty} (OFU) principle, and begins by constructing an \emph{optimistic index} $\tilde{\gain}_\arm (t)$ for every arm $\arm$.
To construct that index, the rationale goes as follows.
First, we provide a natural estimator $\hat{\gain}_\arm (t)$ for the value $\gain_\arm$, then bound by how much that estimator tends to deviate from $\gain_\arm$, and based on that bound, we finally construct a high-probability upper-bound $\tilde{\gain}_\arm (t)$ of $\gain_\arm$. 
Following the OFU principle, our algorithm simply activates the arm maximizing $\tilde{\gain}_\arm (t)$, and keeps it active until the number of visits $\visits_\arm(t)$ doubles. 

We now provide details on how the index of \algname{} is constructed. 

\begin{algorithm}[ht]
    \begin{algorithmic}[1]
        \REQUIRE{A span proxy function $\spanproxy$. \\  \hspace{2.75em} A confidence function $\delta: \NN \to [0,1]$.}
        \STATE $t \gets 1$;
        \FOR {episodes $k=1, 2, \ldots$}
            \STATE Set $t_k \gets t$;
            \STATE Compute indexes $\algindex_{\arm} (t_k)$ as 
            \begin{equation}
            \label{equation_optimistic_estimate}
                \algindex_\arm (t_k)
                :=
                \hat{\gain}_\arm (t_k)
                + \frac{
                    \spanproxy(t_k) \cdot \errorfunction_\arm (t_k, \delta)
                }{\visits_\arm(t_k)}
            \end{equation}
            \STATE Pick an arm $\Arm(t_k)\in \arg \max \tilde{\gain}_\arm (t_k)$ maximizing the index;
            \WHILE{$\visits_{\Arm(t_k)} (t) < 2 \visits_{\Arm(t_k)} (t_k)$ or $t$ isn't an activation time}
                \STATE Activate the arm $\Arm(t) := \Arm(t_k)$;
                \STATE Observe the new reward $\Reward(t+1)$;
                \STATE Increment $t \gets t + 1$;
            \ENDWHILE
        \ENDFOR
    \end{algorithmic}
    \caption{
    \label{algo:UCB-POMB}
        \texttt{UCB} for \texttt{N}on-\texttt{O}bservable \texttt{M}arkovian bandits (\algname).
    }
\end{algorithm}

\paragraph{The empirical value of an arm.}
To estimate the value of an arm (its gain $\gain_\arm$), we simply aggregate and take the average of all the rewards collected by pulling the said arm.
We call it the \emph{empirical value} of arm $\arm$, and denote it $\hat{\gain}_\arm (t)$.
Formally,
\begin{equation}
\label{equation_empirical_estimate}
    \hat{g}_\arm(t)
    :=
    \frac {1}{
        \visits_\arm (t)
    } 
    \sum_{i=1}^{t-1} 
    \indicator{\Arm(i)=\arm}\, \Reward(i+1), 
\end{equation}
with the convention $\hat{g}_\arm(t) = 1$ when $\visits_\arm(t) = 0$.
As dictated by the OFU principle, we are interested in constructing a \emph{high probability upper bound} of $\gain_\arm$ from $\hat{\gain}_\arm (t)$.
We do so by bounding $\abs{\hat{\gain}_\arm(t) - \gain_\arm}$ via a concentration result (\Cref{lemma_estimation_error}), that bounds that error as a function of the number of pulls to arm~$\arm$, the number of switches to arm~$\arm$, and the bias span.

\Cref{lemma_estimation_error} is at the heart of the construction of the index $\tilde{\gain}_\arm (t)$ of our algorithm as such, we provide its proof in the main body.

\begin{proposition}[Error of the empirical estimator]
\label[proposition]{lemma_estimation_error}
    For all arm $\arm \in \arms$ and confidence threshold $\delta > 0$, 
    \begin{equation*}
        \PP \parens[\big]{
            \exists t \ge 1,
            \visits_\arm (t) 
            \abs*{\hat{\gain}_\arm (t) - \gain_\arm}
            >
            D \cdot \errorfunction_\arm (t, \delta)
        }
        \le
        \delta,
    \end{equation*} 
    where $D \equiv 1 + \max_{\arm \in \arms} \vecspan(\bias_\arm)$ and the error function is given by
    \begin{equation}
    \label{equation_error_function}
        \errorfunction_\arm (t, \delta)
        :=
        \sqrt{
            \visits_\arm (t)
            \log \parens*{
                \frac{1 + \visits_\arm (t)}\delta
            }
        }
        + \sum_{i=1}^{t-1}
        \indicator{
            A(i-1) \ne a,
            A(i) = a
        }.
    \end{equation}
\end{proposition}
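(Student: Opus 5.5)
We follow the classical Poisson-equation decomposition of Markov reward processes. Fix an arm $\arm$ and write $\bias_\arm$ and $\gain_\arm$ for its bias and gain under $\kernel_\arm^1$. Since $\kernel_\arm^1$ is unichain, the Poisson equation $\gain_\arm + \bias_\arm(\state) = \reward_\arm(\state) + (\kernel_\arm^1 \bias_\arm)(\state)$ holds for every $\state \in \states_\arm$. For each time $i$ with $\Arm(i) = \arm$ we decompose
\begin{equation*}
    \Reward(i+1) - \gain_\arm
    =
    \underbrace{\bigl(\Reward(i+1) - \reward_\arm(\State_\arm(i))\bigr)}_{\text{reward noise}}
    + \underbrace{\bigl(\bias_\arm(\State_\arm(i+1)) - (\kernel_\arm^1\bias_\arm)(\State_\arm(i))\bigr)}_{\text{transition noise}}
    + \bigl(\bias_\arm(\State_\arm(i)) - \bias_\arm(\State_\arm(i+1))\bigr).
\end{equation*}
Summing over the activation times $i \le t-1$ of $\arm$ gives $\visits_\arm(t)(\hat\gain_\arm(t) - \gain_\arm)$. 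Each of the first two terms is a martingale difference sequence with respect to the full-history filtration, once it is indexed by the successive pulls of $\arm$. The reward noise is bounded in $[-1,1]$. After centering $\bias_\arm$ so that it takes values in $[0,\vecspan(\bias_\arm)]$, the transition noise lies in an interval of length $\vecspan(\bias_\arm)$. Their sum is therefore a martingale difference whose range has length at most $1 + \vecspan(\bias_\arm) \le D$.

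The third term would telescope over a single uninterrupted run of activations of $\arm$, but it breaks at every switch. Group the activations of $\arm$ into maximal consecutive blocks $[s_j, e_j]$. Within a block the sum telescopes to $\bias_\arm(\State_\arm(s_j)) - \bias_\arm(\State_\arm(e_j+1))$, which is at most $\vecspan(\bias_\arm)$ in absolute value. The number of blocks started before $t$ is exactly $\sum_{i=1}^{t-1}\indicator{\Arm(i-1)\ne\arm, \Arm(i)=\arm}$. Hence the third term is bounded by $\vecspan(\bias_\arm)$ times the switching count, which gives the second summand of $\errorfunction_\arm$ with constant $D$. One boundary issue needs care: a block still ongoing at time $t$ ends at $\State_\arm(t)$. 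This is harmless because the telescoping bound is still just a span. Nothing here is ever evaluated by the learner, so non-observability of states plays no role in the analysis.

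The main obstacle is the martingale term. We need a bound that holds uniformly over all $t$ (equivalently, over all values of $\visits_\arm(t)$), with the $\log((1+\visits_\arm(t))/\delta)$ rate and no extra $\log\log$ or union-bound slack. We first re-index by the pull count: let $M_n$ be the martingale sum over the first $n$ pulls of $\arm$. Because the learner's decisions are predictable, optional sampling turns $M_n$ into a martingale in $n$ with increments in an interval of length $D$.

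We then apply a time-uniform Azuma--Hoeffding bound of the Laplace/mixture type (the self-normalized method of mixtures, as in Abbasi-Yadkori et al.). The supermartingale $\exp(\lambda M_n - \lambda^2 D^2 n/8)$ is integrated against a Gaussian prior on $\lambda$. Ville's inequality then gives, with probability at least $1-\delta$ and simultaneously for all $n$,
\begin{equation*}
    |M_n| \le D\sqrt{\tfrac{(n+1)}{2}\log\bigl(\tfrac{\sqrt{n+1}}{\delta}\bigr)}\cdot c,
\end{equation*}
which, after absorbing constants, is at most $D\sqrt{n\log((1+n)/\delta)}$. If the constants do not quite fit, the fallback is a peeling argument over $n$ combined with a union bound weighted by $\delta/(n(n+1))$. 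This also yields $\log((1+n)/\delta)$ up to the constant, which could then be adjusted. Adding the martingale bound and the switching bound then proves the claim on a single event of probability at least $1-\delta$.
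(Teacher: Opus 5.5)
Your proposal follows the paper's proof: the same Poisson-equation split of $\visits_\arm(t)(\hat{\gain}_\arm(t)-\gain_\arm)$ into a martingale noise with conditional range $1+\vecspan(\bias_\arm)$, controlled by a time-uniform Azuma--Hoeffding bound (the result of Bourel et al.\ cited by the paper is exactly this Laplace/mixture argument), plus a nearly telescoping bias term costing at most $\vecspan(\bias_\arm)$ per block of consecutive pulls; your block-wise telescoping is slightly cleaner than the paper's count $1+\sum_k \indicator{\tau_k^\arm+1\ne\tau_{k+1}^\arm}$, which can exceed the number of blocks by one. The fallback is unnecessary: $\frac{n+1}{2}\log\frac{\sqrt{n+1}}{\delta}\le n\log\frac{1+n}{\delta}$ for all $n\ge 1$ and $\delta\le 1$, and the claim is trivial for $\delta>1$, so your mixture bound with $c=1$ already gives the stated constants exactly.
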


\begin{proof}
    Consider the sequence of stopping times given by $\tau_{1}^\arm := \inf \braces{t \ge 1 : \Arm(t) = \arm}$ and $\tau_{k+1}^\arm := \inf \braces{t > \tau_k^\arm : \Arm(t) = \arm}$, that enumerates the time-instants when the played arm is $\arm \in \arms$.
    First, introduce the noise terms
    \begin{equation*}
    \begin{aligned}
        \noise_\arm^\reward (t) 
        & := 
        \sum_{k=1}^{\visits_{\arm}(t)} 
        \parens[\Big]{
            \Reward(\tau_k^a+1) 
            - 
            \reward_\arm (\State(\tau_k^\arm))
        }
        \qquad\text{and}\qquad
        \noise_\arm^\kernel (t) 
        := 
        \sum_{k=1}^{\visits_{\arm}(t)} 
        \parens[\Big]{
            \bias_\arm (\State_\arm (\tau_k^\arm+1)) 
            - 
            \kernel_\arm^1 (\State_\arm(\tau_k^\arm)) \bias_\arm
        },
    \end{aligned}
    \end{equation*}
    that respectively measure how much the immediate rewards and transitions---both obtained by activating arm $\arm$---deviate from their means.
    Further, introduce their sum $\noise_\arm (t) := \noise_\arm^\reward (t) + \noise_\arm^\kernel (t)$.
    
    For $t \ge 1$, we have
    \begin{align}
    \nonumber
        \visits_\arm (t)
        \hat{\gain}_\arm (t)
        & =
        \sum_{i=1}^{t-1}
        \indicator{\Arm(i) = \arm}
        \Reward(i+1)
        \\
    \nonumber
        & =
        \sum_{k=1}^\infty
        \indicator{\tau_k^\arm < t}
        \Reward(\tau_k^\arm + 1)
        \\
    \nonumber
        & \overset{\eqnum{1}}=
        \sum_{k=1}^{\visits_\arm (t)}
        \Reward(\tau_k^\arm + 1)
        \\
    \nonumber
        & \overset{\eqnum{2}}=
        \sum_{k=1}^{\visits_\arm (t)}
        \parens*{
            \gain_\arm
            + \parens*{
                \unit_{\State_\arm(\tau_k^\arm)} 
                - \kernel_\arm^1 (\State_\arm(\tau_k^\arm))
            } \bias_\arm
        }
        +  \noise_\arm^\reward(t)
        \\
    \label{equation_estimation_error_1}
        & \overset{\eqnum{3}}=
        \visits_\arm (t) 
        \gain_\arm
        +
        \sum_{k=1}^{\visits_\arm (t)} \parens*{
            \unit_{\State_\arm(\tau_k^\arm)}
            - \unit_{\State_\arm(\tau_k^\arm+1)}
        } \bias_\arm
        + \noise_\arm (t)
    \end{align}
    where 
    \texteqnum{1} uses that $\visits_\arm (t) := \sup \braces{k : \tau_k^\arm < t}$; and
    \texteqnum{2} introduces the canonical basis $(\unit_\state)_{\state \in \states_\arm}$ of $\RR^{\states_\arm}$ and invokes the Poisson equation under $\policy^\arm$, given by $\gain_\arm + \bias_\arm (\state_\arm) = \reward(\state_\arm) + \kernel_\arm^1 (\state_\arm) \bias_\arm$\footnote{Here a compact vector notation is used.}.
    In \texteqnum{3}, we recognize two error terms.
    The first is linked to the number of switches, while the second is a martingale noise.

    First, the martingale noise $\noise_\arm (t)$ is bounded using a time-uniform Azuma-Hoeffding's inequality (\cite{bourel_tightening_2020}). 
    We get that, uniformly for $t \ge 1$ and with probability at least $1 - \delta$, 
    \begin{equation*}
        \abs{
            \noise_\arm^\reward (t)
            +
            \noise_\arm^\kernel (t)
        }
        \le
        \parens*{1 + \vecspan(\bias_\arm)}
        \sqrt{
            \visits_\arm (t) 
            \log \parens*{
                \tfrac{1 + \visits_\arm (t)}\delta
            }
        }.
    \end{equation*}
    For the first error term of \Cref{equation_estimation_error_1}, we have:
    \begin{align*}
        \Bigg|
            \sum_{k=1}^{\visits_\arm (t)}
            \Big( 
                \unit_{\State(\tau_k^\arm)}
                - \unit_{\State(\tau_k^\arm+1)}
            \Big) \bias_\arm
        \Bigg|
         & \le 
        \vecspan(\bias_\arm)
        \parens*{
            1
            + 
            \sum_{k=1}^{\visits_\arm (t)}
            \indicator{
                \State(\tau_k^\arm + 1) \ne \State(\tau_{k+1}^\arm)
            }
        }
        \\
        & \le 
        \vecspan(\bias_\arm)
        \parens*{
            1
            + 
            \sum_{k=1}^{\visits_\arm (t)}
            \indicator{
                \tau_k^\arm + 1 \ne \tau_{k+1}^\arm
            }
        }
        \\
        & \le
        \vecspan(\bias_\arm)
        \sum_{i=1}^{t-1}
        \indicator{\Arm(i-1) \ne \arm, \Arm(i) = \arm}
        .
    \end{align*}
    We conclude by plugging everything   back in \Cref{equation_estimation_error_1}.
\end{proof}

\paragraph{Building an optimistic estimator.}
\label{subsec:optimistic_estimator}
The bound on the estimation error established in \Cref{lemma_estimation_error} allows for the construction of an \emph{optimistic estimator} $\tilde{\gain}_\arm (t)$ of $\gain_\arm$.
By optimistic, we mean that $\algindex_\arm(t)$ serves as a high probability upper bound on the arm’s value $\gain_\arm$, that is, 
$
    \Pr \parens*{
        \exists t \ge 1,
        \algindex_\arm (t) < \gain_\arm
    }
    \le
    \delta.
$
A direct consequence of \Cref{lemma_estimation_error} is that 
\begin{equation}
\label{equation_idealistic_index}
    \Pr \parens*{
        \exists t \ge 1
        :
        \hat{\gain}_\arm (t)
        + \frac{D \cdot \errorfunction_\arm (t, \delta)}{\visits_\arm (t)}
        < 
        \gain_\arm
    }
    \le
    \delta.
\end{equation}
\Cref{equation_idealistic_index} suggests that $\hat{\gain}_\arm (t)
+ \frac{D \cdot \errorfunction_\arm (t, \delta)}{\visits_\arm (t)}$ is a natural choice for $\algindex_\arm (t)$.
Unfortunately, the quantity $D$ is unknown and varies depending on the underlying instance of Markovian bandit. 
To circumvent this difficulty, we consider a \emph{span proxy function} $\spanproxy : \NN \to \RR_+$ diverging to $+\infty$.
Eventually, $f(t)$ exceeds $D$, so that for every instance $\model \in \models$, there is a time threshold $\beta_\spanproxy \in \RR_+$, s.t., 
\begin{equation}
\label{equation_optimistic_index}
    \Pr \parens*{
        \exists t \ge \beta_\spanproxy
        :
        \hat{\gain}_\arm (t)
        + \frac{\spanproxy(t) \cdot \errorfunction_\arm (t, \delta)}{\visits_\arm (t)}
        < 
        \gain_\arm
    }
    \le
    \delta.
\end{equation}
That time threshold is $\beta_f := 1 + \sup\braces{t \ge 1: f(t) \le D}$.
The above justifies our choice of the index $\algindex_\arm(t)$ in  \Cref{equation_optimistic_estimate} of \Cref{algo:UCB-POMB}.

\paragraph{Taming the number of switches.}
\label{subsec:doubling_trick}
It is preferable for the optimistic estimator to converge to the empirical estimator when $\visits_\arm (t)$ clearly overshoots, or said differently, that the confidence region for $\gain_\arm$ shrinks to a singleton when $\arm$ is the most visited arm for instance.
To make sure of it, we need the associated error $\errorfunction_\arm (t, \delta) / \visits_\arm (t)$ to vanish asymptotically.
In \Cref{equation_error_function}, we see that the error function $\errorfunction_\arm (t,\delta)$ increases with the number of switches $\sum_{i=1}^{t-1} \indicator{\Arm(i-1) \ne \arm, \Arm(i) = \arm}$.
So, $\errorfunction_\arm (t, \delta) / \visits_\arm (t)$ can only be vanishing if the number of switches grows sublinearly in $\visits_\arm (t)$; In other words, when the algorithm satisfies a rarely switching property (\Cref{definition_rarely_switching_arms}).

A simple method to obtain a rarely switching algorithm is to rely on the famous \emph{doubling trick} (see \cite{auer_gambling_1995} and \cite[\texttt{UCB2}]{auer_using_2002}): Upon activating arm~$\arm$, activate it until the visit counts of $\visits_\arm (t)$ doubles. 
Doing so, the number of switches to arm $\arm$ becomes of order $\log (1 + \visits_{\arm}(t))$, which makes the algorithm $\OH(\log)$-rarely switching.

Combining the above, we recover \algname{} (\Cref{algo:UCB-POMB}).

\paragraph{Comparison with \texttt{RCA}.}
Our algorithm \algname{} is closely related to \texttt{RCA} of \cite{tekin2012online}, as both are generalizations of \texttt{UCB} \citep{auer_using_2002} to Markovian bandits.
Two major differences with \texttt{RCA} are worth mentioning.

\begin{itemize}
    \item \emph{Observability of states and regeneration.}
        \texttt{RCA} stands for \emph{Regenerative Cycle Algorithm}, and the algorithm of \cite{tekin2012online} observes the stream of states to calibrate its arm estimates better. 
        In particular, \texttt{RCA} always make sure to switch off from arm $\arm$ while arm $\arm$ is in a reference state $\state_\arm \in \states_\arm$.
        Moreover, when it switches onto arm $\arm$, it ignores the first pool of rewards until $\state_\arm$ is reached. 
        This makes the empirical estimator of $\gain_\arm$ --- the $\hat{\gain}_\arm (t)$ --- of \cite{tekin2012online} different from our own.
        Their estimator is unbiased---hence better---because it does not require the correction term $\sum_{i=1}^{t-1} \indicator{\Arm(i-1) \ne \arm, \Arm(i) = \arm}$ due to switching arms in the optimistic bonus.
        However, because states are non-observable in our setting, the regeneration technique is unachievable. 
        
    \item \emph{Optimistic bonus.}
        In the end, the optimistic bonus is a correction term.
        It is about how much the aggregate rewards of a Markov reward process tends to deviate from its mean, i.e., is about isolating the trajectory of rewards $(\Reward_i^\arm)_{i\ge1}$ generated by arm $\arm \in \arms$ and comparing $\sum_{i=1}^t \Reward_i^\arm$ to $t \cdot \gain_\arm$.
        For both, the optimistic bonus is of the form $\sqrt{L / \visits_\arm (t)}$, where
        \begin{equation*}
            L 
            \asymp
            \begin{cases}
                \frac{\abs{\states_\arm}^2 \hat{\pi}_\mathrm{max}^2}{\epsilon_\mathrm{min}}
                & \text{for \cite{tekin2012online};}
                \\
                1 + \vecspan(\bias_\arm)^2
                & \text{for us,}
            \end{cases}
        \end{equation*}
        where $\hat{\pi}_\mathrm{max} \le 1$ is related to the stationary distribution of the chain $\model_\arm$ and $\epsilon_\mathrm{\min} > 0$ is the spectral gap, see \citep{tekin2012online}.
        The presence of the spectral gap in the bound of \cite{tekin2012online} naturally emerges from their analysis: to compare $\sum_{i=1}^t \Reward_i^\arm$ and $t \cdot \gain_\arm$, the authors focus on the speed of convergence of the empirical measure of presence of $(\State^\arm_i)_{i \ge 1}$, relying on standard Markov chain techniques. 
        In general, the spectral gap is related to the mixing time as $t_\mathrm{mix} = \smash{\widetilde{O}(\abs{\states} / \epsilon_\mathrm{min})}$,\footnote{$\widetilde{O}(-)$ hides logarithmic factors.} see \citep{jerison_general_2013}, while the bias span is related to the mixing time with $\vecspan(\bias) = \OH(t_\mathrm{mix})$, see \citep{wang_near_2022}.
        A dependency in $\vecspan(\bias)$ is better than one in $(\abs{\states}, \epsilon_\mathrm{min})$: we have no dependency in the number of states and $\vecspan(\bias)$ ties the transition structure together with the reward mechanism.
        However, our $\vecspan(\bias)^2$ is a bit loose --- it could be improved using variance reduction techniques \cite[Chapter~5]{boone_thesis_2024}, but proof details would then be more delicate.
\end{itemize}

\subsection{An asymptotic regret bound for \texttt{UCB-NOM} matching the general regret lower bound} 
\label{subsec:general_upperbound}

We have shown that the regret of every algorithm must be $\omega(\log(T))$ and we have described an algorithm (\Cref{algo:UCB-POMB}), parameterized by a bias proxy $\spanproxy$. 
The next theorem demonstrates that the idealistic $\log(T)$ regime is approached arbitrarily closely by \algname{}.

\begin{theorem}[Upper bound]
\label{theorem_upperbound_f_diverging}
    Let $\spanproxy : \RR_+ \to \RR_+$ an increasing function diverging to infinity and a confidence function $\delta(t) : \NN \to \RR_+^*$ such that $\sum_{t \ge 1}\delta(t) < \infty$ and $\log (\frac 1{\delta(t)}) = \Theta(\log(t))$.\footnote{
        For instance, any $\delta(t)$ of the form $\delta(t) = (\frac 1t)^{1 + \epsilon}$ with $\epsilon > 0$ works.
    } 
    For all instance $\model \in \models$, 
    \begin{equation}
    \label{equation_upperbound_f_diverging}
        \regret(T; \model, \algname(\spanproxy)) 
        = 
        \OH \parens*{
            \sum_{\arm \notin \arms^*}
            \frac{\spanproxy(T)^2 \log(T)}{\gain_* - \gain_\arm}
            + 
            \sum_{\arm \notin \arms^*}
            \parens*{\gain_* - \gain_\arm} 
            (\beta_f)^2
        },
    \end{equation}
    where $\arms^* := \arg\max_{\arm \in \arms} \gain_\arm$ is the set of optimal arms and $\beta_f := 1 + \sup \braces*{t \ge 1: f(t) \le D}$ is a burn-in cost (see \Cref{equation_optimistic_index}), with $D := \max \braces{\max_\arm \vecspan(\bias_\arm) + 1, C_\tau}$ a threshold that depends on $\model$.
\end{theorem}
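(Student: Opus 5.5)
We reduce the regret to the pseudo-regret and then run a \texttt{UCB2}-type counting argument on pulls of suboptimal arms. \algname{} is $\OH(\log)$-rarely switching by the doubling trick, up to the extra steps spent waiting for an activation time, which are controlled in expectation by \Cref{assumption_strong_decision_epochs}. \Cref{proposition_regret_and_pseudo_regret} therefore gives $\regret(T) = \EE[\pseudoregret(T)] + \OH(\log \EE[\pseudoregret(T)])$. It thus suffices to show, for each $\arm \notin \arms^*$,
\begin{equation*}
    \EE[\visits_\arm(T)] = \OH\parens*{\frac{\spanproxy(T)^2 \log T}{(\gain_*-\gain_\arm)^2} + \beta_f^2}.
\end{equation*}
The $\log$ correction is then absorbed in the leading term. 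If $\arms^*$ is not a singleton, the proof of \Cref{proposition_regret_and_pseudo_regret} should still go through, since only suboptimal arms contribute to both quantities.

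\textbf{Step 1 (good event).} Fix an optimal arm $\arm^*$. Define $\mathcal{E}_t$ as the event that, for every arm $b$, the bound of \Cref{lemma_estimation_error} holds at time $t$ with confidence $\delta(t)$. Here the union over $t$ is handled by $\sum_t \delta(t) < \infty$, so that $\sum_t \PP(\mathcal{E}_t^c) = \OH(1)$. Note that $\EE$ of the episode length is at most $2\visits + C_\tau$. On $\mathcal{E}_{t_k}$ with $t_k \ge \beta_f$, we have $\spanproxy(t_k) \ge D$, so optimism holds: $\algindex_{\arm^*}(t_k) \ge \gain_*$. Moreover,
\begin{equation*}
    \algindex_\arm(t_k) \le \gain_\arm + 2\spanproxy(t_k)\errorfunction_\arm(t_k,\delta)/\visits_\arm(t_k).
\end{equation*}

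\textbf{Step 2 (counting).} Arm $\arm$ is selected at episode $k$ only if $\algindex_\arm(t_k) \ge \algindex_{\arm^*}(t_k)$, which on the good event forces
\begin{equation*}
    \gain_*-\gain_\arm \le 2\spanproxy(T)\errorfunction_\arm(t_k,\delta)/\visits_\arm(t_k).
\end{equation*}
By the doubling trick, the switch count in $\errorfunction_\arm$ is at most $\log_2(1+\visits_\arm)+1$, and $\log(1/\delta(t)) = \Theta(\log t)$. Hence $\errorfunction_\arm \lesssim \sqrt{\visits_\arm \log T} + \log \visits_\arm$, and the inequality forces $\visits_\arm(t_k) \lesssim \spanproxy(T)^2\log T/(\gain_*-\gain_\arm)^2$. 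Each selected episode at most doubles $\visits_\arm$, plus an overshoot of expected size $C_\tau$ (this is why $C_\tau$ enters $D$). So the final $\visits_\arm(T)$ is at most twice this threshold plus $\OH(C_\tau \log T)$, which is dominated.

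\textbf{Step 3 (burn-in and bad events).} Episodes starting before $\beta_f$ can contribute at most $\OH(\beta_f)$ pulls each, plus their doubled tail. The doubling of the last such episode may push $\visits_\arm$ to order $\beta_f$, and we crudely bound the burn-in contribution by $\beta_f^2$. Episodes on $\mathcal{E}_{t_k}^c$ are handled as in \texttt{UCB2}: the expected number of pulls from an episode started at $t_k$ is at most $2t_k + C_\tau$, weighted by $\PP(\mathcal{E}_{t_k}^c) \le \delta(t_k)$. This requires $\sum_t t\,\delta(t) < \infty$, or, better, charging only $\visits_\arm(t_k)$. We expect this to be the main obstacle: a failure of the concentration bound can trigger an episode that doubles an already large count. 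The fix we would try is to use the time-uniform form of \Cref{lemma_estimation_error} with $\delta$ evaluated at the current count. Since $\log(1/\delta(t)) = \Theta(\log t)$, a bad event at scale $\visits$ has probability $\lesssim \delta(\visits)$, and the loss $\visits\,\delta(\visits)$ must be summable over dyadic scales, which holds because the scales are geometric. Multiplying Step 2 and Step 3 by the gaps and summing over $\arm \notin \arms^*$ yields \eqref{equation_upperbound_f_diverging}.
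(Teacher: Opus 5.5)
Your argument is correct in substance and takes a genuinely different route from the paper's. The paper bounds $\EE[\visits_\arm(T)]$ through a four-way split with a fixed margin $\epsilon$: empirical errors of $\arm$, optimism failures of $\arm^*$, optimistic pulls below a threshold $\chi_\epsilon(T)$, and ``late bloomers'', which cannot occur. It treats the first two with the per-sample bound of \Cref{lemma_gain_concentration_per_sample} and union bounds over times and counts, and the third with a stopping time $\tau_\chi$, so that only one overshoot is paid.

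You instead work on a single good event at confidence $\delta(t_k)$ for all arms, which caps $\visits_\arm(t_k)$ in good episodes at order $\spanproxy(T)^2\log(T)/\Delta_\arm^2$. Because you keep the switching term as $1+\log_2(1+\visits_\arm)$ rather than using $\log(1+x)\le\sqrt{x}$, you avoid the $(\spanproxy(T)/\Delta_\arm)^4$ term. Hence you do not need the hypothesis $\spanproxy(T)=\oh(\log(T)^{1/4})$ that \Cref{theorem_complete_upperbound_f_diverging} imposes and the statement omits.

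Your per-scale charging of bad episodes is the right way to work with $\sum_t\delta(t)<\infty$ alone. There is at most one episode of $\arm$ with $\visits_\arm(t_k)\in[2^j,2^{j+1})$, and it starts at some $t_k\ge 2^j$. For non-increasing $\delta$ (which you need, and which the paper also uses implicitly), the time-uniform \Cref{lemma_estimation_error} at the fixed confidence $\delta(2^j)$ bounds the probability that this episode starts outside the good event by $\abs{\arms}\delta(2^j)$. Cauchy condensation then gives $\sum_j 2^j\delta(2^j)\le 2\sum_t\delta(t)$. The condition $\log(1/\delta(t))=\Theta(\log t)$ plays no role in this step.

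In the paper, the corresponding step (II.1) writes the bonus of $\arm^*$ with $n=\visits_\arm(t)$ in place of $\visits_{\arm^*}(t)$. Once the two counts are kept apart, a union bound over $t$ charges a weight up to $t$ against a probability of order $\delta(t)$. That needs $\sum_t t\,\delta(t)<\infty$, which is only assumed in \Cref{theorem_worst_case_sqrt}. What the paper's bookkeeping buys in return is explicit second-order constants and a sharper switching term when $\abs{\arms^*}=1$.

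One justification in your reduction is wrong. \Cref{proposition_regret_and_pseudo_regret} does not extend to $\abs{\arms^*}>1$ ``because only suboptimal arms contribute''. The regret--pseudo-regret gap is driven by all switches (\Cref{lemma_regret_and_switches}). The proposition controls switches onto $\arm^*$ by switches off other arms, which requires uniqueness. With two optimal arms, \algname{} may alternate between them $\Theta(\log T)$ times. Each such switch can cost up to $\max_\arm\vecspan(\bias_\arm)$, and the pseudo-regret does not see it. The repair is to combine \Cref{lemma_regret_and_switches} with $K(T)\le\abs{\arms}\log(2T)$ from \Cref{lemma_number_of_episodes}. This adds $\OH(\abs{\arms}\max_\arm\vecspan(\bias_\arm)\log T)$, the term kept in \Cref{theorem_complete_upperbound_f_diverging}. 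It is absorbed by $\spanproxy(T)^2\log(T)$ since $\spanproxy\to\infty$, provided at least one arm is suboptimal.
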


The proof, together with error terms of higher orders, can be found in \Cref{appendix:upper_bound_instance_dependent}.
Note that the dominant term is of order $\spanproxy(T)^2 \log(T)$ and that notably, the delay constant $\strongdelayconstant$ is absent from it.  

In particular \Cref{theorem_upperbound_f_diverging} shows that the lower bound of \Cref{theorem_lowerbound_dependent_universal} is \emph{tight}, and that \algname{} achieves it.
Indeed, the definition of $\omega(\log(T))$ provided by the lower bound reads as follows: for a given instance~$\model$ and an algorithm~$\learner$, there exists a function $\spanproxy_0 \to \infty$ such that $\liminf_{T \to \infty} \regret(T; \model, \learner)/(\spanproxy_0 (T) \log(T)) > 0$.
For \algname{} with span proxy $\spanproxy_1 \equiv (\spanproxy_0)^{\frac 13}$, it holds that,
\begin{equation*}
    \limsup_{T \to \infty}
    \frac{
        \regret(T; \model, \algname(\spanproxy_1))
    }{
        \regret(T; \model, \learner)
    }
    = 0.
\end{equation*}
Hence, our $\algname(\spanproxy_1)$ has better regret than $\learner$ on $\model$.
It can even be further improved by running it with span proxy $\spanproxy_2 \equiv (\spanproxy_1)^{\frac 13}$, or $\spanproxy_3 \equiv (\spanproxy_2)^{\frac 13}$, and so and so forth.
This means that whatever the algorithm is, it can be forever improved, getting closer and closer to the idealistic (but non-reachable) $\log(T)$ regret regime.

\subsection{Worst case regret: what is the best way to choose the span proxy \texorpdfstring{$\spanproxy$}{f}?}
\label{subsec:worst_case_vacuous}

From an asymptotic viewpoint, the regret is simply $\OH(\spanproxy(T)^2 \log(T))$.
The best choice for $\spanproxy$ is thus a function diverging $+\infty$ as slowly as possible.
Unfortunately, there is no such function, because $\sqrt{\spanproxy}$ always diverges slower than $\spanproxy$. 
However, this observation only tells half of the story. 

As it turns out, due to the reliance of \algname{} on an optimistic estimator $\tilde{\gain}_\arm (t)$ to estimate the value $\gain_\arm$ of arm $\arm$, the asymptotic regret guarantee of \Cref{theorem_upperbound_f_diverging} hides a burn-in cost $\beta_f:=1 + \sup\braces{t \ge 1: f(t) \le D}$  (see \Cref{equation_optimistic_index}) that is eventually negligible, but dominant nonetheless in the early stages of the learning process. 
The regret upper-bound of \Cref{equation_upperbound_f_diverging} should therefore be read as such: The first term (in $\spanproxy(T)^2 \log(T)/(\gain_* - \gain_\arm)$) holds the asymptotic behavior regret, while the second term (in $\beta_f$) represents the regret that may be accumulated before the optimistic indices become valid---a \emph{burn-in cost}.

From a non-asymptotic viewpoint, no proper tradeoff between the asymptotic $\spanproxy(T)^2 \log(T)/(\gain_* - \gain_\arm)$ and the burn-in cost $\beta_f$ can be found. 
\Cref{theorem_linear_worst_case} states that independently of the choice of span proxy $f$, for every horizon $T \ge 1$, there will be an instance where the regret scales linearly.

\begin{theorem}[Worst case lower bound]
\label{theorem_linear_worst_case}
    Let $\learner$ be a learning algorithm.
    Let $\activationset \subseteq [0, 1]$ be arbitrary.
    For $T \ge 1$, we have
    \begin{equation}
        \sup_{\model \in \models}
        \regret(T; \model, \learner)
        \ge 
        \frac {11}{4800} \cdot T.
    \end{equation}
\end{theorem}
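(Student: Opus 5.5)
The plan is a needle-in-a-haystack argument in the spirit of \Cref{figure_reach_me} and \Cref{figure_transformation}. For a fixed horizon $T$, I will build a base instance $\model^0$ with two arms $\arms = \braces{1,2}$ and two alternatives $\model^1,\model^2$, and show that some $\model^j$ incurs linear regret.

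\textbf{Construction.} In $\model^0$ both arms are rested, $\kernel_\arm^0 = I$ as in \Cref{example:rested_bandits}, and each sits in a single state emitting the same reward distribution $\nu$ with mean $\reward_0<1$. The alternative $\model^j$ coincides with $\model^0$ except on arm $j$, to which I add the transformation of \Cref{figure_transformation}. Under activation, arm $j$ jumps with probability $\epsilon = c/T$ to an absorbing state $\state_\infty$ with reward $1$; under non-activation it stays frozen. Every $\model^j$ is then unichain under $\kernel^1$, and the constant $c$ will be tuned at the end.

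The activation set $\activationset$ enters only through $\nu$. If $\activationset$ intersects $[0,1)$, I pick $x \in \activationset \cap [0,1)$ and give $\nu$ an atom of mass $p$ at $x$. Decision epochs then occur at geometric times in every state, so \Cref{assumption_strong_decision_epochs} holds with $\strongdelayconstant \le 1/p$. The remaining mass of $\nu$ is chosen to fix $\reward_0$, say $\reward_0 = 1/2$. If $\activationset \subseteq \braces{1}$, switches are essentially impossible before reaching $\state_\infty$. The learner is then committed to its first arm $\Arm(1)$, and the alternative in which the \emph{other} arm is boosted already gives linear regret directly, so I treat this degenerate case separately.

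\textbf{Step 1: locating an under-explored arm.} Under $\model^0$ we have $\EE^{\model^0}[\visits_1(T)] + \EE^{\model^0}[\visits_2(T)] = T-1$, so some arm $j$ satisfies $\EE^{\model^0}[\visits_j(T)] \le T/2$. Until arm $j$ of $\model^j$ enters $\state_\infty$, the observation laws of $\model^0$ and $\model^j$ coincide. Let $E$ be the event that the jump occurs within $T$ steps. On the complement of $E$ the two observation processes can be coupled exactly, so
\begin{equation*}
    \mathrm{TV}\bigl(\PP^{\model^0}, \PP^{\model^j}\bigr) \le \PP^{\model^j}(E) \le \epsilon T = c
\end{equation*}
on histories of length $T$. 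Since $\visits_j(T) \le T$, this yields $\EE^{\model^j}[\visits_j(T)] \le T/2 + cT$.

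\textbf{Step 2: comparing values.} I write the learner's reward under $\model^j$ as $\reward_0 T$ plus $(1-\reward_0)$ times the expected number of activations of $j$ made \emph{after} the jump. Because arm $j$ is rested, it matters only how many times $j$ is pulled, not when. If the pulls of $j$ number $n$, the post-jump pulls number at most $\sum_{k\le n}\bigl(1-(1-\epsilon)^{k}\bigr) \le \epsilon n^2/2$. The pure policy $\policy_j$ collects $\reward_0 T + (1-\reward_0)\sum_{k\le T}\bigl(1-(1-\epsilon)^k\bigr) \approx \reward_0 T + (1-\reward_0)\, c\,T/2$ for small $c$.

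To convert the bound on $\EE[\visits_j]$ from Step 1 into a bound on $\EE[\epsilon\visits_j^2/2]$, I will again use the total variation coupling, now applied to the bounded quantity $\visits_j^2 \le T^2$. This gives a loss of at most $c\cdot \epsilon T^2$. I also need to control how $\EE^{\model^0}[\visits_j^2]$ relates to its mean; here I may instead choose $j$ to minimize $\EE^{\model^0}[\visits_j^2]$, noting that $\visits_1^2 + \visits_2^2 \le T^2$.

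Putting the pieces together, the regret is at least $(1-\reward_0)\,\epsilon\,T^2 \cdot(\text{const} - O(c))$, which is linear in $T$ because $\epsilon T^2 = cT$. Optimising over $c$ and $\reward_0$ then produces an explicit constant, to be compared with $\frac{11}{4800}$.

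\textbf{Main obstacle.} The hardest step is the arithmetic bookkeeping that yields a clean explicit constant, in particular the second-moment transfer between $\model^0$ and $\model^j$. It must be done while handling a general $\activationset$ and making sure each $\model^j$ genuinely belongs to $\models$, including the bounded-delay assumption.
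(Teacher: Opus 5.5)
Your Step 2 has a genuine gap: the bound on the learner's post-jump pulls ignores that the learner adapts to the jump.

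\textbf{Why the bound fails.} Let $J$ be the index of the activation of arm $j$ after which the chain sits in $\state_\infty$. Then $J$ is geometric with parameter $\epsilon$, and the number of post-jump pulls is $(n-J)^+$ with $n=\visits_j(T)$. The formula $\sum_{k\le n}(1-(1-\epsilon)^k)$ for its mean is valid only when $n$ does not depend on $J$. Here $n$ is chosen adaptively. After the jump, arm $j$ emits reward $1$ at every pull, which the learner observes (if $\nu$ puts no mass on $1$, a single observed $1$ reveals the jump). The learner can then stay on $j$ until the horizon.

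\textbf{A counterexample learner.} Play $j$ for the first $m\approx T/2$ rounds. Keep $j$ forever if the jump was detected; otherwise move to the other arm for good.
\begin{itemize}
\item Its expected number of post-jump pulls is $\approx\epsilon\sum_{k\le m}(T-k)=\frac38 cT$.
\item Your bound gives $\EE[\epsilon\visits_j(T)^2/2]\approx\frac18 cT+O(c^2T)$, under $\model^j$ or $\model^0$ alike.
\item With $m$ just below $T/2$, your selection rule picks this $j$. Your chain then certifies regret about $(1-\reward_0)\frac38 cT$ on $\model^j$, while the true regret is about $(1-\reward_0)\frac18 cT$.
\end{itemize}
The error is first order in $c$, so the total-variation transfer, which only costs $O(c^2T)$, cannot absorb it. The remark that ``only how many times $j$ is pulled matters'' is true pathwise, but it does not help because that count depends on $J$.

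\textbf{How to repair it.} The repaired argument needs neither Step 1 nor the TV estimate.
\begin{itemize}
\item Couple $\model^j$ with $\model^0$, and let $N^0_j$ be the number of activations of $j$ along the $\model^0$-trajectory.
\item $J$ is independent of that trajectory.
\item On $\{J\ge N^0_j\}$ the two trajectories coincide up to $T$. There the learner gets no post-jump reward, while $\policy_j$ gets $(T-1-J)^+$ post-jump pulls.
\end{itemize}
Hence
\begin{equation*}
\regret(T;\model^j)\ \ge\ (1-\reward_0)\,\EE\bigl[(T-1-J)^+\,\mathbf{1}\{J\ge N^0_j\}\bigr]\ \ge\ (1-\reward_0)\,\epsilon(1-\epsilon)^{T}\,\EE^{\model^0}\Bigl[\textstyle\sum_{k=\max(N^0_j,1)}^{T-2}(T-1-k)\Bigr].
\end{equation*}
Since $T-1-N^0_j=\visits_{j'}(T)$ under $\model^0$, with $j'$ the other arm, the right-hand side is at least
\begin{equation*}
(1-\reward_0)\,\frac{c(1-c)}{2T}\Bigl(\EE^{\model^0}\bigl[\visits_{j'}(T)^2\bigr]-T\Bigr).
\end{equation*}
So the shortfall is governed by $\frac{\epsilon}{2}(T-n)^2$, not by $\frac{\epsilon}{2}(T^2-n^2)$. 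Since $\visits_1^2+\visits_2^2\ge(T-1)^2/2$, you can choose $j$ with $\EE^{\model^0}[\visits_{j'}(T)^2]\ge(T-1)^2/4$. This gives linear regret of order $(1-\reward_0)\frac{c(1-c)}{8}T$.

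\textbf{Comparison with the paper.} The paper avoids this fine accounting altogether.
\begin{itemize}
\item It uses an asymmetric base instance with means $\frac12$ and $0$, and a single alternative that boosts the bad arm with $\epsilon=12/T$.
\item It applies Bretagnolle--Huber with $\mathrm{KL}\le\epsilon\,\EE[\visits_2(T)]$.
\item On the alternative, it lower-bounds the regret by the pseudo-regret minus the bias span $2/\epsilon$, using only that each pull of the boosted arm earns at most $1$.
\end{itemize}
That crude bound is immune to adaptivity, and it is affordable thanks to the base gap of $\frac12$. In your symmetric base the pseudo-regret is swamped by the bias span, which forces you into the post-jump accounting where adaptivity must be handled.

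A minor point: for $\activationset=\{1\}$ you can simply give $\nu$ an atom at $1$, so your degenerate case really only concerns $\activationset=\emptyset$. The paper's proof does not address $\activationset$ at all.
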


The proof can be found in \Cref{appendix_lowerbound_worst_case}.

\Cref{theorem_linear_worst_case} shows that our class of models is so broad that \emph{every} algorithm has linear regret in the worst case (in particular \algname{}, whatever the choice of $f$).
Hence, no single span proxy $f$ could be uniformly optimal. 
Different choices of $f$ will perform better on some instances and worse on others.

\section{Regret bounds with uniformly bounded bias span}
\label{section_bounded_bias}

In \Cref{section_learning_partial_information}, we have shown that for the space of all Markovian bandit problems, the regret of consistent algorithms must scale as $\omega(\log(T))$ and that this bound is achieved by \algname{} (\Cref{algo:UCB-POMB}).
To achieve this result, we have argued how important it is to make the span proxy $\spanproxy$ diverge to $+\infty$, so that $\spanproxy$ eventually upper-bounds the unknown worst bias span $\max_{\arm \in \arms} \vecspan(\bias_\arm)$.

What if a bound on the worst bias span is known?

In this section, we show that given prior knowledge on $\max_{\arm \in \arms} \vecspan(\bias_\arm)$, the span proxy function does not need to diverge to $+\infty$ anymore and the logarithmic regret regime can even be achieved.
In particular, by running \algname{} with span proxy $\spanproxy \equiv \max_{\arm \in \arms} \vecspan(\bias_\arm)$, we obtain instance dependent regret $\OH(\log(T))$ (\Cref{section_instance_dependent_bound}) and worst-case regret $\OH(\sqrt{T \log(T)})$ (\Cref{section_worst_case_bound}).

For $D \in \RR_+$, we introduce 
\begin{equation*}
    \models_{D}
    :=
    \braces*{
        \model \in \models
        :
        \max \braces*{
            \max_{\arm \in \arms}
            \vecspan(\bias_\arm)+1
            ,
            C_\tau
        } 
        \le 
        D
    },
\end{equation*}
the collection of instances with bias span and decision epoch delays $\delayconstant$ (\Cref{assumption_strong_decision_epochs}) both bounded by $D$. The second assumption ``$\delayconstant \le D$'' is not strictly necessary, but will significantly simplify the theory. 

\subsection{Learning with a bound on the span proxy}
\label{section_instance_dependent_bound}

Under the assumption that $\model \in \models_D$, the idealistic logarithmic regret regime is achieved by \algname, when equipped with the span proxy function $\spanproxy \equiv D$, see \Cref{theorem_upperbound_log} below.

\begin{theorem}[Instance-dependent regret]
\label{theorem_upperbound_log}
    Let $D \in \RR_+$ and $\model \in \models_{D}$ with a \emph{unique} optimal arm $\arm^*$.
    Let $\delta(t) : \NN \to \RR_+^*$ be a confidence function such that $\sum_{t \ge 1}\delta(t) < \infty$ and $\log (\frac1{\delta(t)}) = \Theta(\log(t))$.
    When run with span proxy $\spanproxy \equiv D = 2 \max_{\arm \in \arms}\max\braces*{1 + \vecspan(\bias_\arm), 3 \vecspan(\bias_\arm)}$, the regret of \algname{} satisfies
    \begin{equation*}
        \regret(T; \model, \algname(D)) = 
        \OH \parens*{
            \sum_{\arm \ne \arm^*} 
            \frac{D^2 \log(T)}{\gain_* - \gain_\arm}
        }.
    \end{equation*}
\end{theorem}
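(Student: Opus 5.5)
We will go through the pseudo-regret. \algname{} uses the doubling trick, so it is $\OH(\log)$-rarely switching, and \Cref{proposition_regret_and_pseudo_regret} applies because the optimal arm is unique. Any bound $\EE[\pseudoregret(T)] = \OH(\sum_{\arm\ne\arm^*} D^2\log(T)/(\gain_*-\gain_\arm))$ therefore transfers to $\regret(T)$ up to an additive $\OH(\log(\EE[\pseudoregret(T)])) = \OH(\log\log T)$ term. It then suffices to show $\EE[\visits_\arm(T)] = \OH(D^2\log(T)/(\gain_*-\gain_\arm)^2)$ for each suboptimal arm $\arm$.

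\textbf{Good event.} Since $\spanproxy\equiv D \ge 1+\max_\arm\vecspan(\bias_\arm)$, the burn-in $\beta_\spanproxy$ of \Cref{equation_optimistic_index} disappears. By \Cref{lemma_estimation_error}, applied with $\delta=\delta(t_k)$ at each episode start and followed by a union bound over episodes, the event $\mathcal{G}_k$ on which
\[
\abs{\hat\gain_\arm(t_k)-\gain_\arm}\le \frac{D\,\errorfunction_\arm(t_k,\delta(t_k))}{2\visits_\arm(t_k)}
\]
holds for all arms has failure probability at most $\abs{\arms}\delta(t_k)$. The factor $2$ in the statement's choice of $D$ is what provides this slack of one half. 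Because the number of episodes up to $T$ is $\OH(\abs{\arms}\log T)$ and $\sum_t\delta(t)<\infty$, the failure events contribute at most a constant times the episode lengths. To handle the episode lengths, we use that an episode on $\arm$ at most doubles $\visits_\arm$, plus an overshoot whose expectation is at most $\strongdelayconstant\le D$ by \Cref{assumption_strong_decision_epochs}. The failures thus add only $\OH(1)$ per arm to $\EE[\visits_\arm(T)]$, after a summation over the geometric sequence of episode lengths.

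\textbf{Counting pulls of a suboptimal arm.} On $\mathcal{G}_k$ we have $\algindex_{\arm^*}(t_k)\ge\gain_*$ (optimism), and
\[
\algindex_\arm(t_k)\le \gain_\arm + \frac{3D\,\errorfunction_\arm(t_k,\delta(t_k))}{2\visits_\arm(t_k)}.
\]
So $\arm$ is selected only if $\frac{3D}{2}\,\errorfunction_\arm/\visits_\arm \ge \gain_*-\gain_\arm =: \Delta_\arm$. Under the doubling trick the switch count in $\errorfunction_\arm$ is at most $\log_2(1+\visits_\arm)+1$, and $\log(1/\delta(t))=\Theta(\log t)$. Selection therefore forces
\[
\sqrt{\visits_\arm\log T}+\log(1+\visits_\arm)\gtrsim \visits_\arm\Delta_\arm/D,
\]
which implies $\visits_\arm(t_k)=\OH(D^2\log T/\Delta_\arm^2)$; the logarithmic switch term is dominated once $\visits_\arm\gtrsim (D/\Delta_\arm)\log(D/\Delta_\arm)$. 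The last such episode at most doubles $\visits_\arm$ and adds expected overshoot $\le D$, so $\EE[\visits_\arm(T)]=\OH(D^2\log T/\Delta_\arm^2)$. Multiplying by $\Delta_\arm$ gives the claim.

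\textbf{Main obstacle.} The delicate part is the dependence between episode lengths and the good event. The doubling condition together with waiting for an activation time makes the final count random, and the index is only evaluated at the $t_k$. I would handle this by conditioning on $\ObservedHistory(t_k)$, which is legitimate because \Cref{lemma_estimation_error} is time-uniform. I would also treat the initial episodes with $\visits_\arm=0$ as an $\OH(1)$ cost. The $3\vecspan(\bias_\arm)$ in the definition of $D$ is only needed so that the constants close when absorbing the switch term; I expect this bookkeeping to work but not to be tight.
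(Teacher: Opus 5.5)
Your proof is correct in outline, but it counts suboptimal pulls differently from the paper.

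\textbf{The paper's route.} It reuses the proof of \Cref{appendix:lemma_order_visits_f_diverging} with $\spanproxy$ constant. It fixes a slack $\epsilon$ and splits the pulls of $\arm$ into four parts:
\begin{itemize}
\item empirical errors of $\arm$ by more than $\epsilon$, controlled by the fixed-$n$ tail bound of \Cref{lemma_gain_concentration_per_sample} rather than by confidence intervals;
\item optimism failures of $\arm^*$ by more than $\epsilon$;
\item episodes started below a threshold $\chi_\epsilon(T)$, which cost $2\chi_\epsilon(T)+\strongdelayconstant$ via the stopping time $\tau_\chi$;
\item late bloomers, which are empty.
\end{itemize}
It then fixes $\epsilon$ and converts pseudo-regret to regret with \Cref{proposition_regret_and_pseudo_regret_appendix}, as you do.

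\textbf{Your route.} You run a UCB1-style argument: a single good event from the time-uniform \Cref{lemma_estimation_error} covers both arms, and the factor $2$ in $D$ supplies the half-width slack. This removes $\epsilon$, $\chi_\epsilon$ and the lower-order $1/\epsilon^4$ terms, and gives cleaner constants. In your route the $3\vecspan(\bias_\arm)$ in $D$ plays no role. In the paper it absorbs the $3\vecspan(\bias_\arm)/\sqrt n$ correction of \Cref{lemma_gain_concentration_per_sample} in the optimism-failure term. It is not there for the switch term, which $\errorfunction_\arm$ in \Cref{lemma_estimation_error} already carries.

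\textbf{The step you must make precise.} The paper's $\epsilon$-slack controls the suboptimal arm's deviation by a tail $\exp(-cn\epsilon^2)$ that does not involve $\delta$. That tail is summable against the weight $n$ of the episode it triggers, so $\sum_t\delta(t)<\infty$ is only needed for $\arm^*$.

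In your route every failure, of $\arm$ or of $\arm^*$, is weighted by an episode length of order $\visits_\arm(t_k)$. Also, \Cref{lemma_estimation_error} is stated for a fixed $\delta$, so ``failure probability at most $\abs{\arms}\delta(t_k)$'' at the random time $t_k$ is not directly available. A union bound over deterministic times would cost $\sum_t t\,\delta(t)$. That is exactly what \Cref{theorem_worst_case_sqrt} assumes, but it is not assumed here.

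Your ``geometric summation'' works if you argue as follows:
\begin{itemize}
\item $t_k>\visits_\arm(t_k)$;
\item at most one episode of $\arm$ starts with $\visits_\arm(t_k)\in[2^j,2^{j+1})$;
\item for non-increasing $\delta$ (which the paper also uses implicitly), a failure at that episode implies the fixed-level failure event of \Cref{lemma_estimation_error} at level $\delta(2^j)$, for $\arm$ and for $\arm^*$ alike.
\end{itemize}
The total is then of order $\sum_j(2^{j+1}+\strongdelayconstant)\,\delta(2^j)$, where the overshoot part uses the tower property at $t_k+\visits_\arm(t_k)$. This is finite by Cauchy condensation.

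Similarly, the overshoot of the ``last'' episode is not controlled by \Cref{assumption_strong_decision_epochs} directly, since that episode is not defined by a stopping time. Bound it through a stopping time such as the paper's $\tau_\chi$, which yields $2M+\strongdelayconstant$ for your threshold $M$.
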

\begin{proof}
    The proof essentially reduces to the analysis performed in the proof of \Cref{theorem_upperbound_f_diverging}. 
    Details can be found in the \Cref{appendix_instance_dependent_bounded}. 
\end{proof}

The regret remains logarithmic even if the optimal arm is not unique, but the regret upper bound is slightly degraded. 
See \Cref{appendix_instance_dependent_bounded} for more details. 
A discussion on the relation between our results and similar ones from the stochastic bandits literature can be found in \Cref{section_comparison_stochastic_bandits}.

\subsection{Worst-case regret guarantees}
\label{section_worst_case_bound}

Furthermore, with the assumption $\model \in \models_D$, we can move beyond the instance-dependent logarithmic regret previously presented in \Cref{theorem_upperbound_log} and establish robust guarantees that hold for every instance uniformly.

\begin{theorem}[Worst-case regret]
    \label{theorem_worst_case_sqrt}
    Let $D \in \RR_+$ and $\delta(t) : \NN \to \RR_+^*$ be a confidence function such that $\sum_{t\ge1} t \cdot \delta(t) < \infty$ and $\log (\frac 1{\delta(t)}) = \Theta(\log(t))$.\footnote{
        For instance, any $\delta(t) = (\frac 1t)^{2 + \epsilon}$ with $\epsilon > 0$ is fine.
    }
    When running with span proxy $\spanproxy \equiv D$, $ \algname(D)$ has the following worst-case regret:
    \begin{equation*}
    \begin{gathered}
        \sup_{ \model \in \models_{D} }
        \regret(T; \model, \algname(D))
        = \OH \parens*{
            D
            \sqrt{\abs{\arms} T \log(T)}
        }.
    \end{gathered}
    \end{equation*}
\end{theorem}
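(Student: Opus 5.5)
We follow the classical gap-splitting argument for \texttt{UCB}-type algorithms, applied to the pseudo-regret, and then go back to the regret. Fix $\model \in \models_D$ and horizon $T$. The first step is to define the good event $\mathcal{E}_T$ on which, for every arm $\arm$ and every episode start $t_k \le T$, the deviation bound of \Cref{lemma_estimation_error} holds at confidence $\delta(t_k)$, namely $\visits_\arm(t_k)\abs{\hat{\gain}_\arm(t_k) - \gain_\arm} \le D\,\errorfunction_\arm(t_k,\delta(t_k))$. Since $\spanproxy \equiv D$ dominates $1+\vecspan(\bias_\arm)$, there is no burn-in: on $\mathcal{E}_T$ all indices are optimistic, $\algindex_\arm(t_k) \ge \gain_\arm$. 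A union bound over arms and over the at most $T$ episode start times gives $\Pr(\mathcal{E}_T^c) \le \abs{\arms}\, T\,\delta(T)$ up to constants, if $\delta$ is taken nonincreasing. The regret on $\mathcal{E}_T^c$ is then at most $T\cdot \abs{\arms} T \delta(T)$, which is $\OH(1)$ because $\sum_t t\,\delta(t) < \infty$ and $t\,\delta(t)$ is eventually bounded. This is exactly why the statement requires that summability condition.

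On $\mathcal{E}_T$ we bound the number of pulls of each suboptimal arm. The doubling trick gives at most $\log_2(1+\visits_\arm(t))+1$ switches to arm $\arm$. Hence $\errorfunction_\arm(t,\delta) \lesssim \sqrt{\visits_\arm \log(T)} + \log(T)$, using $\log(1/\delta(t)) = \Theta(\log t)$. If arm $\arm$ is selected at $t_k$, optimism gives $\algindex_\arm(t_k) \ge \algindex_{\arm^*}(t_k) \ge \gain_*$, and combining this with the deviation bound yields $\gain_*-\gain_\arm \le 2D\,\errorfunction_\arm(t_k)/\visits_\arm(t_k)$. It follows that $\visits_\arm(t_k) \lesssim D^2\log(T)/\Delta_\arm^2 + D\log(T)/\Delta_\arm$. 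The episode at most doubles this count. It also overshoots by the waiting time until the next activation time, which has expectation at most $C_\tau \le D$ per episode by \Cref{assumption_strong_decision_epochs}, and there are $\OH(\log T)$ such episodes. Therefore $\EE[\visits_\arm(T)\indicator{\mathcal{E}_T}] \lesssim \min\{\,\EE\visits_\arm(T),\; D^2\log T/\Delta_\arm^2 + D\log T/\Delta_\arm + D\log T\,\}$.

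We then sum the pseudo-regret with a threshold $\Delta_0$. Arms with $\Delta_\arm \le \Delta_0$ contribute at most $\Delta_0 T$ in total. Each arm with $\Delta_\arm > \Delta_0$ contributes $\lesssim D^2\log T/\Delta_0 + D\log T$. Choosing $\Delta_0 = D\sqrt{\abs{\arms}\log T / T}$ gives $\EE[\pseudoregret(T)] = \OH(D\sqrt{\abs{\arms} T\log T})$, uniformly over $\models_D$. This argument does not need a unique optimal arm.

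The last step, which I expect to be the main obstacle, is to pass from pseudo-regret to regret uniformly over $\models_D$. \Cref{proposition_regret_and_pseudo_regret} is instance-dependent and assumes a unique optimal arm, so we cannot invoke it as a black box. Instead we redo its argument quantitatively. Over each maximal block of consecutive pulls of arm $\arm$, the Poisson equation gives expected reward equal to (length)$\cdot\gain_\arm$ plus a bias-telescoping term bounded by $\vecspan(\bias_\arm) \le D$. The comparator $V^{\policy_{\arm}}(T)$ likewise equals $(T-1)\gain_\arm$ up to an additive $D$. Hence $\abs{\regret(T) - \EE[\pseudoregret(T)]} \le D\,(1+\EE[\#\text{switches up to } T])$. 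The doubling trick bounds the number of switches by $\abs{\arms}\log_2(T)$ deterministically. The correction is therefore $\OH(D\abs{\arms}\log T)$, which is dominated by $D\sqrt{\abs{\arms}T\log T}$ for $T \gtrsim \abs{\arms}\log T$; smaller $T$ is trivial since the regret is at most $T$. Care is needed at block boundaries, where the state of the arm on re-entry has evolved under $\kernel^0_\arm$. This is harmless because only the span of $\bias_\arm$ enters the bound.
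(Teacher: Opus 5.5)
Your route differs from the paper's, and most of it holds up. The paper never bounds pull counts: it decomposes $g_* - R(t+1)$ episode by episode into optimism failure, arm-selection slackness ($\le 0$), bonus overshoot, empirical error and a telescoping bias term, and sums the bonus directly using the doubling trick and Cauchy--Schwarz. Your gap-splitting with $\Delta_0 = D\sqrt{|\mathcal{A}|\log T/T}$ is fine. So is your pseudo-regret-to-regret conversion, which is exactly \Cref{lemma_regret_and_switches} combined with \Cref{lemma_number_of_episodes} and needs no unique optimal arm.

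\textbf{The good event fails.} The index at an episode start $t_k$ uses confidence $\delta(t_k)$. For nonincreasing $\delta$ this means $\delta(t_k) \ge \delta(T)$, not $\le$. The only control on an optimism failure at time $t$ is therefore $\delta(t)$, and a union bound over arms and start times gives $\Pr(\mathcal{E}_T^c) \le |\mathcal{A}|\sum_{t\le T}\delta(t)$. This is at least $\delta(1)$, so it is of constant order, not $O(1/T)$. If you instead build $\mathcal{E}_T$ at the single level $\delta(T)$, its radius exceeds the algorithm's bonus and optimism of the indices no longer follows. So the two claims ``on $\mathcal{E}_T$ all indices are optimistic'' and ``$\Pr(\mathcal{E}_T^c) \lesssim |\mathcal{A}|\,T\delta(T)$'' cannot both hold. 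Charging the whole horizon $T$ to $\mathcal{E}_T^c$ then yields only an $O(T)$ bound.

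\textbf{The missing idea is to localize optimism failures.} A failure $\tilde g_{a^*}(t_k) < g_*$ at the start of an episode can only waste that one episode. By the doubling trick its length is at most $t_k$, plus a decision-epoch delay of conditional expectation at most $C_\tau$. The total expected cost is therefore at most $\sum_t (t + C_\tau)\,\delta(t) = O(1)$. This is where the hypothesis $\sum_t t\,\delta(t) < \infty$ is really used (terms (I.1)--(I.2) in the paper), not through $T^2\delta(T) = O(1)$.

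The selected arm's own empirical deviation should be controlled separately. Use the time-uniform event of \Cref{lemma_estimation_error} at the fixed level $1/T$: its complement costs at most $|\mathcal{A}|$, and its radius is still $O\bigl(D\sqrt{\log T/N}\bigr)$ plus the switch term. With these two changes, apply your argument to the last episode of arm $a$ that started with an optimistic $a^*$-index. This gives
\[
\EE[N_a(T)] \le 2B_a + C_\tau + O(1), \qquad B_a \asymp D^2\log T/\Delta_a^2 + D\log T/\Delta_a,
\]
and the rest of your proof then goes through.
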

\begin{proof}
    The proof is deferred to \Cref{appendix_worst_case_upper}. 
\end{proof}

The tightness of this result is discussed in more detail in \Cref{appendix_worst_case_upper}.

\section{Tuning \texorpdfstring{$\delta(t)$}{delta(t)}: Being under-optimistic is preferable in Markovian bandits}
\label{sec:confidence_tuning}

\Cref{section_bounded_bias} provides the natural choice $\spanproxy \equiv D $ for the span proxy---when bounds on maximal span of the bias function and the delays of the decision epochs are available---and removes the need for a diverging span proxy. 
In this last section, we are interested in what the best choice for the confidence threshold $\delta(t)$ is, given that the span-proxy is fixed to the value recommended by \Cref{section_bounded_bias}.
Indeed, while from an asymptotic viewpoint the regret is simply $\smash{\OH((\frac D{\Delta})^2 \log(T))}$ for all summable confidence function, in finite horizon, a trade-off appears.
\begin{itemize}
    \item 
        Be $\delta(t)$ too large, and the confidence intervals are too narrow to contain the true gains, leading to a failure of the optimistic property.
        As the optimistic bonus $\smash{\log(\frac 1{\delta(t)})}$ vanishes, the algorithm's behavior converges toward a greedy algorithm, that pulls the arm with the highest empirical estimator. 
        Then, \texttt{UCB-NOM} is very likely to pull bad up to $\approx T$ times, resulting in disastrous performance. 
    \item 
        Be $\delta(t)$ too small, and the algorithm is too conservative: The optimistic bonuses dictate everything that the algorithm does. 
        As the optimistic bonus $\smash{\log(\frac 1{\delta(t)})}$ explodes to $+\infty$, the algorithm's behavior converges towards a round-robin process. 
\end{itemize}
In \Cref{fig:confidence}, we display the regret of \texttt{UCB-NOM}$(D)$ for a reasonably short horizon, as a function of the confidence parameter and for different parameterizations of $\delta(t)$.
To avoid running into deceptive conclusions, the environment is specifically designed so that the greedy algorithm (\texttt{UCB-NOM} with $\delta = 1$) has no chance to work. 
But then, even though the environment is unfavorable to greedy algorithms, these experiments suggest that very high confidence parameters are preferred, such as $\smash{\delta(t) = \frac{\alpha(t)}{t}}$ or $\smash{\frac{\beta(t)}{t \log(t)}}$ with very high $\alpha(t)$ or $\beta(t)$.
In the remaining of this section, we explain why. 

\begin{figure}[ht]
    \centering
    \begin{subfigure}{0.48 \linewidth}
        \centering
        \begin{tikzpicture}
            \node at (0, 0) {\includegraphics[width=.97\linewidth]{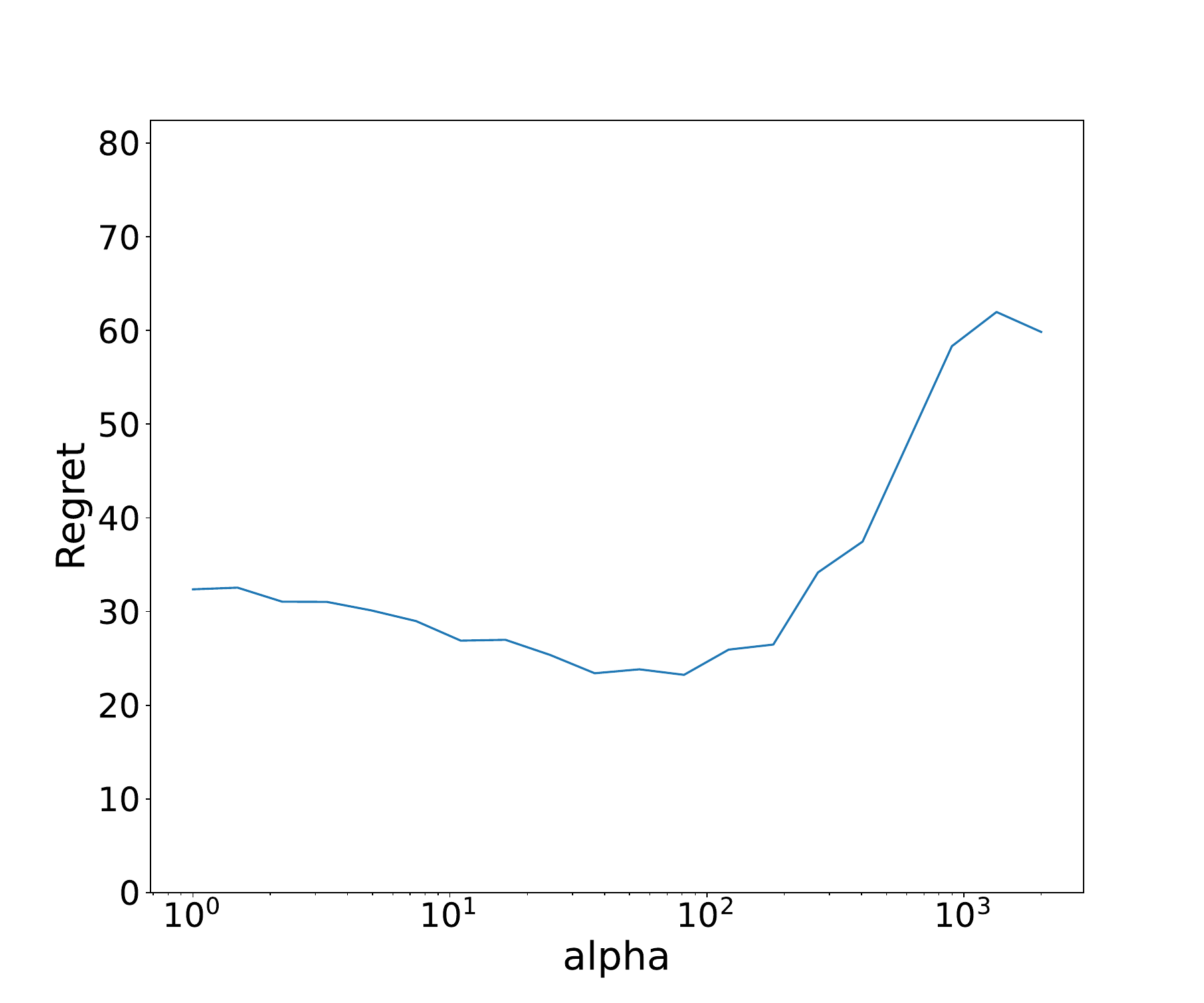}};
            \node[fill=white] at (.1, -2.9) {~~$\alpha$~~};
            \node[fill=white, rotate=90] at (-3.47, 0) {Regret};
        \end{tikzpicture}
        \caption{
            Regret of $\texttt{UCB-NOM}(D)$ with horizon $T=2000$ with confidence parameter $\delta(t) := \frac{\alpha}{t}$. 
            Each point is obtained by averaging over 1000 trajectories.
        }
        \label{fig:confidence_alpha}
    \end{subfigure}
    \hfill 
    \begin{subfigure}{0.48 \linewidth}
        \centering
        \begin{tikzpicture}
            \node at (0, 0) {\includegraphics[width=.97\linewidth]{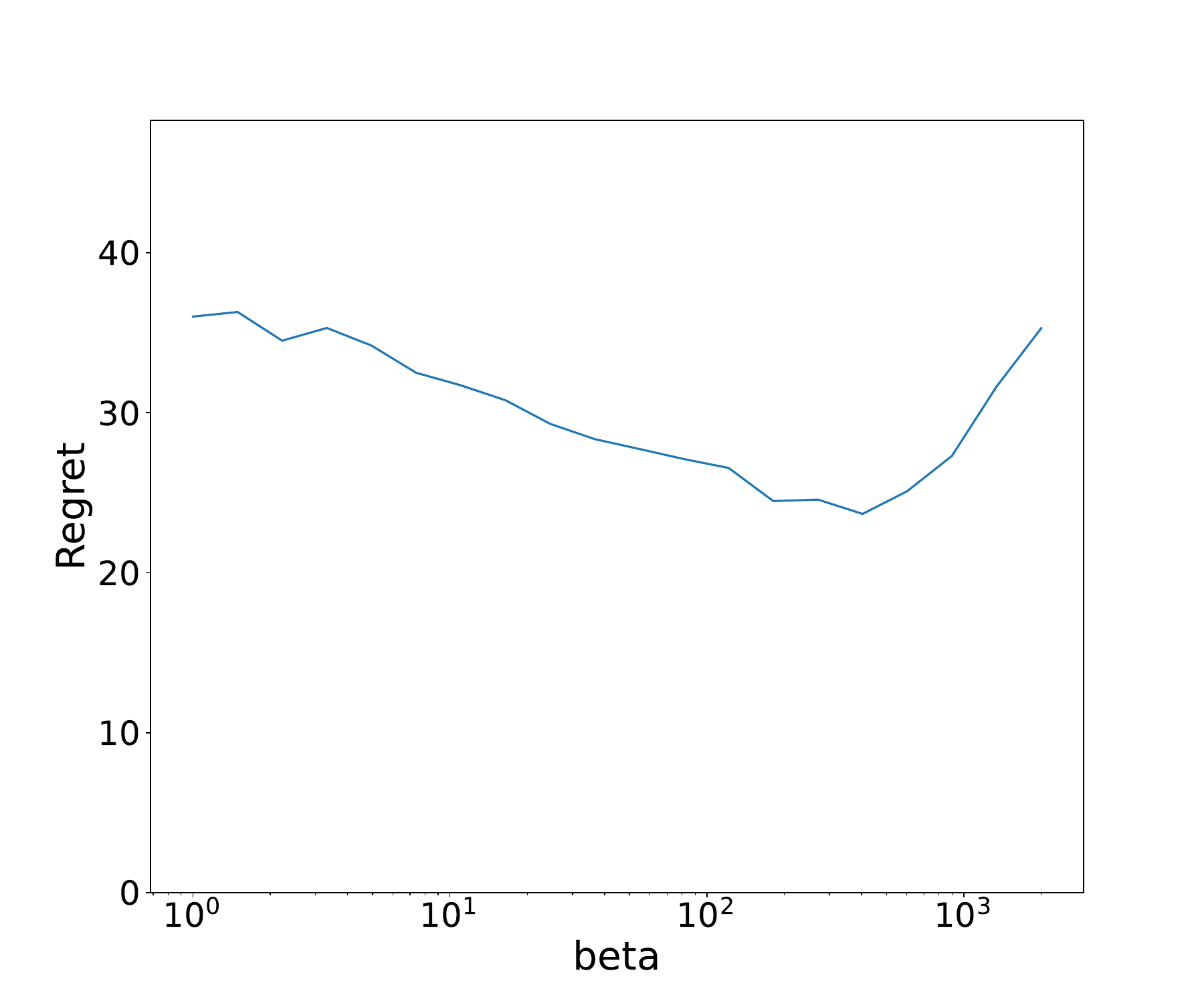}};
            \node[fill=white] at (.1, -2.9) {~~$\beta$~~};
            \node[fill=white, rotate=90] at (-3.48, 0) {Regret};
        \end{tikzpicture}
        \caption{
            Regret of $\texttt{UCB-NOM}(D)$ with horizon $T=2000$ with confidence parameter $\delta(t) := \frac{\beta}{t \log t}$. 
            Each point is obtained by averaging over 1000 trajectories.
        }
        \label{fig:confidence_beta}
    \end{subfigure}
    \caption{
    \label[figure]{fig:confidence}
        Regret of $\texttt{UCB-NOM}(D)$ on a Markovian bandits with two arms. 
        The first arm $\arm = 1$ is a stochastic bandit with reward $\rewardDistribution_1 = \mathrm{Bernoulli}(0.3)$.
        The second arm $\arm = 2$ is a two states Markovian arm whose transition are, when active, uniform over $\braces*{0,1}$, and when passive, deterministically directed to 0. Associated rewards are $\rewardDistribution_2 (0) = 0$ and $\rewardDistribution_2 (1) = \mathrm{Bernoulli}(0.8)$.
    }
\end{figure}

In the bonus, the term due to switching cost cannot be improved in general.
The environment in \Cref{fig:confidence} is chosen so that this term is chosen in a tight fashion: every time the arm is dropped, the arm goes back to the worst state instantly. 
For this environment, the switching bonus perfectly corrects the empirical estimator and we can fully focus on the study of the optimistic bonus. 
So, switching terms are casted aside by looking at bounds on the pseudo-regret directly.

From the proof of \Cref{theorem_upperbound_log}, keeping dominant terms, we get the upper-bound of the pseudo-regret is of the form:
\begin{equation}
\label{equation_regret_bound_approx}
    \EE \brackets*{\pseudoregret(T)}
    \lesssim
    C_1 \norm{\Delta}_1 \int_1^\infty \delta(t) ~ \dd t 
    \parens*{
        \sum_{\arm \notin \arms^*} \frac1{\Delta_\arm}   
    }
    \log \parens*{
        \frac{1}{\delta(T)}
    }
    \; .
\end{equation}
where $C_1, C_2 > 0$ are universal constants.\footnote{
    It is not easy to find a real meaning to the precise values of these constants $C_1, C_2$, because they are the results of computations that are pretty conservative. 
    Therefore, the discussion revolves around the dependency in $\Delta$ and $D$ exclusively. 
}

\subsection{Confidence functions of the form \texorpdfstring{$\delta(t) = \frac{\alpha}{t}$}{delta(t) = alpha/t}}

The bound \eqref{equation_regret_bound_approx} encourages to choose $\delta(t) \asymp \frac 1t$, so that the two terms are of the same order---we recover confidence parameters that are polynomial in $\frac 1t$, as suggested by standard stochastic bandits theory \citep{lattimore_bandit_2020}.
Therefore, as a first choice for the confidence function, we take $\delta(t) := \frac{\alpha}{t}$ for some $\alpha \in \RR$. 

Rewriting the previous upper-bound and trashing a few negligible terms, we obtain
\begin{equation*}
    \EE [\pseudoregret(T)]
    \lesssim
    \regretfunction(\alpha) 
    :=  
    C_1 \norm{\Delta}_1 \alpha \log(T)
    +
    C_2 D^2 
    \parens*{
        \sum_{\arm \notin \arms^*} \frac1{\Delta_\arm}   
    }
    \log \parens*{
        \frac{T}{\alpha}
    }
    \; .
\end{equation*}
Now, computing derivative of $\regretfunction$ with respect to $\alpha$ gives the optimal value:
\begin{equation}
\label{equation_optimal_alpha}
    \alpha^* =
    \frac{
        C_2 D^2 
        \sum_{\arm \notin \arms^*} \frac1{\Delta_\arm}   
    }{
        C_1 \norm{\Delta}_1 \log(T) 
    }
    \; .
\end{equation}
We should highlight the presence of the $\log(T)$ term which emphasis that our class of function for $\delta$ is not fully satisfactory.
However, this remains a suitable choice as long as the horizon is not too large, i.e., for horizon that are sub-exponential, typically $\smash{T \ll \exp \braces{D^2 \sum_{\arm} \frac 1{\Delta_\arm} / \norm{\Delta}_1}}$.
For such horizons and in the \emph{two-armed case}, we get an optimal optimism of order $(D / \Delta)^2$ up to numerical values: this is a quartic in the problem parameters, and therefore is large very easily. 
For the environments on which the experiments of \Cref{fig:confidence}, we have $(D / \Delta)^2 = 400$ already.

\subsection{Confidence functions of the form \texorpdfstring{$\delta(t) = \frac{\beta}{t \log(t)}$}{delta(t) = beta/(t log(t))}}

The logarithmic term in the denominator of $\alpha^*$ in \Cref{equation_optimal_alpha} is a bit unsatisfactory. 
To correct it, a simple approach is to take a confidence function of the form $\delta(t) := \frac{\beta}{t \log(t)}$ for $\beta \in \RR$. 
Repeating the same calculation, we find a pseudo-regret upper-bound of the form
\begin{equation*}
    \EE [\pseudoregret(T)]
    \lesssim
    \regretfunction(\beta)
    :=  
    C_1 \norm{\Delta}_1 \beta \underbrace{\int_\ee^T \frac1{t \log t} \; \dd t}_{\log \log(T)}
    + C_2 D^2 
    \parens*{
        \sum_{\arm \notin \arms^*} \frac1{\Delta_\arm}   
    }
    \log \parens*{
        \frac{T \log (T)}{\beta}
    }.
\end{equation*}
Optimizing $\regretfunction$ with respect to $\beta$, we obtain
\begin{align*}
    \beta^* =
    \frac{
        C_2 D^2 
        \sum_{\arm \notin \arms^*} \frac1{\Delta_\arm}   
    }{
        C_1 \norm{\Delta}_1 \log \log(T) 
    } \; .
\end{align*}
Again, $\beta^*$ depends on $T$, but the dependency in $T$ is mild, as $\log\log(T)$ is basically a numerical constant for any reasonable time horizon (say $T \le 10^{10}$).\footnote{
    The reason why we have remaining $\log(-)$ terms is that we are solving equations of the form $a x + b \log(x) = c$, of which the solution involves the \href{https://en.wikipedia.org/wiki/Lambert_W_function}{\underline{Lambert $W$ function}}, that cannot be expressed analytically.
}
We recover the quartic critical value of $(D/\Delta)^2$, as in the parameterization $\delta(t) = \frac{\alpha}{\log(t)}$.

\subsection{Comparison with stochastic bandits}
\label{section_comparison_stochastic_bandits}

The critical value $(D/\Delta)^2$ is reminiscent of what is known in stochastic bandits.
Indeed, for such learning problems, it is known that choosing a confidence parameter scaling as $\smash{\delta(t) = \frac{\sigma^2}{\Delta^2} \frac 1t}$ leads to better regret bounds, see for instance \cite[Chapter~9.2]{lattimore_bandit_2020}.\footnote{
    \cite{lattimore_bandit_2020} is focusing on the case $\sigma = 1$.
    We refer the reader to their discussion in Chapter 9.2, where they optimize a regret bound of order $n \Delta \delta + \frac 1\Delta \log(\frac 1\delta)$.
    For $\sigma^2$-subgaussian bandits with $\sigma \ne 1$, the analogue of their bound is $\smash{n \Delta \delta + \frac {\sigma^2}{\Delta} \log(\frac 1\delta)}$, that is optimized for $\delta \approx \sigma^2 / (n \Delta^2)$.
}
In our setting, the quantity $D$ plays a similar role to the variance proxy of subgaussian random variables: according to \Cref{lemma_estimation_error}, the noise on the empirical estimator of the gain scales as $D \sqrt{n \log(1/\delta)}$, while it is $\sigma^2 \sqrt{n \log(1/\delta)}$ for $\sigma^2$-subgaussian bandits. 
However---and this is the crucial difference with stochastic bandits---the noise can have high magnitude even though rewards remain $[0, 1]$.
This is something that is exclusive to Markovian bandits, because by Hoeffding's Lemma, every random variable supported in $[0, 1]$ is $\frac 12$-subgaussian. 
Accordingly, Markovian bandits can ``simulate'' reward distributions with higher noise \emph{yet} values in $[0, 1]$, and the optimistic bonuses must be inflated accordingly.

\section{Conclusion}

In this paper, we considered a class of Markovian bandit learning problems that allow for non-observable states and constrained decision epochs.
We argued that pure policies provide an appropriate benchmark for Markovian bandits by introducing a subclass, self-degrading bandits, for which they are optimal.
We first provided a lower bound for the regret of rarely-switching algorithms, revealing that strict logarithmic regret remains unreachable---in opposition to the stochastic bandit case.
We then introduced \algname{} and showed that, despite the non-observability of the environment's inner state, non-trivial regret guarantees can be achieved.
We have further discussed a structural assumption---a prior knowledge of the bias span---under which the classical $\OH(\log(T))$ instance-dependent and $\OH(\sqrt{T \log(T)})$ worst-case regrets can be recovered, as soon as the delay between decision epochs remain reasonable. 

We conclude by mentioning several possible directions for future work. A central assumption in our theory is that the state space $\states_\arm$ of every arm can be arbitrarily large.  It would be interesting to study what happens when the state space $\states_\arm$ is known in advance. Our preliminary intuition is that in such a setting, the theory would  become considerably more intricate. Another avenue is to explore alternative information settings, for instance when states are organized into clusters, and rewards depend on the cluster of the current state. Finally, in line with the well-studied restless bandit framework, it would be interesting to consider scenarios where both activated and non-activated arms yield rewards, possibly dependent on the action.

\section*{Acknowledgements}

The authors acknowledge the funding of the French National Research Agency under the PEPR IA FOUNDRY project (ANR-23-PEIA-0003), the ANR LabEx CIMI (grant ANR-11-LABX-0040) within the French State Programme Investissements d'Avenir, the AI Interdisciplinary Institute ANITI, and the PEPR NAI project funded by the France 2030 program under the Grant agreements n°ANR-23-IACL-0002 and n°ANR-22-PEFT-0003. 

\clearpage
\bibliography{biblio}
\bibliographystyle{Style/tmlr}

\clearpage 

\appendix
\numberwithin{equation}{section}
\numberwithin{theorem}{section}
\numberwithin{definition}{section}

\section{General results}

    \ifSubfilesClassLoaded{
        \allowdisplaybreaks
        \onecolumn
        \section{GENERAL RESULTS ON OUR MODEL}
    }{}
    
    In this Appendix, we provide general results on our class of Markovian bandit problems.  
    In \Cref{appendix_score_of_pure_policies}, we prove a simple expression for the score of pure policies, that we use to provide a simple expression for the regret.
    In \Cref{appendix_rarely_switching}, we deep-dive into the theory of rarely switching algorithms (\Cref{definition_rarely_switching_arms}), in which the regret and the pseudo-regret can be related to one another (\Cref{proposition_regret_and_pseudo_regret}).

    \subsection{The score of pure policies and a simple formula for the regret}
    \label[appendix]{appendix_score_of_pure_policies}

    In the main text, we have stated without proof that the score of pure policies could be linked to their gain and bias functions, with $V^{\policy_\arm}(t) = (t-1) \gain_\arm + \EE^{\policy_\arm}[\bias_\arm (\initstate_\arm) - \bias_\arm (\State_\arm (t))]$.
    This formula is a well-known result in Markov Reward Processes. Below  we restate it as a lemma and include its proof for completeness.

    \begin{lemma}[Score of pure policies]
    \label[lemma]{lemma_pure_policies_score}
        Let $\model \in \models$ be an instance.
        Let $\policy_\arm$ be the pure policy pulling arm $\arm \in \arms$ exclusively. 
        For every horizon $T \ge 1$, we have
        \begin{equation}
            \EE^{\model, \policy_\arm} \brackets*{
                \sum_{t=1}^{T-1}
                \Reward(t+1)
            }
            =
            (T - 1) \gain_\arm
            + \EE^{\model, \policy_\arm} \brackets*{
                \bias_\arm (\state_0) 
                - \bias_\arm (\State_\arm (T))
            }
            .
        \end{equation}
    \end{lemma}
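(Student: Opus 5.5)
The plan is to telescope the Poisson equation along the trajectory generated by $\policy_\arm$. Since $\kernel_\arm^1$ is unichain, the pair $(\gain_\arm, \bias_\arm)$ defined by \Cref{equation_gain,equation_bias} satisfies, for every $\state \in \states_\arm$,
\begin{equation*}
    \gain_\arm + \bias_\arm(\state) = \reward_\arm(\state) + \kernel_\arm^1(\state)\bias_\arm .
\end{equation*}
This is the same identity already used in the proof of \Cref{lemma_estimation_error}. I will take it as given, noting in passing that it follows from the limit definition of $\bias_\arm$ by conditioning on the first transition.

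Under $\policy_\arm$, only arm $\arm$ is ever activated, so $(\State_\arm(t))_{t\ge1}$ is a Markov chain with kernel $\kernel_\arm^1$ started at $\state_0 = \initstate_\arm$. The reward satisfies $\EE[\Reward(t+1) \mid \History(t)] = \reward_\arm(\State_\arm(t))$, and the state satisfies $\EE[\bias_\arm(\State_\arm(t+1)) \mid \History(t)] = \kernel_\arm^1(\State_\arm(t))\bias_\arm$. Evaluating the Poisson equation at $\State_\arm(t)$ and taking expectations by the tower property gives, for each $t$,
\begin{equation*}
    \EE[\Reward(t+1)] = \gain_\arm + \EE[\bias_\arm(\State_\arm(t))] - \EE[\bias_\arm(\State_\arm(t+1))].
\end{equation*}

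Summing over $t = 1, \ldots, T-1$, the bias terms telescope to $\EE[\bias_\arm(\State_\arm(1)) - \bias_\arm(\State_\arm(T))]$. Since $\State_\arm(1) = \state_0$ deterministically, this yields exactly the claimed identity.

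The only delicate point is integrability when $\states_\arm$ is countably infinite, since the paper allows $\states_\arm \subseteq \NN$, as in the Nerlove--Arrow example. In that case $\bias_\arm$ may be unbounded, and one must check that $\EE[\abs{\bias_\arm(\State_\arm(t))}] < \infty$ for each finite $t$ in order to split expectations. I would handle this by assuming, as is implicit in the paper, that $\bias_\arm$ is finite and the Poisson equation holds with absolutely convergent sums. Finiteness is then propagated by induction on $t$: $\kernel_\arm^1(\state)\abs{\bias_\arm}$ is finite for every $\state$ reachable in finitely many steps from $\state_0$. In the finite-state case, which is the one used throughout the analysis, this issue disappears entirely and the argument is a direct telescoping.
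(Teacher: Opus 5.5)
Your argument is correct and is essentially the paper's proof. The paper substitutes the Poisson equation into $\Reward(t+1) - \gain_\arm$, splits it into a reward-noise term, a telescoping bias difference and a transition-noise term, and notes that the two martingale-difference terms vanish in expectation; this is exactly your per-step tower-property computation. Your integrability aside goes beyond the paper, which ignores the issue and implicitly works with $\vecspan(\bias_\arm) < \infty$. Be aware that the induction you sketch needs more than pointwise finiteness of $\kernel_\arm^1(\state)\abs{\bias_\arm}$, because averaging those finite values over the law of $\State_\arm(t)$ can still diverge; assuming a bounded bias settles it immediately.
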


    Note that in particular, this implies that $\abs{V^{\policy_\arm}(t) - (t-1)\gain_\arm} \le \vecspan(\bias_\arm)$ for all $t \ge 1$ and $\arm \in \arms$.

    \begin{proof}
        The result follows using the Poisson equation of $\policy \equiv \policy_\arm$ that states that for all $\state_\arm \in \states_\arm$, we have $\gain_\arm + \bias_\arm (\state_\arm) = \reward_\arm (\state_\arm) + \kernel_\arm^1 (\state_\arm) \bias_\arm$.
        More precisely, we have
        \begin{align*}
            \EE^{\model, \policy} \brackets*{
                \sum_{t=1}^{T-1}
                \Reward(t+1)
            }
            - (T - 1) \gain_\arm
            & =
            \EE^{\model, \policy} \brackets*{
                \sum_{t=1}^{T-1}
                \parens*{\Reward(t+1) - \gain_\arm}
            }
            \\
            & \overset{\eqnum{1}}=
            \begin{cases}
                \phantom{+}
                \EE^{\model, \policy} \brackets*{
                    \sum_{t=1}^{T-1}
                    \parens*{
                        \Reward(t+1) - \reward_\arm (\State_\arm (t))
                    }
                }
                \\
                +
                \EE^{\model, \policy} \brackets*{
                    \sum_{t=1}^{T-1}
                    \parens*{
                        \bias_\arm (\State_\arm (t))
                        - \bias_\arm (\State_\arm (t+1))
                    }
                }
                \\
                +
                \EE^{\model, \policy} \brackets*{
                    \sum_{t=1}^{T-1}
                    \parens*{
                        \bias_\arm (\State_\arm (t+1))
                        - \kernel_\arm^1 (\State_\arm (t)) \bias_\arm
                    }
                }
            \end{cases}
            \\
            & \overset{\eqnum{2}}=
            \EE^{\model, \policy} \brackets*{
                \bias_\arm (\State_\arm (1)) 
                - \bias_\arm (\State_\arm (T))
            },
        \end{align*}
        where
        \texteqnum{1} follows from the Poisson equation of $\policy \equiv \policy_\arm$; 
        and
        \texteqnum{2} uses that the first and third terms in \texteqnum{1} are martingale difference sequences, and that the second term is a telescopic sum. 
    \end{proof}

    As a direct consequence of \Cref{lemma_pure_policies_score}, the regret can be approximated by $\sum_{t=1}^{T-1} (\gain_* - \Reward(t))$, where $\gain_* = \max_{\arm \in \arms} \gain_\arm$.
    By \Cref{lemma_pure_policies_score} indeed, by letting $\policy$ be the optimal pure policy, we have $V^{\policy}(T) = (T-1) \gain^* \pm \max_{\arm \in \arms} \vecspan(\bias_\arm)$.
    This approximation can be traced back at least to \citep{auer_near_optimal_2009}, and will be the starting point for our upper and lower bounds on the regret. 
    
    \begin{corollary}[Simple form for the regret]
    \label[corollary]{corollary_regret_simple_expression}
        Let $\model \in \models$ be an instance.
        For every learning algorithm $\learner$ and every horizon $T \ge 1$, the regret can be approximated by $\sum_{t=1}^{T-1} (\gain_* - \Reward(t))$, with an error term of at most $\max_{\arm \in \arms} 
            \vecspan(\bias_\arm)$, that is,
        \begin{equation}
        \label{equation_regret_simple_expression}
            \abs*{
                \regret(T; \model, \learner)
                - 
                \EE^{\model, \learner} \brackets*{
                    \sum_{t=1}^{T-1}
                    \parens*{
                        \gain_* - \Reward(t+1)
                    }
                }
            }
            \le
            \max_{\arm \in \arms} 
            \vecspan(\bias_\arm)
            .
        \end{equation}
    \end{corollary}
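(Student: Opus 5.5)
The corollary follows by combining the definition of the regret \eqref{equation_regret_definition} with \Cref{lemma_pure_policies_score}. The plan is to write the regret as $\max_\arm V^{\policy_\arm}(T;\model) - V^{\learner}(T;\model)$. By the convention $\Reward(1)=0$, the value of the learner is exactly $V^{\learner}(T;\model) = \EE^{\model,\learner}\brackets{\sum_{t=1}^{T-1}\Reward(t+1)}$. The only ingredient to control is therefore $\max_\arm V^{\policy_\arm}(T;\model)$, which should be compared with $(T-1)\gain_*$.

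\textbf{Step 1: two-sided bound on each pure value.} By \Cref{lemma_pure_policies_score}, for every arm $\arm$,
\begin{equation*}
    V^{\policy_\arm}(T;\model) = (T-1)\gain_\arm + \EE^{\model,\policy_\arm}\brackets*{\bias_\arm(\initstate_\arm) - \bias_\arm(\State_\arm(T))}.
\end{equation*}
The correction term is the expectation of a difference of two values of $\bias_\arm$, so its absolute value is at most $\vecspan(\bias_\arm) \le \max_{\arm'}\vecspan(\bias_{\arm'})$.

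\textbf{Step 2: bound on the maximum.} The upper bound gives $\max_\arm V^{\policy_\arm}(T) \le \max_\arm (T-1)\gain_\arm + \max_\arm\vecspan(\bias_\arm) = (T-1)\gain_* + \max_\arm \vecspan(\bias_\arm)$. For the lower bound, pick $\arm^*\in\arg\max_\arm\gain_\arm$; then $\max_\arm V^{\policy_\arm}(T) \ge V^{\policy_{\arm^*}}(T) \ge (T-1)\gain_* - \max_\arm\vecspan(\bias_\arm)$. Combining the two, $\abs{\max_\arm V^{\policy_\arm}(T) - (T-1)\gain_*} \le \max_\arm\vecspan(\bias_\arm)$.

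\textbf{Step 3: conclusion.} Subtract $V^{\learner}(T)$ from both sides and write $(T-1)\gain_* - \EE\brackets{\sum_{t=1}^{T-1}\Reward(t+1)} = \EE\brackets{\sum_{t=1}^{T-1}(\gain_*-\Reward(t+1))}$; this gives \eqref{equation_regret_simple_expression}.

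There is no real obstacle here. The only points needing care are the index bookkeeping, namely checking that the convention $\Reward(1)=0$ makes $V^{\learner}(T)$ coincide with the sum from $t=1$ to $T-1$ of $\Reward(t+1)$, and checking that the maximum over arms of $(T-1)\gain_\arm$ equals $(T-1)\gain_*$. The latter holds because the error term in Step 1 is uniform in $\arm$.
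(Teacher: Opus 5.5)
Your proposal is correct and follows the same route as the paper: apply \Cref{lemma_pure_policies_score} to place each pure value within $\max_{\arm \in \arms}\vecspan(\bias_\arm)$ of $(T-1)\gain_\arm$, then subtract the learner's value using $\Reward(1)=0$. Your Step~2 is a bit more careful than the paper's one-line remark, which only evaluates the gain-optimal pure policy. By bounding $\max_{\arm} V^{\policy_\arm}(T)$ from both sides, you also cover the case where a gain-suboptimal arm has the largest finite-horizon value.
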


    The error $\max_{\arm \in \arms} \vecspan(\bias_\arm)$ in \Cref{equation_regret_simple_expression} does not grow with $T \ge 1$.
    Since the regret is eventually growing as $\log(T)$ in the instance dependent setting and as $\sqrt{T}$ in the worst-case setting, the latter constant can safely be ignored. 
    
    \subsection{Proof of \texorpdfstring{\Cref{proposition_regret_and_pseudo_regret}}{Proposition 1}: Rarely switching algorithms}
    \label[appendix]{appendix_rarely_switching}

    The link between the regret and the pseudo-regret for rarely switching algorithms plays a crucial role in our analysis, as it connects the regret to the number of times suboptimal arms are pulled. For an arbitrary learner, such a link does not generally exist, since pulling suboptimal arms does \emph{not} necessarily imply that the regret increases accordingly: a smart controller could deliberately pull suboptimal arms when the optimal arm is in a bad state. However, $\switchingfunction$-rarely switching algorithms preclude such behavior, which allows us to obtain a direct connection between the regret and the pseudo-regret, via the switching function $\switchingfunction$.  This result was stated in \Cref{proposition_regret_and_pseudo_regret} and we prove a more precise version of it below in \Cref{proposition_regret_and_pseudo_regret_appendix}.

    \begin{proposition}[Regret and pseudo-regret]
    \label[proposition]{proposition_regret_and_pseudo_regret_appendix}
        Let $\model \in \models$ be an arbitrary model with a unique optimal arm $\arm^* \in \arms$.
        Let $\learner$ be a learning algorithm that satisfies a $\psi$-rarely switching property.
        Denote $\Delta_0 := \min \braces*{\gain_* - \gain_\arm: \gain_\arm < \gain_*}$ the minimum gain-gap. 
        For every horizon $T \ge 1$, we have 
        \begin{equation}
            \abs*{
                \regret(T) 
                - \EE [\pseudoregret(T)]
            }
            \le
            \max_{\arm \in \arms} \parens[\big]{\vecspan(\bias_\arm)}
            \cdot
            \parens*{
                \frac{2 \abs{\arms}}{\Delta_0}
                \cdot \switchingfunction \parens*{
                    \EE [\pseudoregret(T)]
                } 
                + \parens*{
                    \frac{2 \abs{\arms}}{\Delta_0}
                    - 1
                } C_\switchingfunction 
                + 1
            },
        \end{equation}
        where $\pseudoregret$ is the pseudo-regret \eqref{equation_pseudo_regret} and $C_\switchingfunction \in \RR_+$ is the constant given by \Cref{lemma_homogeneity_concave} below.
    \end{proposition}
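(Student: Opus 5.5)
We will start from \Cref{corollary_regret_simple_expression}, which reduces the regret to $\EE[\sum_{t=1}^{T-1}(\gain_* - \Reward(t+1))]$ up to an additive $\max_\arm \vecspan(\bias_\arm)$; this is the final ``$+1$'' term. The remaining task is to compare $\EE[\sum_t (\gain_* - \Reward(t+1))]$ with $\EE[\pseudoregret(T)] = \EE[\sum_t (\gain_* - \gain_{\Arm(t)})]$. So we need to control $\EE[\sum_t (\gain_{\Arm(t)} - \Reward(t+1))]$.

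\textbf{Decomposition along pulls.} We decompose this sum arm by arm, exactly as in the proof of \Cref{lemma_estimation_error}. Let $\tau_k^\arm$ enumerate the pulls of $\arm$. By the Poisson equation under $\kernel_\arm^1$, we have
\[
\textstyle\sum_k (\gain_\arm - \Reward(\tau_k^\arm+1)) = \sum_k (\unit_{\State_\arm(\tau_k^\arm+1)} - \unit_{\State_\arm(\tau_k^\arm)})\bias_\arm + \text{martingale terms}.
\]
Taking expectations removes the martingale terms; this is valid by optional stopping, since rewards and $\bias_\arm$ are bounded and the $\tau_k^\arm$ are stopping times. The telescoping sum only fails to cancel at indices where $\tau_k^\arm + 1 \neq \tau_{k+1}^\arm$, i.e.\ when the learner leaves arm $\arm$, plus one boundary term. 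Hence
\[
\Bigl|\EE\Bigl[\sum_t (\gain_{\Arm(t)} - \Reward(t+1))\Bigr]\Bigr| \le \max_\arm \vecspan(\bias_\arm)\, \sum_\arm \EE\bigl[1 + \#\{\text{switches away from } \arm\}\bigr].
\]
The rarely switching property bounds the switch count of arm $\arm$ by $\switchingfunction(\visits_\arm(T))$. Using concavity and Jensen, this gives $\EE[\switchingfunction(\visits_\arm(T))] \le \switchingfunction(\EE[\visits_\arm(T)])$.

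\textbf{Converting pull counts into pseudo-regret.} We treat suboptimal and optimal arms separately.
\begin{itemize}
\item For a suboptimal arm, $\EE[\visits_\arm(T)] \le \EE[\pseudoregret(T)]/\Delta_0$.
\item For the optimal arm $\arm^*$, its pull count may be of order $T$, so we cannot bound $\switchingfunction(\visits_{\arm^*}(T))$ directly. Instead, we observe that each switch away from $\arm^*$ must be followed by a pull of a suboptimal arm. Hence the number of switches away from $\arm^*$ is at most $1 + \sum_{\arm \neq \arm^*}\#\{\text{switches away from } \arm\}$. This is where uniqueness of the optimal arm is used, and it accounts for the factor $2\abs{\arms}$: each suboptimal switch count is effectively counted twice.
\end{itemize}
To pull the factor $1/\Delta_0$ outside $\switchingfunction$, we invoke \Cref{lemma_homogeneity_concave}. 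For a concave sublinear $\switchingfunction$ and $\lambda \ge 1$, it should give $\switchingfunction(\lambda x) \le \lambda(\switchingfunction(x) + C_\switchingfunction) - C_\switchingfunction$ or a similar bound; this produces the $(\frac{2\abs{\arms}}{\Delta_0} - 1)C_\switchingfunction$ term.

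\textbf{Main obstacle.} The delicate step is the bookkeeping of the boundary and switching terms in the telescoping argument. In particular, handling the optimal arm without paying $\switchingfunction(T)$ is the crux of the proof. Matching the exact constants of the statement also relies on the precise form of \Cref{lemma_homogeneity_concave}, which we take as given.
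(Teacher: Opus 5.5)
Your argument follows the paper's proof in all essentials:
\begin{itemize}
\item you use \Cref{corollary_regret_simple_expression} to pass to $\EE[\sum_{t}(\gain_* - \Reward(t+1))]$;
\item you combine the Poisson equation with a nearly telescoping bias sum that only breaks when the learner leaves an arm. The paper telescopes along maximal blocks of constant arm in \Cref{lemma_regret_and_switches}, while you do it arm by arm as in \Cref{lemma_estimation_error}; this is the same computation;
\item you use uniqueness of $\arm^*$ to charge every excursion away from $\arm^*$ to a switch away from a suboptimal arm, as in \Cref{lemma_bound_switches};
\item you finish with monotonicity of $\switchingfunction$, Jensen, and \Cref{lemma_homogeneity_concave}. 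Monotonicity is automatic for a nonnegative concave function on $[0,\infty)$. In fact $\switchingfunction(\lambda x) \le \lambda \switchingfunction(x)$ for $\lambda \ge 1$ follows directly from concavity and $\switchingfunction(0) \ge 0$.
\end{itemize}
Counting switches \emph{away} from arms, as you do, is more faithful to \Cref{definition_rarely_switching_arms} than the paper. The paper bounds switches \emph{onto} an arm directly by $\switchingfunction$ of that arm's pull count.

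The only thing your bookkeeping does not deliver is the final ``$+1$''. As written, you get $|\arms| + 2$ in its place: one unit from the corollary, one boundary term per arm, and one from $\mathrm{sw}_{\arm^*} \le 1 + \sum_{\arm \ne \arm^*} \mathrm{sw}_\arm$. If you count maximal blocks of play globally instead, you get $3$.

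This is not a weakness of your route: the stated constant is wrong. The paper reaches ``$+1$'' only by dropping the initial choice at $t = 1$ and the off-by-one between switches onto and off an arm. Here is a counterexample.
\begin{itemize}
\item Take two rested arms with deterministic transitions and $0 < \Delta < \frac12$.
\item Arm $1$ goes $A \to B$, with rewards $0$ then $\frac12$ forever.
\item Arm $2$ goes $C \to D$, with rewards $1$ then $\frac12 - \Delta$ forever.
\item The bias spans are $\frac12$ and $\frac12 + \Delta$.
\item The learner that always plays arm $1$ is $\switchingfunction$-rarely switching for $\switchingfunction(x) = \log_2(1+x)$. For this $\switchingfunction$, $C_\switchingfunction = 0$, as for any $\switchingfunction$ with $\switchingfunction(0) = 0$.
\item At $T = 2$, $\regret(2) = 1$ and $\EE[\pseudoregret(2)] = 0$, whereas the right-hand side equals $\frac12 + \Delta < 1$.
\end{itemize}
So what you prove is the correct version of the proposition, with $+1$ replaced by $+3$ say. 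That is all the paper uses later, since \Cref{proposition_regret_and_pseudo_regret} only asks for some positive $\switchingfunction_0 = \OH(\switchingfunction)$.
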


    To prove \Cref{proposition_regret_and_pseudo_regret_appendix}, we first state \Cref{lemma_regret_and_switches}, which shows that, for any learner, the expected regret and pseudo-regret can be linked through the number of switches.
    The result then follows by bounding that number of switches, for which we rely on the  technical \Cref{lemma_bound_switches,lemma_homogeneity_concave}.

    \begin{lemma}[Regret and number of switches]
    \label[lemma]{lemma_regret_and_switches}
        Let $\model \in \models$ be an arbitrary model.
        Let $\learner$ be a learning algorithm.
        For every horizon $T \ge 1$, we have
        \begin{equation}
        \label{equation_regret_and_switches}
            \abs*{
                \regret(T; \model, \learner)
                - \EE^{\model, \learner} [\pseudoregret(T)]
            }
            \le
            \max_{\arm \in \arms} \vecspan(\bias_\arm)
            \cdot
            \parens*{
                1 
                + \EE^{\model, \learner} \brackets*{
                    \sum_{t=1}^{T-1}
                    \indicator{\Arm(t-1) \ne \Arm(t)}
                }
            }
            .
        \end{equation}
    \end{lemma}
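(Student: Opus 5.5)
We start from \Cref{corollary_regret_simple_expression}, which reduces the regret to $\EE\brackets{\sum_{t=1}^{T-1}(\gain_* - \Reward(t+1))}$ up to an additive error of $\max_\arm \vecspan(\bias_\arm)$. So it suffices to show
\[
\abs*{\EE\brackets*{\sum_{t=1}^{T-1}(\gain_*-\Reward(t+1))} - \EE[\pseudoregret(T)]} \le \max_\arm\vecspan(\bias_\arm)\cdot \EE\brackets*{\sum_{t=1}^{T-1}\indicator{\Arm(t-1)\ne\Arm(t)}}.
\]
Since $\pseudoregret(T)=\sum_{t=1}^{T-1}(\gain_*-\gain_{\Arm(t)})$, the difference inside the expectation is $\sum_{t=1}^{T-1}(\gain_{\Arm(t)}-\Reward(t+1))$. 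This is the quantity we will control.

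We decompose each summand as in the proof of \Cref{lemma_estimation_error}. Write $\arm=\Arm(t)$. We use the Poisson equation $\gain_\arm+\bias_\arm(\state)=\reward_\arm(\state)+\kernel^1_\arm(\state)\bias_\arm$ for the activated arm. This gives
\begin{align*}
\gain_{\Arm(t)}-\Reward(t+1) ={}& \parens*{\reward_{\arm}(\State_\arm(t))-\Reward(t+1)} + \parens*{\kernel^1_\arm(\State_\arm(t))\bias_\arm-\bias_\arm(\State_\arm(t+1))} \\
&+ \parens*{\bias_\arm(\State_\arm(t+1))-\bias_\arm(\State_\arm(t))}.
\end{align*}
The first two brackets are martingale differences with respect to the full history $\History(t)$. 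This holds because $\Arm(t)$ is $\sigma(\ObservedHistory(t))$-measurable, the reward is drawn from $\rewardDistribution_\arm(\State_\arm(t))$, and the next state is drawn from $\kernel^1_\arm(\State_\arm(t))$ given $\History(t)$. Hence both brackets vanish in expectation. It remains to handle $\EE\brackets{\sum_t(\bias_{\Arm(t)}(\State_{\Arm(t)}(t+1))-\bias_{\Arm(t)}(\State_{\Arm(t)}(t)))}$.

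For this third sum we group time steps into maximal runs where the same arm is played. Within a run $\{s,\dots,s'\}$ of arm $\arm$, the increments telescope to $\bias_\arm(\State_\arm(s'+1))-\bias_\arm(\State_\arm(s))$. Each such boundary term is bounded in absolute value by $\vecspan(\bias_\arm)\le\max_\arm\vecspan(\bias_\arm)$. The number of runs intersecting $\{1,\dots,T-1\}$ is $1+\sum_{t=2}^{T-1}\indicator{\Arm(t-1)\ne\Arm(t)}$. We adopt the convention $\Arm(0)\ne\Arm(1)$, as in \eqref{equation_error_function}, so that the sum from $t=1$ already counts the first run. This bounds the third term by $\max_\arm\vecspan(\bias_\arm)$ times the expected number of runs. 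Taking expectations and adding the corollary's error of $\max_\arm\vecspan(\bias_\arm)$ then yields \eqref{equation_regret_and_switches}.

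The main obstacle is bookkeeping rather than a deep step. One point is making sure the constant $1$ plus the switch count absorbs both the corollary's error and the count of runs under whichever convention is used for $\Arm(0)$; if the counts do not match, we take $\Arm(0)$ to be undefined so that $t=1$ counts as a switch. The other is checking that the martingale terms have zero mean even though the arms' evolution is restless. This holds because we only use the one-step conditional law of the activated arm given the full history, and inactive arms do not enter the telescoped terms except through their state at the start of each run.
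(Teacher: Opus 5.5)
Your proof is correct and follows the paper's argument. You reduce to the simple regret form via the corollary, use the Poisson equation to split $g_{A(t)} - R(t+1)$ into two zero-mean martingale increments plus bias increments, and telescope the bias increments at a cost of one span per block. You also use the same implicit convention as the paper, namely that $A(0)$ differs from every arm.

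The only difference is cosmetic. You telescope directly over maximal runs of a single arm, whereas the paper telescopes over decision-epoch intervals and then merges consecutive same-arm intervals through its nearly-telescopic bias lemma. The paper also states only the upper direction explicitly, while your absolute-value bound covers both directions at once.
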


    \begin{proof}
        Let $(\tau_k)$ be the sequence of decision epochs.
        Since $\learner$ is non-preemptive, we know that $\Arm(t) = \Arm(\tau_k)$ for all $t \in \braces*{\tau_k, \ldots, \tau_{k+1}-1}$.
        We clip decision epochs to $\braces{1, \ldots, T-1}$ by introducing $\tau'_k := \min \braces{\tau_k, T-1}$.
        We have
        \begin{align}
        \notag
            & \regret (T; \model, \learner)
            \overset{\eqnum{1}}\le
            \EE^{\model, \learner} \brackets*{
                \sum_{k=1}^\infty
                \sum_{t=\tau'_k}^{\tau'_{k+1}-1} \parens*{
                    \gain_* - \Reward(t+1)
                }
            }
            + \max_{\arm \in \arms} \vecspan(\bias_\arm)
            \\
        \notag
            & \overset{\eqnum{2}}=
            \EE^{\model, \learner} \brackets*{
                \sum_{k=1}^\infty
                \sum_{t=\tau'_k}^{\tau'_{k+1}-1} \parens*{
                    \gain_* - \gain_{\Arm(\tau'_k)}
                    - \bias_{\Arm(\tau'_k)} (\State_{\Arm(\tau'_k)}(t))
                    + \bias_{\Arm(\tau'_k)} (\State_{\Arm(\tau'_k)}(t+1))
                }
            }
            + \max_{\arm \in \arms} \vecspan(\bias_\arm)
            \\
        \notag
            & =
            \EE^{\model, \learner} \brackets*{
                \sum_{k=1}^\infty
                \parens*{\tau'_{k+1} - \tau'_k}
                \parens*{\gain_* - \gain_{\Arm(\tau'_k)}}
            }
            + 
            \EE^{\model, \learner} \brackets*{
                \sum_{k=1}^\infty
                \parens*{
                    \unit_{\State(\tau'_{k+1})} - \unit_{\State(\tau'_{k})}
                }
                \bias_{\Arm(\tau'_k)}
            }
            + \max_{\arm \in \arms} \vecspan(\bias_\arm)
            \\
        \notag
            & \overset{\eqnum{3}}\le
            \EE^{\model, \learner} \brackets*{
                \sum_{k=1}^\infty
                \parens*{\tau'_{k+1} - \tau'_k}
                \parens*{\gain_* - \gain_{\Arm(\tau'_k)}}
            }
            + 
            \max_{\arm \in \arms} \vecspan(\bias_\arm)
            \cdot
            \parens*{
                1 
                + \EE^{\model, \learner} \brackets*{
                    \sum_{k=1}^\infty
                    \indicator{\Arm(\tau'_{k-1}) \ne \Arm(\tau'_{k})}
                }
            }
            \\
        \label{equation_regret_bad_arm_pulls_1}
            & \overset{\eqnum{4}}\le
            \EE^{\model, \learner} \brackets*{
                \sum_{\arm \in \arms}
                \parens*{\gain_* - \gain_\arm}
                \visits_{\arm}(T)
            }
            + 
            \max_{\arm \in \arms} \vecspan(\bias_\arm)
            \cdot
            \parens*{
                1
                + \EE^{\model, \learner} \brackets*{
                    \sum_{t=1}^{T-1}
                    \indicator{\Arm(t-1) \ne \Arm(t)}
                }            
            },
        \end{align}
        where 
        \texteqnum{1} invokes \Cref{corollary_regret_simple_expression};
        \texteqnum{2} uses that at time $t$ with $\tau_k' \leq t < \tau_{k+1}'$ the arm $A(\tau_k')$ is activated and hence, $R(t+1)$ can be replaced using the Poisson equation of the Markov reward process  under the pure policy $\pi_{A(\tau_k')}$;
        \texteqnum{3} uses that the right-most term is nearly telescopic (see \Cref{lemma_bound_nearly_telescopic_bias});
        and
        \texteqnum{4} follows by definition of visit counts. 
        In \Cref{equation_regret_bad_arm_pulls_1}, we recognize the expected pseudo-regret in the first term, providing the desired upper bound of the regret with respect to the pseudo-regret.
        The lower bound is obtained similarly. 
    \end{proof}

    \begin{lemma}[Bound on the number of switches]
    \label[lemma]{lemma_bound_switches}
        Let $\model \in \models$ be an arbitrary model.
        Consider a learning algorithm that satisfies a $\psi$-rarely switching property.
        Then, for all horizon $T \ge 1$ and arm $\arm \in \arms$, we have
        \begin{equation}
        \label{equation_bound_switches}
            \sum_{t=1}^{T-1}
            \indicator{
                \Arm(t-1) \ne \arm,
                \Arm(t) = \arm
            }
            \le
            \sum_{\arm' \ne \arm}
            \psi(\visits_{\arm'}(T))
            .
        \end{equation}
    \end{lemma}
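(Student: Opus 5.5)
The plan is a pure counting argument that compares switches \emph{to} arm $\arm$ with switches \emph{away from} the other arms. Every time $t$ counted on the left-hand side of \Cref{equation_bound_switches} satisfies $\Arm(t-1) \ne \arm$ and $\Arm(t) = \arm$. In particular, $\Arm(t-1) = \arm'$ for some arm $\arm' \ne \arm$, and $\Arm(t) \ne \arm'$. So the event at time $t$ is exactly a switch away from $\arm'$ occurring at time $t-1$. This gives a pointwise inclusion of indicators:
\begin{equation*}
    \indicator{\Arm(t-1) \ne \arm, \Arm(t) = \arm}
    \le
    \sum_{\arm' \ne \arm}
    \indicator{\Arm(t-1) = \arm', \Arm(t) \ne \arm'}.
\end{equation*}
The sum on the right has at most one non-zero term, namely the one with $\arm' = \Arm(t-1)$.

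Next I sum over $t$ from $1$ to $T-1$ and reindex with $s = t-1$. For each fixed $\arm' \ne \arm$, the right-hand side becomes $\sum_{s} \indicator{\Arm(s) = \arm', \Arm(s+1) \ne \arm'}$, which is the quantity controlled in \Cref{definition_rarely_switching_arms}. The $\psi$-rarely switching property then bounds it by $\switchingfunction(\visits_{\arm'}(T))$, and summing over $\arm' \ne \arm$ yields \Cref{equation_bound_switches}.

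The only delicate step is the index range after the shift. The shifted sum runs over $s \in \braces{0, \ldots, T-2}$, whereas \Cref{definition_rarely_switching_arms} sums over $s \in \braces{1, \ldots, T-1}$. I will handle the two ends separately.
\begin{itemize}
    \item The term $s = T-1$ is missing from the shifted sum. Since all terms are nonnegative, dropping it only helps.
    \item The term $s = 0$ involves $\Arm(0)$, which is undefined. I adopt the natural convention that $\Arm(0)$ is a null symbol outside $\arms$, as implicitly done in \Cref{equation_error_function}. Under this convention the term $t = 1$ on the left is an initial ``switch'' that is not preceded by any pull of another arm, so the inclusion above fails for it.
\end{itemize}
To cover that initial term, I would either treat it separately, or note that the inequality is meant in the setting where the first pull is not counted as a switch, so the sum effectively starts at $t = 2$. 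If an extra $+1$ were required, it is harmless downstream, since it would be absorbed into the constant in \Cref{proposition_regret_and_pseudo_regret_appendix}. I expect this boundary bookkeeping to be the only obstacle; the rest is immediate.
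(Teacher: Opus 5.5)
Your proposal is the paper's proof: a switch onto $\arm$ at time $t$ is a switch off $\Arm(t-1)$, which gives the pointwise inclusion of indicators; you then sum over $t$ and apply the $\psi$-rarely switching property to each $\arm' \ne \arm$. Your boundary concern is legitimate, and the paper's step passes over it silently: if $\Arm(0)$ lies outside the arm set (the convention its error function in \Cref{equation_error_function} relies on), the statement can fail by one, e.g.\ for $\psi(x) = \log_2(1+x)$, $T = 2$ and the policy that always plays $\arm$, so one needs either $\Arm(0) := \Arm(1)$ (your sum starting at $t = 2$) or an additive $+1$, which is indeed harmless downstream.
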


    \begin{proof}
        Because the algorithm is rarely switching, there is a non-decreasing sublinear function $\psi : \RR_+ \to \RR$ such that the number of switches off is bounded as $\sum_{t=1}^{T-1} \indicator{\Arm(t-1) = \arm, \Arm(t) \ne \arm} \le \psi (\visits_\arm (t))$ for all arm $\arm \in \arms$ and $T \ge 1$.
        Then, we have
        \begin{align}
        \nonumber
            \sum_{t=1}^{T-1}
            \indicator{\Arm(t-1) \ne \arm, \Arm(t) = \arm}
            & \overset{\eqnum{1}}\le
            \sum_{\arm' \ne \arm}
            \sum_{t=1}^{T-1}
            \indicator{\Arm(t-1) = \arm', \Arm(t) \ne \arm'}
            \\
        \nonumber
             & \overset{\eqnum{2}}\le
            \sum_{\arm' \ne \arm}
            \psi(\visits_{\arm'}(T)),
        \end{align}
        where 
        \texteqnum{1} holds because switching onto arm $\arm$ means that we are switching off some arm $\arm' \ne \arm$;
        \texteqnum{2} follows from the rarely switching property.
        This concludes the proof.
    \end{proof}

    \begin{lemma}[Homogeneity of concave functions]
    \label[lemma]{lemma_homogeneity_concave}
        Let $f : \RR_+ \to \RR_+$ be a non-negative smooth concave function and let $C_f := \sup_{x \ne 1} (f(x) - x f(1))/(x-1)$.
        Then $C_f < \infty$ and
        \begin{equation}
        \label{equation_homogeneity_concave}
            \forall \lambda \ge 0,
            \forall x \ge 0,
            \quad
            f((1 + \lambda) x)
            \le
            (1 + \lambda) f(x)
            + \lambda \cdot C_f.
        \end{equation}
    \end{lemma}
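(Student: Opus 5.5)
The plan is to compute $C_f$ explicitly and show that $C_f = -f(0)$. The inequality \eqref{equation_homogeneity_concave} then turns out to be a restatement of concavity along the segment joining $0$ and $(1+\lambda)x$.

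\textbf{Step 1: finiteness and value of $C_f$.} For $x \ne 1$, rewrite the quotient as
\begin{equation*}
    \frac{f(x) - x f(1)}{x-1} = \frac{f(x) - f(1)}{x - 1} - f(1),
\end{equation*}
that is, the slope of the chord of $f$ between $x$ and $1$, shifted by $-f(1)$. By concavity, chord slopes $s(x) := (f(x)-f(1))/(x-1)$ are non-increasing in $x$ on $[0,\infty)\setminus\{1\}$. Hence $\sup_{x \ne 1} s(x)$ is attained at the left endpoint $x = 0$, where $s(0) = f(1) - f(0)$.

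\begin{itemize}
    \item For $x < 1$, the slopes are squeezed between $f(1)-f(0)$ and $f'(1)$.
    \item For $x > 1$, they are at most $f'(1) \le f(1) - f(0)$; this is where smoothness is used, to make $f'(1)$ finite.
\end{itemize}

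Therefore $C_f = (f(1) - f(0)) - f(1) = -f(0)$, which is finite. In particular $C_f \le 0$, and monotonicity gives $C_f \ge s(x) - f(1)$ for every $x \ne 1$.

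\textbf{Step 2: the inequality.} Fix $\lambda \ge 0$ and $x \ge 0$, and write $y := (1+\lambda) x$. Since $x = \frac{1}{1+\lambda}\, y + \frac{\lambda}{1+\lambda}\cdot 0$, concavity gives
\begin{equation*}
    f(x) \ge \frac{1}{1+\lambda} f(y) + \frac{\lambda}{1+\lambda} f(0).
\end{equation*}
Multiplying by $1+\lambda$ and rearranging yields
\begin{equation*}
    f((1+\lambda)x) \le (1+\lambda) f(x) - \lambda f(0) = (1+\lambda) f(x) + \lambda C_f,
\end{equation*}
which is \eqref{equation_homogeneity_concave}. Non-negativity of $f$ is not even needed at this point. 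It only ensures $C_f \le 0$, so the bound also holds with $C_f$ replaced by $0$. That weaker form is what is used when invoking $C_\switchingfunction$ in \Cref{proposition_regret_and_pseudo_regret_appendix}.

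\textbf{Main obstacle.} The only delicate point is Step 1: handling the sup over both sides of $x = 1$ and verifying that the branch $x > 1$ never exceeds the value at $x = 0$. The tool is the three-chord (slope monotonicity) property of concave functions, together with finiteness of $f'(1)$ from smoothness. One could alternatively avoid identifying $C_f$ exactly and only prove $C_f \le \max\{f'(1), f(1) - f(0)\} - f(1)$, but the exact identification $C_f = -f(0)$ makes Step 2 immediate.
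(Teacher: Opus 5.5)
Your proof is correct, and it takes a different and cleaner route than the paper.

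\textbf{Your route.} You compute $C_f$ exactly. You rewrite the quotient as the chord slope $s(x)=(f(x)-f(1))/(x-1)$ minus $f(1)$, and use that chord slopes of a concave function are non-increasing on $[0,\infty)\setminus\{1\}$. This gives $C_f=s(0)-f(1)=-f(0)$. The inequality is then just concavity on the segment $[0,(1+\lambda)x]$.

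\textbf{The paper's route.} It shows $C_f<\infty$ only qualitatively: a linear majorant controls the quotient at infinity, and a Taylor expansion at $1$ (where smoothness enters) gives continuity there. It then proves the inequality through the tangent-line bound $f((1+\lambda)x)\le f(x)+\lambda x f'(x)$. To bound $xf'(x)$, it uses the auxiliary function $g(x)=f(1)+C_f(1-1/x)$.

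\textbf{That last step of the paper is not valid as written.}
\begin{itemize}
\item The inequality $(x-1)C_f\ge f(x)-xf(1)$ reverses for $x<1$.
\item The identity $xf'(x)=C_f+f(x)$ comes from differentiating the inequality $f(x)\le x g(x)$ as if it were an equality. It only holds for affine $f$. For $f(x)=\log(1+x)$ one has $C_f=0$ but $xf'(x)=x/(1+x)\neq\log(1+x)$.
\item What is true, and suffices for the paper's route, is $xf'(x)\le f(x)-f(0)$. This is the tangent line at $x$ evaluated at $0$, and it is exactly your identification $C_f=-f(0)$.
\end{itemize}

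\textbf{What your argument buys.} It needs no differentiability at all. It gives the exact value of $C_f$. It shows $C_f\le 0$, so the correction term $\lambda C_f$ can only help and can be dropped in \Cref{proposition_regret_and_pseudo_regret_appendix}.

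One small remark: you do not need smoothness even in your Step 1. The three-chord lemma compares $s(x)$ and $s(y)$ directly for $x<1<y$, and one-sided derivatives of a concave function are finite at interior points anyway.
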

    
    \begin{proof}
        First, we check that $C_f < \infty$.
        Let $\psi(x) := (f(x) - x f(1))/(x - 1)$ for $x \ne 1$, which is a smooth function on its domain. 
        As $f$ is concave, it is bounded by a linear function $\alpha x + \beta$ and we have $\limsup_{x \to \infty} \psi(x) \le \alpha - f(1)$.
        We are left to study the convergence properties of $\psi(x)$ as $x \to 1$.
        Because $f$ is smooth, we have the Taylor approximation
        \begin{equation*}
            \psi(1 + u)
            =
            \frac{
                f(1 + u) - (1 + u) f(1)
            }{
                u
            }
            = \frac{
                u (f'(1) - f(1))
                + \OH (u^2)
            }{
                u
            }
            =
            f'(1) - f(1) + \OH (u)
        \end{equation*} 
        and $\psi$ is therefore continuous at $1$.
        So $\psi$ is bounded and $C_f < \infty$.

        Now, we establish \Cref{equation_homogeneity_concave}. Since $f$ is smooth and concave, we have
        \begin{equation}
        \label{equation_homogeneity_concave_1}
            f((1 + \lambda) x)
            \le
            f(x)
            + \lambda x f'(x).
        \end{equation}
        Now, we bound $x f'(x)$ with respect to $f(x)$ and $C_f$. 
        By definition of $C_f$, we have $(x - 1)C_f \ge f(x) - x f(1)$ for all $x \ge 0$. 
        Reordering terms, we get $f(x) \le x (f(1) + C_f \cdot  (1 - \frac 1x))$.
        From a straight-forward computation, we check that the function $g(x) := f(1) + C_f \cdot (1 - \frac 1x)$ is smooth and satisfies $x^2 g'(x) = C_f$, while $f'(x) = x g'(x) + g(x)$. 
        Combined, we get
        \begin{equation}
        \label{equation_homogeneity_concave_2}
            x f'(x) 
            = x^2 g'(x) + x g(x)
            = C_f + f(x)
            .
        \end{equation}
        Combining \Cref{equation_homogeneity_concave_1,equation_homogeneity_concave_2}, this completes the proof. 
    \end{proof}

    We can now prove \Cref{proposition_regret_and_pseudo_regret_appendix}. 

    \begin{proof}[Proof of \Cref{proposition_regret_and_pseudo_regret_appendix}]
 By \Cref{lemma_regret_and_switches},
 controlling the difference between the expected regret and the pseudo-regret reduces to bounding the number of switches in \Cref{equation_regret_and_switches}.
        Let $\arm^* = \arg \max_{\arm \in \arms} \gain_\arm$ be the unique optimal arm. 
        Let $\gap_0 := \min \braces*{\gain_* - \gain_\arm: \arm \ne \arm^*}$ be the gain-gap between the optimal and the best suboptimal arms. 
        We have
        \begin{align*}
            & \EE^{\model, \learner} \brackets*{
                \sum_{t=1}^{T-1}
                \indicator{\Arm(t-1) \ne \Arm(t)}
            }
            \\
            & \qquad =
            \EE^{\model, \learner} \brackets*{
                \sum_{\arm \ne \arm^*}
                \sum_{t=1}^{T-1}
                \indicator{
                    \Arm(t-1) \ne \arm,
                    \Arm(t) = \arm
                }
            }
            +
            \EE^{\model, \learner} \brackets*{                          
                \sum_{t=1}^{T-1}
                \indicator{
                    \Arm(t-1) \ne \arm^*,
                    \Arm(t) = \arm^*
                }
            }
            \\
            & \qquad \overset{\eqnum{1}}\le
            \EE^{\model, \learner} \brackets*{
                \sum_{\arm \ne \arm^*}
                \psi(\visits_{\arm}(T))
            }
            +
            \EE^{\model, \learner} \brackets*{
                \sum_{\arm \ne \arm^*}
                \psi (\visits_{\arm}(T))
            }
            \\
            & \qquad \overset{\eqnum{2}}\le
            2 \abs{\arms} 
            \cdot \EE^{\model, \learner} \brackets*{
                \psi \parens*{
                    \frac 1{\gap_0}
                    \sum_{\arm \in \arms}
                    (\gain_* - \gain_\arm)
                    \visits_{\arm}(T)
                }
            }
            \\
            & \qquad \overset{\eqnum{3}}\le
            2 \abs{\arms} 
            \cdot \psi \parens*{
                \EE^{\model, \learner} 
                \brackets*{
                    \frac 1{\gap_0}
                    \sum_{\arm \in \arms}
                    (\gain_* - \gain_\arm)
                    \visits_{\arm}(T)
                }
            }
            \\
            & \qquad \overset{\eqnum{4}}\le
            \frac {2\abs{\arms}}{\gap_0} 
            \cdot \psi \parens*{
                \EE^{\model, \learner} 
                \brackets*{
                    \sum_{\arm \in \arms}
                    (\gain_* - \gain_\arm)
                    \visits_{\arm}(T)
                }
            } 
            + \parens*{
                \frac {2\abs{\arms}}{\gap_0}
                - 1 
            } C_\psi,
        \end{align*}
        where
        \texteqnum{1} is obtained by bounding the first term using the $\psi$-rarely switching property and the second using \Cref{lemma_bound_switches};
        \texteqnum{2} follows by monotonicity of $\psi$;
        \texteqnum{3} follows by Jensen's inequality;
        and
        \texteqnum{4} holds following a homogeneity property of smooth concave functions, where $C_\psi < \infty$ is a constant, see \Cref{lemma_homogeneity_concave}.
    \end{proof}

    \ifSubfilesClassLoaded{
        
        \bibliographystyle{plainnat}
        \bibliography{biblio}
    }{}
    
\end{document}

    \ifSubfilesClassLoaded{
        \allowdisplaybreaks
        \onecolumn
    }{}

    \section{Self-degrading restless bandits}
    \label[appendix]{appendix_self_degrading_restless_bandits}
    
    In this appendix, we aim to provide proofs for the claims of \Cref{section:pure_policies}. 
    This includes the proof of asymptotic optimality of pure policies within self-degrading bandits.
    In particular, we prove \Cref{proposition_self_degrading_regret,proposition_self_degrading}, and show that the models described in \Cref{example:GE_channels} and \Cref{example:NA_toy_model} are instances of \emph{self-degrading} restless bandits.
    
    Recall the definition of a \emph{self-degrading} restless bandit from the main body: 
    
    \begin{definition}[Self-degrading restless bandits]
    \label{appendix_definition_self_degrading}
        Given an instance of restless bandit $\model \in \models$ together with an activation set $\activationset$, an arm $\arm \in \arms$ is said \emph{self-degrading} if, whatever the controller $\learner$ respecting $\activationset$,
        \begin{equation*}
            \Arm(t) \neq \arm 
            \implies
            \EE^{\model, \learner} \brackets*{
                \bias_\arm(\State_\arm (t+1)) 
                \middle|
                \ObservedHistory(t)
            }
            \le
            \EE^{\model, \learner} \brackets*{
                \bias_\arm(\State_\arm (t))
                \middle|
                \ObservedHistory(t)
            }
            \; .
        \end{equation*}
        The instance $\model$ is said \emph{self-degrading} if all of its arms are. 
        The set of all self-degrading instances of Markovian bandits is written $\models_{\mathrm{sd}}$.
    \end{definition}

    \subsection{Proof of \texorpdfstring{\Cref{proposition_self_degrading_regret,proposition_self_degrading}}{Propositions 2 and 3}: Regret and pseudo-regret in self-degrading instances}
    \label[appendix]{appendix:self_degrading_regret}

    Here, we show that the regret is lower-bounded by the expected pseudo-regret in self-degrading instances, by proving \Cref{proposition_self_degrading_regret} from the main text.
    Coupled with \Cref{lemma_pure_policies_score} that expresses the score of the pure policies with respect to the gain and the bias, the optimality of pure policies in self-degrading instances (\Cref{proposition_self_degrading}) is immediate.
    
    \textbf{\Cref{proposition_self_degrading_regret}} (Regret and pseudo-regret in self-degrading instances)\textbf{.}
    {
        \itshape
        The regret is bounded below by the pseudo-regret on self-degrading instances.
        That is, for every self-degrading instance of restless bandits $\model \in \models_\mathrm{sd}$, there exists a constant $C_\model < \infty$ such that, whatever the learning algorithm $\learner$, 
        \begin{equation*}
            \forall T \ge 1,
            \qquad
            \regret(T; \model, \learner)
            \ge
            \EE^{\model,\learner} \brackets*{
                \pseudoregret(T)
            }
            - C_\model
            \; .
        \end{equation*}
    }

    \begin{proof}[Proof of \Cref{proposition_self_degrading_regret}]
        For all arm $\arm \in \arms$ consider the sequences of stopping times 
        \[
            \tau^\arm_{k}:= 
            \inf\braces{t> \tilde \tau^\arm_k : \Arm(t) = \arm} \land T,
            \qquad
            \tilde \tau^\arm_{k+1}:= 
            \inf\braces{t> \tau^\arm_k : \Arm(t) \neq \arm} \land T
        \]
        which are respectively the first 
        For any learning algorithm $\learner$ and $\state_0 \in \states$,
        \begin{align*}
            V^{\learner}(T; \model)
            &=
            \EE^{\model,\learner} \brackets*{
                \sum_{t=1}^T
                \Reward (t)
            }
            \\
            &=
            \EE^{\model,\learner} \brackets*{
                \sum_{\arm \in \arms}
                \sum_{k=1}^\infty
                \sum_{t=\tau^\arm_k}^{\tilde{\tau}^\arm_{k+1}-1}
                \Reward(t)
            }
            \\
            & \overset{\eqnum{1}}{=} 
            \EE^{\model,\learner} \brackets*{
                \sum_{\arm \in \arms}
                \sum_{k=1}^\infty
                \sum_{t=\tau^\arm_k}^{\tilde \tau^\arm_{k+1}-1}
                \parens*{
                    \gain_\arm 
                    + 
                    \bias(\State_\arm(t)) 
                    - 
                    \bias(\State_\arm(t+1))
                }
            }
            \\
            &\le
            \EE^{\model,\learner} \brackets*{
                \sum_{\arm \in \arms}
                \parens*{
                    \gain_\arm
                    \visits_\arm(T)
                    + 
                    \sum_{k=1}^\infty
                    \parens*{
                        \bias_\arm(\State_\arm(\tau^\arm_k))
                        -
                        \bias_\arm(\State_\arm(\tilde \tau^\arm_{k+1}))
                    }
                }
            }
            \\
            &=
            \EE^{\model,\learner} \brackets*{
                \sum_{\arm \in \arms}
                \gain_\arm
                \visits_\arm(T)
            }
            +
            \EE^{\model,\learner} \brackets*{
                \sum_{\arm \in \arms}
                \parens*{
                    \bias_\arm (\State_\arm(1)) 
                    +
                    \sum_{k=2}^\infty
                    \parens*{
                        \bias_\arm(\State_\arm( \tau^\arm_{k+1}))
                        -
                        \bias_\arm(\State_\arm(\tilde \tau^\arm_{k+1}))
                    }
                }
            }
            \\
            &\overset{\eqnum{2}}{\le} 
            (T - 1) \gain_* 
            - 
            \EE^{\model,\learner} \brackets*{
                \pseudoregret(T)
            }
            + \OH(1)  
        \end{align*}
        where \texteqnum{1} follows from the Poisson equation; \texteqnum{2} stems from both the definition of the pseudo-regret and the \emph{self-degrading} property.
        It is now sufficient to note that
        \begin{align*}
            V^{\policy_*}(T; \model)
            = 
            \EE^{\policy_*, M} \brackets*{
                \sum_{t=1}^T \Reward(t)
            }
            \overset{\eqnum{3}}{=}
            T \gain_* + \OH(1),
        \end{align*}
        where \texteqnum{3} follows directly from the Poisson equation with $\policy_*$ the policy playing only the optimal arm, to conclude:
        \begin{equation*}
            \regret(T)
            =
            V^{\policy_*}(T; \model)
            -
            V^{\learner}(T; \model)
            \ge 
            \EE^{\model,\learner} \brackets*{
                \pseudoregret(T)
            }
            -
            \OH(1).
            \qedhere
        \end{equation*}
    \end{proof}

    \textbf{\Cref{proposition_self_degrading}} (Pure policies are optimal in self-degrading instances)\textbf{.}
    {
        \itshape
        The regret is bounded below on self-degrading instances.
        That is, for every self-degrading instance of restless bandits $\model \in \models_\mathrm{sd}$, there exists a constant $C_\model < \infty$ such that, whatever the learning algorithm $\learner$, we have
        \begin{equation*}
            \EE^{\model, \learner}\brackets*{
                \sum_{t=1}^T
                \Reward(t)
            }
            \le
            \max_{\arm \in \arms}
            V^{\policy_\arm} (T; \model)
            + C_\model
            \; .
        \end{equation*}
    }

    \begin{proof}[Proof of \Cref{proposition_self_degrading}]
        Let $\policy$ be an optimal pure policy, i.e., maximizing $\gain_\arm$.
        Let $\learner$ be an arbitrary learning algorithm.
        We have
        \begin{align*}
            \EE^{\model, \learner} \brackets*{
                \sum_{t=2}^T
                \Reward(t)
            }
            & \overset{\eqnum{1}}=
            V^\policy (T)
            - \regret(T; \model, \learner)
            \\ & \overset{\eqnum{2}}\le
            V^\policy (T)
            - \EE^{\model, \learner} \brackets*{ \pseudoregret(T) }
            + C_\model
            \\ & \overset{\eqnum{3}}\le
            V^\policy (T)
            + \OH(1)
        \end{align*}
        where 
        \texteqnum{1} simply unfolds the definition of the regret;
        \texteqnum{2} bounds the right-term via \Cref{proposition_self_degrading_regret};
        and
        \texteqnum{3} follows by non-negativity of the pseudo-regret.
        So $\policy$ is asymptotically optimal. 
    \end{proof}

    \subsection{Examples of self-degrading restless bandits}

    In the following, we come back to the examples from \Cref{section_self-degrading_examples} and prove they belong to the self-degrading restless bandit family. As in the main text, we begin with the case of Gilbert-Elliott communication channels and follow up by a toy model inspired from the Nerlove-Arrow concept in marketing. 
    
    \subsubsection{Example: Non-preemptive scheduling with Gilbert-Elliott channels}
    \label[appendix]{appendix_GE_channels}
    
    \begin{proposition}
        \label[proposition]{proposition:Gilbert-Elliott_bias_improving}
        Consider a restless bandit model in which, every arm $\arm \in \arms$ corresponds to a two-states MDP with $\states_\arm = \braces*{\texttt{OFF}, \texttt{ON}}$, transition kernels 
        \[
            \kernel_\arm^1 = \kernel_\arm^0 =
            \begin{pmatrix}
                1-y_\arm & y_\arm \\
                x_\arm & 1-x_\arm
            \end{pmatrix},
            \qquad
            x_\arm, y_\arm \in (0,1)
        \]
         and reward distribution satisfying
        \[
            \rewardDistribution_\arm (\texttt{OFF}) = \mathrm{Dirac}(0), 
            \qquad
            \rewardDistribution_\arm (\texttt{ON}) = \mathrm{Bernoulli}(\mu_\arm), \quad \mu_\arm \in (0,1). 
        \]
        Moreover, if each arm is positively autocorrelated, i.e. $1-x_\arm-y_\arm>0$, the activation set is $\activationset = \braces{1}$ and the initial state is $\initstate_\arm = \texttt{ON}$. 
        Then, the model $\model_\arm \equiv (\states_\arm, \initstate_\arm, \braces{0, 1}, \rewardDistribution_\arm, \kernel_\arm^0, \kernel_\arm^1)$ is a self-degrading restless bandit.
    \end{proposition}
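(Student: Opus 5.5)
The plan is to reduce the self-degrading inequality to a statement about the controller's \emph{belief} that arm $\arm$ is \texttt{ON}, and then show that this belief never drops below the stationary probability of \texttt{ON}.

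\emph{Step 1: the bias is affine in the indicator of \texttt{ON}.} Solving the Poisson equation $\gain_\arm + \bias_\arm = \reward_\arm + \kernel_\arm^1 \bias_\arm$ for the two-state chain, with $\reward_\arm(\texttt{ON}) = \mu_\arm$ and $\reward_\arm(\texttt{OFF}) = 0$, I expect $\bias_\arm(\texttt{ON}) - \bias_\arm(\texttt{OFF}) = \mu_\arm/(x_\arm+y_\arm) > 0$. Write $b_\arm(t) := \PP(\State_\arm(t) = \texttt{ON} \mid \ObservedHistory(t))$. Then
\[
\EE[\bias_\arm(\State_\arm(t)) \mid \ObservedHistory(t)] = \bias_\arm(\texttt{OFF}) + \tfrac{\mu_\arm}{x_\arm+y_\arm} \, b_\arm(t).
\]
If $\Arm(t) \ne \arm$, the arm moves under $\kernel_\arm^0 = \kernel_\arm^1$. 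Hence $\PP(\State_\arm(t+1)=\texttt{ON} \mid \ObservedHistory(t)) = \phi(b_\arm(t))$, where $\phi(b) := x_\arm + (1-x_\arm-y_\arm) b$. The defining inequality of \Cref{definition_self_degrading} is therefore equivalent to $\phi(b_\arm(t)) \le b_\arm(t)$. Since $1 - x_\arm - y_\arm < 1$, this in turn is equivalent to $b_\arm(t) \ge \pi_\arm := x_\arm/(x_\arm+y_\arm)$, the stationary probability of \texttt{ON}.

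\emph{Step 2: the belief of a non-activated arm is a deterministic function of time.} Arms evolve independently, and the rewards of other arms depend only on their own states. So on the event $\Arm(t) \ne \arm$, the only information in $\ObservedHistory(t)$ about $\State_\arm(t)$ comes from the last time $\arm$ was released, or from the initialization if it was never played. There are two cases.
\begin{itemize}
\item If $\arm$ was never played, then $b_\arm(t) = \phi^{t-1}(1)$, using $\initstate_\arm = \texttt{ON}$.
\item If $\arm$ was last played at time $s-1 < t$, then $s$ is a decision epoch. By \Cref{assumtion_activates_at_service} with $\activationset = \braces{1}$, we have $\Reward(s) = 1$, which forces $\State_\arm(s-1) = \texttt{ON}$. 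Hence $b_\arm(s) = 1 - y_\arm$ and $b_\arm(t) = \phi^{t-s}(1-y_\arm)$.
\end{itemize}
This conditioning step is where I expect the only real subtlety. One must justify that other arms' rewards and the controller's choices carry no extra information about $\State_\arm$. One must also check that rewards collected during earlier plays of $\arm$ are screened off by the exact knowledge $\State_\arm(s-1) = \texttt{ON}$, which holds by the Markov property. Off-by-one indexing between $\Reward(t+1)$ and $\State(t)$ needs care here.

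\emph{Step 3: both starting points are at least $\pi_\arm$, and $\phi$ preserves this.} Positive autocorrelation $1 - x_\arm - y_\arm > 0$ gives $(1-y_\arm)(x_\arm+y_\arm) - x_\arm = y_\arm(1-x_\arm-y_\arm) > 0$, so $1 - y_\arm \ge \pi_\arm$; trivially $1 \ge \pi_\arm$. The map $\phi$ is affine with fixed point $\pi_\arm$ and slope $1 - x_\arm - y_\arm \in (0,1)$. Hence $\phi(b) - \pi_\arm = (1-x_\arm-y_\arm)(b - \pi_\arm)$, so every iterate started above $\pi_\arm$ stays above it. Together with Steps 1 and 2, this gives $b_\arm(t) \ge \pi_\arm$ whenever $\Arm(t) \ne \arm$, which is the self-degrading inequality for every controller respecting $\activationset$.
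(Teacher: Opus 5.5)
Your proof is correct and follows essentially the same route as the paper. Both use the Poisson equation to show the bias is larger at \texttt{ON}, then the affine passive belief update, with the post-release belief (one minus the \texttt{ON}-exit probability) exceeding the stationary \texttt{ON}-probability by positive autocorrelation, initial belief $1$, and invariance of the region above the fixed point; the paper argues this last step by induction rather than your closed-form iterates. One cosmetic difference: you read the kernel with \texttt{ON} listed first, so your thresholds are $x/(x+y)$ and $1-y$ where the appendix has $y/(x+y)$ and $1-x$, which does not matter because the argument is symmetric under $x \leftrightarrow y$.
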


    Before proving \Cref{proposition:Gilbert-Elliott_bias_improving}, we first introduce a few lemmas. An important part of the proof is the reformulation of the problem: we replace the full history of observations by a \emph{belief state}, that is, a probability distribution over the hidden states representing the information gathered from past observations. Then, considering the evolution of said belief state, we obtain a \emph{belief MDP}---an MDP on the simplex. The belief state is known to be a \emph{sufficient statistic} of the original problem: an optimal policy for the belief MDP can always be translated back into an optimal policy for the original problem. This is a standard approach from the POMDP literature (see \citep{krishnamurthy2016partially} for a comprehensive introduction to the subject). In the context of Gilbert-Elliott channel, there are only two states. Therefore, we can characterize the whole belief MDP by considering the evolution of the probability of the current state being \texttt{ON} given past observations. It should be noted that to prove the \emph{self-degrading} property, we only need to consider the dynamic of the belief state under the passive action.

    Under the passive action, nothing is observed about the arm. Therefore, the belief state just tends to drift towards the stationary distribution of the Markov chain. The following lemma quantifies this drift.  
    \begin{lemma}[Dynamics of belief state under passive action]
        \label[lemma]{lemma:belief_one_step_decomposition}
        For all arm $\arm \in \arms$ and all $t\in \NN$,  
        \[
            \Arm(t) \neq \arm
            \implies
            \Belief_\arm(t+1) 
            =
            \Belief_\arm(t) (1-x_\arm-y_\arm) + y_\arm
        \]
        where 
        $
            \Belief_\arm(t) 
            :=
            \PP \brackets*{
                \State_\arm(t) = \texttt{ON}
                \mid 
                \ObservedHistory(t) 
            }
        $
        is the belief state, i.e., the probability to be in state \texttt{ON} under the current observations. 
    \end{lemma}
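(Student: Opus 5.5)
The plan is to compute $\Belief_\arm(t+1)$ by Bayes' rule. I will show that the only new observation, $\Reward(t+1)$, carries no information on arm $\arm$ when $\Arm(t) \ne \arm$, and then push the belief through the passive kernel. I use the state ordering of \Cref{proposition:Gilbert-Elliott_bias_improving}, $\states_\arm = \braces{\texttt{OFF}, \texttt{ON}}$. In that ordering $\PP(\texttt{OFF} \to \texttt{ON}) = y_\arm$ and $\PP(\texttt{ON} \to \texttt{ON}) = 1 - x_\arm$, so $\kernel_\arm^0$ maps a belief $b$ to $b(1-x_\arm) + (1-b) y_\arm = b(1-x_\arm-y_\arm) + y_\arm$. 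Thus the claim reduces to the identity
\begin{equation*}
    \PP\parens*{\State_\arm(t+1) = \texttt{ON} \mid \ObservedHistory(t+1)}
    =
    \EE\brackets*{\kernel_\arm^0(\State_\arm(t))(\texttt{ON}) \mid \ObservedHistory(t)}
\end{equation*}
on the event $\braces{\Arm(t) \ne \arm}$, where $\ObservedHistory(t+1) = (\ObservedHistory(t), \Arm(t), \Reward(t+1))$.

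\emph{Step 1 (conditional independence of arms).} I will first prove that, conditionally on $\ObservedHistory(t)$ and on $\Arm(t)$, the state trajectories of different arms up to time $t$ are independent. The joint law of the states, actions and rewards factorizes as a product over arms of (initial state) $\times$ (transitions under $\kernel^0$ or $\kernel^1$) $\times$ (rewards emitted by that arm when active). The action terms, given by the controller, depend only on the observations. So once the observations are fixed, the likelihood is a product of functions each depending on a single arm's trajectory, and the posterior factorizes. I expect this step to be the main obstacle, since it requires care with the controller's possible randomization and with $\activationset$. The decision-epoch constraint is harmless here, because the epochs are themselves functions of the rewards.

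\emph{Step 2 (the new reward is uninformative about arm $\arm$).} Fix $\Arm(t) = \arm' \ne \arm$. Conditionally on the full history $\History(t)$, three quantities are independent: $\State_\arm(t+1) \sim \kernel_\arm^0(\State_\arm(t))$, and $\Reward(t+1) \sim \rewardDistribution_{\arm'}(\State_{\arm'}(t))$. For any measurable $B$, the joint probability $\PP(\State_\arm(t+1)=\texttt{ON}, \Reward(t+1) \in B \mid \ObservedHistory(t), \Arm(t))$ therefore equals
\begin{equation*}
    \EE\brackets*{\kernel_\arm^0(\State_\arm(t))(\texttt{ON}) \cdot \rewardDistribution_{\arm'}(\State_{\arm'}(t))(B) \mid \ObservedHistory(t), \Arm(t)}.
\end{equation*}
By Step 1, this factorizes as $\EE[\kernel_\arm^0(\State_\arm(t))(\texttt{ON}) \mid \ObservedHistory(t)] \cdot \PP(\Reward(t+1) \in B \mid \ObservedHistory(t), \Arm(t))$. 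Here I use that $\Arm(t)$ is $\ObservedHistory(t)$-measurable, up to the controller's independent randomization, which does not affect arm $\arm$'s posterior. Dividing by $\PP(\Reward(t+1) \in B \mid \cdot)$, i.e.\ taking the conditional law given $\Reward(t+1)$, yields the identity above.

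\emph{Step 3.} Finally, I apply linearity: $\EE[\kernel_\arm^0(\State_\arm(t))(\texttt{ON}) \mid \ObservedHistory(t)] = \Belief_\arm(t)(1-x_\arm) + (1-\Belief_\arm(t)) y_\arm$. This gives the announced recursion $\Belief_\arm(t+1) = \Belief_\arm(t)(1-x_\arm-y_\arm) + y_\arm$.
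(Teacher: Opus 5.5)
Your proof is correct and follows the same route as the paper. You show that the new observation $(\Arm(t), \Reward(t+1))$ carries no information on arm $\arm$, then push the belief through the passive kernel using the $\braces{\texttt{OFF}, \texttt{ON}}$ ordering of the appendix, under which $\kernel_\arm(\texttt{ON} \mid \texttt{ON}) = 1 - x_\arm$ and $\kernel_\arm(\texttt{ON} \mid \texttt{OFF}) = y_\arm$; this is the right reading, since the labeling of the main-text figure would give $+x_\arm$ instead of $+y_\arm$. Your Steps~1--2 (the factorization of the posterior across arms given the observations) also supply the justification that the paper skips: its step labelled ``tower property'' simply asserts $\PP\brackets*{\State_\arm(t+1) = \texttt{ON} \mid \ObservedHistory(t+1)} = \PP\brackets*{\State_\arm(t+1) = \texttt{ON} \mid \ObservedHistory(t)}$, which is exactly the conditional independence you prove.
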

    \begin{proof}
        We write $\Pr \equiv \Pr^{\model, \learner}$ and $\EE \equiv \EE^{\model, \learner}$ all throughout. 
        From the definition of $\Belief_\arm$,
        \begin{align*}
            \Belief_\arm(t+1)
            &:=
            \PP \brackets*{
                \State_\arm(t+1) = \texttt{ON}
                \mid
                \History_o (t+1)
            }
            \\
            &\overset{\eqnum{1}}{=}
            \EE \brackets*{
                \PP \brackets*{
                    \State_\arm(t+1) = \texttt{ON}
                    \mid
                    \History_o (t)
                }
                \mid
                \History_o (t+1)
            }
            \\ 
            &\overset{\eqnum{2}}{=} 
            \PP \brackets*{
                \State_\arm(t) = \texttt{ON}
                \mid
                \History_o (t)
            }
            \kernel_\arm (\texttt{ON} \mid \texttt{ON})
            + 
            \PP \brackets*{
                \State_\arm(t) = \texttt{OFF}
                \mid
                \History_o (t)
            }
            \kernel_\arm (\texttt{ON} \mid \texttt{OFF})
            \\
            &=
            \Belief_\arm(t) (1-x_\arm)+ (1-\Belief_\arm(t))y_\arm
            \\
            &=
            (1-x_\arm-y_\arm)\Belief_\arm(t)+y_\arm
        \end{align*}
        where \texteqnum{1} stems from the tower property and \texteqnum{2} follows from the fact that, under $\Arm(t) \neq \arm$, nothing is observed for arm $\arm$. Hence, update is just the one-step transition of the Markov chain.
    \end{proof}

    \begin{lemma}[Lower bound on the belief state]  
        \label[lemma]{lemma:lower_bound_belief}
        Fix $\arm \in \arms$, and let $x_\arm, y_\arm \in (0,1)$ such that $1-x_\arm-y_\arm>0$. Then, 
        \[
            \Arm(\tau_k) \neq \arm
            \implies
            \forall t \in \braces{\tau_k, \dots, \tau_{k+1}},
            \quad
            \Belief_\arm(t)> \frac{y_\arm}{x_\arm+y_\arm}.
        \]
        In particular, due to the non-preemptive constraint, 
        \[
            \Arm(t) \neq \arm \implies \Belief_\arm(t)> \frac{y_\arm}{x_\arm+y_\arm}.
        \]
        
    \end{lemma}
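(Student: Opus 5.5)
The plan is to follow the belief process $\Belief_\arm(t)$ across one decision interval $\{\tau_k,\dots,\tau_{k+1}\}$ during which another arm $\Arm(\tau_k)\ne\arm$ is active. Write $\pi_\arm := y_\arm/(x_\arm+y_\arm)$ for the stationary probability of \texttt{ON}, and $\rho_\arm := 1-x_\arm-y_\arm \in (0,1)$. \Cref{lemma:belief_one_step_decomposition} gives the affine passive update, which I rewrite as
\begin{equation*}
    \Belief_\arm(t+1) - \pi_\arm = \rho_\arm\,\parens*{\Belief_\arm(t) - \pi_\arm}
\end{equation*}
whenever $\Arm(t)\neq\arm$. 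Since $\rho_\arm>0$, the sign of $\Belief_\arm(t)-\pi_\arm$ is preserved by every passive step. The claim therefore reduces to showing that $\Belief_\arm(\tau_k) > \pi_\arm$ at the start of any interval in which $\arm$ is passive.

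To establish this base case, I would do an induction over decision epochs $k$. At the initial epoch we have $\Belief_\arm(1)=1>\pi_\arm$, because $\initstate_\arm=\texttt{ON}$. Now suppose $\tau_k$ is an epoch at which $\arm$ is not active. There are two possibilities.
\begin{itemize}
    \item Arm $\arm$ was also passive over the previous interval. Then $\Belief_\arm(\tau_k)>\pi_\arm$ by the induction hypothesis, combined with sign preservation applied up to the endpoint $\tau_k$ of that interval.
    \item Arm $\arm$ was active over $\{\tau_{k-1},\dots,\tau_k-1\}$. By \Cref{assumtion_activates_at_service} with $\activationset=\{1\}$, the epoch $\tau_k$ is triggered by $\Reward(\tau_k)=1$, generated from $\State_\arm(\tau_k-1)$. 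Because $\rewardDistribution_\arm(\texttt{OFF})=\mathrm{Dirac}(0)$, this reward reveals $\State_\arm(\tau_k-1)=\texttt{ON}$ with certainty. Hence $\Belief_\arm(\tau_k)=\kernel_\arm(\texttt{ON}\mid\texttt{ON})=1-x_\arm$, and one checks $1-x_\arm-\pi_\arm = \rho_\arm(1-\pi_\arm)>0$.
\end{itemize}

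Within the interval, iterating the affine recursion gives
\begin{equation*}
    \Belief_\arm(t)-\pi_\arm=\rho_\arm^{\,t-\tau_k}\,\parens*{\Belief_\arm(\tau_k)-\pi_\arm}>0
\end{equation*}
for all $t\in\{\tau_k,\dots,\tau_{k+1}\}$. The ``in particular'' statement follows from \Cref{equation_lockin_constraint}: if $\Arm(t)\neq\arm$, then $t$ lies in an interval with $\Arm(\tau_k)=\Arm(t)\ne\arm$.

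The main obstacle is justifying that conditioning on $\ObservedHistory(\tau_k)$ in the second case really collapses the belief to $1-x_\arm$. This requires care on two points. First, the other arms' rewards are independent of arm $\arm$ given the actions. Second, the arm choice is $\ObservedHistory$-measurable. Together these imply that the only information about $\State_\arm(\tau_k)$ comes from $\Reward(\tau_k)=1$, and the Markov property then gives the one-step transition from \texttt{ON}.
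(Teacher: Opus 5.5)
Your proposal is correct and follows essentially the same route as the paper. Both proofs combine the post-service reset $\Belief_\arm = 1-x_\arm > y_\arm/(x_\arm+y_\arm)$, justified by the reward $1$ revealing \texttt{ON} plus the Markov property, with an induction showing that the passive affine update of \Cref{lemma:belief_one_step_decomposition} preserves the strict inequality, starting from $\initstate_\arm = \texttt{ON}$. The only differences are cosmetic: you write the update in centered contraction form and split explicitly on the preceding interval, whereas the paper inducts directly over passive time steps.
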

    \begin{proof}
        We write $\Pr \equiv \Pr^{\model, \learner}$ and $\EE \equiv \EE^{\model, \learner}$ all throughout. 
        We first note that, due to the constraint induced by the activation set $\activationset = {1}$, if arm $\arm$ was activated at $\tau_k$, on the next decision epoch $\tau_{k+1}$, the belief state of that arm is 
        \[
            \Belief_\arm(\tau_{k+1}) =
            \PP \brackets*{
                \State_\arm(t) = \texttt{ON}
                \middle| 
                \ObservedHistory(\tau_{k+1})
            }
            \overset{\eqnum{1}}{=}
            \PP \brackets*{
                \State_\arm(\tau_{k+1}) = \texttt{ON}
                \middle| 
                \State_\arm(\tau_{k+1}-1) = \texttt{ON}
            }
            \overset{\eqnum{2}}{=} 
            1- x_\arm
            \overset{3}{>}
            \frac{y_\arm}{x_\arm +y_\arm} 
        \]
        where \texteqnum{1} follows from the Markov property; \texteqnum{2} is due to the impossibility a reward if $\State_\arm(\tau_{k+1}-1) = \texttt{OFF}$; and \texteqnum{3} stems from the positive autocorrelation assumption. 
        
        Therefore, we only need to cover what happens when the belief state follows a passive dynamic. 
        We proceed by induction. 
        Due to the fixed initial state $\initstate_\arm = \texttt{ON}$, we have that $\Belief(0) = 1> \frac{y_\arm}{x_\arm + y_\arm}$.
        Suppose now that for $t>0$ we have $\Belief_\arm(t) > \frac{y_\arm}{x_\arm+y_\arm}$, then 
        \begin{align*}
            \Belief_\arm(t+1) 
            &\overset{\eqnum{1}}{=} (1-x_\arm-y_\arm)\Belief_\arm(t)+y_\arm 
            \overset{\eqnum{2}}{>} (1-x_\arm-y_\arm)\frac{y_\arm}{x_\arm+y_\arm}+y_\arm = \frac{y_\arm}{x_\arm+y_\arm}
        \end{align*}
        where \texteqnum{1} follows from \Cref{lemma:belief_one_step_decomposition}; and \texteqnum{2} stems from the induction hypothesis $\Belief_\arm(t)> \frac{y_\arm}{x_\arm+y_\arm}$ coupled with the positive autocorrelation assumption $1-x_\arm -y_\arm>0$. 
    \end{proof}

    \begin{lemma}[Monotony of belief state under passive action]
        \label[lemma]{lemma:belief_le_1-p}
        Let $\arm \in \arms$, $x_\arm$ and $y_\arm$ are such that $1-x_\arm-y_\arm>0$. Then for all $t>0$, 
        \[
             \Arm(t) \neq \arm  
             \implies 
             \Belief_\arm(t) \ge \Belief_\arm(t+1)
             \qquad
             \PP\text{-a.s.}
        \]
        where 
        $
            \Belief_\arm(t)
            :=
            \PP \brackets*{
                \State_\arm(t) = \texttt{ON}
                \mid
                \History_o (t)
            }
        $.
    \end{lemma}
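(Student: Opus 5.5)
The claim follows from combining the affine recursion of \Cref{lemma:belief_one_step_decomposition} with the lower bound of \Cref{lemma:lower_bound_belief}; no new probabilistic argument is needed. Fix an arm $\arm$ and a time $t$ with $\Arm(t) \neq \arm$, and write $\rho_\arm := 1 - x_\arm - y_\arm > 0$ and $\pi_\arm := y_\arm/(x_\arm + y_\arm)$. Note that $\pi_\arm$ is the stationary probability of \texttt{ON} and the fixed point of the map $b \mapsto \rho_\arm b + y_\arm$, since $\rho_\arm \pi_\arm + y_\arm = \pi_\arm$.

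First, I apply \Cref{lemma:belief_one_step_decomposition}. Since $\Arm(t) \neq \arm$, it gives $\Belief_\arm(t+1) = \rho_\arm \Belief_\arm(t) + y_\arm$. Subtracting the fixed-point identity, I rewrite this as
\[
\Belief_\arm(t+1) - \pi_\arm = \rho_\arm \parens*{\Belief_\arm(t) - \pi_\arm}.
\]
This yields the increment
\[
\Belief_\arm(t) - \Belief_\arm(t+1) = (1 - \rho_\arm)\parens*{\Belief_\arm(t) - \pi_\arm} = (x_\arm + y_\arm)\parens*{\Belief_\arm(t) - \pi_\arm}.
\]

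Second, I sign the increment. The factor $x_\arm + y_\arm$ is positive because $x_\arm, y_\arm \in (0,1)$. By the second statement of \Cref{lemma:lower_bound_belief}, $\Arm(t) \neq \arm$ implies $\Belief_\arm(t) > \pi_\arm$ almost surely. The increment is therefore nonnegative, in fact strictly positive, which gives $\Belief_\arm(t) \ge \Belief_\arm(t+1)$ $\PP$-a.s.

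The only delicate point is the validity of \Cref{lemma:lower_bound_belief} at every passive time, and that lemma is what carries the weight here. Its justification rests on three facts. First, a passive stretch for $\arm$ always begins either at initialization, where the belief is $1$, or right after a decision epoch at which $\arm$ emitted reward $1$, where the belief is $1-x_\arm > \pi_\arm$ by positive autocorrelation. Second, the non-preemptive constraint means the arm cannot be left in the middle of an active stretch in which its belief may have dropped below $\pi_\arm$ after zero rewards. Third, the passive dynamics keep the belief above $\pi_\arm$. Since that lemma is stated earlier, I invoke it directly. I will also check that the conditioning on $\ObservedHistory(t)$ is consistent between the two lemmas, so that the inequality holds almost surely and not merely in expectation.
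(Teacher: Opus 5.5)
Your proposal is correct and matches the paper's proof: the paper also uses \Cref{lemma:belief_one_step_decomposition} to write $\Belief_\arm(t+1)-\Belief_\arm(t) = y_\arm - (x_\arm+y_\arm)\Belief_\arm(t)$ and then signs it with the bound $\Belief_\arm(t) > y_\arm/(x_\arm+y_\arm)$ from \Cref{lemma:lower_bound_belief}. Your fixed-point form $(x_\arm+y_\arm)\bigl(\Belief_\arm(t)-y_\arm/(x_\arm+y_\arm)\bigr)$ is the same one-line computation, written more transparently.
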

    
    \begin{proof}
        Fix $\arm \in \arms$. Under $\Arm(t) \neq \arm$ we have,
        \begin{align*}
            \Belief_\arm(t+1) - \Belief_\arm(t)
            \overset{\eqnum{1}}{=}
            - \Belief_\arm (t) \parens*{
                x_\arm + y_\arm
            }
            + y_\arm
            \overset{\eqnum{2}}{\le}
            0
        \end{align*}
        where \texteqnum{1} follows from \Cref{lemma:belief_one_step_decomposition}; and \texteqnum{2} stems from \Cref{lemma:lower_bound_belief}.
    \end{proof}

    The last lemma allows us to compare the bias of the two hidden states \texttt{ON} and \texttt{OFF}. 
     \begin{lemma}[Bias of the Gilbert-Elliott channel]
        \label[lemma]{lemma:bias_ON_ge_bias_OFF}
        For all arm $\arm \in \arms$ we have
        \[
            \bias_\arm (\texttt{ON}) \ge \bias_\arm(\texttt{OFF}).
        \]
    \end{lemma}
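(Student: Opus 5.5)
The bias of a two-state Gilbert--Elliott channel can be computed in closed form. The plan is to write down the Poisson equation explicitly for the two states and read off the sign of the difference $\bias_\arm(\texttt{ON}) - \bias_\arm(\texttt{OFF})$. Since the bias is defined only up to the normalisation in \Cref{equation_bias}, only the difference matters, and the Poisson equation $\gain_\arm + \bias_\arm(\state) = \reward_\arm(\state) + \kernel_\arm^1(\state)\bias_\arm$ determines that difference uniquely because $\kernel_\arm^1$ is irreducible (hence unichain) for $x_\arm, y_\arm \in (0,1)$.

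\textbf{Step 1: set up the equations.} I use the mean rewards $\reward_\arm(\texttt{ON}) = \mu_\arm$ and $\reward_\arm(\texttt{OFF}) = 0$. With the transition convention used in \Cref{lemma:belief_one_step_decomposition} (\texttt{ON} stays \texttt{ON} with probability $1-x_\arm$, \texttt{OFF} moves to \texttt{ON} with probability $y_\arm$), the Poisson equations read
\begin{align*}
\gain_\arm + \bias_\arm(\texttt{ON}) &= \mu_\arm + (1-x_\arm)\bias_\arm(\texttt{ON}) + x_\arm \bias_\arm(\texttt{OFF}),\\
\gain_\arm + \bias_\arm(\texttt{OFF}) &= y_\arm \bias_\arm(\texttt{ON}) + (1-y_\arm)\bias_\arm(\texttt{OFF}).
\end{align*}

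\textbf{Step 2: eliminate the gain.} Subtracting the second equation from the first removes $\gain_\arm$ and gives $d := \bias_\arm(\texttt{ON}) - \bias_\arm(\texttt{OFF})$ satisfying $d = \mu_\arm + (1 - x_\arm - y_\arm)\, d$. Hence
\[
d = \frac{\mu_\arm}{x_\arm + y_\arm} > 0,
\]
which proves the claim. Notably, the sign does not even require positive autocorrelation.

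\textbf{Alternative route and the main obstacle.} One could argue instead by coupling: two chains started from \texttt{ON} and \texttt{OFF} and run with common randomness eventually merge, and up to that point the \texttt{ON}-started chain is pointwise in a state at least as good. This coupling argument is unnecessary in the two-state case. The only point needing care is consistency of the transition convention. The matrix displayed in \Cref{proposition:Gilbert-Elliott_bias_improving} and the belief recursion used in \Cref{lemma:belief_one_step_decomposition} could be read with the roles of $x_\arm$ and $y_\arm$ swapped. This does not affect the conclusion: whichever convention is used, the denominator is $x_\arm + y_\arm > 0$, so $d$ remains positive.
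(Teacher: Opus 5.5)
Your proof is correct and follows essentially the paper's route: both derive the closed form $\bias_\arm(\texttt{ON}) - \bias_\arm(\texttt{OFF}) = \mu_\arm/(x_\arm+y_\arm)$ from the Poisson equation. The only difference is that you subtract the two Poisson equations to eliminate the gain, whereas the paper substitutes the explicit stationary gain $\mu_\arm y_\arm/(x_\arm+y_\arm)$ into the \texttt{ON} equation alone; your variant is slightly cleaner and, as you note, unaffected by the $x_\arm$/$y_\arm$ convention mismatch between the displayed matrix and the belief recursion.
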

    \begin{proof}
        Using Poisson equation, we obtain for all arm $\arm \in \arms$
        \[
            \frac{\mu_\arm y_\arm }{x_\arm +y_\arm} + \bias_\arm (\texttt{ON})
            =
            \mu_\arm + (1-x_\arm)\bias_\arm (\texttt{ON}) + x_\arm \bias_\arm (\texttt{OFF}).
        \]
        Which gives after a few algebraic manipulations 
        \begin{align*}
            \bias_\arm(\texttt{ON}) 
            &=  
            \frac{\mu_\arm x_\arm}{x_\arm +y_\arm}
            + (1-x_\arm)\bias_\arm (\texttt{ON}) 
            + x_\arm \bias_\arm (\texttt{OFF})
            \\
            &=
            \frac{\mu_\arm}{x_\arm + y_\arm} + \bias_\arm(\texttt{OFF})
            \ge 
            \bias_\arm(\texttt{OFF}).
            \qedhere
        \end{align*}
    \end{proof}

    We may now prove \Cref{proposition:Gilbert-Elliott_bias_improving}. 

    \begin{proof}[Proof of \Cref{proposition:Gilbert-Elliott_bias_improving}]
        We write $\Pr \equiv \Pr^{\model, \learner}$ and $\EE \equiv \EE^{\model, \learner}$ all throughout. 
        Fix arm $\arm \in \arms$. For all $t>0$ such that $\Arm(t) \neq \arm$, we have 
        \begin{align*}
            \EE \brackets*{
                \bias_\arm \parens*{\State(t+1)}
                \mid
                \ObservedHistory(t)
            }
            & \overset{\eqnum{1}}{=}
            \sum_{\state \in \braces*{\texttt{ON}, \texttt{OFF}}}  \hspace{-1em}
            \bias(\state)                                          \hspace{0.2em}
            \EE \brackets*{
                \indicator{
                    \State_\arm (t+1) = s
                }
                \mid
                \ObservedHistory(t)
            }
            \\
            & \overset{\eqnum{2}}{=}
            \EE \brackets*{
                \Belief_\arm(t+1)
                \mid
                \ObservedHistory(t)
            }
            \bias_\arm(\texttt{ON})
            +
            \parens*{
                1-
                \EE \brackets{
                    \Belief_\arm(t+1)
                    \mid
                    \ObservedHistory(t)
                }
            }
            \bias_\arm (\texttt{OFF})
            \\
            &\overset{\eqnum{3}}{\le}
            \Belief_\arm(t)
            \bias_\arm(\texttt{ON})
            +
            \parens*{
                1-\Belief_\arm(t)
            }
            \bias_\arm (\texttt{OFF})
            \\
            &=
            \EE \brackets*{
                \bias_\arm \parens*{\State(t)}
                \mid
                \History_o(t)
            }
        \end{align*}
        where \texteqnum{1} follows from \Cref{lemma:belief_le_1-p} and \Cref{lemma:bias_ON_ge_bias_OFF}.
    \end{proof}

    \subsubsection{Example: Nerlove-Arrow toy model}
    \label[appendix]{appendix:NA_toy_model}

    In this paragraph, we prove that the Nerlove-Arrow toy model from \Cref{example:NA_toy_model} is self-degrading. 

    \begin{proposition}
        \label[proposition]{proposition_NA_self-degrading}
        Consider a restless bandit model in which, for all arm $\arm \in \arms$,
        \[
            \forall \state \le \state', \; \; 
            \kernel^1_\arm (\state) \le_\mathrm{st} \kernel^1_\arm (\state') 
            \quad \text{and}\quad
            \kernel^0_\arm (\state) \le_\mathrm{st} \unit_\state,
        \]
        where $\le_\mathrm{st}$ is the inequality with respect to the first-order stochastic ordering; the mean reward satisfies
        \[
            \reward_\arm(\state) = \mu_\arm - f_\arm (\state) 
        \]
        where $\mu_\arm >0 $ and $f_\arm : \states_\arm \to \RR_+$ is an decreasing function such that
        \[
            \sum_{t=0}^\infty
            \parens{\kernel^1_\arm}^t(\state)
            f_\arm 
            < \infty
            \quad 
        \]
        for any initial point $\state \in \states_\arm$; and the activation set is $\activationset = [0,1]$ i.e. every step is a decision epoch. 
        
        Then, the induced restless bandit is \emph{self-degrading}.
    \end{proposition}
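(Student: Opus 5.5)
The plan is to show two facts: the bias $\bias_\arm$ is non-decreasing on $\states_\arm \subseteq \NN$, and the passive kernel $\kernel_\arm^0$ stochastically lowers the state. Combined, these give the self-degrading inequality.

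\textbf{Step 1: the bias is an explicit sum.} Since $\reward_\arm = \mu_\arm - f_\arm$, the gain is $\gain_\arm = \mu_\arm$. Indeed, the Ces\`aro averages of $(\kernel_\arm^1)^t f_\arm$ vanish because the series $\sum_t (\kernel_\arm^1)^t f_\arm$ converges. I would then take as bias the function
\begin{equation*}
    \bias_\arm(\state) := -\sum_{t=0}^\infty \parens{\kernel_\arm^1}^t f_\arm (\state),
\end{equation*}
up to the additive constant fixed by \Cref{equation_bias}, which plays no role in the self-degrading inequality. I would check directly that it satisfies the Poisson equation $\gain_\arm + \bias_\arm = \reward_\arm + \kernel_\arm^1 \bias_\arm$. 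This holds because $\bias_\arm - \kernel_\arm^1 \bias_\arm = -f_\arm = \reward_\arm - \mu_\arm$, and the series converges by assumption.

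\textbf{Step 2: monotonicity of the bias.} Stochastic monotonicity of $\kernel_\arm^1$ says $\state \le \state' \Rightarrow \kernel_\arm^1(\state) \le_\mathrm{st} \kernel_\arm^1(\state')$. Equivalently, $\kernel_\arm^1$ maps non-decreasing functions to non-decreasing functions. Since $f_\arm$ is decreasing, $-f_\arm$ is non-decreasing, and by induction each $-(\kernel_\arm^1)^t f_\arm$ is non-decreasing. A pointwise convergent sum of non-decreasing functions is non-decreasing, so $\bias_\arm$ is non-decreasing.

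\textbf{Step 3: the self-degrading inequality.} Suppose $\Arm(t) \ne \arm$. Then $\State_\arm(t+1) \sim \kernel_\arm^0(\State_\arm(t))$, independently of the observed history given $\State_\arm(t)$. Here we use that $\activationset = [0,1]$, so there is no extra conditioning through epochs, and that arm $\arm$ emits no observation at step $t$. The assumption $\kernel_\arm^0(\state) \le_\mathrm{st} \unit_\state$ together with Step~2 gives, pointwise in $\state$,
\begin{equation*}
    \kernel_\arm^0(\state)\bias_\arm \le \bias_\arm(\state).
\end{equation*}
I would then condition on $\ObservedHistory(t)$ via the tower property, using $\sigma(\ObservedHistory(t)) \subseteq \sigma(\History(t))$:
\begin{equation*}
    \EE\brackets*{\bias_\arm(\State_\arm(t+1)) \middle| \ObservedHistory(t)} = \EE\brackets*{\kernel_\arm^0(\State_\arm(t))\bias_\arm \middle| \ObservedHistory(t)} \le \EE\brackets*{\bias_\arm(\State_\arm(t)) \middle| \ObservedHistory(t)}.
\end{equation*}

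\textbf{Main obstacle: integrability on the infinite state space.} The state space is infinite, so I must ensure that $\bias_\arm$ is bounded or at least integrable, so that the conditional expectations exist and the monotone-function argument passes to the limit. Since $f_\arm \ge 0$, we have $\bias_\arm \le 0$, and $\bias_\arm$ is non-decreasing. Hence $\bias_\arm(\State_\arm(t)) \ge \bias_\arm(0)$ whenever $0 \in \states_\arm$ (the minimum state), which makes it bounded and integrable. If there is no minimum state, I would fall back on conditional expectations of non-positive variables, which are always well defined in $[-\infty, 0]$.

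I would also double-check that the constant normalisation in \Cref{equation_bias} matches my explicit series. This is harmless for the self-degrading inequality, since any additive constant cancels on both sides.
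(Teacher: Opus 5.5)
Your proposal is correct and follows essentially the same route as the paper: identify the bias as $-\sum_{t\ge 0}(\kernel^1_\arm)^t f_\arm$ via the Poisson equation, deduce that it is non-decreasing from the stochastic monotonicity of $\kernel^1_\arm$ and the fact that $f_\arm$ is decreasing, and conclude by the tower property together with $\kernel^0_\arm(\state)\le_{\mathrm{st}}\unit_\state$. Your version is slightly more careful than the paper's in several places. You propagate one-step monotonicity to all powers $(\kernel^1_\arm)^t$ by induction, you condition on the full history $\History(t)$ rather than on $\State_\arm(t)$ alone in the tower step, you check that $\bias_\arm$ is bounded, and you state the correct direction (non-decreasing), whereas the paper's monotonicity lemma is misstated as ``non-increasing''.
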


    \begin{lemma}[Bias representation]
        \label[lemma]{lemma_NA_bias_representation}
        The gain and the bias of the Nerlove-Arrow toy model are respectively
        \[
            \gain_\arm := \mu_\arm,
            \quad \text{and} \quad
            \bias_\arm :
            \state \mapsto
            -
            \sum_{t=0}^\infty
            \parens{\kernel^1_\arm}^t (\state)
            f_\arm.
        \]
    \end{lemma}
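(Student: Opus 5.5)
The plan is to compute the gain and the bias directly from their defining limits in \Cref{equation_gain,equation_bias}, rather than by solving the Poisson equation. The reason is that $\states_\arm$ may be infinite (e.g.\ $\states_\arm \equiv \NN$ in \Cref{figure_nerlove_arrow}), so uniqueness of Poisson solutions up to a constant is not automatic.

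\textbf{Step 1: expected rewards.} Fix $\state \in \states_\arm$ and play $\policy_\arm$ from $\state$. By the Markov property and the fact that the reward $\Reward(t+1)$ is drawn from $\rewardDistribution_\arm(\State_\arm(t))$ independently of the transition, we get
$\EE^{\policy_\arm}_{\state}[\Reward(t+1)] = \parens{\kernel_\arm^1}^{t-1}(\state)\reward_\arm = \mu_\arm - \parens{\kernel_\arm^1}^{t-1}(\state) f_\arm$ for every $t \ge 1$.

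\textbf{Step 2: the gain.} Because $f_\arm \ge 0$ and $\sum_{t \ge 0} \parens{\kernel_\arm^1}^t(\state) f_\arm < \infty$, the terms $\parens{\kernel_\arm^1}^t(\state) f_\arm$ are nonnegative and tend to $0$. Their Cesàro means therefore vanish, and averaging Step~1 over $t \le T$ gives $\gain_\arm(\state) = \mu_\arm$ for every initial state $\state$.

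\textbf{Step 3: the bias.} Summing Step~1 gives
$\EE_\state^{\policy_\arm}\bigl[\sum_{t=1}^{T-1}(\Reward(t+1) - \mu_\arm)\bigr] = -\sum_{t=0}^{T-2}\parens{\kernel_\arm^1}^t(\state) f_\arm$.
The right-hand side is a monotone sequence of partial sums of nonnegative terms. By the summability assumption it converges to $-\sum_{t \ge 0}\parens{\kernel_\arm^1}^t(\state) f_\arm$, which is the claimed $\bias_\arm(\state)$ and is finite.

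The convention $\Reward(1) = 0$ in \Cref{equation_bias} only adds the state-independent constant $-\mu_\arm$. This constant is irrelevant wherever the bias is used in the paper (spans, Poisson equation, and the differences in \Cref{definition_self_degrading}), so I would state the lemma with the index aligned as $\sum_t \Reward(t+1)$, i.e.\ up to this additive constant.

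\textbf{Step 4: sanity check via the Poisson equation.} Since $\bias_\arm(\state) = -f_\arm(\state) - \sum_{t \ge 1}\parens{\kernel_\arm^1}^t(\state) f_\arm$, interchanging the sum and $\kernel_\arm^1$ gives $\bias_\arm(\state) = -f_\arm(\state) + \kernel_\arm^1(\state)\bias_\arm$. Hence $\gain_\arm + \bias_\arm(\state) = \reward_\arm(\state) + \kernel_\arm^1(\state)\bias_\arm$, which is the Poisson equation used throughout (e.g.\ in the proof of \Cref{proposition_self_degrading_regret}). The interchange is justified by Tonelli, because every term is nonnegative.

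The main obstacle is the infinite state space: one has to make sure that all the interchanges of limits, sums and expectations are legitimate. I would handle this uniformly with monotone convergence/Tonelli, using only $f_\arm \ge 0$ and the finiteness assumption of \Cref{proposition_NA_self-degrading}. No mixing or uniqueness argument for the Poisson equation is needed.
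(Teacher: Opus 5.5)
Your argument is correct, and it takes a different route from the paper's. The paper defines $\ell_\arm(\state) := -\sum_{t \ge 0} (\kernel^1_\arm)^t(\state) f_\arm$ and checks the Poisson equation $\mu_\arm + \ell_\arm(\state) = \reward_\arm(\state) + \kernel^1_\arm(\state)\ell_\arm$, which is your Step 4. It then stops and concludes that $(\mu_\arm, \ell_\arm)$ is the gain--bias pair.

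The paper's route is shorter, and it lands directly on the identity used downstream. However, it tacitly assumes that any solution of the Poisson equation must be the gain and bias. That fails on a countable state space without further conditions. Take the chain of the Nerlove--Arrow figure, where $\kernel^1_\arm$ moves from $\state$ to $\state+1$ with probability $\frac12$ and otherwise stays. There the pair $(\mu_\arm + c/2,\ \ell_\arm(\state) + c\,\state)$ also solves the Poisson equation for every real $c$. Even for finite unichain chains, the Poisson equation fixes the bias only up to an additive constant.

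Your computation from the defining limits avoids this problem. It needs only $f_\arm \ge 0$, summability and Tonelli. It does not even need the unichain assumption, which is questionable here anyway because the chain above has no recurrent state.

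Your route also pins down the normalization. You are right that, with the convention $\Reward(1) = 0$, the limit defining the bias equals the displayed formula minus $\mu_\arm$. So the lemma holds exactly only once the sum is aligned as $\sum_t \Reward(t+1)$. As you note, this is harmless: the Poisson equation, the monotonicity of $\bias_\arm$ used next, and the differences in the self-degrading property are all unchanged by adding a constant.
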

    
    \begin{proof}
        For all arm $\arm \in \arms$, let us consider the function 
        \begin{equation}
        \label{eq:NA_def_bias}
            \ell_\arm
            :
            \state \mapsto
            -
            \sum_{t=0}^\infty
            \parens{\kernel^1_\arm}^t (\state)
            f_\arm.
        \end{equation}
        Then, we can show the couple $\parens{\mu_\arm,\ell_\arm}$ satisfy the Poisson equation,
        \begin{align*}
            \mu_\arm + \ell_\arm(\state) 
            = 
            \mu_\arm 
            -
            \sum_{t=0}^\infty
            \parens{\kernel^1_\arm}^t(\state)
            f_\arm 
            =
            \mu_\arm 
            -
            \sum_{t=1}^\infty
            \parens{\kernel^1_\arm}^t(\state)
            f_\arm 
            -
            f_\arm (\state)
            =
            \reward_\arm(\state) + \kernel^1_\arm \ell (\state) 
        \end{align*}
        hence, $\mu_\arm$ and $\ell_\arm$ are respectively the gain and the bias of the arm.
    \end{proof}
        
    \begin{lemma}[Monotonicity of the bias function]
        \label[lemma]{lemma_NA_monotonicity_of_the_bias}
        For the Nerlove-Arrow toy model, the bias function $\state \mapsto \bias_\arm(\state)$ is non-increasing.
    \end{lemma}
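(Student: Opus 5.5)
The plan is to work directly from the closed form of \Cref{lemma_NA_bias_representation}, $\bias_\arm(\state) = -\sum_{t=0}^\infty (\kernel^1_\arm)^t(\state) f_\arm$. This reduces the monotonicity of $\bias_\arm$ to that of the accumulated deficit $h_\arm(\state) := \sum_{t=0}^\infty (\kernel^1_\arm)^t(\state) f_\arm$. The deficit is finite for every $\state$ by the summability assumption of \Cref{proposition_NA_self-degrading}. The whole argument rests on showing that each term $\state \mapsto (\kernel^1_\arm)^t(\state) f_\arm$ is monotone in $\state$, and then passing the monotonicity to the series.

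\emph{Step 1 (one-step propagation).} If $g : \states_\arm \to \RR$ is non-increasing, then so is $\state \mapsto \kernel^1_\arm(\state) g$. This is the defining property of first-order stochastic dominance. For $\state \le \state'$ we have $\kernel^1_\arm(\state) \le_\mathrm{st} \kernel^1_\arm(\state')$, so expectations of non-increasing functions satisfy $\kernel^1_\arm(\state) g \ge \kernel^1_\arm(\state') g$. For $g = f_\arm \ge 0$ decreasing and possibly unbounded on $\NN$, I would justify the swap through the tail-sum formula $\sum_k \Pr(X \ge k)\,(g(k-1)-g(k))$, where every summand is non-negative.

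\emph{Step 2 (iteration).} By induction on $t$, using Step~1 with $g = (\kernel^1_\arm)^{t-1} f_\arm$, every map $\state \mapsto (\kernel^1_\arm)^t(\state) f_\arm$ is non-increasing. Summing over $t$ gives that $h_\arm$ is non-increasing. The series has non-negative terms and converges pointwise, so the termwise inequalities survive in the limit.

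\emph{Step 3 (sign bookkeeping).} The last step transfers this monotonicity to $\bias_\arm = -h_\arm$, and I expect it to be the main obstacle. The minus sign flips the direction, so I would carefully check which order on $\states_\arm$ and which sign convention for the bias the claimed ``non-increasing'' refers to. I would reconcile this against how the lemma is later used with $\kernel^0_\arm(\state) \le_\mathrm{st} \unit_\state$ to get the self-degrading inequality, and against \Cref{figure_nerlove_arrow}, where rewards, and hence biases, improve with the state. Steps~1--2 are routine. The delicate point is stating the resulting monotonicity of $\bias_\arm$ in exactly the orientation claimed, and I would settle this before writing the final line.
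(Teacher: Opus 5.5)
Your Steps~1--2 are the paper's argument. It likewise starts from $\bias_\arm=-\sum_{t\ge0}(\kernel^1_\arm)^t f_\arm$ (\Cref{lemma_NA_bias_representation}) and compares the two series termwise, using $\kernel^1_\arm(\state)\le_\mathrm{st}\kernel^1_\arm(\state')$ and the monotonicity of $f_\arm$. Your induction on $t$ fills in a step the paper leaves implicit: one-step dominance only handles $t=1$ directly, and for $t\ge2$ one needs your propagation step applied to $(\kernel^1_\arm)^{t-1}f_\arm$. The tail-sum device is harmless but unnecessary, since $0\le f_\arm\le f_\arm(0)$ makes all the functions involved bounded.

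The problem is Step~3. You leave it open and propose to settle it by finding the right order or sign convention, but no such convention exists. With the usual order on $\states_\arm=\NN$ and the paper's sign convention, Steps~1--2 give $\bias_\arm(\state)\le\bias_\arm(\state')$ for $\state\le\state'$. So the bias is non-\emph{decreasing}.

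That is exactly the inequality displayed in the paper's own proof. It is also the orientation used later: in the proof of \Cref{proposition_NA_self-degrading}, $\kernel^0_\arm(\state)\le_\mathrm{st}\unit_\state$ yields $\kernel^0_\arm(\state)\bias_\arm\le\bias_\arm(\state)$ only when $\bias_\arm$ is non-decreasing.

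The word ``non-increasing'' in the statement is a misprint, and read literally the statement is false. In \Cref{figure_nerlove_arrow}, $f_\arm(\state)=1/(\state+1)^2$. For $\state<\state'$, every term of $\bias_\arm(\state)-\bias_\arm(\state')$ is $\le 0$, and the $t=0$ term equals $f_\arm(\state')-f_\arm(\state)<0$. Hence $\bias_\arm$ is strictly increasing there.

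So the missing last line is not a delicate bookkeeping issue. Conclude that $\bias_\arm$ is non-decreasing and flag the typo. Your own cross-checks, the later use with $\kernel^0_\arm$ and the figure, already point that way.
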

    
    \begin{proof}
        Fix an arm $\arm$. Let $\state \le \state' \in \NN$. Then,
        \[
            \bias_\arm (\state)
            \overset{\eqnum{1}}{=}
            -
            \sum_{t=0}^\infty
            \parens{
                \kernel_\arm^1
            }^t    
            (\state)
            f_\arm 
            \overset{\eqnum{2}}{\le}
            -
            \sum_{t=0}^\infty
            \parens{
                \kernel_\arm^1
            }^t
            (\state')
            f_\arm
            \overset{\eqnum{1}}{=}
            \bias_\arm(\state')
        \]
        where \texteqnum{1} stems from \Cref{lemma_NA_bias_representation}; and \texteqnum{2} follows from the stochastic dominance $\kernel^1_\arm (\state) \le_\mathrm{st} \kernel^1_\arm (\state')$ and the fact that $f_\arm$ is decreasing. 
    \end{proof}

    \begin{proof}[Proof of \Cref{proposition_NA_self-degrading}]
        We write $\Pr \equiv \Pr^{\model, \learner}$ and $\EE \equiv \EE^{\model, \learner}$ all throughout. 
        Fix arm $\arm \in \arms$. For all $t>0$ such that $\Arm(t) \neq \arm$ we have
        \begin{align*}
            \EE \brackets*{
                \bias_\arm (\State_\arm(t+1))
                \mid 
                \ObservedHistory(t)
            }
            &\overset{\eqnum{1}}{=}
            \EE \brackets*{
                \EE \brackets*{
                    \bias_\arm(\State_\arm(t+1))
                    \mid
                    \State_\arm(t) 
                }
                \mid
                \ObservedHistory(t)
            }
            \\ 
            &\overset{\eqnum{2}}{=}
            \EE \brackets*{
                \kernel_\arm^0 (\State_\arm(t))
                \bias_\arm
                \mid
                \ObservedHistory(t)
            }
            \\
            &\overset{\eqnum{3}}{\le}
            \EE \brackets*{
                \bias_\arm(\State_\arm(t))
                \mid
                \ObservedHistory(t)
            }
        \end{align*}
        where \texteqnum{1} follows from the tower property; \texteqnum{2} stems from $\Arm(t) \neq \arm$; and \texteqnum{3} is a consequence of both \Cref{lemma_NA_monotonicity_of_the_bias} and the stochastic dominance property $\kernel^0_\arm (\state) \le_\mathrm{st} \unit_\state$.
    \end{proof}
    
    \ifSubfilesClassLoaded{
        
        \bibliographystyle{plainnat}
        \bibliography{biblio}
    }{}
    
\end{document}

\section{Regret lower bounds (\texorpdfstring{\Cref{theorem_lowerbound_dependent_universal,theorem_linear_worst_case}}{Theorems 4 and 7})}

    \ifSubfilesClassLoaded{
        \allowdisplaybreaks
        \onecolumn
        
        In this file, we detail the lower bounds.
    }{}

    In this Appendix, we provide the proofs of the two regret lower bounds of the main text (\Cref{theorem_lowerbound_dependent_universal,theorem_linear_worst_case}). 
    Our proof technique follows a well-established line that we have  presented in the proof sketch of \Cref{theorem_lowerbound_dependent_universal}.
    The core of the argument consists in finding two instances $\model_1$ and $\model_2$ that (1) are difficult to distinguish from a statistical view points and (2) do not have the same optimal arm.
    Using a change of measure (\Cref{appendix_change_of_measure}, see \Cref{theorem_bretagnolle_huber,theorem_change_of_measure}), we can bind the behavior of the learning algorithm in $\model_1$ to its behavior on $\model_2$.
    Then, the consistency of the algorithm forces it to play mostly the optimal arm of $\model_2$ when running on $\model_2$; However, because the behaviors on $\model_1$ and $\model_2$ are bounded, this gives information on what the algorithm is doing on $\model_1$.
    For a well-chosen $\model_2$ (see \Cref{lemma_model_perturbation_absorbing}), we derive a lower bound on the number of pulls of suboptimal arms.
    In turn, this provides a lower bound on the pseudo-regret that we convert to a lower bound on the regret using the $\switchingfunction$-rarely switching property (\Cref{proposition_regret_and_pseudo_regret_appendix}).
    
    Articulating this argument around two changes of measure of different style, we obtain the worst-case regret lower bound (\Cref{appendix_lowerbound_worst_case}) and the instance dependent regret lower bound (\Cref{appendix_lowerbound_instance_dependent}). 

    \subsection{Changes of measure and an idealistic state transform}
    \label[appendix]{appendix_change_of_measure}

    We start by providing a refresher on the change of measure machinery. 

    Given two probability measures $\Pr$ and $\Pr'$ over the same probability space satisfying $\Pr \ll \Pr'$, recall that the \emph{relative entropy} from $\Pr'$ to $\Pr$ is given by $\KL(\Pr||\Pr') := \integral \log (\dd \Pr/\dd \Pr') \, \dd \Pr$, where $\dd \Pr/\dd \Pr'$ is the Radon-Nikodym derivative of $\Pr$ with respect to $\Pr'$. 

    In this work, we will use a different style of change of measure, depending on the kind of regret lower bound of interest.
    The first is known as Bretagnolle-Huber inequality (\Cref{theorem_bretagnolle_huber}) and will be used for the worst-case regret lower bound (\Cref{theorem_linear_worst_case}). 

    \begin{theorem}[Bretagnolle-Huber inequality, \cite{bretagnolle_estimation_1979}]
    \label{theorem_bretagnolle_huber}
        Let $\Pr \ll \Pr'$ be two probability measures over the same probability space, and let $\event$ be an event.
        Then
        \begin{equation}
            \Pr \parens{\event}
            + \Pr' \parens{\event^c}
            \ge
            \frac 12
            \exp \braces*{
                - \KL(\Pr||\Pr')
            }.
        \end{equation}
    \end{theorem}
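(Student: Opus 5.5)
The approach is the classical three-step argument: pass from the event to the overlap $\integral \min(p,p')$, then to the Bhattacharyya coefficient $\integral \sqrt{pp'}$, then to the relative entropy via Jensen.

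\textbf{Setup.} Fix a dominating $\sigma$-finite measure $\mu$, for instance $\mu = \Pr + \Pr'$. Write $p := \dd\Pr/\dd\mu$ and $p' := \dd\Pr'/\dd\mu$.

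\textbf{Step 1: reduce to the overlap.} For any event $\event$, I will use
\begin{equation*}
    \Pr(\event) + \Pr'(\event^c)
    = \integral_{\event} p \,\dd\mu + \integral_{\event^c} p' \,\dd\mu
    \ge \integral \min(p, p') \,\dd\mu .
\end{equation*}
So it suffices to show $\integral \min(p,p')\,\dd\mu \ge \frac12 \exp\braces{-\KL(\Pr||\Pr')}$.

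\textbf{Step 2: Cauchy--Schwarz.} Pointwise, $\min(p,p') + \max(p,p') = p + p'$. Hence
\begin{equation*}
    \integral \max(p,p')\,\dd\mu = 2 - \integral \min(p,p')\,\dd\mu \le 2 .
\end{equation*}
Since $\sqrt{pp'} = \sqrt{\min(p,p')\max(p,p')}$, Cauchy--Schwarz gives
\begin{equation*}
    \parens*{\integral \sqrt{pp'}\,\dd\mu}^2
    \le \integral \min(p,p')\,\dd\mu \cdot \integral \max(p,p')\,\dd\mu
    \le 2 \integral \min(p,p')\,\dd\mu .
\end{equation*}

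\textbf{Step 3: Jensen.} It remains to show $\integral \sqrt{pp'}\,\dd\mu \ge \exp\braces{-\frac12 \KL(\Pr||\Pr')}$. Restricting to the set $\braces{p > 0}$, rewrite
\begin{equation*}
    \integral \sqrt{pp'}\,\dd\mu \ge \integral_{\braces{p>0}} p \exp\parens*{\tfrac12 \log (p'/p)}\,\dd\mu .
\end{equation*}
The right-hand side is $\EE_{\Pr}\brackets{\exp(\frac12 \log(p'/p))}$. By convexity of $\exp$ and Jensen's inequality under $\Pr$, this is at least
\begin{equation*}
    \exp\parens*{\tfrac12 \EE_{\Pr}\brackets{\log(p'/p)}} = \exp\parens*{-\tfrac12 \KL(\Pr||\Pr')} .
\end{equation*}
Squaring and combining with Steps 1 and 2 concludes the proof.

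\textbf{Main obstacle.} The only delicate point is the measure-theoretic care in Step 3. Because $\Pr \ll \Pr'$, we have $p' > 0$ $\Pr$-almost surely on $\braces{p>0}$, so $\log(p'/p)$ is well defined $\Pr$-almost surely. If $\KL(\Pr||\Pr') = +\infty$, the bound is trivial. Otherwise $\log(p'/p)$ is quasi-integrable under $\Pr$: its positive part $\log_+(p'/p)$ is bounded by $p'/p$, which is $\Pr$-integrable. This quasi-integrability is what justifies the use of Jensen's inequality.
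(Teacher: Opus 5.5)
Your proof is correct and complete. The reduction to $\int \min(p,p')\,\dd\mu$, the Cauchy--Schwarz step using $\int \max(p,p')\,\dd\mu \le 2$, and the Jensen step are all sound, and your quasi-integrability remark properly handles the case $\KL(\Pr\|\Pr') < \infty$. The paper does not prove this inequality itself; it only cites \cite{bretagnolle_estimation_1979}, and your argument is the standard textbook proof (see, e.g., Chapter~14 of \cite{lattimore_bandit_2020}).
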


    The second has no name unfortunately (\Cref{theorem_change_of_measure}), and is closer to the lower bound of \cite[Lemma~1]{kaufmann_complexity_2016}.
    It will be used for the instance-dependent regret lower bound (\Cref{theorem_lowerbound_dependent_universal}). 

    \begin{theorem}[\cite{boone_regret_2025}]
    \label{theorem_change_of_measure}
        Let $\Pr \ll \Pr'$ be two probability measures over the same probability space with respective expectation operators $\EE$ and $\EE'$. 
        For every $X > 0$ positive random variable, we have
        \begin{equation}
            \log \EE'[X] + \KL(\Pr||\Pr')
            \ge 
            \EE [\log(X)].
        \end{equation}
    \end{theorem}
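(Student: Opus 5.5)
The plan is to rewrite the left-hand side as a single expectation under $\Pr$ and apply Jensen's inequality to the concave function $\log$. This is the Donsker--Varadhan / Gibbs variational inequality in disguise.

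Let $L := \dd \Pr / \dd \Pr'$ be the Radon--Nikodym derivative, which exists because $\Pr \ll \Pr'$. By construction $\Pr(L > 0) = 1$, so $\log L$ is $\Pr$-a.s. well defined and $\KL(\Pr||\Pr') = \EE[\log L]$.

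First dispose of the degenerate cases. If $\KL(\Pr||\Pr') = +\infty$ or $\EE'[X] = +\infty$, the inequality is trivial. The negative part of $\log L$ is always $\Pr$-integrable, since
\begin{equation*}
\EE[(\log L)^-] = \integral_{\{L<1\}} L \log(1/L) \, \dd \Pr' \le 1/\ee .
\end{equation*}
Hence when the KL is finite, $\log L \in L^1(\Pr)$, and we may write
\begin{equation*}
\EE[\log X] - \KL(\Pr||\Pr') = \EE\brackets*{\log \frac{X}{L}} .
\end{equation*}
This identity is unambiguous whenever $\EE[\log X]$ is defined in $[-\infty, +\infty]$. If $\EE[\log X] = -\infty$, there is nothing to prove.

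The main step is Jensen's inequality for the concave function $\log$ applied to the positive random variable $X/L$ under $\Pr$:
\begin{equation*}
\EE\brackets*{\log \frac{X}{L}} \le \log \EE\brackets*{\frac{X}{L}} .
\end{equation*}
Then undo the change of measure:
\begin{equation*}
\EE\brackets*{\frac{X}{L}} = \integral_{\{L>0\}} \frac{X}{L} \, L \, \dd\Pr' = \EE'\brackets*{X \indicator{L>0}} \le \EE'[X],
\end{equation*}
using $X > 0$ for the last inequality. Chaining these gives $\EE[\log X] - \KL(\Pr||\Pr') \le \log \EE'[X]$, which is the claim.

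The only delicate point is the measure-theoretic bookkeeping. One must make sure Jensen is applied to a variable whose $\log$ has a well-defined expectation, and that the set $\{L = 0\}$, which is $\Pr$-null but possibly not $\Pr'$-null, is handled. To justify Jensen when $\EE[\log(X/L)]$ might equal $+\infty$, I would apply it first to $\min(X, n)$ and then let $n \to \infty$ by monotone convergence on both sides. Using $\log y \le y - 1$ shows the positive part of $\log(X/L)$ is integrable as soon as $\EE'[X] < \infty$. I expect no further obstacles.
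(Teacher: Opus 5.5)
Your proof is correct. The paper gives no proof of this inequality; it imports it from \cite{boone_regret_2025}, which it later invokes as their Corollary~C.3. Your argument is the standard proof of the easy half of the Donsker--Varadhan variational formula $\KL(\Pr\|\Pr') \ge \EE[f] - \log \EE'[e^{f}]$, taken with $f = \log X$: apply Jensen to $X/L$ under $\Pr$, then use $\EE[X/L] = \EE'[X\mathbf{1}\{L>0\}] \le \EE'[X]$.

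One simplification: the truncation by $\min(X,n)$ is unnecessary. As written it is also not quite justified, since monotone convergence on the left-hand side would need an integrable lower bound. Once you have excluded $\EE'[X] = +\infty$ and $\KL(\Pr\|\Pr') = +\infty$, your bound $(\log(X/L))^+ \le X/L$ with $\EE[X/L] \le \EE'[X] < \infty$ already gives $\EE[\log(X/L)] < +\infty$, so Jensen applies directly.

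The same bound, combined with $(\log X)^+ \le (\log(X/L))^+ + (\log L)^+$ and $\log L \in L^1(\Pr)$, shows that $\EE[\log X]$ is automatically well defined in $[-\infty,+\infty)$. So you do not need to assume separately that the right-hand side makes sense.
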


    To make use of \Cref{theorem_bretagnolle_huber,theorem_change_of_measure}, we provide an expression of the relative entropy $\KL(\Pr\|\Pr')$ between the two probability measures $\Pr, \Pr'$ of the stochastic processes induced by the models $\model, \model'$, see \Cref{lemma_relative_entropy}.

    Given two instances $\model \equiv (\states_\arm, \rewardDistribution_\arm, \kernel_\arm)_{\arm \in \arms}$ and $\model' \equiv (\states'_\arm, \rewardDistribution'_\arm, \kernel'_\arm)_{\arm \in \arms}$, we write $\model \ll \model'$ if for every arm $\arm \in \arms$,
    (1) $\states_\arm \subseteq \states'_\arm$;
    (2) $\rewardDistribution_\arm (\state_\arm) \ll \rewardDistribution'_\arm (\state_\arm)$ for all $\state_\arm \in \states_\arm$; and
    (3) $\support(\kernel_\arm (\state_\arm)) \subseteq \support(\kernel'_\arm (\state_\arm))$ for all $\state_\arm \in \states_\arm$.

    \begin{lemma}[Relative entropy]
    \label[lemma]{lemma_relative_entropy}
        Let $\model \equiv (\states_\arm, \rewardDistribution_\arm, \kernel_\arm)_{\arm \in \arms} \ll \model' \equiv (\states'_\arm, \rewardDistribution'_\arm, \kernel'_\arm)_{\arm \in \arms}$ be two instances.
        Let $\learner$ be a learning algorithm, $T \ge 1$ a horizon and $\state_0 \in \states$ an initial state.
        Upon running $\learner$ initialized at $\state_0$ for $T$ steps on $\model$ (respectively $\model'$), we obtain a probability measure $\Pr$ (respectively $\Pr'$) on the possible histories up to time $T$. 
        We have $\Pr \ll \Pr'$ and by denoting $\dd \Pr/\dd \Pr'$ the Radon-Nikodym derivative of $\Pr$ with respect to $\Pr'$, the relative entropy from $\Pr'$ to $\Pr$ satisfies
        \begin{equation}
        \label{equation_relative_entropy}
        \begin{aligned}
            \KL (\Pr||\Pr')
            =
            \begin{cases}
            \displaystyle
                \phantom{+}
                \EE_{\state_0}^{\model, \learner} 
                \brackets*{
                    \sum_{t=1}^{T-1}
                    \KL \parens*{
                        \rewardDistribution_{\Arm(t)}(\State_{\Arm(t)}(t))
                        \middle\|
                        \rewardDistribution'_{\Arm(t)}(\State_{\Arm(t)}(t))
                    }
                }
                \\
                \displaystyle
                +
                \EE_{\state_0}^{\model, \learner} 
                \brackets*{
                    \sum_{t=1}^{T-1}
                    \KL \parens*{
                        \kernel^1_{\Arm(t)}(\State_{\Arm(t)}(t))
                        \middle\|
                        \kernel'^1_{\Arm(t)}(\State_{\Arm(t)}(t))
                    }
                }
                \\
                \displaystyle
                + 
                \EE_{\state_0}^{\model, \learner} 
                \brackets*{
                    \sum_{t=1}^{T-1}
                    \sum_{\arm \ne \Arm(t)}
                    \KL \parens*{
                        \kernel^0_{\arm}(\State_{\arm}(t))
                        \middle\|
                        \kernel'^0_{\arm}(\State_{\arm}(t))
                    }
                }
            \end{cases}.
        \end{aligned}
        \end{equation}
    \end{lemma}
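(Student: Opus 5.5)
We prove \Cref{lemma_relative_entropy} with the standard chain-rule decomposition of relative entropy along the trajectory, as in the relative entropy formulas for bandits and MDPs \cite[§15]{lattimore_bandit_2020}. The first step is to write the likelihood of a full history $\History(T) = (\State(1), \Arm(1), \Reward(2), \State(2), \ldots, \State(T))$ under $\Pr$ as a product of factors. There is the initial state $\state_0$, which is common to both measures and contributes nothing. There are the learner's kernels $\learner(\Arm(t) \mid \ObservedHistory(t))$, which are identical under $\model$ and $\model'$ because the learner only sees observations. There are the reward factors $\rewardDistribution_{\Arm(t)}(\State_{\Arm(t)}(t))$. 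Finally, there are the transition factors: $\kernel^1_{\Arm(t)}(\State_{\Arm(t)}(t))$ for the activated arm, and $\kernel^0_\arm(\State_\arm(t))$ for each $\arm \ne \Arm(t)$. Because arms evolve independently conditionally on the state and the activated arm, and the reward is generated independently of the state transition, these factors multiply. The same product holds under $\Pr'$ with primed quantities.

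Absolute continuity $\Pr \ll \Pr'$ then follows factor by factor. The condition $\model \ll \model'$ ensures three things: every state reachable under $\model$ lies in $\states'_\arm$, every reward law is absolutely continuous with respect to its primed version, and every transition support is contained in the primed support. The Radon-Nikodym derivative is therefore the product of the ratios
$\dd\rewardDistribution_{\Arm(t)}/\dd\rewardDistribution'_{\Arm(t)}(\Reward(t+1))$, $\kernel^1_{\Arm(t)}(\State(t+1)\mid\State(t))/\kernel'^1_{\Arm(t)}(\cdot)$ and $\kernel^0_\arm(\cdot)/\kernel'^0_\arm(\cdot)$, with the learner factors cancelling.

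Taking $\log$ and $\EE^{\model,\learner}_{\state_0}$ turns the product into a sum over $t \le T-1$ of log-ratios. We then condition each term on $\History(t)$ together with $\Arm(t)$ and use the tower property. Conditionally, $\Reward(t+1)$ has law $\rewardDistribution_{\Arm(t)}(\State_{\Arm(t)}(t))$, so the expected reward log-ratio is $\KL(\rewardDistribution_{\Arm(t)}(\State_{\Arm(t)}(t))\|\rewardDistribution'_{\Arm(t)}(\State_{\Arm(t)}(t)))$. In the same way, the activated-arm transition term gives the $\kernel^1$ KL term. For the passive arms, conditional independence across arms splits the joint next-state log-ratio into a sum over $\arm \ne \Arm(t)$ of $\kernel^0$ KL terms. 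This yields \Cref{equation_relative_entropy}.

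The main obstacle is the bookkeeping needed to justify these steps. One point is measurability, since reward laws may be continuous: the relevant densities are Radon-Nikodym derivatives, and the interchange of expectation and the possibly infinite sum of nonnegative conditional KL terms needs care. This is handled because each conditional KL is nonnegative and $T$ is finite. A second point is that the history includes the hidden states; this poses no difficulty, since $\Pr$ is defined on full histories and the learner's factors still cancel. A third point concerns the ordering convention: $\Reward(t+1)$ is drawn from the state at time $t$, so the indices $t=1,\dots,T-1$ must line up with the transitions $\State(t)\to\State(t+1)$.
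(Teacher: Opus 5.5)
Your proposal is correct and follows essentially the same route as the paper: you factorize the likelihood ratio of the full history as a product over $t \le T-1$ of reward, active-transition and passive-transition ratios (the learner's factors cancel since it only sees observations), take the expectation of the logarithm, and turn each log-ratio into a conditional KL divergence via the tower property. Conditioning on the full history and the current arm, rather than only on the state of the played arm as the paper does, is a slightly cleaner way to justify that last step.
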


    \begin{proof}
        The fact that $\Pr \ll \Pr'$ comes from $\model \ll \model'$. 
        \\
        Denote $\history(T) \equiv (\state(1), \arm(1), \reward(2), \state(2), \ldots, \reward(T), \state(T))$ an instance of history at time $T$. 
        By Markov's property, the Radon-Nikodym derivative can be written as
        \begin{align*}
            & \frac{\dd \Pr}{\dd \Pr'} (\history(T))
            =
            \frac{\Pr (\History(T) = \history(T))}{\Pr' (\History(T) = \history(T))}
            \\
            & =
            \product_{t=1}^{T-1}
            \parens*{
                \frac{
                    \rewardDistribution_{\arm (t)}(\reward(t+1)|\state_{\arm(t)}(t))
                }{
                    \rewardDistribution'_{\arm (t)}(\reward(t+1)|\state_{\arm(t)}(t))
                }
                \frac{
                    \kernel^1_{\arm (t)}(\state_{\arm(t)}(t+1)|\state_{\arm(t)}(t))
                }{
                    \kernel'^1_{\arm (t)}(\state_{\arm(t)}(t+1)|\state_{\arm(t)}(t))
                }
                \cdot
                \product_{\arm \ne \arm(t)}
                \frac{
                    \kernel^0_{\arm}(\state_{\arm}(t+1)|\state_{\arm}(t))
                }{
                    \kernel'^0_{\arm}(\state_{\arm}(t+1)|\state_{\arm}(t))
                }
            }
            \\
            & =
            \exp \braces*{
                \sum_{t=1}^{T-1}
                \log \parens*{
                    \frac{
                        \rewardDistribution_{\arm (t)}(\reward(t+1)|\state_{\arm(t)}(t))
                    }{
                        \rewardDistribution'_{\arm (t)}(\reward(t+1)|\state_{\arm(t)}(t))
                    }
                    \frac{
                        \kernel^1_{\arm (t)}(\state_{\arm(t)}(t+1)|\state_{\arm(t)}(t))
                    }{
                        \kernel'^1_{\arm (t)}(\state_{\arm(t)}(t+1)|\state_{\arm(t)}(t))
                    }
                    \cdot
                    \product_{\arm \ne \arm(t)}
                    \frac{
                        \kernel^0_{\arm}(\state_{\arm}(t+1)|\state_{\arm}(t))
                    }{
                        \kernel'^0_{\arm}(\state_{\arm}(t+1)|\state_{\arm}(t))
                    }
                }
            }
        \end{align*}
        with the convention that $\log(0) = - \infty$.
        Taking the expectation of the $\log(-)$, we obtain
        \begin{equation}
        \label{equation_relative_entropy_1}
            \KL(\Pr\|\Pr')
            =
            \begin{cases}
                \displaystyle
                \phantom{{}+{}}
                \EE \brackets*{
                    \sum_{t=1}^{T-1}
                    \log \parens*{
                        \frac{
                            \rewardDistribution_{\Arm(t)}(\Reward(t+1)|\State_{\Arm(t)}(t))
                        }{
                            \rewardDistribution'_{\Arm(t)}(\Reward(t+1)|\State_{\Arm(t)}(t))
                        }
                    }
                }
                \\[1.33em]
                \displaystyle
                {} + \EE \brackets*{
                    \sum_{t=1}^{T-1}
                    \log \parens*{
                        \frac{
                            \kernel^1_{\Arm(t)}(\State_{\Arm(t)}(t+1)|\State_{\Arm(t)}(t))
                        }{
                            \kernel'^1_{\Arm(t)}(\State_{\Arm(t)}(t+1)|\State_{\Arm(t)}(t))
                        }
                    }
                }
                \\[1.33em]
                \displaystyle
                {} + \EE \brackets*{
                    \sum_{t=1}^{T-1}
                    \sum_{\arm \ne \Arm(t)}
                    \log \parens*{
                        \frac{
                            \kernel^0_\arm (\State_\arm (t+1)|\State_\arm (t))
                        }{
                            \kernel'^0_\arm (\State_\arm (t+1)|\State_\arm (t))
                        }
                    }
                }
            \end{cases}
        \end{equation}
        To explain how to go from \Cref{equation_relative_entropy_1} to sums of Kullback-Leibler divergences, we focus on the first term of the above, as all the others are handled similarly. 
        We have
        \begin{align*}
            \EE \brackets*{
                \sum_{t=1}^{T-1}
                \log \parens*{
                    \frac{
                        \rewardDistribution_{\Arm(t)}(\Reward(t+1)|\State_{\Arm(t)}(t))
                    }{
                        \rewardDistribution'_{\Arm(t)}(\Reward(t+1)|\State_{\Arm(t)}(t))
                    }
                }
            }
            & \overset{\eqnum{1}}=
            \EE \brackets*{
                \sum_{t=1}^{T-1}
                \EE \brackets*{
                    \log \parens*{
                        \frac{
                            \rewardDistribution_{\Arm(t)}(\Reward(t+1)|\State_{\Arm(t)}(t))
                        }{
                            \rewardDistribution'_{\Arm(t)}(\Reward(t+1)|\State_{\Arm(t)}(t))
                        }
                    }
                    \middle|
                    \State_{\Arm(t)} (t), \Arm(t)
                }
            }
            \\
            & \overset{\eqnum{2}}=
            \EE \brackets*{
                \sum_{t=1}^{T-1}
                \KL\parens*{
                    \rewardDistribution_{\Arm(t)}(\State_{\Arm(t)}(t))
                    \middle\|
                    \rewardDistribution'_{\Arm(t)}(\State_{\Arm(t)}(t))
                }
            }
        \end{align*}
        where
        \texteqnum{1} applies the Tower Property;
        and
        \texteqnum{2} follows by definition of $\KL(\rewardDistribution_\arm (
        \state)\|\rewardDistribution'_\arm(\state))$.
        Rewriting the other two terms similarly and injecting it back to \Cref{equation_relative_entropy_1}, we obtain \Cref{equation_relative_entropy}.
    \end{proof}

    Lastly, for lower bounds (both instance-dependent and worst-case), the proof strategy revolves around an \emph{idealistic state transform}, that adds an idealistic absorbing state of full reward which is difficult to access (\Cref{lemma_model_perturbation_absorbing}).  
    The idea of the construction was sketched in the main body with \Cref{figure_transformation}, that we recall in \Cref{figure_transformation_appendix}.
    
    \begin{figure}[ht]
        \centering
        \begin{tikzpicture}
            \begin{scope}[shift={(0, 0)}]
                \node at (0, 1.2) {Initial, true $\model_\arm$}; 
                \node[draw, rectangle, minimum width=1.5cm, minimum height=1.5cm, rounded corners=0.25cm] (Ma) at (0, 0) {$\model_\arm$};
            \end{scope}
            \begin{scope}[shift={(3, 0)}]
                \node at (0.9, 1.2) {Possible alternative $\model_\arm^\epsilon$}; 
                \node[draw, rectangle, minimum width=1.5cm, minimum height=1.5cm, rounded corners=0.25cm] (Ma) at (0, 0) {$\model_\arm^\epsilon$};
                \node[draw, circle, inner sep=0.2em] (soo) at (2, -.4) {$\state_\infty$};
                \draw[->, dashed] (Ma.east) ++ (0,-0.4) to node[midway, above] {$\epsilon$} (soo);
                \draw[->, loop, looseness=5] (soo) to node[midway, above] {$R = 1$} (soo);
            \end{scope}
        \end{tikzpicture}
        \caption{
        \label{figure_transformation_appendix}
            An illustration of the transformation $\model_\arm^\epsilon$.
        }
    \end{figure}

    The precise statement is given below. 

    \begin{lemma}[Idealistic state transform]
    \label[lemma]{lemma_model_perturbation_absorbing}
        Let $\model \equiv (\states, \rewardDistribution, \kernel)$ a Markov reward process\footnote{i.e., a single-action Markov decision process.} with $[0, 1]$-rewards. 
        For all $\epsilon > 0$, there exists a Markov reward process $\model^\epsilon \equiv (\states + \braces{\state_\infty}, \rewardDistribution^\epsilon, \kernel^\epsilon)$ such that 
        \begin{equation*}
            \max_{\state \in \states}
            \braces*{
                \KL(\rewardDistribution(\state)||\rewardDistribution^\epsilon (\state))
                +
                \KL(\kernel(\state)||\kernel^\epsilon (\state))
            }
            \le
            \epsilon
        \end{equation*}
        with $[0, 1]$-rewards, $\gain_*(\model^\epsilon) = 1$ and $\vecspan(\bias^*(\model^\epsilon)) \le 2/\epsilon$.
    \end{lemma}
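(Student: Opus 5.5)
The construction is explicit: add an absorbing state $\state_\infty$ with $\rewardDistribution^\epsilon(\state_\infty) = \mathrm{Dirac}(1)$ and $\kernel^\epsilon(\state_\infty) = \unit_{\state_\infty}$. From every original state, divert a small probability $\eta$ to it. For $\state \in \states$ we set $\kernel^\epsilon(\state) := (1-\eta)\kernel(\state) + \eta\, \unit_{\state_\infty}$ and keep the rewards unchanged, $\rewardDistribution^\epsilon(\state) := \rewardDistribution(\state)$. The reward part of the divergence then vanishes, and we only tune $\eta$ as a function of $\epsilon$.

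\emph{Step 1 (divergence).} Since $\kernel(\state)$ gives no mass to $\state_\infty$, the density ratio $\kernel(\state)(\state')/\kernel^\epsilon(\state)(\state')$ equals $1/(1-\eta)$ on $\support(\kernel(\state))$. Hence
\begin{equation*}
    \KL(\kernel(\state)\|\kernel^\epsilon(\state)) = \log\frac{1}{1-\eta}.
\end{equation*}
Choosing $\eta := 1-e^{-\epsilon}$ makes this exactly $\epsilon$, uniformly in $\state$. The absolute continuity $\model \ll \model^\epsilon$ holds by construction.

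\emph{Step 2 (gain).} Under $\kernel^\epsilon$, from any state the hitting time of $\state_\infty$ is dominated by a geometric variable of parameter $\eta > 0$. It is therefore almost surely finite, and the chain is unichain with unique recurrent class $\{\state_\infty\}$. The Cesàro average of rewards thus tends to $1$, so $\gain_*(\model^\epsilon) = 1$. Rewards stay in $[0,1]$.

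\emph{Step 3 (bias span).} Write $h$ for the bias. With gain $1$, the Poisson equation gives
\begin{equation*}
    h(\state) = -\,\EE_\state\Big[\sum_{t<\tau_\infty}(1-\reward(\State(t)))\Big], \qquad h(\state_\infty)=0,
\end{equation*}
where $\tau_\infty$ is the hitting time of $\state_\infty$. Each summand lies in $[0,1]$, so $-\EE_\state[\tau_\infty] \le h(\state) \le 0$, and $\EE_\state[\tau_\infty] \le 1/\eta$ by geometric domination. This yields $\vecspan(h) \le 1/\eta = 1/(1-e^{-\epsilon})$.

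This last quantity is the main obstacle. We have $1/(1-e^{-\epsilon}) \le 2/\epsilon$ only for $\epsilon$ below about $1.59$, since $1-e^{-\epsilon} \ge \epsilon/2$ exactly on that range. For large $\epsilon$ the bound fails, because the span is at least of order $1$ while $2/\epsilon \to 0$. I would therefore handle $\epsilon \le 1$ as above, using $1-e^{-\epsilon} \ge \epsilon - \epsilon^2/2 \ge \epsilon/2$. For $\epsilon > 1$, the model $\model^{1}$ already satisfies the KL constraint, since $1 \le \epsilon$, so it suffices to rely on the small-$\epsilon$ case. This is all the lower-bound proofs need. One could alternatively use the sharper span bound $1/\eta$ directly, which is of order $1/\epsilon$ as $\epsilon \to 0$. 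Care must also be taken with the Poisson-equation representation when $\states$ is infinite, where $\States_\arm \subseteq \NN$ is allowed. Geometric domination of $\tau_\infty$ keeps all the series absolutely convergent, so the representation remains valid there.
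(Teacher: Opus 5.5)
Your construction is the paper's: the same absorbing full-reward state $\state_\infty$, the same mixture $(1-\eta)\kernel(\state)+\eta\,\mathrm{Dirac}(\state_\infty)$ with unchanged rewards, and the same KL value $\log\frac{1}{1-\eta}$. The differences are that you spend the budget exactly with $\eta=1-e^{-\epsilon}$ (the paper takes $\eta=\epsilon/2$ and uses $\log\frac{1}{1-x}\le 2x$ for $x\le\frac12$), and that you actually prove the gain and the span bound $1/\eta$ via the hitting-time representation, which the paper only asserts.

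One correction: your fallback for $\epsilon>1$ does not prove the statement. $\model^{1}$ satisfies the KL constraint, but its span equals $1/(1-e^{-1})\approx 1.58$ when all original rewards are $0$, and this exceeds $2/\epsilon$ once $\epsilon>1.27$. So you only establish the lemma for small $\epsilon$. The paper has the same limitation, since its argument is only run for $\epsilon\le 1$, and small $\epsilon$ is all the lower bounds use.

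The full statement is still true, but no reward-preserving construction can give it. With zero rewards on $\states$ and gain $1$, the Poisson equation forces a span of at least $1$, so you must also spend part of the budget on rewards. Take $\eta:=1-e^{-\epsilon/2}$ and $\rewardDistribution^\epsilon(\state):=e^{-\epsilon/2}\rewardDistribution(\state)+(1-e^{-\epsilon/2})\,\mathrm{Dirac}(1)$. Then:
\begin{itemize}
\item both density ratios are bounded by $e^{\epsilon/2}$, so the total KL is at most $\epsilon$;
\item the reward deficit $1-\reward^\epsilon(\state)$ is at most $e^{-\epsilon/2}$;
\item your hitting-time argument gives a span of at most $e^{-\epsilon/2}/(1-e^{-\epsilon/2})=1/(e^{\epsilon/2}-1)\le 2/\epsilon$ for every $\epsilon>0$.
\end{itemize}
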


    \begin{proof}
        Let $\epsilon > 0$.
        Add an extra state $\state_\infty$ to $\model$.
        In $\model^\epsilon \equiv (\states + \braces{\state_\infty}, \rewardDistribution^\epsilon, \kernel^\epsilon)$, the new state $\state_\infty$ is absorbing with maximal reward $\reward^\epsilon (\state_\infty) = 1$; Then, for $\state \in \states$, we set $\rewardDistribution^\epsilon(\state) = \rewardDistribution(\state)$ and
        \begin{equation*}
            \kernel^\epsilon (\state'|\state)
            :=
            (1 - \epsilon) \kernel(\state'|\state)
            + \epsilon \indicator{\state' = \state_\infty}
            .
        \end{equation*}
        We have $\gain_*(\model^\epsilon) = 1$ and $\vecspan(\bias^*(\model^\epsilon)) \le 1/\epsilon$. 
        Continuing, we have
        \begin{align*}
            \max_{\state \in \states}
            \braces*{
                \KL(\rewardDistribution(\state)\|\rewardDistribution^\epsilon (\state))
                +
                \KL(\kernel(\state)\|\kernel^\epsilon (\state))
            }
            & =
            \max_{\state \in \states} 
            \KL \parens*{
                \kernel(\state)
                \|
                (1 - \epsilon) \kernel(\state)
                + \epsilon \cdot \mathrm{Dirac}(\state_\infty)
            }
            \\ & \le
            \max_{\state \in \states} 
            \sum_{\state' \in \states}
            \kernel(\state'|\state)
            \log \parens*{
                \frac{\kernel(\state'|\state)}{(1 - \epsilon) \kernel(\state'|\state)}
            }
            \\ & = \log \parens*{\frac 1{1 - \epsilon}}
            \overset{\eqnum{1}}\le 2 \epsilon \; ,
        \end{align*}
        where \texteqnum{1} holds when $\epsilon \le 1/2$.  
        Accordingly, the family $(\model_{\epsilon/2})_{\epsilon > 0}$ satisfies the desired properties.
    \end{proof}

    \subsection{Proof of \texorpdfstring{\Cref{theorem_lowerbound_dependent_universal}}{Theorem 4}: Instance dependent lower bound}
    \label[appendix]{appendix_lowerbound_instance_dependent}

    In this section, we provide the complete proof of \Cref{theorem_lowerbound_dependent_universal}, restated below for convenience. 

    \textbf{\Cref{theorem_lowerbound_dependent_universal}.}
    {
        \itshape
        Let $\learner$ be a learning algorithm that 
        (1) is consistent on $\models$, and
        (2) is $\switchingfunction$-rarely switching.
        For every instance $\model \in \models$ with  $\gain_* < 1$ and unique optimal pure policy, it holds that  
        \begin{equation}
            \forall \state_0 \in \states,
            \quad
            \liminf_{T \to \infty}
            \frac{
                \regret(T; \model, \learner)
            }{
                \log(T)
            }
            =
            +\infty
            .
        \end{equation}
    }

    \begin{proof}
        Let $\learner$ be a consistent learning algorithm on the universal ambient space $\models$ and assume that it is $\psi$-rarely switching. 
        Let $\model \in \models$ be a model such that $\gain_* < 1$ and fix $\arm \in \arms$ a sub-optimal arm in $\model$.
        For $\epsilon > 0$, consider $\model^\epsilon$ the model in which $\model_\arm^1$ (the Markov reward process of arm $\arm$ induced by $\model_\arm$ upon playing the activating action $1$ exclusively) is changed according to \Cref{lemma_model_perturbation_absorbing}, i.e., upon activating arm $\arm$, the associated Markov chain eventually falls into an absorbing state of maximal reward.
        By definition of $\model^\epsilon$ (\Cref{lemma_model_perturbation_absorbing}), we see that 
        \begin{equation}
        \label{equation_lowerbound_dependent_universal_1}
        \begin{gathered}
            \EE^{\model, \learner} \brackets*{
                \sum_{t=1}^T 
                \KL\parens*{
                    \parens*{
                        \rewardDistribution_{\Arm(t)} 
                        \otimes
                        \kernel_{\Arm(t)} 
                    }
                    \parens*{\State_{\Arm(t)}(t)}
                    \middle \|
                    \parens*{
                        \rewardDistribution^\epsilon_{\Arm(t)} 
                        \otimes
                        \kernel^\epsilon_{\Arm(t)} 
                    }
                    \parens*{\State_{\Arm(t)}(t)}
                }
            }
            \\
            \le
            \epsilon \EE^{\model, \learner}[\visits_\arm (T)],
        \end{gathered}
        \end{equation}
        where $\visits_\arm (T) := \sum_{t=1}^{T-1} \indicator{\Arm(t) = \arm}$ is the number of visits of arm $\arm$ up to time $T$.
        Consider the random variable 
        \begin{equation*}
            X_T := T - \visits_\arm (T)
        \end{equation*}
        tracking the number times $\arm$ isn't played.
        Combining \Cref{equation_lowerbound_dependent_universal_1} with standard information theory \cite[Corollary~C.3]{boone_regret_2025}, we have
        \begin{equation}
        \label{equation_lowerbound_dependent_universal_3}
            \log \EE^{\model^\epsilon, \learner}[X_T]
            + 
            \epsilon \EE^{\model, \learner}[\visits_\arm (T)]
            \ge
            \EE^{\model, \learner} [\log(X_T)]
            .
        \end{equation}
        Our objective is to rework \Cref{equation_lowerbound_dependent_universal_3} to obtain a lower bound on $\visits_{\arm}(T)$.
        Using that $X_t$ tracks the number of times $\arm$ isn't played, and that $\arm$ is optimal (respectively sub-optimal) in $\model^\epsilon$ (respectively $\model$), we rely on the consistency of the algorithm $\learner$ to show that $\EE^{\model, \learner}[\log(X_T)] \sim \log(T)$, while $\log \EE^{\model^\epsilon, \learner}[X_T] = \oh(\log(T))$. 

        On the one hand, $\arm$ is the unique optimal arm in $\model^\epsilon$ by construction.
        As $\regret(T; \model^\epsilon, \learner) = \oh(T^\eta)$ when $T \to \infty$, we deduce $\log \EE^{\model^\epsilon, \learner}[X_T] \le \eta \log(T) + \oh(\log(T))$, where $\eta > 0$ is arbitrary.
        This is proved as follows.
        Binding the regret with the number of suboptimal pulls, we know from \Cref{proposition_regret_and_pseudo_regret_appendix} that there is a positive smooth concave function $\varphi$ such that 
        \begin{equation*}
            \EE^{\model^\epsilon, \learner}
            \brackets*{
                \sum_{\arm' \in \arms} 
                \gap_{\arm'}(\model^\epsilon) 
                \visits_{\arm'}(T)
            }
            \le 
            \regret(T; \model^\epsilon, \learner)
            + 
            \varphi \parens*{
                \EE^{\model^\epsilon, \learner}
                \brackets*{
                    \sum_{\arm' \in \arms} 
                    \gap_{\arm'}(\model^\epsilon) 
                    \visits_{\arm'}(T)
                }
            }.
        \end{equation*}
        Now, letting $\gap_0 (\model^\epsilon) := \min \braces*{\gap_{\arm'} (\model^\epsilon) : \arm' \ne \arm}$ the smallest sub-optimal gap of $\model^\epsilon$, we have 
        \begin{equation*}
            \gap_0 (\model^\epsilon)
            X_T
            \le
            \sum_{\arm' \in \arms} 
            \gap_{\arm'}(\model^\epsilon) 
            \visits_{\arm'}(T)
            \le
            X_T
            .
        \end{equation*}
        Combining the two previous equation and using that $\varphi$ is non-increasing, we find that for all $\eta > 0$, we have
        \begin{equation*}
            \EE^{\model^\epsilon, \learner}[X_T] - 
            \frac 1{\gap_0 (\model^\epsilon)}
            \varphi \parens*{
                \EE^{\model^\epsilon, \learner}[X_T] 
            }
            \le
            \regret(T; \model^\epsilon, \learner)
            = \oh(T^\eta).
        \end{equation*}
        As $\varphi$ is a sublinear function, it follows that $\EE^{\model^\epsilon, \learner}[X_T] = \oh(T^\eta)$.
        So $\log \EE^{\model^\epsilon, \learner}[X_T] \le \eta \log(T) + \oh(\log(T))$.
        This holds for all $\eta > 0$ and $X_t$ is non-negative, so that $\log \EE^{\model^\epsilon, \learner}[X_T] = \oh(\log(T))$.

        Continuing by proving that $\EE^{\model, \learner} [\log (X_T)] = \log(T) + \oh(\log(T))$.
        Using the same technique as above, we show that because the arm $\arm$ is sub-optimal in $\model$, we have $\EE^{\model, \learner}[T - X_T] = \EE^{\model, \learner} [\visits_\arm (T)] = \oh(T^\eta)$.
        So, we have
        \begin{align*}
            \EE^{\model, \learner} [\log(X_T)]
            & \overset{\eqnum{1}}\ge
            \sup_{\lambda > 0} \braces*{
                \lambda 
                \Pr^{\model, \learner}(\log(X_T) \ge \lambda)
            }
            \\
            & \ge
            \sup_{\delta > 0} \braces*{
                \log\parens*{\delta T}
                \parens*{1 - \Pr^{\model, \learner}\parens*{X_T \le \delta T}}
            }
            \\
            & \overset{\eqnum{2}}\ge
            \sup_{\delta > 0} \braces*{
                \log\parens*{\delta T}
                \parens*{
                    1 - \frac{
                        \EE^{\model, \learner}[T - X_T]
                    }{(1 - \delta) T}
                }
            }
            \\
            & \overset{\eqnum{3}}\ge
            \log\parens*{\frac 12 T} \parens*{
                1 - \oh\parens*{T^{- \frac 12}}
            }
            = \log(T) + \oh(\log(T)),
        \end{align*}
        where 
        \texteqnum{1} and \texteqnum{2} follow from Markov's inequality; and
        \texteqnum{3} sets $\delta \equiv \frac 12$ and uses $\EE^{\model, \learner}[T - X_T] = \oh(T^{1/2})$.
        
        Going back to \Cref{equation_lowerbound_dependent_universal_3} and injecting everything, we finally obtain
        \begin{equation}
        \label{equation_lowerbound_dependent_universal_4}
            \EE^{\model, \learner} \brackets*{
                \visits_\arm (T)
            }
            \ge
            \frac{\log(T) + \oh(\log(T))}{\epsilon}
            .
        \end{equation}
        As $\epsilon > 0$ is arbitrary, we conclude accordingly that $\EE^{\model, \learner}[\visits_\arm (T)] = \omega(\log(T))$.
        Binding the regret to the number of visits of suboptimal arms again with \Cref{proposition_regret_and_pseudo_regret_appendix}, we conclude that $\regret(T; \model, \learner) = \omega(\log(T))$.        
    \end{proof}

    \subsection{Proof of \texorpdfstring{\Cref{theorem_linear_worst_case}}{Theorem 7}: Worst-case lower bound}
    \label[appendix]{appendix_lowerbound_worst_case}

    In this section, we provide a complete proof of \Cref{theorem_linear_worst_case}, restated below for convenience. 

    \textbf{\Cref{theorem_linear_worst_case}.}
    {
        \itshape
        Let $\learner$ be a learning algorithm. 
        Let $\activationset \subseteq [0,1]$ be arbitrary. 
        Then
        \begin{equation}
            \forall T \ge 1,
            \quad
            \sup_{\model \in \models}
            \regret(T; \model, \learner)
            \ge 
            \frac {11}{4800} \cdot T.
        \end{equation}
    }

    \begin{proof}
        We introduce the following constants:
        \begin{equation}
        \label{equation_linear_worst_case_1}
            \alpha := 12,
            \quad
            \lambda := \frac 1{10},
            \quad
            \delta := \frac 1{12} \log \parens*{\frac{12}{11}},
            \quad
            c := \frac{11}{4800}
            .
        \end{equation}
        Consider the arm space $\arms := \braces{1, 2}$ and let $\model \equiv (\states, \arms, \rewardDistribution, \kernel^0, \kernel^1)$ be the Bernoulli multi-armed bandits with arms $\arms$ and mean reward $\reward_1 = \frac 12$ and $\reward_2 = 0$.
        That is, $\states_\arm = \braces{1}$ for all arm, with $\rewardDistribution_1 (1) = \mathrm{Ber}(\frac 12)$ and $\rewardDistribution_2 (1) = \mathrm{Ber}(0)$.
        Write $\Pr,\EE$ the probability (resp.~the expectation) under $\model$.
        We see that $\arm = 1$ is the optimal arm of $\model$ and that, in $\model$, both the expected regret and pseudo-regret are equal to $\frac 12 \EE[\visits_2 (T)]$. 
        
        Let $\epsilon \equiv \alpha / T$ with $\alpha := 12$ as given by \Cref{equation_linear_worst_case_1}.
        Consider the alternative model $\model^\epsilon$ where the model of arm $2$ undergoes the idealistic transform of \Cref{lemma_model_perturbation_absorbing}, that is, an extra state is added and $\kernel^{\epsilon, 1}_2$ goes to that state with probability $\epsilon/2$. 
        Write $\Pr^\epsilon,\EE^\epsilon$ the probability (resp.~the expectation) under $\model^\epsilon$.
        We see that $\gain_2 (\model^\epsilon) = 1$.
        Moreover, the reaching time to the idealistic state is $1/(2\epsilon)$, while one scores $0$ instead of $1$, so $\vecspan(\bias_2^\epsilon) \le 2/\epsilon$.
        Also, we have 
        \begin{equation*}
            \KL(\Pr||\Pr^\epsilon) \le \epsilon \EE[\visits_2(T)].
        \end{equation*}
        The proof is split on whether $\EE[\visits_2(T)] \le \delta T$ or not, and proves that in both cases, we have
        \begin{equation}
        \label{equation_linear_worst_case_2}
            \max \braces*{
                \regret(T; \model),
                \regret(T; \model^\epsilon)
            }
            \ge 
            c T.
        \end{equation}
        First, if $\EE[\visits_2 (T)] > \delta T$ then $\regret(T; \model) \ge T/(2\delta) \ge c T$, hence \Cref{equation_linear_worst_case_2}.
        
We now focus on the case $\EE[\visits_2(T)] \le \delta T$.
        By Bretagnolle-Huber inequality (\Cref{theorem_bretagnolle_huber}), we have
        \begin{equation}
            \Pr \parens*{
                \visits_2 (T) \ge \lambda T
            }
            + 
            \Pr^\epsilon \parens*{
                \visits_1 (T) \ge (1 - \lambda) T
            }
            \ge
            \frac 12 \exp \parens*{
                - \epsilon \delta T
            }
            =
            \frac 1{2\ee^{\delta \alpha}}.
        \end{equation}
        We have one out of two cases.
        \begin{itemize}
            \item Case 1: $\Pr (\visits_2 (T) \ge \lambda T) \ge \lambda /(2 \ee^{\delta \alpha})$, in which case we trivially obtain 
            \begin{equation}
                \regret(T; \model)
                = 
                \frac 12 \EE[\visits_2 (T)]
                \ge
                \frac 12 \cdot \lambda T \cdot \frac {\lambda}{2 \ee^{\delta \alpha}}
                =
                c T.
            \end{equation}

            \item Case 2: $\Pr (\visits_1 (T) \ge (1 - \lambda) T) \ge (1 - \lambda)/(2 \ee^{\delta \alpha})$.
            In $\model^\epsilon$, the gain gap between arm $1$ and arm $2$ is $1/2$.
            So $\EE^\epsilon[\pseudoregret^\epsilon(T)] \ge (1 - \lambda) / (4 \ee^{\delta \alpha}) \cdot T$ for the same reason as in Case 1.
            To link this to the regret, we use $\regret(T; \model^\epsilon) \ge \EE^\epsilon[\pseudoregret^\epsilon(T)] - 2/\epsilon$, that holds by \Cref{corollary_regret_simple_expression}.
            We have
            \begin{equation}
                \regret(T; \model^\epsilon) 
                \ge 
                \parens*{
                    \frac{(1-\lambda)^2}{4 \ee^{\delta \alpha}}
                    - \frac 2{\alpha}
                } T
                \ge 
                \parens*{
                    \frac{1-2\lambda}{4 \ee^{\delta \alpha}}
                    - \frac 2{\alpha}
                } T
                = \frac 1{60} \cdot T
                \ge c T.
            \end{equation}
        \end{itemize}
    This proves the result.
\end{proof}

    \ifSubfilesClassLoaded{
        
        \bibliographystyle{plainnat}
        \bibliography{biblio}
    }{}

\end{document}

\section{Regret upper bounds (\texorpdfstring{\Cref{theorem_upperbound_f_diverging,theorem_upperbound_log,theorem_worst_case_sqrt}}{Theorems 6, 8 and 9})}

    \ifSubfilesClassLoaded{
        \allowdisplaybreaks
        \onecolumn
        
        In this file, we provide upper bounds for the instance-dependent and worst case regret of UCB-NOM. 
    }{}

    In this Appendix, we provide the proofs of the regret upper bounds of the main text.
    In \Cref{appendix:upper_bound_instance_dependent}, we prove \Cref{theorem_upperbound_f_diverging}, which gives the  instance dependent bound.
We improve on this bound in \Cref{appendix_instance_dependent_bounded} when restricting to bounded span proxy, that is,  \Cref{theorem_upperbound_log}. 
    Lastly, in \Cref{appendix_worst_case_upper}, we establish the worst-case regret bound of \Cref{theorem_worst_case_sqrt}.

    \paragraph{Additional notations and terminology.}
    Throughout the section, we introduce a supplementary set of notations.
    Given an arm $\arm$, we write $(\tau^\arm_k)$ the enumeration of pulls of arm $\arm$, given by $\tau^\arm_1 := \inf \braces{t \ge 1: \Arm(t) = \arm}$ and $\tau^\arm_{k+1} := \inf \braces{t > \tau^\arm_k: \Arm(t) = \arm}$.
    We further write $\hat{\gain}_{\arm, n}$ the empirical estimate $\hat{\gain}_\arm (t)$ conditionally on $\visits_\arm (t) = n$.
    Formally, 
    \begin{equation}
        \hat{\gain}_{\arm, n}
        :=
        \frac 1n 
        \sum_{k=1}^n 
        \Reward(\tau^\arm_k+1)
        .
    \end{equation}
    Also, we will call \emph{episode $k$} the time segment $\braces*{t_k, \ldots, t_{k+1}-1}$ over which \algname{} plays the fixed arm $\Arm(t_k)$.
    The \emph{number of episodes} up to time $T$ is $K(T) := \sum_{k=1}^\infty \indicator{t_k \le T}$.

    \subsection{Proof of \texorpdfstring{\Cref{theorem_upperbound_f_diverging}}{Theorem 6}: Instance-dependent regret upper bound}
    \label[appendix]{appendix:upper_bound_instance_dependent}
    
    In this section, we provide a general upper bound for the instance dependent regret of \algname{}, when no information on the bias span of the underlying instance is provided.
    In that scenario, \algname{} is fed a span proxy function $\spanproxy$ that diverges to $+\infty$.
    The regret bound is sketched in   \Cref{theorem_upperbound_f_diverging} of the main text, which essentially claims that the performance scale as $\OH(\spanproxy(T)^2 \log(T))$. 
    We give a more detailed version of \Cref{theorem_upperbound_f_diverging} in \Cref{theorem_complete_upperbound_f_diverging}, incorporating second-order terms in the bound.
    
    \begin{theorem}[Instance-dependent regret of \algname]
    \label{theorem_complete_upperbound_f_diverging}
        Let $\spanproxy : \RR_+ \to \RR_+$ an increasing function diverging to infinity such that $\spanproxy(T) = \oh(\log(T)^{\frac 14})$. 
        Let $\delta : \NN \to \RR_+$ be a confidence function such that $\sum_{t=1}^\infty \delta(t) < \infty$ and $\log (\frac 1{\delta(t)}) = \Theta(\log(t))$.
        For all instance $\model \in \models$, $\regret(T) \equiv \regret(T; \model, \algname(\spanproxy))$ is bounded as
        \begin{align*}
            \regret(T)
            & = \OH \parens*{
                \sum_{\arm \notin \arms^*} \parens*{
                    \frac{
                        \spanproxy(T)^2
                        \log(T)
                    }{
                        \Delta_\arm
                    }
                    + \frac{\spanproxy(T)^4}{\Delta_\arm^3}
                }
                + \abs{\arms}
                \max_{\arm \in \arms}
                \vecspan(\bias_\arm) \log(T) 
            }
            & \parens*{\hspace{-.5em}\begin{tabular}{c} \scshape diverging \\ \scshape terms \end{tabular}\hspace{-.5em}}
            \\ 
            & \phantom{{} = {}}
            + \OH \parens*{
                \sum_{\arm \notin \arms^*} 
                \Delta_\arm \parens*{
                    (\beta_f)^2
                    + \strongdelayconstant \parens*{
                        \beta_f 
                        + \frac 1{\Delta_\arm^4}
                        + \parens*{
                            \frac{1 + \vecspan(\bias_\arm)}{\Delta_\arm}
                        }^2
                    }
                    + \parens*{
                        \frac{1 + \vecspan(\bias_\arm)}{\Delta_\arm}
                    }^4
                }
            }
            \; ,
            & \parens*{\hspace{-.5em}\begin{tabular}{c} \scshape constant \\ \scshape terms \end{tabular}\hspace{-.5em}}
        \end{align*}
        where $\arms^* := \arg \max_{\arm \in \arms} \gain_\arm$ is the set of optimal arms, $\Delta_\arm := \gain_* - \gain_\arm$ is the gain-gap and $\beta_f$ is the same as in \Cref{theorem_upperbound_f_diverging}.
        Moreover, if there is a single optimal arm, $\abs{\arms^*} = 1$, the bound of $\regret(T)$ is improved to
        \begin{align*}
            \regret(T)
            & = \OH \parens*{
                \sum_{\arm \notin \arms^*} \parens*{
                    \frac{
                        \spanproxy(T)^2
                        \log(T)
                    }{
                        \Delta_\arm
                    }
                    + \frac{\spanproxy(T)^4}{\Delta_\arm^3}
                }
                + \abs{\arms}
                \max_{\arm \in \arms}
                \vecspan(\bias_\arm) \log \parens*{
                    \spanproxy(T)^2 \log(T)
                } 
            }
            & \parens*{\hspace{-.5em}\begin{tabular}{c} \scshape diverging \\ \scshape terms \end{tabular}\hspace{-.5em}}
            \\ 
            & \phantom{{} = {}}
            + \OH \parens*{
                \sum_{\arm \notin \arms^*} 
                \Delta_\arm \parens*{
                    (\beta_f)^2
                    + \strongdelayconstant \parens*{
                        \beta_f 
                        + \frac 1{\Delta_\arm^4}
                        + \parens*{
                            \frac{1 + \vecspan(\bias_\arm)}{\Delta_\arm}
                        }^2
                    }
                    + \parens*{
                        \frac{1 + \vecspan(\bias_\arm)}{\Delta_\arm}
                    }^4
                }
            }
            \; .
            & \parens*{\hspace{-.5em}\begin{tabular}{c} \scshape constant \\ \scshape terms \end{tabular}\hspace{-.5em}}
        \end{align*}
    \end{theorem}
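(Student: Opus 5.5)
The plan is the usual optimism analysis of \texttt{UCB2}, run at the level of episodes, together with the regret/pseudo-regret link of \Cref{lemma_regret_and_switches} to pay for switches.

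\emph{Step 1: reduce to counting pulls.} By \Cref{lemma_regret_and_switches},
\[
\regret(T) \le \EE[\pseudoregret(T)] + \max_\arm \vecspan(\bias_\arm)\,\bigl(1+\EE[\#\text{switches}]\bigr).
\]
Because of the doubling rule, every episode on arm $\arm$ after the first at least doubles $\visits_\arm$. The episode may overshoot until the next activation time, which by \Cref{assumption_strong_decision_epochs} costs at most $\strongdelayconstant$ extra pulls in expectation. So the number of episodes on arm $\arm$ is $\OH(\log(1+\visits_\arm(T)))$, and the number of switches is $\OH(\abs{\arms}\log T)$. This produces the term $\abs{\arms}\max_\arm\vecspan(\bias_\arm)\log(T)$.

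When the optimal arm is unique, every switch involves a suboptimal arm. The number of switches is then $\OH\bigl(\sum_{\arm\notin\arms^*}\log \EE[\visits_\arm(T)]\bigr)$ by Jensen, and once the pull bound of Step 3 is plugged in, this gives the improved $\log(\spanproxy(T)^2\log T)$ term.

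\emph{Step 2: good event and burn-in.} Work on the event $\mathcal{G}$ on which \Cref{lemma_estimation_error} holds with confidence $\delta(t_k)$ at every episode start $t_k$. By a union bound with $\sum_t\delta(t)<\infty$, the complement contributes a constant. Finer handling of $\mathcal{G}^c$ should produce the $\bigl((1+\vecspan(\bias_\arm))/\Delta_\arm\bigr)^4$ constants; the plan is to bound the expected number of pulls of $\arm$ on $\mathcal{G}^c$ by a sum over $n$ of $\Pr(\text{the deviation at count } n \text{ is large})$, using the time-uniform inequality at level $\delta(n)$.

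For episodes starting before $\beta_f$, regret is at most $\Delta_\arm$ per pull. The pulls of arm $\arm$ started before $\beta_f$ are at most about $2\beta_f+\strongdelayconstant$ per arm because of doubling. I expect the $(\beta_f)^2$ term comes from a cruder count: on $\mathcal{G}$, optimism can fail until time $\beta_f$, and an episode launched just before $\beta_f$ may last for time of order the current count, itself at most $\beta_f$. I will accept this loss.

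\emph{Step 3: optimism bounds the pulls of suboptimal arms.} After $\beta_f$ and on $\mathcal{G}$, the index of \eqref{equation_optimistic_estimate} satisfies $\algindex_{\arm^*}(t_k)\ge\gain_*$ (\Cref{equation_optimistic_index}). If a suboptimal arm $\arm$ is selected at $t_k$ with $\visits_\arm(t_k)=n$, then, combining the index with \Cref{lemma_estimation_error},
\[
\Delta_\arm \le \frac{2\spanproxy(t_k)\,\errorfunction_\arm(t_k,\delta)}{n}
\le \frac{2\spanproxy(T)\bigl(\sqrt{n\log((1+n)/\delta)}+\OH(\log n)\bigr)}{n}.
\]
The switch count inside $\errorfunction_\arm$ is $\OH(\log n)$ by doubling. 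Solving for $n$ gives
\[
n \lesssim \frac{\spanproxy(T)^2\log(1/\delta(T))}{\Delta_\arm^2} + \frac{\spanproxy(T)\log n}{\Delta_\arm}.
\]
The first term, with $\log(1/\delta(T))=\Theta(\log T)$, is the leading one. The second term, after $\spanproxy(T)=\oh(\log(T)^{1/4})$ is used to absorb the $\log n$ inside the square root, gives the $\spanproxy(T)^4/\Delta_\arm^3$ contribution.

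The last episode can at most double $n$, plus $\strongdelayconstant$ overshoot per episode, which gives the $\strongdelayconstant$-weighted constants. Multiplying by $\Delta_\arm$ and summing over arms gives the pseudo-regret bound.

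\emph{Main obstacle.} The delicate step is the self-referential inequality in $n$. In it, $\log(1/\delta(t_k))$ depends on the time rather than on $n$, and the switch count sits in the bonus linearly rather than under the square root. The first thing I would try is to bound the switch count by $\log_2 n+1$, which is valid on every episode except the first on that arm. I would then solve the quadratic in $\sqrt n$ explicitly and keep the cross terms, which is where the $\spanproxy^4/\Delta^3$ and $\Delta^{-4}$ constants arise.
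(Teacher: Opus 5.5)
The gap is in your Step 2, and all of Step 3 rests on it. The union bound gives only $\Pr(\mathcal{G}^c)\le\abs{\arms}\sum_t\delta(t)$. That is a finite constant, but it need not be below $1$ and it does not shrink with $T$. On $\mathcal{G}^c$ the pulls of a suboptimal arm are bounded only by $T$, so what you actually obtain is $\EE[\visits_\arm(T)\indicator{\mathcal{G}^c}]\le T\,\abs{\arms}\sum_t\delta(t)$, which is linear in $T$, not constant. The complement has to be handled episode by episode, and your proposed refinement does not do this correctly. Under the doubling rule, an episode of $\arm$ started at count $n$ lasts at least $n$ pulls, so a bad event at count $n$ must be charged with weight $n$. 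If that event is calibrated at level $\delta(n)$, the total charge is $\sum_n n\,\delta(n)$, which the hypotheses do not make finite: $\delta(t)=t^{-3/2}$ satisfies $\sum_t\delta(t)<\infty$ and $\log(1/\delta(t))=\Theta(\log t)$, yet $\sum_n n\,\delta(n)=\infty$. This kind of charging is exactly what \Cref{theorem_worstcase_upperbound} affords only under the stronger assumption $\sum_t t\,\delta(t)<\infty$. Your sketch also does not say how to charge episodes of $\arm$ triggered by the optimal arm's index falling below $\gain_*$. Their cost is counted in pulls of $\arm$, while the deviation happens at the count of $\arm^*$.

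The missing idea is the fixed slack used in \Cref{appendix:lemma_order_visits_f_diverging}. Take $\epsilon=\Delta_\arm/4$ and attribute each episode of $\arm$ starting at $t_\ell^\arm$ to one of four cases:
(I) $\hat{\gain}_\arm(t_\ell^\arm)>\gain_\arm+\epsilon$;
(II) $\tilde{\gain}_{\arm^*}(t_\ell^\arm)<\gain_*-\epsilon$;
(III) $\visits_\arm(t_\ell^\arm)\le\chi_\epsilon(T)$;
(IV) none of these, which cannot happen by the choice of $\chi_\epsilon(T)$.
Event (I) does not involve $\delta$ at all. \Cref{lemma_gain_concentration_per_sample} gives it a tail of order $\exp(-n\epsilon^2/(1+\vecspan(\bias_\arm))^2)$, up to the $\log_2(1+n)$ switching correction. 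Hence both $\sum_n n\,\Pr(\hat{\gain}_{\arm,n}>\gain_\arm+\epsilon)$ and the overshoot part $\strongdelayconstant\sum_n\Pr(\hat{\gain}_{\arm,n}>\gain_\arm+\epsilon)$ are finite. These terms produce the $((1+\vecspan(\bias_\arm))/\Delta_\arm)^4$ and $\strongdelayconstant$-weighted constants in the statement, which no $\delta$-calibrated good event will give you. In (II), once $t\ge\beta_\spanproxy$, the slack adds a factor that is exponentially small in the count of $\arm^*$, on top of $\delta(t)$. When you write this term, keep the episode cost (a count of $\arm$) separate from that count of $\arm^*$. Because $\spanproxy$ diverges, this tail eventually falls below any fixed power of $\delta(t)$, and that is what lets you afford a cost of order $t$ per failure. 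With this decomposition, your Step 3 becomes the paper's term (III), and your Step 1 matches its conversion through \Cref{lemma_regret_and_switches} and \Cref{proposition_regret_and_pseudo_regret_appendix}.
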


    \paragraph{About \Cref{theorem_upperbound_f_diverging}.}
    \Cref{theorem_upperbound_f_diverging} follows immediately by extracting the dominant diverging term and selecting the constant term $\sum_{\arm \notin \arms^*} \Delta_\arm (\beta_f)^2$.
    This term is independent of $\Delta_\arm$, and grows larger the slower is $\spanproxy$ diverging to infinity.
    In particular, $(\beta_f)^2$ can be significantly large if $\spanproxy$ is poorly initialized with respect to $\model$.

    \paragraph{Discussion.}
    Remarkably, we can provide better regret guarantees when the optimal arm is unique. 
    In cases where there are multiple optimal arms, the third order term of the regret upper bound scales with the number of arm switches (which is bounded by $\OH(\abs{\arms} \log(T))$, see \Cref{lemma_number_of_episodes}). 
    However, the dependency in the number of arm switches is pushed to the second order when the optimal arm is unique --- this is because when $\abs{\arms^*}=1$, the total number of switches is provably $\OH(\log(\spanproxy(T) \log(T)))$.

    \paragraph{About notations.}
    In the remaining of the section, we fix an instance $\model \in \models$, a span-proxy function $\spanproxy$ and the algorithm $\learner \equiv \algname(\spanproxy)$.
    Now that they are all fixed, we drop the dependency in all these quantities in notations.
    In particular, we write $\regret(T), \EE[-]$ and $\Pr(-)$ instead of $\regret(T; \model, \learner), \EE^{\model, \learner}[-]$ and $\Pr^{\model, \learner}(-)$. 

    \paragraph{Idea of the proof.}
    To bound the regret, we bound the pseudo-regret (\Cref{equation_pseudo_regret}), then convert back the pseudo-regret bound to a regret bound using \Cref{proposition_regret_and_pseudo_regret_appendix}.
    To bound the pseudo-regret, we upper-bound the number of pulls to every suboptimal arms via \Cref{appendix:lemma_order_visits_f_diverging} down-below, that incarnates most of the difficulties behind \Cref{theorem_complete_upperbound_f_diverging}.

    Let $\epsilon < \frac 12 \min_{\arm \notin \arms^*} (\gain_* - \gain_\arm)$ be an arbitrary (but small enough) constant. 

    \begin{lemma}[Number of visits, $\spanproxy$ diverging]
        \label[lemma]{appendix:lemma_order_visits_f_diverging}
        Consider a suboptimal arm $\arm \notin \arms^*$.
        Then $\EE[\visits_\arm (T)]$ is of order:
        \begin{equation}
            \frac{
                \spanproxy(T)^2
                \log(T)
            }{
                \parens{\gain_* - \gain_\arm - 2 \epsilon}^2
            }
            + \parens*{
                \frac{\spanproxy(T)}{\gain_* - \gain_\arm - 2 \epsilon}
            }^4
            + (\beta_f)^2
            + \strongdelayconstant \parens*{
                \beta_f 
                + \frac 1{\epsilon^4}
                + \parens*{
                    \frac{1 + \vecspan(\bias_\arm)}{\epsilon}
                }^2
            }
            + \parens*{
                \frac{1 + \vecspan(\bias_\arm)}{\epsilon}
            }^4
            .
        \end{equation}
    \end{lemma}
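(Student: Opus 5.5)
We bound $\EE[\visits_\arm(T)]$ with a UCB2-style episode argument. Arm $\arm$ can only accumulate pulls through episodes that start with $\Arm(t_k)=\arm$. Because of the doubling rule, each such episode at most doubles $\visits_\arm$, up to the overshoot caused by waiting for the next decision epoch. By \Cref{assumption_strong_decision_epochs}, this overshoot costs at most $\strongdelayconstant$ in expectation per episode. It therefore suffices to find a threshold $n_0$ and show that, on a good event, no episode on $\arm$ can start once $\visits_\arm(t_k)\ge n_0$. This gives $\visits_\arm(T)\le 2n_0$ plus the overshoot, plus the contribution of bad events.

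We work with three events. The first, $\mathcal{E}_1(t)$, says the optimal arm's index is valid: $\algindex_{\arm^*}(t_k)\ge\gain_*-\epsilon$. The second, $\mathcal{E}_2(t)$, says arm $\arm$'s empirical estimate concentrates: $\hat\gain_\arm(t_k)\le\gain_\arm+\epsilon+D\errorfunction_\arm/\visits_\arm$. Both come from \Cref{lemma_estimation_error}, applied with confidence $\delta(t)$ and valid once $t\ge\beta_f$, so that $\spanproxy(t)\ge D$, as in \Cref{equation_optimistic_index}. On these events, picking $\arm$ at $t_k$ forces
\begin{equation*}
\gain_*-\epsilon\le\algindex_\arm(t_k)\le\gain_\arm+\epsilon+2\spanproxy(t_k)\frac{\errorfunction_\arm(t_k,\delta)}{\visits_\arm(t_k)}.
\end{equation*}
Next we bound the switch count in $\errorfunction_\arm$. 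The doubling trick gives at most $\log_2(1+\visits_\arm)$ switches onto $\arm$, so $\errorfunction_\arm\lesssim\sqrt{n\log(T/\delta)}+\log n$ with $n=\visits_\arm(t_k)$. Solving the inequality above yields
\begin{equation*}
n\lesssim\frac{\spanproxy(T)^2\log(T)}{(\Delta_\arm-2\epsilon)^2}+\Bigl(\frac{\spanproxy(T)}{\Delta_\arm-2\epsilon}\Bigr)^4.
\end{equation*}
We use $\log(1/\delta(t))=\Theta(\log t)$ for the first term, and the linear-in-$\log n$ switching term is absorbed crudely into the quartic term. This gives $n_0$.

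Finally we account for the bad events. Before time $\beta_f$ the index may be invalid, and arm $\arm$ gets at most $\OH(\beta_f)$ pulls per episode over at most $\OH(\log\beta_f)$ episodes. We bound this crudely by $(\beta_f)^2$, plus an overshoot of $\strongdelayconstant\beta_f$. After $\beta_f$, failures of $\mathcal{E}_2$ have total probability summable by $\sum_t\delta(t)<\infty$, contributing $\OH(1)$ episodes each costing at most a doubling.

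The main obstacle is the failure of $\mathcal{E}_1$ for $\arm^*$. The index of $\arm^*$ carries $\spanproxy(t)\ge D$ times the error, so it is invalid only with probability $\delta$. Even so, a failure at time $t$ can lead to a long episode on $\arm$ of length comparable to $\visits_\arm$, so summability of $\delta$ alone does not close the argument. Instead we separate the bias-driven error: use $\epsilon$-concentration without the span proxy, where the deviation $\visits_{\arm^*}|\hat\gain_{\arm^*}-\gain_*|>\epsilon\visits_{\arm^*}$ has tail controlled once $\visits_{\arm^*}\gtrsim((1+\vecspan(\bias))/\epsilon)^2$. We then bound the expected number of pulls of $\arm$ caused by such deviations via a peeling argument over the doubling episodes. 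This produces the constant terms $((1+\vecspan(\bias_\arm))/\epsilon)^4$ and $\strongdelayconstant(\epsilon^{-4}+((1+\vecspan(\bias_\arm))/\epsilon)^2)$ in the statement. I expect this peeling step to be the delicate computation.
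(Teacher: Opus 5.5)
Your architecture is the paper's: the threshold $n_0\approx\chi_\epsilon(T)$ from the selection inequality, one doubling plus one $\strongdelayconstant$ overshoot, a $(\beta_f)^2+\strongdelayconstant\beta_f$ burn-in, and separate control of $\arm$'s empirical error and of $\arm^*$'s optimism failure. The gap is the optimism-failure term, which you flag but do not close, and the tool you propose cannot close it.

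An $\epsilon$-margin tail for $\hat\gain_{\arm^*}$ with the span-proxy bonus dropped decays only in $m=\visits_{\arm^*}(t_k)$. It is of constant order for small $m$, and it says nothing about $\visits_\arm(t_k)$, which is what a failure costs. What limits the damage is exactly the bonus you discard: it is non-decreasing in $t$, and $t_k>\visits_\arm(t_k)$.

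Concretely, let $t_k\ge\beta_f$ start an $\arm$-episode with $\visits_\arm(t_k)=n$. Since $\spanproxy$ is non-decreasing and $\delta$ is non-increasing, a failure at $t_k$ implies
\begin{equation*}
F_n:=\left\{\exists m\ge1:\ \hat\gain_{\arm^*,m}<\gain_*-\epsilon-\spanproxy(n')\sqrt{\log\big((1+m)/\delta(n')\big)/m}\right\},\qquad n':=\max\{n,\beta_f\}.
\end{equation*}
\Cref{lemma_gain_concentration_per_sample}, summed over $m$ thanks to the $\epsilon$ margin, gives $\Pr(F_n)=\OH\big(\delta(n')\,((1+\vecspan(\bias_{\arm^*}))/\epsilon)^2\big)$. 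The events $F_n$ shrink as $n$ grows. The doubling trick allows at most one $\arm$-episode start per dyadic block $[2^j,2^{j+1})$ of $\visits_\arm$. Hence the expected damage is $\OH\big(\beta_f+((1+\vecspan(\bias_{\arm^*}))/\epsilon)^2\sum_j2^j\delta(2^j)\big)$, plus $\strongdelayconstant\sum_j\Pr(F_{2^j})$ for overshoots. Finally, $\sum_j2^j\delta(2^j)<\infty$ follows from $\sum_t\delta(t)<\infty$ by Cauchy condensation.

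Your worry is legitimate, because the paper's bound on (II.1) does not confront it either. After the union bound over $(t,n)$, it evaluates the bonus and the concentration of $\hat\gain_{\arm^*}$ at $n=\visits_\arm(t)$ instead of at $\visits_{\arm^*}(t)$. With the correct count, that union bound only gives $\sum_t t\,\delta(t)$, which may diverge. Still, without a peeling of the kind above, your proposal does not bound this term.

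The same accounting slip affects $\mathcal{E}_2$. ``$\OH(1)$ episodes in expectation, each costing at most a doubling'' does not give $\OH(1)$ pulls, because a doubling at $t_k$ costs $\visits_\arm(t_k)$ pulls. The bound $\Pr(\mathcal{E}_2(t)^c)\le\delta(t)$ then yields only $\sum_t t\,\delta(t)$, which diverges for, say, $\delta(t)=t^{-3/2}$.

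Here the repair is easy, and it is the paper's treatment of its term (I). The failure concerns $\arm$'s own estimate, so index it by $n=\visits_\arm(t_k)$; each value starts at most one $\arm$-episode. Then use the $\epsilon$-margin tail: $\sum_n n\,\Pr(\hat\gain_{\arm,n}>\gain_\arm+\epsilon)=\OH\big(((1+\vecspan(\bias_\arm))/\epsilon)^4\big)$, plus $\strongdelayconstant\sum_n\Pr(\hat\gain_{\arm,n}>\gain_\arm+\epsilon)$ for the overshoots. No $\delta$ is involved.
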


    \begin{proof}
        Denote $\spanbound := 1 + \max_{\arm \in \arms} \vecspan(\bias_\arm)$. 
        Fix $\epsilon > 0$ and $T \ge 1$, and introduce the function
        \begin{equation}
        \label{equation_visit_threshold}
            \chi_\epsilon (T)
            :=
            1 +
            \sup \braces*{
                n \ge 1
                :
                \sqrt{
                    \frac{\log((1+n)/\delta(T))}{n}
                }
                + \frac{\log_2 (1+n)}n
                \ge
                \frac{\gain_* - \gain_\arm - 2 \epsilon}{f(T)}
            }
            .
        \end{equation}
        We will upper bound this function in time. 
        
        We define the sequence of episodes \emph{relative to arm $\arm$} $(t_\ell^\arm)_{\ell \ge 1}$ as the episodes such that $\Arm(t_k) = \arm$, further clipped to $\braces{1, \ldots, T-1}$, i.e., 
        $
            t^\arm_1 
            := 
            \min \braces{\inf \braces{t_k : \Arm(t_k) = \arm}, T-1} 
            ~\mathrm{and}~
            t^\arm_{\ell+1} 
            := 
            \min \braces{\inf \braces*{t_k : \Arm(t_k) = \arm, t_k > t_\ell^\arm}, T-1}
            .
        $
        The number of visits to arm $\arm$ is decomposed as such.
        To determine why the algorithm pulls $\Arm(t) = \arm$, we look at what happens at the start of the current episode, i.e., at the unique time $t_\ell^\arm$ such that $t_\ell^\arm \le t \le t_\ell^\arm - 1$.
        We write,
        \begin{align} 
        \nonumber
            & \visits_\arm(T)
            =
            \sum_{t = 1}^{T-1}
            \indicator{\Arm(t)=\arm}
            \\ 
        \nonumber
            &  =
            \sum_{\ell = 1}^\infty
            \sum_{t = 1}^{\infty}
            \indicator{
                \substack{
                    \displaystyle
                    \Arm(t) = \arm
                    \\
                    \displaystyle
                    t_{\ell+1}^\arm \le t < t^\arm_{\ell+1}   
                }
            }
            \\ & = 
        \label{equation_visit_decomposition_0}
            \sum_{\ell = 1}^\infty
            \sum_{t = 1}^{\infty}
            \parens*{
                \smash[b]{
                    \underbrace{
                        \indicator{
                            \substack{
                                \displaystyle
                                \Arm(t) = \arm
                                \\
                                \displaystyle
                                t^\arm_{\ell} \le t < t^\arm_{\ell+1}
                                \\
                                \displaystyle
                                \hat \gain_{\arm}(t_\ell^\arm) 
                                > 
                                \gain_{\arm} + \epsilon
                            }
                        }
                    }_{
                        \substack{
                            \textsc{Empirical} \\ \textsc{Errors}
                            \\[.5em]
                            \displaystyle (\mathrm{I})
                        }
                    }
                }
                + \smash[b]{
                    \underbrace{
                        \indicator{
                            \substack{
                                \displaystyle
                                \Arm(t) = \arm
                                \\
                                \displaystyle
                                t^\arm_{\ell} \le t < t^\arm_{\ell+1}
                                \\
                                \displaystyle
                                \gain_{\arm^*} 
                                > 
                                \tilde{\gain}_{\arm^*}(t^\arm_\ell) + \epsilon
                            }
                        }
                    }_{
                        \substack{
                            \textsc{Optimism} \\ \textsc{Failures}
                            \\[.5em]
                            \displaystyle (\mathrm{II})
                        }
                    }
                }
                + \smash[b]{
                    \underbrace{
                        \indicator{
                            \substack{
                                \displaystyle
                                \Arm(t) = \arm
                                \\
                                \displaystyle
                                t^\arm_{\ell} \le t < t^\arm_{\ell+1}
                                \\
                                \displaystyle
                                \visits_\arm (t^\arm_\ell) \le \chi_\epsilon(T)
                            }
                        }
                    }_{
                        \substack{
                            \textsc{Optimistic} \\ \textsc{Pulls}
                            \\[.5em]
                            \displaystyle (\mathrm{III})
                        }
                    }
                }
                + \smash[b]{
                    \underbrace{
                        \indicator{
                            \substack{
                                \displaystyle
                                \Arm(t) = \arm
                                \\
                                \displaystyle
                                t^\arm_{\ell} \le t < t^\arm_{\ell+1}
                                \\
                                \displaystyle
                                \visits_\arm (t^\arm_\ell) > \chi_\epsilon(T)
                                \\
                                \displaystyle
                                \hat \gain_{\arm}(t_\ell^\arm)  
                                \le
                                \gain_{\arm} + \epsilon
                                \\
                                \displaystyle
                                \gain_{\arm^*} 
                                \le
                                \tilde{\gain}_{\arm^*}(t^\arm_\ell) + \epsilon
                            }
                        }
                    }_{
                        \substack{
                            \textsc{Late Bloomers}
                            \\[.5em]
                            \displaystyle (\mathrm{IV})
                        }
                    }
                }
            }
        \end{align}
        \vspace{1em} 
        
        In the above, we state that arm $\Arm(t) = \arm$ may be pulled for four reasons. 
        Either (I) the empirical estimate of arm~$\arm$ is far away from its true value $\gain_\arm$;
        Or (II) the suboptimal arm is pulled because the index of the optimal arm is not an upper bound of $\gain^*$, meaning that optimism failed;
        Or (III) the number of pulls is still small, so that the optimism bonus $\xi_\arm (t, \delta(t))$ (see \Cref{lemma_estimation_error}) is still large;
        Or finally (IV) optimism didn't fail, the empirical estimate of arm $\arm$ is well concentrated and the optimism bonus is small. 
        
        In what follows, we show that (I) accounts for $\OH(1)$ pulls in expectation, (II) for $\OH(1)$, (III) is of order $\OH(\spanproxy(T) \log(T))$ and dominates all the other terms, and (IV) is $0$ by design of \algname{} and of $\chi_\epsilon (T)$, see \Cref{equation_visit_threshold}.

        \paragraph{Empirical estimator failure (I).}
        We show that this term is bounded.
        \begin{align}
        \nonumber
            \EE \brackets*{
                \sum_{\ell=1}^{\infty}
                \sum_{t=1}^\infty
                (\mathrm{I})
            }
            & = 
            \EE \brackets*{
                \sum_{\ell=1}^{\infty}
                \sum_{t=1}^\infty
                \indicator{
                    \substack{
                        \displaystyle
                        \Arm(t) = \arm
                        \\
                        \displaystyle
                        t_{\ell}^\arm \le t \le t_{\ell+1}^\arm
                        \\
                        \displaystyle
                        \hat{\gain}_\arm (t_\ell^\arm)
                        > 
                        \gain_\arm + \epsilon
                    }
                }
            }
            \\
        \nonumber
            & \overset{\eqnum{1}}=
            \EE \brackets*{
                \sum_{\ell=1}^{\infty}
                \parens*{
                    \visits_{\arm}(t_{\ell+1}^\arm) 
                    - \visits_{\arm}(t_\ell^\arm)
                }
                \indicator{
                    \hat{\gain}_\arm (t_\ell^\arm)
                    > 
                    \gain_\arm + \epsilon
                }
            }
            \\
        \label{equation_upperbound_f_diverging_I_1}
            & \overset{\eqnum{2}}=
            \EE \brackets*{
                \sum_{\ell=1}^{\infty}
                \visits_{\arm}(t_\ell^\arm)
                \indicator{
                    \hat{\gain}_\arm (t_\ell^\arm)
                    > 
                    \gain_\arm + \epsilon
                }
            }
            +
            \EE \brackets*{
                \sum_{\ell=1}^{\infty}
                \parens*{
                    \visits_{\arm}(t_{\ell+1}^\arm)
                    - 2 \visits_{\arm}(t_{\ell}^\arm)
                }
                \indicator{
                    \hat{\gain}_\arm (t_\ell^\arm)
                    > 
                    \gain_\arm + \epsilon
                }
            },
        \end{align}
        where 
        \texteqnum{1} uses that only arm $\arm$ is pulled during the episode $\braces{t_\ell^\arm, \ldots, t_{\ell+1}^\arm-1}$;
        and
        \texteqnum{2} splits the sum of \texteqnum{1} according to the doubling trick.
        Both terms in \Cref{equation_upperbound_f_diverging_I_1} are bounded separately. 

        For the left term of \Cref{equation_upperbound_f_diverging_I_1}, we proceed as follows:
        \begin{align}
        \nonumber
            \EE \brackets*{
                \sum_{\ell=1}^{\infty}
                \visits_{\arm}(t_\ell^\arm)
                \indicator{
                    \hat{\gain}_\arm (t_\ell^\arm)
                    > 
                    \gain_\arm + \epsilon
                }
            }
            & \overset{\eqnum{1}}\le
            \EE \brackets*{
                \sum_{n=1}^\infty
                n 
                \cdot \indicator{
                    \hat{\gain}_{\arm, n}
                    > 
                    \gain_\arm + \epsilon
                }
            }
            \\
        \nonumber
            & =
            \sum_{n=1}^\infty
            n
            \cdot \Pr \parens*{
                \hat{\gain}_{\arm, n}
                > 
                \gain_\arm 
                + \epsilon
            }
            \\
            & \overset{\eqnum{2}}\le
            \sum_{n=1}^\infty
            n
            \cdot \exp \parens*{
                - 
                \frac{
                    n \epsilon^2
                    - 2 \epsilon \vecspan(\bias_\arm) (1 + \log_2 (1 + n))
                }{
                    (1 + \vecspan(\bias_\arm))^2
                }
            }
            = \OH \parens*{
                \frac 1{\epsilon^4}
            },
        \end{align}
        where
        \texteqnum{1} uses that the LHS is a subset of the RHS; 
        and
        \texteqnum{2} invokes \Cref{lemma_gain_concentration_per_sample}.

        For the right term of \Cref{equation_upperbound_f_diverging_I_1}, we have  
        \begin{align*}
            \EE \brackets*{
                \sum_{\ell=1}^{\infty}
                \parens*{
                    \visits_{\arm}(t_{\ell+1}^\arm)
                    - 2 \visits_{\arm}(t_{\ell}^\arm)
                }
                \indicator{
                    \hat{\gain}_\arm (t_\ell^\arm)
                    > 
                    \gain_\arm + \epsilon
                }
            }
            & \overset{\eqnum{1}}{\le}
            \EE \brackets*{
                \sum_{\ell=1}^{\infty}
                \inf \braces*{
                    \substack{
                        \displaystyle
                        \tau_i 
                        - (t_\ell^\arm + \visits_\arm (t_\ell^\arm))
                        \\
                        : ~
                        \tau_i 
                        \ge t_\ell^\arm + \visits_\arm (t_\ell^\arm)
                    }
                }
                \indicator{
                    \hat{\gain}_\arm (t_\ell^\arm)
                    > 
                    \gain_\arm + \epsilon
                }
            }
            \\
            & \overset{\eqnum{2}}\le \strongdelayconstant
            \EE \brackets*{
                \sum_{\ell=1}^{\infty}
                \indicator{
                    \hat{\gain}_\arm (t_\ell^\arm)
                    > 
                    \gain_\arm + \epsilon
                }
            }
            \\
            & \le
            \strongdelayconstant
            \EE \brackets*{
                \sum_{n=1}^\infty
                \indicator{
                    \hat{\gain}_{\arm, n}
                    > 
                    \gain_\arm + \epsilon
                }
            }
            \\
            & =
            \strongdelayconstant
            \sum_{n=1}^\infty
            \Pr \parens*{
                \hat{\gain}_{\arm, n}
                > 
                \gain_\arm 
                + \epsilon
            }
            \\
            & \overset{\eqnum{3}}\le
            \strongdelayconstant
            \sum_{n=1}^\infty
            \exp \parens*{
                - 
                \frac{
                    n \epsilon^2
                    - 2 \epsilon \vecspan(\bias_\arm) (1 + \log_2 (1 + n))
                }{
                    (1 + \vecspan(\bias_\arm))^2
                }
            }
            \\
            &= \OH \parens*{
                \frac {\strongdelayconstant}{\epsilon^2}
            },
        \end{align*}
        where 
        \texteqnum{1} uses that $t_{\ell+1}^\arm$ corresponds to the first decision epoch after which the Doubling Trick does not force the continuation of the episode anymore, which is at time $t_\ell^\arm + \visits_\arm (t_\ell^\arm)$;
        \texteqnum{2} conditions on that time $t_\ell^\arm + \visits_\arm (t_\ell^\arm)$, which is a stopping-time, and invokes the Tower Property;
        and finally,
        \texteqnum{3} follows from \Cref{lemma_gain_concentration_per_sample}. 

        We conclude that 
        \begin{equation}
            \mathrm{(I)}
            = 
            \OH \parens*{
                \frac{\strongdelayconstant}{\epsilon^2}
                + \frac 1{\epsilon^4}
            }
            \: .
        \end{equation}
            
        \paragraph{Optimism failure (II).}
        We now aim to bound the term (II) which correspond to the case where the optimistic estimator of the optimal arm $\tilde \gain_{\arm^*}$ turns out to be smaller than the true gain $\gain_{\arm^*} $.   
        \begin{align}
        \nonumber
            \EE &\brackets*{
                \sum_{\ell = 1}^\infty
                \sum_{t=1}^{\infty}
                (\mathrm{II})
            }
            \\
            & \qquad = 
            \EE \brackets*{
                \sum_{\ell \ge 1}
                \parens*{
                    \visits_\arm(t^\arm_{\ell+1}) - \visits_\arm(t^\arm_{\ell}) 
                }
                \indicator{
                    \displaystyle
                    \gain_{\arm^*} 
                    > 
                    \tilde \gain_{\arm^*}(t^\arm_\ell)
                    + \epsilon
                }
            }
            \\
        \label{equation_upperbound_f_diverging_II}
            & \qquad =
            \underbrace{
                \EE \brackets*{
                    \sum_{\ell \ge 1}
                    \visits_\arm(t^\arm_{\ell})
                    \indicator{
                        \displaystyle
                        \gain_{\arm^*} 
                        > 
                        \tilde \gain_{\arm^*} (t^\arm_\ell)
                        + \epsilon
                    }
                }
            }_{\displaystyle (\mathrm{II.1})}
            + 
            \underbrace{
                \EE \brackets*{
                    \sum_{\ell \ge 1}
                    \parens*{
                        \visits_\arm(t^\arm_{\ell+1})
                        - 
                        2 \visits_\arm(t^\arm_{\ell})
                    }
                    \indicator{
                        \displaystyle
                        \gain_{\arm^*} 
                        > 
                        \tilde \gain_{\arm^*} (t^\arm_\ell)
                        + \epsilon
                    }
                }  
            }_{\displaystyle (\mathrm{II.2})}
            .
        \end{align}
        We bound those two terms (II.1) and (II.2) separately. 
        We will introduce the following magical constant:
        \begin{equation}
        \label{equation_beta_spanproxy}
            \beta_\spanproxy
            :=
            1 +
            \sup \braces*{
                t \ge 1
                :
                \spanproxy(t) 
                \le 2 \max_{\arm \in \arms} \braces*{
                    1+\vecspan(\bias_\arm),
                    3 \vecspan(\bias_\arm)
                }
            }
            .
        \end{equation}
        Note that $\beta_\spanproxy < \infty$, since $\spanproxy \to \infty$.
        
        For the first term (II.1) in \Cref{equation_upperbound_f_diverging_II}, we write
        \begin{align*}
            (\mathrm{II.1}) 
            & =
            \EE \brackets*{
                \sum_{\ell=1}^\infty
                \sum_{n=1}^{+\infty}
                n \cdot 
                \indicator{
                    \gain_{\arm^*} 
                    > 
                    \tilde \gain_{\arm^*} (t^\arm_\ell)
                    + \epsilon,
                    \visits_{\arm}(t_\ell^\arm) = n
                }
            }
            \\ & \overset{\eqnum{1}}\le
            \EE \brackets*{
                \sum_{t=1}^\infty
                \sum_{n=1}^{+\infty}
                n \cdot 
                \indicator{
                    \gain_{\arm^*} 
                    > 
                    \hat{\gain}_{\arm^*}
                    + \epsilon
                    + \spanproxy(t) \sqrt{
                        \frac 1n
                        \log \parens*{\frac{1 + n}{\delta(t)}}
                    }
                    ,
                    \visits_{\arm}(t) = n
                }
            }
            \\ & \overset{\eqnum{2}}\le
            \EE \brackets*{
                \sum_{t=1}^\infty
                \sum_{n=1}^{t}
                n \cdot 
                \indicator{
                    \gain_{\arm^*} 
                    > 
                    \hat{\gain}_{\arm^*}
                    + \epsilon
                    + \spanproxy(t) \sqrt{
                        \frac 1n
                        \log \parens*{\frac{1 + n}{\delta(t)}}
                    }
                }
            }
        \end{align*}
        where 
        \texteqnum{1} follows by union bound over all the values that $t_\ell^\arm$ can take;
        and
        \texteqnum{2} uses that $\visits_\arm(t) \le n$.
        
        We split the term in two depending on the value of $\spanproxy(t)$. 
        The regret contribution before time $\beta_\spanproxy$ is trivially bounded by $\beta_\spanproxy$ and is thus $\OH(1)$. Consequently, we can restrict ourselves to evaluating the regret past $\beta_\spanproxy$.
        \begin{align}
        \nonumber
            (\mathrm{II.1})
            & \overset{\eqnum{3}}{\le} 
            \sum_{t=1}^{\beta_f}
            n
            + 
            \sum_{t = \beta_f}^\infty
            \sum_{n = 1}^{t}
            n
            \cdot
            \exp
                \braces*{
                    - 
                    \frac {
                        n 
                        \cdot
                        \parens*{
                            \parens*{
                                \epsilon
                                +
                                f(t)
                                \sqrt{
                                    \frac {\log((1+n)/\delta(t))}n 
                                }
                                - 
                                \frac{
                                    3\vecspan(\bias_\arm)  
                                }{\sqrt{n}}
                            }^2
                            -9\vecspan(\bias_\arm)^2
                        }
                    }{
                        (1 + \vecspan(\bias_\arm))^2
                    }
                }
            \\
        \nonumber
            & \overset{\eqnum{4}}{\le} 
            (\beta_f)^2
            +
            \sum_{t = \beta_f}^\infty
            \sum_{n = 1}^{t}
            n \cdot
            e^9
            \exp \braces*{
                - 
                \frac n{
                    (1 + \vecspan(\bias_\arm))^2
                }
                \parens*{
                    \epsilon
                    +
                    \frac12
                    f(t)
                    \sqrt{
                        \frac {\log((1+n)/\delta(t))}n 
                    }
                }^2
            } 
            \\ 
        \nonumber
            & \le  
            (\beta_f)^2
            +
            \sum_{t = \beta_f}^\infty
            \cdot 
            \sum_{n = 1}^{t}
            n \cdot
            e^9
            \exp \braces*{
                - 
                \frac{
                    n \epsilon^2
                    + 
                    \frac14 f(t)^2 \log \parens*{\frac{1}{\delta(t)}}
                }{
                    (1 + \vecspan(\bias_\arm))^2
                }
            } 
            \\ 
        \nonumber
            & \overset{\eqnum{5}}{\le}
            (\beta_f)^2
            +
            \sum_{t = \beta_f}^\infty
            \sum_{n = 1}^{t}
            n \cdot 
            e^9
            \delta(t)
            \exp \braces*{
                -\frac{n \epsilon^2}{(1+\vecspan(\bias_\arm))^2} 
            }
            \\
        \label{equation_upperbound_f_diverging_II_1}
            & \le 
            (\beta_f)^2
            + 
            \OH \parens*{
                \parens*{\frac{1 + \vecspan(\bias_\arm)}{\epsilon}}^4
                \sum_{t=1}^\infty
                \delta(t)
            },
        \end{align}
        where 
        \texteqnum{3} follows from \Cref{lemma_gain_concentration_per_sample}
        inequalities;
        and 
        \texteqnum{4} and \texteqnum{5} stem from the definition of $\beta_\spanproxy$ in \Cref{equation_beta_spanproxy}, as for $t \ge \beta_\spanproxy$, we have $\frac 12 \spanproxy(t) \sqrt{\frac 1n\log((1+n)/\delta(t))} \ge \frac{3 \vecspan(\bias_\arm)}{\sqrt{n}}$ and $\frac 14 \spanproxy(t)^2 \ge (1 + \vecspan(\bias_\arm))^2$.

        We continue with the second term (II.2) from \Cref{equation_upperbound_f_diverging_II}.
        We have
        \begin{align*}
            (\mathrm{II.2})
            &\overset{\eqnum{1}}{\le}
            \strongdelayconstant
            \EE \brackets*{
                \sum_{\ell = 1 }^\infty
                \indicator{
                    \displaystyle
                    \gain_{\arm^*} 
                    > 
                    \tilde \gain_{\arm^*} (t^\arm_\ell)
                    + \epsilon
                }
            }
            \\
            &\overset{\eqnum{2}}{\le}
            \strongdelayconstant
            \EE \brackets*{
                \sum_{t = 1 }^\infty
                \sum_{n = 1 }^t
                \indicator{
                    \displaystyle
                    \gain_{\arm^*} 
                    > 
                    \hat \gain_{\arm^*,n}
                    + \epsilon
                    +
                    f(t)
                    \sqrt{
                        \frac 1n \log 
                        \parens*{
                            \frac{1+n}{\delta(t)}
                        }
                    }
                }
            }
            .
        \end{align*}
        Again, we decompose depending on the value of $f(t)$. This derivation is similar to the one of the previous paragraph.
        \begin{align}
        \nonumber
            (\mathrm{II.2})
            & \overset{\eqnum{3}}{\le} 
            \strongdelayconstant
            \beta_f
            +
            \strongdelayconstant
            \sum_{t = \beta_f}^\infty
            \sum_{n = 1}^{t}
            \exp
            \braces*{
                - 
                \frac {
                    n
                    \cdot
                    \parens*{
                        \parens*{
                            \epsilon
                            +
                            f(t)
                            \sqrt{
                                \frac {\log((1+n)/\delta(t))}n 
                            }
                            - 
                            \frac{
                                3\vecspan(\bias_\arm)  
                            }{\sqrt{n}}
                        }^2
                        -9\vecspan(\bias_\arm)^2
                    }
                }{
                    (1 + \vecspan(\bias_\arm))^2
                }
            }
            \\
        \nonumber
            & \overset{\eqnum{4}}{\le} 
            \strongdelayconstant
            \beta_f 
            +
            \strongdelayconstant
            \sum_{t = \beta_f}^\infty
            \sum_{n = 1}^{t}
            \ee^9
            \exp \braces*{
                -
                \frac n{
                    (1 + \vecspan(\bias_\arm))^2
                }
                \parens*{
                    \epsilon
                    +
                    \frac12
                    f(t)
                    \sqrt{
                        \frac {\log((1+n)/\delta(t))}n 
                    }
                }^2
            } 
            \\ 
        \nonumber
            & \overset{\eqnum{5}}{\le}  
            \strongdelayconstant
            \beta_f 
            +
            \strongdelayconstant
            \sum_{t = \beta_f}^\infty
            \sum_{n = 1}^{t}
            \ee^9
            \exp \braces*{
                - 
                \frac{
                    n \epsilon^2
                    + 
                    \frac14 f(t)^2 \log \parens*{\frac{1}{\delta(t)}}
                }{
                    (1 + \vecspan(\bias_\arm))^2
                }
            } 
            \\ 
        \nonumber
            & \overset{\eqnum{6}}{\le}
            \strongdelayconstant
            \beta_f 
            +
            \strongdelayconstant
            \sum_{t = \beta_f}^\infty
            \sum_{n = 1}^{t}
            \ee^9
            \delta(t)
            \exp \braces*{
                -\frac{n \epsilon^2}{(1+\vecspan(\bias_\arm))^2} 
            }
            \\
        \label{equation_upperbound_f_diverging_II_2}
            & \le 
            \strongdelayconstant
            \beta_f 
            + 
            \OH \parens*{
                \strongdelayconstant
                \parens*{\frac{1 + \vecspan(\bias_\arm)}{\epsilon}}^2
                \sum_{t=1}^\infty
                \delta(t)
            },
        \end{align}
        where, likewise, \texteqnum{3} follows from \Cref{lemma_gain_concentration_per_sample} and \texteqnum{4}, \texteqnum{5} stem from the definition of $\beta_f$.
        We put the bound of (II) together by combining \Cref{equation_upperbound_f_diverging_II_1,equation_upperbound_f_diverging_II_2} into \Cref{equation_upperbound_f_diverging_II}.
        We conclude that
        \begin{equation}
            \mathrm{(II)}
            = 
            \OH \parens*{
                (\strongdelayconstant + \beta_f) \beta_f 
                + \parens*{
                    \strongdelayconstant \parens*{
                        \frac{1 + \vecspan(\bias_\arm)}{\epsilon}
                    }^2
                    + \parens*{\frac{1 + \vecspan(\bias_\arm)}{\epsilon}}^4
                }
                \sum_{t=1} \delta(t)
            }
            \; .
        \end{equation}
    
        \paragraph{Optimistic pulls (III).}
        We show that this term is dominant, and accounts for a total of $2\chi_\epsilon (T) + \strongdelayconstant$ pulls in expectation.
        Given $t \ge 1$ such that $\Arm(t) = \arm$, we write $\ell(t)$ the current episodes $t$ falls in, i.e., the unique $\ell \ge 1$ such that $t_\ell^\arm \le t < t_{\ell+1}^\arm$.
        Introduce the stopping time 
        \begin{equation}
        \label{equation_upperbound_f_diverging_III_1}
            \tau_\chi 
            := 
            \inf \braces*{
                t \ge 1
                :
                \substack{
                    \displaystyle
                    \Arm(t) = \arm,
                    \\
                    \displaystyle
                    \visits_\arm (t^\arm_{\ell(t)}) \le \chi_\epsilon (T),
                    \quad\visits_\arm (t) > \chi_\epsilon (T),
                    \\
                    \displaystyle
                    \visits_\arm (t) \ge 2 \visits_\arm (t_{\ell(t)}^\arm)
                }
            }.
        \end{equation}
        That is, $\tau_\chi$ is the first time at which $\Arm(t) = \arm$ is pulled, and such that
        (1) the current started below the threshold with $\visits_\arm (t_{\ell(t)}^\arm) \le \chi_\epsilon (T)$,
        (2) the current visited count has exceeded the threshold with $\visits_\arm (t) \ge \chi_\epsilon (T)$, and
        (3) the continuation of the current episode $\ell(t)$ is not triggered by the Doubling Trick since $\visits_\arm (t) \ge 2 \visits_\arm (t_{\ell(t)}^\arm)$. 

        We then bound (III) as follows.
        \begin{align*}
            \EE \brackets*{
                \sum_{\ell = 1}^\infty
                \sum_{t=1}^\infty
                \indicator{
                    \substack{
                        \displaystyle
                        \Arm(t) = \arm
                        \\
                        \displaystyle
                        t^\arm_{\ell} \le t < t^\arm_{\ell+1}
                        \\
                        \displaystyle
                        \visits_\arm (t^\arm_\ell) \le \chi_\epsilon(T)
                    }
                }
            }
            & \overset{\eqnum{1}}=
            \EE \brackets*{
                \sum_{\ell = 1}^\infty
                \sum_{t=1}^\infty
                \indicator{
                    \substack{
                        \displaystyle
                        \Arm(t) = \arm
                        \\
                        \displaystyle
                        t^\arm_{\ell} \le t < t^\arm_{\ell+1}
                        \\
                        \displaystyle
                        \visits_\arm (t^\arm_\ell) \le \chi_\epsilon(T)
                        \\
                        \displaystyle
                        t < \tau_\chi
                    }
                }
            }
            +
            \EE \brackets*{
                \sum_{\ell = 1}^\infty
                \sum_{t=1}^\infty
                \indicator{
                    \substack{
                        \displaystyle
                        \Arm(t) = \arm
                        \\
                        \displaystyle
                        t^\arm_{\ell} \le t < t^\arm_{\ell+1}
                        \\
                        \displaystyle
                        \visits_\arm (t^\arm_\ell) \le \chi_\epsilon(T)
                        \\
                        \displaystyle
                        t \ge \tau_\chi
                    }
                }
            } 
            \\
            & \overset{\eqnum{2}}\le
            \EE [\tau_\chi]
            + \EE \brackets*{
                \sum_{t=1}^\infty
                \indicator{
                    \substack{
                        \displaystyle
                        \Arm(t) = \arm
                        \\
                        \displaystyle
                        t^\arm_{\ell(\tau_\chi)} \le t < t^\arm_{\ell(\tau_\chi)+1}
                        \\
                        \displaystyle
                        t \ge \tau_\chi
                    }
                }
            }
            \\
            & \overset{\eqnum{3}}\le 
            2 \chi_\epsilon (T)
            + \EE \brackets*{
                \inf \braces*{
                    \tau_i - \tau_\chi
                    :
                    \tau_i \ge \tau_\chi 
                }
            }
            \overset{\eqnum{4}}\le
            2 \chi_\epsilon (T) + \strongdelayconstant,
        \end{align*}
        where 
        \texteqnum{1} splits the sum according to the stopping time $\tau_\chi$ introduces by \Cref{equation_upperbound_f_diverging_III_1};
        \texteqnum{2} bounds the first term by $\tau_\chi$ by using that every pull of $\arm$ makes its visit counts $\visits_\arm$ increase by $1$;
        \texteqnum{2} bounds the second term using the observation that, by definition of $\tau_\chi$, there is at most one episode $\ell \ge 1$ such that $\visits_\arm (t_\ell^\arm) \le \chi_\epsilon (T)$ that contains some $t \ge \tau_\chi$;
        \texteqnum{3} bounds the first term using that $\visits_\arm (\tau_\chi) \le 2 x $ a.s., because if $\tau_\chi \ge 2 x + 1$, one checks that $\tau_\chi - 1$ also satisfies \Cref{equation_upperbound_f_diverging_III_1};
        \texteqnum{3} bounds the second term by observing that, by definition of $\tau_\chi$, the current episode $\ell(\tau_\chi)$ has passed the doubling trick rule at time $\tau_\chi$, so that the episode will stop at the next decision epoch $\tau_i$;
        and finally, 
        \texteqnum{4} bounds the second term by invoking \Cref{assumption_strong_decision_epochs}.

        \paragraph{Late bloomers (IV).}
        This term is almost surely $0$.
        \\
        Indeed, when term $\mathrm{(IV)} = 1$ we have 
        \begin{equation*}
            \begin{aligned}
                \tilde \gain_{\arm^*} (t^\arm_\ell) - \tilde \gain_\arm (t^\arm_l) 
                &\overset{\eqnum{1}}{\ge}
                \gain_{\arm^*} - \epsilon - \parens*{\hat \gain_\arm (t^\arm_\ell) + \xi(t^\arm_\ell, \delta(t^\arm_\ell) }
                \\
                &\overset{\eqnum{2}}{\ge}
                \gain_{\arm^*}-\gain_\arm -2 \epsilon - \xi\parens*{t^\arm_\ell,\delta(t^\arm_\ell)}
                \overset{\eqnum{3}}{>}0,
            \end{aligned}
        \end{equation*}
        where \texteqnum{1} follow from
        $ 
            \gain_{\arm^*} 
            \le
            \tilde{\gain}_{\arm^*}(t^\arm_\ell) + \epsilon
        $; 
        \texteqnum{2} from 
        $
            \hat \gain_{\arm}(t_\ell^\arm)  
            \le
            \gain_{\arm} + \epsilon
        $;
        and \texteqnum{3} from $\visits_\arm (t^\arm_\ell) > \chi_\epsilon(T)$.
        Thus, by definition, the algorithm will not select arm $\arm$ at time $t^\arm_\ell$. 

        \paragraph{Bounding the visit threshold \texorpdfstring{$\chi_\epsilon$}{chi}.}
        Last but not least, we must upper-bound $\chi_\epsilon (T)$ from \Cref{equation_visit_threshold}, that appears in the bound of (III).
        Denote $n \equiv \chi_\epsilon (T) - 1$.
        Then $n$ is such that 
        \begin{equation*}
            \sqrt{
                \frac{\log((1+n)/\delta(T))}{n}
            }
            + \frac{\log_2 (1+n)}n
            \ge
            \frac{\gain_* - \gain_\arm - 2 \epsilon}{f(T)}
            .
        \end{equation*}
        From straight-forward algebra, we find that $n$ must satisfy one of the three conditions below:
        \begin{equation*}
            \frac{\log(1+n)}n
            \ge 
            \frac18
            \parens*{
                \frac{\gain_* - \gain_\arm - 2 \epsilon}{\spanproxy(T)}
            }^2,
            \text{~or~~}
            \frac{\log\parens*{\frac 1{\delta(T)}}}n
            \ge 
            \frac18
            \parens*{
                \frac{\gain_* - \gain_\arm - 2 \epsilon}{\spanproxy(T)}
            }^2,
            \text{~or~~}
            \frac{\log_2(1+n)}n
            \ge 
            \frac12
            \frac{\gain_* - \gain_\arm - 2 \epsilon}{\spanproxy(T)}.
        \end{equation*}
        Using $\log(1+x) \le \sqrt{x}$, we find that 
        \begin{align}
        \notag
            \chi_\epsilon (T)
            & \le 
            1 + \max \braces*{
                \parens*{
                    \frac{
                        2 \sqrt{2} \spanproxy(T)
                    }{
                        \gain_* - \gain_\arm - 2 \epsilon
                    }
                }^4,
                \frac{
                    8 \spanproxy(T)^2
                    \log\parens*{\frac 1{\delta(T)}}
                }{
                    \parens{\gain_* - \gain_\arm - 2 \epsilon}^2
                },
                \parens*{
                    \frac{
                        2 \spanproxy(T)
                    }{
                        \gain_* - \gain_\arm - 2 \epsilon
                    }
                }^2
            } 
            \\ & =
            \OH \parens*{
                \frac{
                    \spanproxy(T)^2
                    \log(T)
                }{
                    \parens{\gain_* - \gain_\arm - 2 \epsilon}^2
                }
                + \parens*{
                    \frac{\spanproxy(T)}{\gain_* - \gain_\arm - 2 \epsilon}
                }^4
            }
            \; .
        \end{align}
        Note that because $\log(\frac 1{\delta(t)}) = \Theta(\log(t))$ and $\spanproxy(T)^4 = \oh(\log(T))$, the term in $\spanproxy(T) \log(\frac 1{\delta(T)})$ is dominant.

        \paragraph{Conclusion.}
        Combining everything, we find that
        \begin{gather*}
            \EE \brackets*{
                \visits_\arm (T)
            }
            \le 
            2 \chi_\epsilon(T)
            + \OH \parens*{
                (\beta_f)^2 
                + \strongdelayconstant \parens*{
                    \beta_f + \frac 1{\epsilon^4}
                }
                + \parens*{
                    \strongdelayconstant \parens*{
                        \frac{1 + \vecspan(\bias_\arm)}{\epsilon}
                    }^2
                    + \parens*{
                        \frac{1 + \vecspan(\bias_\arm)}{\epsilon}
                    }^4
                }
                \sum_{t=1}^\infty \delta(t)
            }
            \\ \overset{\eqnum{1}}=
            \OH \parens*{
                \frac{
                    \spanproxy(T)^2
                    \log(T)
                }{
                    \parens{\gain_* - \gain_\arm - 2 \epsilon}^2
                }
                + \parens*{
                    \frac{\spanproxy(T)}{\gain_* - \gain_\arm - 2 \epsilon}
                }^4
                + (\beta_f)^2
                + \strongdelayconstant \parens*{
                    \beta_f 
                    + \frac 1{\epsilon^4}
                    + \parens*{
                        \frac{1 + \vecspan(\bias_\arm)}{\epsilon}
                    }^2
                }
                + \parens*{
                    \frac{1 + \vecspan(\bias_\arm)}{\epsilon}
                }^4
            }
        \end{gather*}
        Hence the result.
    \end{proof}

    \begin{proof}[Proof of \Cref{theorem_complete_upperbound_f_diverging}]
        For $\arm \in \arms$ such that $\gain_\arm < \gain_*$, denote $\Delta_\arm := \gain_* - \gain_\arm$ the gain-gap.
        
        We apply \Cref{appendix:lemma_order_visits_f_diverging} for every sub-optimal arm independently, with $\epsilon_\arm := \frac 14 \Delta_\arm$ to bound $\EE[\visits_\arm (T)]$. 
        By invoking that $\EE[\pseudoregret(T)] = \sum_{\arm \notin \arms^*} \EE[\visits_\arm (T)] \Delta_\arm$,  we obtain that the expected pseudo-regret is bounded as:
        \begin{gather}
            \sum_{\arm \notin \arms^*} \parens*{
                \frac{
                    \spanproxy(T)^2
                    \log(T)
                }{
                    \Delta_\arm
                }
                + \frac{\spanproxy(T)^4}{\Delta_\arm^3}
                + (\beta_f)^2 \Delta_\arm
                + \strongdelayconstant \Delta_\arm \parens*{
                    \beta_f 
                    + \frac 1{\Delta_\arm^4}
                    + \parens*{
                        \frac{1 + \vecspan(\bias_\arm)}{\Delta_\arm}
                    }^2
                }
                + \parens*{
                    \frac{1 + \vecspan(\bias_\arm)}{\Delta_\arm}
                }^4
            }
            \; .
        \end{gather}

        We are left to relate the regret to the pseudo-regret.
        Because \algname{} relies on the Doubling Trick to manage episodes, we see that it is $\log_2$-rarely switching. 
        Therefore, we can relate the regret to the pseudo-regret up to the number of switches via \Cref{lemma_regret_and_switches}.
        Then, we bound the number of switches via the number of episodes with \Cref{lemma_number_of_episodes}, and obtain that, in general,
        \begin{align*}
            \regret(T)
            & = \OH \parens*{
                \sum_{\arm \notin \arms^*} \parens*{
                    \frac{
                        \spanproxy(T)^2
                        \log(T)
                    }{
                        \Delta_\arm
                    }
                    + \frac{\spanproxy(T)^4}{\Delta_\arm^3}
                }
                + \abs{\arms}
                \max_{\arm \in \arms}
                \vecspan(\bias_\arm) \log(T) 
            }
            & \parens*{\hspace{-.5em}\begin{tabular}{c} \scshape diverging \\ \scshape terms \end{tabular}\hspace{-.5em}}
            \\ 
            & \phantom{{} = {}}
            + \OH \parens*{
                \sum_{\arm \notin \arms^*} 
                \Delta_\arm \parens*{
                    (\beta_f)^2
                    + \strongdelayconstant \parens*{
                        \beta_f 
                        + \frac 1{\Delta_\arm^4}
                        + \parens*{
                            \frac{1 + \vecspan(\bias_\arm)}{\Delta_\arm}
                        }^2
                    }
                    + \parens*{
                        \frac{1 + \vecspan(\bias_\arm)}{\Delta_\arm}
                    }^4
                }
            }
            \; .
            & \parens*{\hspace{-.5em}\begin{tabular}{c} \scshape constant \\ \scshape terms \end{tabular}\hspace{-.5em}}
        \end{align*}
        Moreover, in case where there is a single optimal arm, i.e., $\abs{\arms^*} = 1$, the bound can be improved by using a better bound on the number of switches.
        Using \Cref{proposition_regret_and_pseudo_regret_appendix}, we obtain
        \begin{align*}
            \regret(T)
            & = \OH \parens*{
                \sum_{\arm \notin \arms^*} \parens*{
                    \frac{
                        \spanproxy(T)^2
                        \log(T)
                    }{
                        \Delta_\arm
                    }
                    + \frac{\spanproxy(T)^4}{\Delta_\arm^3}
                }
                + \abs{\arms}
                \max_{\arm \in \arms}
                \vecspan(\bias_\arm) \log \parens*{
                    \spanproxy(T)^2 \log(T)
                } 
            }
            & \parens*{\hspace{-.5em}\begin{tabular}{c} \scshape diverging \\ \scshape terms \end{tabular}\hspace{-.5em}}
            \\ 
            & \phantom{{} = {}}
            + \OH \parens*{
                \sum_{\arm \notin \arms^*} 
                \Delta_\arm \parens*{
                    (\beta_f)^2
                    + \strongdelayconstant \parens*{
                        \beta_f 
                        + \frac 1{\Delta_\arm^4}
                        + \parens*{
                            \frac{1 + \vecspan(\bias_\arm)}{\Delta_\arm}
                        }^2
                    }
                    + \parens*{
                        \frac{1 + \vecspan(\bias_\arm)}{\Delta_\arm}
                    }^4
                }
            }
            \; .
            & \parens*{\hspace{-.5em}\begin{tabular}{c} \scshape constant \\ \scshape terms \end{tabular}\hspace{-.5em}}
        \end{align*}
        This concludes the proof of \Cref{theorem_complete_upperbound_f_diverging}.
    \end{proof}

    \subsection{Proof of \texorpdfstring{\Cref{theorem_upperbound_log}}{Theorem 8}: Regret bound under bounded span proxy}
    \label[appendix]{appendix_instance_dependent_bounded}

    In this section, we show that when a uniform bound on the bias span is known, the diverging function $\spanproxy$ introduced in \Cref{appendix:upper_bound_instance_dependent} is no longer necessary. We can now run algorithm \algname{} with a \emph{constant} function instead. This additional knowledge allow us to improve on the regret obtained in \Cref{theorem_complete_upperbound_f_diverging} and to reach a logarithmic regret. The theorem below summarize the improvement on the regret allowed by this additional assumption.  

    \begin{theorem}[Instance-dependent regret of \algname{} when an upper bound on the bias is available.]
    \label{theorem_complete_upperbound_without_f_diverging}
        Let $\spanboundtotal:=1+2 \max \parens{1+\spanbound, 3 \spanbound}$.
        Let $\delta : \NN \to \RR_+$ be a confidence function such that $\sum_{t=1}^\infty \delta(t) < \infty$ and $\log (\frac 1{\delta(t)}) = \OH(\log(t))$.
        For all instance $\model \in \models$, $\regret(T; \model, \algname(\spanboundtotal))$ is of order
        \begin{equation}
            \regret(T; \model, \algname(\spanboundtotal))
            =
            \OH \parens*{
                \sum_{\arm \notin \arms^*}
                \frac{(\spanbound)^2 \log(T)}{\gain_* - \gain_\arm }
                + \abs{\arms} \max_{\arm \in \arms}
                \vecspan(\bias_\arm)
                \log(T)
            }
            .
        \end{equation}
        Moreover, in case where there is a single optimal arm $\abs{\arms^*} = 1$, the bound is improved to
        \begin{equation}
            \regret(T; \model, \algname(\spanboundtotal))
            =
            \OH \parens*{
                \sum_{\arm \notin \arms^*}
                \frac{(\spanbound)^2 \log(T)}{\gain_* - \gain_\arm }
            }
            .
        \end{equation}
    \end{theorem}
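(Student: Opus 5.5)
The plan is to rerun the proof of \Cref{appendix:lemma_order_visits_f_diverging} with the constant span proxy $\spanproxy \equiv \spanboundtotal$ and check that every place where the diverging proxy was needed still goes through. The point is that the constant $\spanboundtotal$ already dominates the threshold $2\max\braces{1+\spanbound, 3\spanbound}$ appearing in the definition of $\beta_\spanproxy$ (\Cref{equation_beta_spanproxy}). Therefore the burn-in time $\beta_\spanproxy$ equals $1$: the inequalities $\frac12 \spanproxy(t)\sqrt{\frac1n\log((1+n)/\delta(t))} \ge 3\vecspan(\bias_\arm)/\sqrt n$ and $\frac14\spanproxy(t)^2 \ge (1+\vecspan(\bias_\arm))^2$ hold at every $t$. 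Indeed, $\log((1+n)/\delta(t)) \ge \log 2$ as soon as $\delta \le 1$, and if this fails the constant in $\spanboundtotal$ absorbs it.

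For a fixed suboptimal arm $\arm$, I would keep the four-way decomposition \eqref{equation_visit_decomposition_0} with $\epsilon = \frac14 \Delta_\arm$.
\begin{itemize}
\item Term (I) is unchanged and is $\OH(\strongdelayconstant/\epsilon^2 + 1/\epsilon^4)$.
\item Term (II) is bounded exactly as in \Cref{equation_upperbound_f_diverging_II_1,equation_upperbound_f_diverging_II_2}. The $(\beta_f)^2$ and $\strongdelayconstant\beta_f$ contributions become $\OH(1)$, leaving constants times $\sum_t \delta(t) < \infty$.
\item Term (IV) is again zero by the definition of $\chi_\epsilon(T)$.
\item Term (III) is at most $2\chi_\epsilon(T) + \strongdelayconstant$.
\end{itemize}
The only new computation is the bound on $\chi_\epsilon(T)$ with $\spanproxy(T) = \spanboundtotal$ constant. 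The three-case analysis from the previous proof gives
\[
\chi_\epsilon(T) = \OH\parens*{\frac{\spanbound^2 \log(1/\delta(T))}{\Delta_\arm^2} + \parens*{\frac{\spanbound}{\Delta_\arm}}^4}.
\]
With $\log(1/\delta(T)) = \OH(\log T)$, this yields $\EE[\visits_\arm(T)] = \OH(\spanbound^2\log(T)/\Delta_\arm^2) + C_\model$, where $C_\model$ is a $T$-independent constant collecting the $\Delta_\arm^{-4}$, $\strongdelayconstant$ and $\sum_t\delta(t)$ terms.

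Multiplying by $\Delta_\arm$ and summing over $\arm\notin\arms^*$ bounds $\EE[\pseudoregret(T)]$ by $\OH\parens{\sum_{\arm\notin\arms^*} \spanbound^2\log(T)/\Delta_\arm}$. To pass to the regret:
\begin{itemize}
\item In general, I apply \Cref{lemma_regret_and_switches} together with the doubling-trick bound on the number of episodes (\Cref{lemma_number_of_episodes}, at most $\OH(\abs{\arms}\log T)$ switches). This produces the extra term $\abs{\arms}\max_\arm\vecspan(\bias_\arm)\log(T)$.
\item When the optimal arm is unique, I instead invoke \Cref{proposition_regret_and_pseudo_regret_appendix} with $\switchingfunction = \log_2(1+\cdot)$. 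The correction is then $\OH(\log \EE[\pseudoregret(T)]) = \OH(\log\log T)$, which is absorbed into the leading term.
\end{itemize}

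I expect the main obstacle to be the careful verification that the constants in $\spanboundtotal$ suffice for steps \texteqnum{4}--\texteqnum{5} of the (II) bounds uniformly in $n$ and $t$. In particular, the step where $\frac14\spanproxy^2\log(1/\delta(t))/(1+\vecspan(\bias_\arm))^2 \ge \log(1/\delta(t))$ is used to extract a factor $\delta(t)$ must be checked, together with handling the $e^9$ factor. This is where the specific form $1 + 2\max(1+\spanbound, 3\spanbound)$ is needed. If that bookkeeping fails by a constant, I would enlarge the constant in $\spanboundtotal$, which only changes the leading constant of the bound.
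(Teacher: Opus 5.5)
Your proposal is correct and follows the paper's own route. You rerun the four-term decomposition of \Cref{appendix:lemma_order_visits_f_diverging} with the constant proxy $\spanboundtotal$, so the burn-in $\beta_\spanproxy$ drops out of term (II), bound $\chi_\epsilon(T)$ by $\OH((\spanbound)^2 \log(1/\delta(T))/\Delta_\arm^2)$ plus a $T$-independent constant, and pass to the regret via \Cref{lemma_regret_and_switches} with \Cref{lemma_number_of_episodes} in general, or via \Cref{proposition_regret_and_pseudo_regret_appendix} when the optimal arm is unique. The only difference is cosmetic: the paper takes a vanishing $\epsilon = ((\min_{\arm \notin \arms^*} \Delta_\arm)^2/((\spanboundtotal)^2 \log T))^{1/5}$ instead of your fixed $\epsilon = \Delta_\arm/4$, which sharpens the leading constant (since $\Delta_\arm - 2\epsilon \to \Delta_\arm$) at the price of $\oh(\log T)$ lower-order terms, and is not needed for the stated $\OH$ bound.
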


    Note that the number of episodes may be dominant this time around: If there are multiple optimal arms, \algname{} switches $\log(T)$ many times between them and may endure an extra cost of $\abs{\arms} \max_{\arm \in \arms} \vecspan(\bias_\arm) \log(T)$.
    This term is comparable to the main term in $(\spanbound)^2 \log(T)$.
    However, switching costs are negligible if there is a unique optimal arm, because one can show that there are $\OH(\log \log(T))$ of them. 
    Making the switching costs $\oh(\log(T))$ when $\abs{\arms^*} > 1$ is far from easy, and is the reason why in multi-armed bandits with switching costs \cite{ortner_online_2010,agrawal_asymptotically_2002,jun_survey_2004}, the assumption $\abs{\arms} = 1$ is often made to achieve asympotically   optimal regret \cite{agrawal_asymptotically_2002}.

    Regarding the proof of \Cref{theorem_complete_upperbound_without_f_diverging}, it is almost completely identical to the one of \Cref{theorem_complete_upperbound_f_diverging}.

    \begin{lemma}[Number of vistis, $\spanproxy$ bounded]
    \label[lemma]{appendix:lemma_order_visits_f_bounded}
        Consider a suboptimal arm $\arm \notin \arms^*$.
        We have
        \begin{equation}
        \label{equation_order_visits_f_bounded}
            \EE [\visits_\arm (T)]
            =
            \OH \parens*{
                \frac{(\spanbound)^2 \log\parens*{\frac 1{\delta(T)}}}{\parens*{\gain_* - \gain_\arm - 2 \epsilon}^2}
                + \frac 1{\epsilon^4}
                + \frac{(\strongdelayconstant)^2 + (1 + \spanbound)^2}{\epsilon^2}
            }
            .
        \end{equation}
    \end{lemma}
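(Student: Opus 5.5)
The plan is to replay the proof of \Cref{appendix:lemma_order_visits_f_diverging} verbatim, with the diverging proxy $\spanproxy(t)$ replaced by the constant $\spanboundtotal$, and check which terms change. Fix a suboptimal arm $\arm$ and $\epsilon < \frac 12 (\gain_* - \gain_\arm)$. I would decompose $\visits_\arm(T)$ exactly as in \Cref{equation_visit_decomposition_0}, indexing over the episodes $(t_\ell^\arm)$ relative to $\arm$. The four terms are (I) empirical errors $\hat\gain_\arm(t_\ell^\arm) > \gain_\arm + \epsilon$, (II) optimism failures $\gain_{\arm^*} > \tilde\gain_{\arm^*}(t_\ell^\arm) + \epsilon$, (III) optimistic pulls $\visits_\arm(t_\ell^\arm) \le \chi_\epsilon(T)$, and (IV) late bloomers. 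The threshold $\chi_\epsilon(T)$ is defined as in \Cref{equation_visit_threshold}, with $\spanproxy(T)$ replaced by $\spanboundtotal$.

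Term (I) does not involve the span proxy at all. It is bounded exactly as before: the doubling-trick split, \Cref{lemma_gain_concentration_per_sample}, and \Cref{assumption_strong_decision_epochs} give $\OH(\epsilon^{-4} + \strongdelayconstant \epsilon^{-2})$. Term (IV) vanishes almost surely by the same index comparison.

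For term (III), the stopping-time argument around $\tau_\chi$ again gives $2\chi_\epsilon(T) + \strongdelayconstant$. The algebra bounding $\chi_\epsilon(T)$ now yields the maximum of three quantities: $(\spanbound)^2 \log(1/\delta(T)) / (\gain_*-\gain_\arm-2\epsilon)^2$, $(\spanbound/(\gain_*-\gain_\arm-2\epsilon))^4$, and $(\spanbound/(\gain_*-\gain_\arm-2\epsilon))^2$. This uses that $\spanboundtotal = \OH(\spanbound)$. The last two are $T$-independent constants. I would absorb them either by noting that they are dominated once $\log(1/\delta(T))$ is large, or, more cleanly, by sharpening $\log(1+n)\le\sqrt n$ to $\log(1+n) \le$ a constant times $\log$ of the threshold. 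That gives a quadratic dependence at the cost of a logarithmic factor, which can then be merged into the first term.

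The only step that genuinely changes is term (II), and it is where the gain comes from: there is no burn-in, because $\spanboundtotal$ already exceeds $2\max\{1+\vecspan(\bias_\arm), 3\vecspan(\bias_\arm)\}$ at every $t$. Put differently, $\beta_\spanproxy = 1$ in \Cref{equation_beta_spanproxy}. So steps (3)–(5) of the (II.1) and (II.2) computations apply from $t=1$. For (II.1) this gives $\OH(((1+\spanbound)/\epsilon)^4 \sum_t \delta(t))$, and for (II.2) it gives $\OH(\strongdelayconstant ((1+\spanbound)/\epsilon)^2 \sum_t\delta(t))$, with no $(\beta_f)^2$ terms. Since $\sum_t \delta(t) < \infty$, these are constants.

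I expect the main obstacle to be matching the stated form $((\strongdelayconstant)^2 + (1+\spanbound)^2)/\epsilon^2$, rather than the cruder $(1+\spanbound)^4/\epsilon^4$ that the previous proof delivers for (II.1). First I would check whether the $n$-sum in (II.1) can be tightened. The factor $\delta(t)$ already kills the $t$-sum, so $\sum_n n\, e^{-n\epsilon^2/(1+\spanbound)^2}$ is the culprit. I would instead bound $n \le \visits_\arm(t) \le t$ together with the summability of $t\delta(t)$, or use $ab \le (a^2+b^2)/2$ on $\strongdelayconstant (1+\spanbound)^2/\epsilon^2$ to split it into $(\strongdelayconstant)^2/\epsilon^2$ and $(1+\spanbound)^4/\epsilon^2$-type pieces. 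If the exact form resists, I would settle for the order $\OH((\spanbound)^2\log(1/\delta(T))/(\gain_*-\gain_\arm-2\epsilon)^2)$ plus $T$-independent constants. That is all that \Cref{theorem_complete_upperbound_without_f_diverging} needs after setting $\epsilon = \Delta_\arm/4$ and applying \Cref{proposition_regret_and_pseudo_regret_appendix}.
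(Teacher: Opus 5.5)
Your proposal is the paper's proof. The paper also replays \Cref{appendix:lemma_order_visits_f_diverging} with $\spanproxy$ replaced by the constant $\spanboundtotal$, keeps (I), (III) and (IV) unchanged, and notes that the only simplification is in (II): the burn-in $\beta_\spanproxy$ of \Cref{equation_beta_spanproxy} disappears because the constant proxy already exceeds the threshold at every $t$.

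The paper does not track the constant terms more carefully than you do; it simply asserts the displayed form. So your fallback (the leading $(\spanbound)^2\log(1/\delta(T))/(\gain_*-\gain_\arm-2\epsilon)^2$ term plus $T$-independent constants) is what the argument actually delivers, and it is all \Cref{theorem_complete_upperbound_without_f_diverging} needs. One caution: your idea of tightening via summability of $t\,\delta(t)$ is not available here, since only $\sum_t \delta(t) < \infty$ is assumed.
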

    
    \begin{proof}
        The proof of \Cref{appendix:lemma_order_visits_f_bounded} follows closely the line of \Cref{appendix:lemma_order_visits_f_diverging}.
        Except for the changes noted below, the reasoning carries over verbatim after replacing $\spanproxy$ with $\spanboundtotal$. 
        We therefore isolate  the steps that must be altered, specify the required modifications, and derive the corresponding implications for our algorithm's regret. 

        The analysis of $\EE[\visits_\arm (T)]$ splits the number of visits into for terms:
        empirical errors (I), optimism failures (II), optimistic pulls (III) and late bloomers (IV), see \Cref{equation_visit_decomposition_0}.

        The computations for the terms (I), (III) and (IV) carry on from \Cref{appendix:lemma_order_visits_f_diverging} to \Cref{appendix:lemma_order_visits_f_bounded}, albeit $\spanproxy$ must be changed to $\spanboundtotal$. 
        The only real change can be found in term (II): The original proof relied on the diverging property of $\spanproxy$, by introducing the ad-hoc constant $\beta_\spanproxy$ in \Cref{equation_beta_spanproxy}.
        The analysis of (II) then split the sums between what happens before and after the times at which $\spanproxy(t)$ is large enough to upper bound $\spanboundtotal$. 
        If a bound on the bias is already known, this is no longer necessary, because $\spanproxy(t) \ge \spanboundtotal$.
        This simplifies the calculation by half.
        
        Putting everything together, we obtain an equation of the form of \Cref{equation_order_visits_f_bounded}.
    \end{proof}
    
    \begin{proof}[Proof of \Cref{theorem_complete_upperbound_without_f_diverging}]
        Once \Cref{appendix:lemma_order_visits_f_bounded} is established, the proof is concluded just as the one of \Cref{theorem_complete_upperbound_f_diverging}, by picking $\epsilon = ((\min_{\arm \notin \arms^*} (\gain_* - \gain_\arm))^2/( (\spanboundtotal)^2\log(T)))^{\frac 15}$ and simplifying the bound. 
    \end{proof}

    \subsection{Proof of \texorpdfstring{\Cref{theorem_worst_case_sqrt}}{Theorem 9}: regret upper bound under bounded span proxy}
    \label[appendix]{appendix_worst_case_upper}

    The goal of this section is to derive worst-case regret of \algname{}   when a bound on the bias span and the decision epoch delays is known. 
    Just like for the instance dependent upper bound, we provide a finer result below (\Cref{theorem_worstcase_upperbound}) and explain how the result from the main body (\Cref{theorem_worst_case_sqrt}) is an instantiation of it.
    First, we refine the set $\models_D$ from the main body by differentiating between the constraints $1 + \max_{\arm \in \arms} \vecspan(\bias_\arm) \le D$ and $\strongdelayconstant \le D$.
    For $\spanbound, \delaybound \in \RR_+$, let
    \begin{equation}
    \label{equation_refined_worstcase_space}
        \models_{\spanbound, \delaybound}
        :=
        \braces*{
            \model \in \models
            :
            1 + \max_{\arm \in \arms} \vecspan(\bias_\arm)
            \le \spanbound
            \text{~and~}
            \strongdelayconstant \le \delaybound
        }.
    \end{equation}
    On $\models_{\spanbound, \delaybound}$, we derive the following worst-case regret guarantees.

    \begin{theorem}[Worst-case regret of \algname]
    \label{theorem_worstcase_upperbound}
        Let $\spanbound \in \RR_+$. 
        Let $\delta : \NN \to \RR_+$ be a confidence function such that $\sum_{t=1}^\infty t \cdot \delta(t) < \infty$ and $\log (\frac1{\delta(t)}) = \Theta(\log(t))$.
        For all instances $\model \in \models_{\spanbound, \delaybound}$, $\algname(\spanbound)$ has the following worst-case regret: 
        \begin{equation}
        \label{equation_worstcase_upperbound_f_diverging}
            \regret(T; \model, \algname(\spanbound))
            =
            \OH \parens*{
                \spanbound \sqrt{
                    \abs{\arms} T \log(T)
                }
                +
                \spanbound \delaybound \abs{\arms} \log(T)^2
            }
            .
        \end{equation}
    \end{theorem}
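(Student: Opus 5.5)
The plan is the standard worst-case route for \texttt{UCB}: bound the pseudo-regret by splitting arms according to a gap threshold, and then convert back to the regret with \Cref{lemma_regret_and_switches}. Fix $\model \in \models_{\spanbound,\delaybound}$ and write $\Delta_\arm := \gain_* - \gain_\arm$. Since the span proxy is the constant $\spanbound \ge 1+\max_\arm \vecspan(\bias_\arm)$, \Cref{lemma_estimation_error} applied with the time-dependent confidence $\delta(t)$ shows that the indices are optimistic. After a union bound over the at most $t$ possible values of the visit counts, the failure probability is at most $\sum_t t\,\delta(t)<\infty$; this is exactly why the summability of $t\,\delta(t)$ is assumed. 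On the complement of this failure event, the argument of the ``late bloomers'' term (IV) in \Cref{appendix:lemma_order_visits_f_diverging} applies with $\epsilon=0$. A suboptimal arm $\arm$ can start an episode at $t_k$ only if $\tilde{\gain}_\arm(t_k)\ge \gain_*$, and this forces
\[
\Delta_\arm \;\le\; \frac{2\spanbound\,\errorfunction_\arm(t_k,\delta(t_k))}{\visits_\arm(t_k)}
\;\lesssim\; \spanbound\sqrt{\frac{\log(T/\delta(T))}{\visits_\arm(t_k)}} + \frac{\spanbound\log_2(1+\visits_\arm(t_k))}{\visits_\arm(t_k)} .
\]
The switching term in $\errorfunction_\arm$ is at most $\log_2(1+\visits_\arm)$ by the doubling trick.

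Next, I would convert this into a pathwise bound on the pulls. By the doubling rule, the number of visits at the end of the last episode of $\arm$ is at most twice the count at its start, plus one decision-epoch overshoot. This gives $\visits_\arm(T) \lesssim \spanbound^2\log(T)/\Delta_\arm^2 + \spanbound\log(T)/\Delta_\arm + \text{overshoot}$. To get a gap-free statement, I would use this only to bound $\Delta_\arm\visits_\arm(T)$ in the form $\Delta_\arm \visits_\arm(T)\lesssim \spanbound\sqrt{\visits_\arm(T)\log T} + \spanbound\log(T)^2$, obtained by solving the inequality for $\Delta_\arm$ rather than for $\visits_\arm$. Summing over arms and applying Cauchy–Schwarz with $\sum_\arm \visits_\arm(T)\le T$ then gives a pseudo-regret of order $\spanbound\sqrt{\abs{\arms}T\log T} + \spanbound\abs{\arms}\log(T)^2$ on the good event. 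On the bad event, the contribution is at most $T\cdot\Pr(\text{fail up to }T)$. I would bound this by $\sum_t t\,\delta(t)=\OH(1)$, using the fact that a failure at time $t$ costs at most $t$.

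The overshoot past the doubling point is the step I expect to need the most care. Each episode of $\arm$ may run beyond $2\visits_\arm(t_k)$ until the next decision epoch, and by \Cref{assumption_strong_decision_epochs} this adds at most $\strongdelayconstant\le\delaybound$ in expectation per episode, as in term (II.2) of \Cref{appendix:lemma_order_visits_f_diverging}. There are $\OH(\log T)$ episodes per arm (\Cref{lemma_number_of_episodes}), so the overshoot costs $\OH(\delaybound\abs{\arms}\log T)$ pulls in total. The catch is that these extra pulls also inflate $\visits_\arm$ at the next episode start. The pathwise inequality therefore has to be stated with this slack, which produces the additive $\spanbound\delaybound\abs{\arms}\log(T)^2$ term once the overshoot pulls are weighted by the worst gap allowed by the index condition. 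I would first try to handle this by taking expectations only at the very end: condition at each doubling time, which is a stopping time, and use the tower property, as is done in (II.2).

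Finally, the pseudo-regret is converted to the regret. By \Cref{lemma_regret_and_switches}, the difference is at most $\max_\arm\vecspan(\bias_\arm)\,(1+\EE[\#\text{switches}])\le \spanbound\cdot\OH(\abs{\arms}\log T)$, using \Cref{lemma_number_of_episodes} once more. This term is dominated by the second term of \Cref{equation_worstcase_upperbound_f_diverging}. Combining the three pieces gives the bound of \Cref{theorem_worstcase_upperbound}, uniformly over $\models_{\spanbound,\delaybound}$, because no constant depends on the gaps or on the number of states. \Cref{theorem_worst_case_sqrt} then follows by taking $\spanbound=\delaybound=D$ and noting that the $\log(T)^2$ term is $\oh(\sqrt{T\log T})$.
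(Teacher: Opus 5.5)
Your overall route is viable and differs from the paper's, but one step fails as written: the accounting of optimism failures.

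\textbf{The failure accounting.} If you use a single global good event (all indices optimistic at all $t\le T$), you only get $\Pr(\text{fail up to }T)\le\abs{\arms}\sum_{t\le T}\delta(t)$. That is a constant, not $\OH(1/T)$. So $T\cdot\Pr(\text{fail up to }T)$ is only bounded linearly in $T$, and it is \emph{not} bounded by $\sum_t t\,\delta(t)$.

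Your remark that ``a failure at time $t$ costs at most $t$'' is the right idea, but it must be implemented by localizing failures to episode starts:
\begin{itemize}
\item Call $t_k$ good if $\tilde{\gain}_{\arm^*}(t_k)\ge\gain_*$ and $\visits_{\arm}(t_k)\abs{\hat{\gain}_{\arm}(t_k)-\gain_{\arm}}\le\spanbound\,\errorfunction_{\arm}(t_k,\delta(t_k))$ for $\arm=\Arm(t_k)$.
\item A bad start spoils only its own episode. Its pre-doubling part has length $\visits_{\Arm(t_k)}(t_k)\le t_k$. Its overshoot has conditional expectation at most $\strongdelayconstant$ given the history at the stopping time $t_k+\visits_{\Arm(t_k)}(t_k)$.
\item Your pathwise ``last episode'' bound must then be run on the last episode of $\arm$ with a \emph{good} start. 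Any later pulls of $\arm$ are charged to bad episodes.
\end{itemize}

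The per-time failure probability must be $\abs{\arms}\delta(t)$, taken directly from the time-uniform \Cref{lemma_estimation_error}. Its $\log(1+\visits_\arm)$ term already covers all values of the count. Do not take it as $t\,\delta(t)$ from a union bound over visit counts. Combined with a cost of order $t$ per failure, that would require $\sum_t t^2\delta(t)<\infty$, which is not assumed.

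So the factor $t$ in the hypothesis $\sum_t t\,\delta(t)<\infty$ comes from the length of a spoiled episode, not from a union over counts. This is exactly the paper's computation of terms (I.1) and (I.2).

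\textbf{Comparison with the paper's route.} With that repair, your argument works, and it is genuinely different from the paper's.

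The paper never reasons about gaps or a last episode. It splits $\gain_*-\Reward(t+1)$, episode by episode, into optimism failure, selection slack ($\le 0$), bonus, empirical error and navigation error. It sums the bonus over every pull of every arm ($\sum_{n\le\visits_\arm(T)}n^{-1/2}$, then Cauchy--Schwarz), and telescopes the navigation error as in \Cref{lemma_regret_and_switches}.

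You instead reuse the index inequality behind the late-bloomers term of \Cref{appendix:lemma_order_visits_f_diverging} with $\epsilon=0$. You solve it for $\Delta_\arm$ and apply Cauchy--Schwarz to $\sum_\arm\sqrt{\visits_\arm(T)}$. Both routes give $\spanbound\sqrt{\abs{\arms}T\log T}$.

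Your version recycles the instance-dependent machinery and is cheaper on delays. Overshoots of earlier episodes are already part of the count at the last good start, and the index inequality controls that count. So your concern about inflated counts is unnecessary. Only overshoot pulls weighted by $\Delta_\arm\le 1$ need to be charged, which gives $\OH(\delaybound\abs{\arms}\log T)$. The paper instead gets $\spanbound\delaybound\abs{\arms}\log(T)^2$, because it pays the full bonus during every overshoot in (III.2).

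The paper's per-episode decomposition, for its part, gets the localization of failures for free.
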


    Observe that, in opposition to the instance-dependent bound, the cost due to delays (\Cref{assumption_strong_decision_epochs}) scales with the number of episodes as a second order term.
    The dominant term $\spanbound \sqrt{\abs{\arms} T \log(T)}$ scales exclusively with the span bound.
    Because the span bound $\spanbound$ is to be analogous to a diameter-like quantity $D$, this term is to be compared with the famous $D \abs{\states} \sqrt{\abs{\arms} T \log(T)}$ regret bound of \texttt{UCRL2} (\cite{auer_near_optimal_2009}), for which the dependency in $D$ rather than $\sqrt{D}$ is well-known to be suboptimal. 

    \begin{proof}[Proof of \Cref{theorem_worstcase_upperbound}]
        Let $\model \in \models$ such that $1 + \max_{\arm \in \arms} \vecspan(\bias_\arm) \le \spanbound$, and such that $\strongdelayconstant \le \delaybound$.
        Fix a horizon $T \ge 1$ and an initial state $\state_0 \in \states$.
        Because the dependencies in $\model$, $\learner \equiv \algname{}$, $T$ and $\state_0$ are fixed from now on, these objects are dropped in notations. 

        We write $(t'_k)_{k \ge 1}$ the sequence of episodes clipped to $\braces{1,\dots, T-1}$. 
        That is, $t'_k := \min \braces*{t_k, T-1}$.

        We start by rewriting the regret relatively to the optimal gain and episodes.
        We have
        \begin{equation}
        \label{equation_worstcase_upperbound_f_diverging_1}
            \regret(T)
            \overset{\eqnum{1}}\le 
            \EE \brackets*{
                \sum_{t=1}^{T-1}
                \parens*{
                    \gain_* - \Reward(t+1)
                }
            }
            + \spanbound 
            \overset{\eqnum{2}}=
            \EE \brackets*{
                \sum_{k=1}^\infty
                \sum_{t=t'_k}^{t'_{k+1}-1}
                \parens*{
                    \gain_* - \Reward(t+1)
                }
            }
            + \spanbound 
        \end{equation}
        where 
        \texteqnum{1} follows by \Cref{corollary_regret_simple_expression};
        and
        \texteqnum{2} rewrites the sum according to episodes.

        Then, the summand $\gain_* - \Reward(t+1)$ is expanded into sub-terms, each one corresponding to an error of a different nature. 
        Fix any episode $k$ and time $t\in[t_k,t_{k+1}-1]$. 
        We can write
        \begin{gather*}
            \gain_* - \Reward(t+1) 
            \\
            =
            \underbrace{
                \gain_* - \tilde \gain_*(t'_k) 
            }_{
                \substack{
                    \textsc{Optimism} \\ \textsc{Failure}
                    \\[.5em]
                    \displaystyle (\mathrm{I})
                }
            }
            +
            \underbrace{
                \tilde \gain_*(t'_k)- \tilde \gain_{\Arm(t_k)}(t'_k)
            }_{
                \substack{
                    \textsc{Arm Selection} \\ \textsc{Slackness}
                    \\[.5em]
                    \displaystyle (\mathrm{II})
                }
            }
            +
            \underbrace{
                \tilde \gain_{\Arm(t'_k)}(t'_k) - \hat \gain_{\Arm(t'_k)}(t'_k)
            }_{
                \substack{
                    \textsc{Optimism} \\ \textsc{Overshot}
                    \\[.5em]
                    \displaystyle (\mathrm{III})
                }
            }
            +
            \underbrace{
                \hat \gain_{\Arm(t'_k)}(t'_k) - \gain_{\Arm(t'_k)}
            }_{
                \substack{
                    \textsc{Empirical} \\ \textsc{Error}
                    \\[.5em]
                    \displaystyle (\mathrm{IV})
                }
            }
            +
            \underbrace{
                \gain_{\Arm(t'_k)} - \Reward(t+1)
            }_{
                \substack{
                    \textsc{Navigation} \\ \textsc{Error}
                    \\[.5em]
                    \displaystyle (\mathrm{V})
                }
            }
        \end{gather*}
        We proceed to bound each term's contribution separately in expectation over all episodes and times.

        \paragraph{Optimism failure (I)}
        This term is shown to be $\OH(1)$.
        We first define the ``good event" at episode $k$ by: 
        \begin{equation}
        \label{equation_worst_regret_good_event}
            \event(t'_k) 
            := 
            \braces*{\tilde \gain_*(t'_k) > \gain_* }.
        \end{equation}
        The main idea of this first part of the proof is that, in the ``good event" $\event(t_k)$ where the gain of the optimal arm lie below its optimistic estimator --- and therefore below its upper-confidence bound ---  (Term I) is negative.
        We then use the concentration inequality (\Cref{lemma_estimation_error}) to show that the probability of the ``bad event" $\event(t'_k)^c$ decays sufficiently quickly, so that its total error contribution is small. 
        \begin{align*}
            &\EE \brackets*{
                \sum_{k =1}^\infty
                \sum_{t=t'_k}^{t'_{k+1}-1}
                \mathrm{(I)}
            }
            \\
            &\overset{\eqnum{1}}{\le}
            \EE \brackets*{
                \sum_{k =1}^\infty
                \sum_{t=t'_k}^{t'_{k+1}-1}
                \indicator{
                    \tilde \gain_*(t'_k) < \gain_*
                }
            }
            \\
            & \overset{\eqnum{2}}\le
            \EE \brackets*{
                \sum_{k =1}^\infty
                \indicator{t_k \le T-1}
                \sum_{t=t_k}^{t_{k+1}-1}
                \indicator{
                    \tilde \gain_*(t_k) < \gain_*
                }
            }
            \\
            &\overset{\eqnum{3}}{\le}
            \underbrace{\EE \brackets*{
                \sum_{k =1}^\infty
                \indicator{t_k \le T-1}
                \sum_{t=t_k}^{t_k^\mathrm{DT}-1}
                \indicator{
                    \tilde \gain_*(t_k) < \gain_*
                }
            }}_{\displaystyle \mathrm{(I.1)}} 
            +
            \underbrace{ \EE \brackets*{
                \sum_{k =1}^\infty
                \indicator{t_k \le T-1}
                \sum_{t=t_k^\mathrm{DT}}^{t_{k+1}-1}
                \indicator{
                    \tilde \gain_*(t_k) < \gain_*
                }
            }}_{\displaystyle \mathrm{(I.2)}}
        \end{align*}
        where 
        \texteqnum{1} holds since $\gain_* - \tilde \gain_*(t_k) \le 1$ a.s., and on the good event $\event(t_k)$ from \Cref{equation_worst_regret_good_event}, we have $\gain_* - \tilde \gain_*(t_k) \le 0$; 
        \texteqnum{2} just rewrites the sum of \texteqnum{1};
        and
        \texteqnum{3} splits every episode according to whether the doubling trick rule is passed or not, by setting $t_k^{\mathrm{DT}} := t_k + \visits_{\Arm(t_k)}(t_k)$.
        Note that $t_k^\mathrm{DT}$ is a stopping time. 

        We then bound the terms (I.1) and (I.2) separately. 
        For the first term (I.1), we have
        \begin{align*}
            (\mathrm{I.1})
            &\overset{\eqnum{1}}=
            \EE \brackets*{
                \sum_{k=1}^\infty
                t_k
                \cdot
                \indicator{
                    \substack{
                        \displaystyle
                        t_k < T
                        \\
                        \displaystyle
                        \tilde \gain_*(t_k) < \gain_*
                    }
                }
            }
            \overset{\eqnum{2}}=
            \EE \brackets*{
                \sum_{t=1}^{T-1}
                t
                \cdot
                \indicator{
                    \tilde \gain_*(t) < \gain_*
                }
            }
            \overset{\eqnum{3}}{\le}
            \sum_{t=1}^{\infty}
            t \cdot \delta(t)
            = \OH(1)
        \end{align*}
        where in
        \texteqnum{1} we state that $t_k^\mathrm{DT} \le 2t_k$, hence that $t_k^\mathrm{DT} - t_k \le t_k$;
        \texteqnum{2} we use that the left sum is a subset of the right one; 
        and
        \texteqnum{2} we invoke \Cref{lemma_estimation_error}. 
        
        For the second term (I.2), we have 
        \begin{align*}
            \mathrm{(I.2)}
            & =
            \EE \brackets*{
                \sum_{k=1}^\infty
                \parens*{
                    t_{k+1} - t_k^\mathrm{DT}
                }
                \indicator{
                    \substack{
                        \displaystyle
                        t_k < T 
                        \\
                        \displaystyle
                        \tilde \gain_*(t_k) < \gain_*
                    }
                }
            }
            \\
            & \overset{\eqnum{1}}=
            \EE \brackets*{
                \sum_{k=1}^\infty
                \indicator{
                    \substack{
                        \displaystyle
                        t_k < T 
                        \\
                        \displaystyle
                        \tilde \gain_*(t_k) < \gain_*
                    }
                }
                \EE \brackets*{
                    t_{k+1} - t_k^\mathrm{DT}
                    \middle|
                    \History(t_k^\mathrm{DT})
                }
            }
            \\
            & \overset{\eqnum{2}}=
            \EE \brackets*{
                \sum_{k=1}^\infty
                \indicator{
                    \substack{
                        \displaystyle
                        t_k < T 
                        \\
                        \displaystyle
                        \tilde \gain_*(t_k) < \gain_*
                    }
                }
                \EE \brackets*{
                    \inf \braces*{
                        t - t_k^\mathrm{DT}
                        :
                        t \ge t_k^\mathrm{DT},
                        t \in \braces{\tau_i: i \ge 1}
                    }
                    \middle|
                    \History(t_k^\mathrm{DT})
                }
            }
            \\
            &\overset{\eqnum{3}}{\le}
            \strongdelayconstant
            \EE \brackets*{
                \sum_{k =1}^\infty
                \indicator{
                    \substack{
                        \displaystyle
                        t'_k < T 
                        \\
                        \displaystyle
                        \tilde \gain_*(t'_k) < \gain_*
                    }
                }
            }
            \le
            \strongdelayconstant
            \EE \brackets*{
                \sum_{t =1}^{T-1}
                \indicator{
                    \tilde \gain_*(t) < \gain_*
                }
            }
            \overset{\eqnum{4}}{\le}
            \strongdelayconstant
            \sum_{t =1}^{\infty}
            \delta (t)
            =
            \OH(1)
        \end{align*}
        where 
        \texteqnum{1} stems from the tower property, and uses that $\indicator{t_k < T, \tilde{\gain}_* (t_k) < \gain_*}$ is $\sigma(\History(t_k^{\mathrm{DT}}))$-measurable;
        \texteqnum{2} uses that once $t_k^{\mathrm{DT}}$ is passed, episode $k$ ends as soon as a decision epoch $\tau_i$ triggers;
        \texteqnum{3} bounds the delay before the decision epoch via \Cref{assumption_strong_decision_epochs}; 
        and 
        \texteqnum{4} follows from \Cref{lemma_estimation_error}.

        Combining (I.1) and (I.2) we conclude that the total contribution to the regret from term I is $\OH(1)$.

        \paragraph{Slackness of arm selection (II).}
        By construction of our algorithm \algname{}, at the beginning of every episodes $k$, \algname{} picks the arm $\Arm(t_k)$ achieving $\tilde{\gain}_\arm (t_k) := \max_{\arm \in \arms} \tilde{\gain}_\arm (t_k)$.
        So 
        \begin{equation*}
            \EE \brackets*{
                \sum_{k=1}^\infty
                \sum_{t=t'_k}^{t'_{k+1}-1}
                (\mathrm{II})
            }
            = 0.
        \end{equation*}

        \paragraph{Optimism overshot (III).}
        This term is the dominant one.
        We start by writing it as follows:
        \begin{align}
        \nonumber
            &\EE \brackets*{
                \sum_{k=1}^\infty
                \sum_{t=t'_k}^{t'_{k+1}-1}
                (\mathrm{III})
            }
            \\
            & \overset{\eqnum{1}}=
            \EE \brackets*{
                \sum_{k=1}^\infty
                (t'_{k+1} - t'_k)
                \cdot \spanbound  \parens*{
                    \sqrt{
                        \frac 1{\visits_{\Arm(t'_k)}(t'_k)}
                        \log \parens*{
                            \frac{1+\visits_{\Arm(t'_k)}(t'_k)}{\delta(t'_k)}
                        }
                    }
                    + 
                    \frac{
                        \log \parens*{
                            1 + \visits_{\Arm(t'_k)}(t'_k)
                        }
                    }{
                        \visits_{\Arm(t'_k)}(t'_k)
                    }
                }
            }
            \\
        \nonumber
            & \overset{\eqnum{2}}\le
            \EE \brackets*{
                \sum_{k=1}^\infty
                \indicator{t_k \le T}
                (t_{k+1} - t_k)
                \cdot \spanbound  \parens*{
                    \sqrt{
                        \frac 1{\visits_{\Arm(t'_k)}(t'_k)}
                        \log \parens*{\frac{1+T}{\delta(T)}}
                    }
                    + 
                    \frac{\log(1+T)}{\visits_{\Arm(t'_k)}(t'_k)}
                }
            }
            \\
        \nonumber
            & \overset{\eqnum{3}}\le
            \left.
                \spanbound 
                \cdot
                \EE \brackets*{
                    \sum_{k=1}^\infty
                    \sum_{\arm \in \arms}
                    \indicator{
                        \substack{
                            \displaystyle
                            t_k \le T
                            \\
                            \displaystyle
                            \Arm(t_k) = \arm
                        }
                    }
                    \sum_{t=t_k}^{t_k^{\mathrm{DT}}-1}
                    \parens*{
                        \sqrt{
                            \frac {\log\parens*{\frac{1+T}{\delta(T)}}}{\visits_{\arm}(t_k)}
                        }
                        + \frac {\log(1+T)}{\visits_{\arm}(t_k)}
                    }
                }
            \qquad \right\} \mathrm{(III.1)}
            \\
        \label{equation_worstcase_upperbound_f_diverging_III}
            &  
            \left.
                \hspace{.4em}
                + \spanbound 
                \cdot
                \EE \brackets*{
                    \sum_{k=1}^\infty
                    \sum_{\arm \in \arms}
                    \indicator{
                        \substack{
                            \displaystyle
                            t_k \le T
                            \\
                            \displaystyle
                            \Arm(t_k) = \arm
                        }
                    }
                    \sum_{t=t_k^{\mathrm{DT}}}^{t_{k+1}-1}
                    \parens*{
                        \sqrt{
                            \frac {\log\parens*{\frac{1+T}{\delta(T)}}}{\visits_{\arm}(t_k)}
                        }
                        + \frac {\log(1+T)}{\visits_{\arm}(t_k)}
                    }
                }
            \hspace{1.75em} \right\} \mathrm{(III.2)}
        \end{align}
        where 
        \texteqnum{1} unfolds the definition of the index $\tilde{\gain}_\arm (t)$;
        \texteqnum{2} simplifies the expression using that $\delta(-)$ is non-increasing and the non-negativity of the summand;
        and
        \texteqnum{3} splits every episode according to whether the doubling trick rule is passed or not, by setting $t_k^{\mathrm{DT}} := t_k + \visits_{\Arm(t_k)}(t_k)$.
        We continue by bounding (III.1) and (III.2) in \Cref{equation_worstcase_upperbound_f_diverging_III}.

        We begin with (III.1), which is shown to dominate (III.2).
        We have
        \begin{align}
        \nonumber
            \mathrm{(III.1)}
            & \overset{\eqnum{1}}\le
            2 \spanbound \cdot \EE \brackets*{
                \sum_{k=1}^\infty
                \sum_{\arm \in \arms}
                \indicator{
                    \substack{
                        \displaystyle
                        t_k \le T
                        \\
                        \displaystyle
                        \Arm(t_k) = \arm
                    }
                }
                \sum_{t=t_k}^{t_k^{\mathrm{DT}}-1}
                \parens*{
                    \sqrt{
                        \frac {\log\parens*{\frac{1+T}{\delta(T)}}}{\visits_{\arm}(t)}
                    }
                    + \frac {\log(1+T)}{\visits_{\arm}(t)}
                }
            }
            \\
        \nonumber
            & \overset{\eqnum{2}}\le 
            2 \spanbound \sqrt{\log \parens*{\frac{1+T}{\delta(T)}}}
            \cdot \EE \brackets*{
                \sum_{\arm \in \arms}
                \sum_{n=1}^{\visits_\arm (t)}
                \sqrt{\frac 1n}
            }
            + 2 \spanbound \log(1+T)
            \cdot \EE \brackets*{
                \sum_{\arm \in \arms}
                \sum_{n=1}^{\visits_\arm (t)}
                {\frac 1n}
            }
            \\
        \nonumber
            & \overset{\eqnum{3}}\le
            4 \spanbound \sqrt{\log \parens*{\frac{1+T}{\delta(T)}}}
            \cdot \EE \brackets*{
                \sum_{\arm \in \arms}
                \sqrt{\visits_\arm (T)}
            }
            + 4 \spanbound \log(1+T)
            \cdot \EE \brackets*{
                \sum_{\arm \in \arms}
                \log(1 + \visits_\arm (T)) 
            }
            \\
        \label{equation_worstcase_upperbound_f_diverging_III_1}
            & \overset{\eqnum{4}}\le
            4 \spanbound \sqrt{
                \abs{\arms} T 
                \log \parens*{\frac{1+T}{\delta(T)}}
            }
            + 4 \spanbound \abs{\arms} \log(1+T)^2
        \end{align}
        where
        \texteqnum{1} changes $\visits_{\arm}(t_k)$ to $\visits_{\arm}(t)$ in (III.1) by making the observation that, for $t_k \le t \le t_k^{\mathrm{DT}}-1$ and writing $\arm = \Arm(t_k)$, we have $\visits_\arm (t) \le 2 \visits_\arm (t_k)$ by definition of the doubling trick;
        \texteqnum{2} rearranges both sums;
        \texteqnum{3} follows from standard analysis;
        and
        \texteqnum{4} uses that $\sum_{\arm \in \arms} \visits_\arm (T) = T$ and invokes Cauchy-Schwartz inequality. 

        Continuing with (III.2), we have
        \begin{align}
        \nonumber
            \mathrm{(III.2)}
            & \overset{\eqnum{1}}\le 
            \spanbound
            \cdot
            \EE \brackets*{
                \sum_{k=1}^\infty
                \indicator{t_k \le T}
                \sum_{t=t_k^{\mathrm{DT}}}^{t_{k+1}-1}
                \parens*{
                    \sqrt{
                        {\log\parens*{\frac{1+T}{\delta(T)}}}
                    }
                    + {\log(1+T)}
                }
            }
            \\
        \nonumber
            & \overset{\eqnum{2}}=
            \spanbound \cdot \parens*{
                \sqrt{
                    {\log\parens*{\frac{1+T}{\delta(T)}}}
                }
                + {\log(1+T)}
            }
            \cdot
            \EE \brackets*{
                \sum_{k=1}^\infty
                \indicator{t_k \le T}
                \inf \braces*{
                    \tau_i - t_k^{\mathrm{DT}}
                    :
                    \tau_i \ge t_k^{\mathrm{DT}}
                }
            }
            \\
        \nonumber
            & \overset{\eqnum{3}}\le
            \spanbound \strongdelayconstant 
            \cdot \parens*{
                \sqrt{
                    {\log\parens*{\frac{1+T}{\delta(T)}}}
                }
                + {\log(1+T)}
            }
            \cdot
            \EE \brackets*{
                K(T)
            }
            \\
        \label{equation_worstcase_upperbound_f_diverging_III_2}
            & \overset{\eqnum{4}}=
            \spanbound \strongdelayconstant \abs{\arms}
            \cdot \parens*{
                \sqrt{
                    {\log\parens*{\frac{1+T}{\delta(T)}}}
                }
                + {\log(1+T)}
            } \cdot \log(2T)
        \end{align}
        where
        \texteqnum{1} merely simplifies the expression by removing the visit counts;
        \texteqnum{2} uses that once $t_k^{\mathrm{DT}}$ is passed, episode $k$ ends as soon as a decision epoch $\tau_i$ triggers;
        \texteqnum{3} bounds the expected delay before the next decision epoch using \Cref{assumption_strong_decision_epochs} and the Tower Property --- $t_k$ is $\sigma(\History(t_k^{\mathrm{DT}}))$-measurable indeed;
        and
        \texteqnum{4} bounds the number of episodes using \Cref{lemma_number_of_episodes}.

        Combining \Cref{equation_worstcase_upperbound_f_diverging_III_1,equation_worstcase_upperbound_f_diverging_III_2} into \Cref{equation_worstcase_upperbound_f_diverging_III}, we obtain a bound on (III).
        
        \paragraph{Empirical error (IV).}
        For this term, introduce the good event
        \begin{equation}
        \label{equation_worstcase_upperbound_f_diverging_IV_event}
            \event(T)
            :=
            \parens*{
                \forall t \le T,
                \forall \arm \in \arms
                :
                \quad
                \abs*{
                    \gain_\arm (t) - \gain_\arm
                }
                \le
                \spanbound \cdot \parens*{
                    \sqrt{
                        \frac{2\log(1+T)}{\visits_\arm(t)}
                    }
                    + \frac{\log(1+T)}{\visits_\arm (t)}
                }
            }.
        \end{equation}
        By \Cref{lemma_estimation_error}, we have $\Pr\parens{\event(T)^c} \le T^{-1}$.
        Now, we bound (IV) as follows.
        \begin{align}
            \mathrm{(IV)}
            & \le 
            \EE \brackets*{
                \sum_{k=1}^\infty
                \indicator{t_k \le T}
                \sum_{t=t_k}^{t_{k+1}-1}
                \abs*{
                    \hat{\gain}_{\Arm(t_k)}(t_k) - \gain_{\Arm(t_k)}
                }
            }
            \\
            & \overset{\eqnum{1}}\le
            \EE \brackets*{
                T \cdot \indicator{\event(T)^c}
            }
            + \EE \brackets*{
                \sum_{k=1}^\infty
                \indicator{t_k \le T}
                \sum_{t=t_k}^{t_{k+1}-1}
                \spanbound \cdot \parens*{
                    \sqrt{
                        \frac{2\log(1+T)}{\visits_\arm(t)}
                    }
                    + \frac{\log(1+T)}{\visits_\arm (t)}
                }
            }
            \\
            & \overset{\eqnum{2}}\le
            1 + 4 \spanbound \cdot \parens*{
                \sqrt{2 \abs{\arms} T \log(1+T)}
                + \abs{\arms} \log(1+T)
            }
        \end{align}
        where 
        \texteqnum{1} cuts the summand through the event $\event(T)$ of \Cref{equation_worstcase_upperbound_f_diverging_IV_event} and uses that $\abs*{\hat{\gain}_{\Arm(t_k)}(t_k) - \gain_{\Arm(t_k)}} \le 1$ a.s.;
        and
        \texteqnum{2} is obtained via a similar line as the one (III.2), see \Cref{equation_worstcase_upperbound_f_diverging_III_1}.

        \paragraph{Navigation error (V).}
        This term is due to switches from one arm to another, and is bounded by the number of episodes. 
        We have
        \begin{align*}
            \EE \brackets*{
                \sum_{k=1}^\infty
                \sum_{t=t'_k}^{t'_{k+1}-1}
                (\mathrm{V})
            }
            & \overset{\eqnum{1}}= 
            \EE \brackets*{
                \sum_{k=1}^\infty
                \sum_{t=t'_k}^{t'_{k+1}-1}
                \parens*{
                    \bias_{\Arm(t'_k)} (\State_{\Arm(t'_k)}(t+1))
                    - 
                    \bias_{\Arm(t'_k)} (\State_{\Arm(t'_k)}(t))
                }
            }\\
            & \overset{\eqnum{2}}=
            \EE \brackets*{
                \sum_{k=1}^\infty
                \parens*{
                    \bias_{\Arm(t'_k)} (\State_{\Arm(t'_k)}(t'_{k+1}))
                    - 
                    \bias_{\Arm(t'_k)} (\State_{\Arm(t'_k)}(t'_k))
                }
            }
            \\
            & \le
            \max_{\arm \in \arms}
            \vecspan(\bias_\arm)
            \cdot
            \EE [K(T)]
            \overset{\eqnum{3}}\le
            2 \abs{\arms} \spanbound \log(T)
        \end{align*}
        where 
        \texteqnum{1} invokes the Poisson equation of the pure policy $\policy_{\Arm(t'_k)}$;
        \texteqnum{2} recognizes a telescopic sum;
        and
        \texteqnum{3} bounds the expected number of episodes via \Cref{lemma_number_of_episodes}.

        \paragraph{Conclusion.}
        We conclude by merging the bounds of (I), (II), (III), (IV) and (V) together. 
    \end{proof}

    \subsection{Auxiliary structural results on \texorpdfstring{\algname}{UCB-NOM}}

    In this section we provide a large range of results about \algname{} that are used throughout its regret analysis. 
    
    \subsubsection{Concentration results of the empirical gain estimator}

    In this paragraph, we provide a deviation inequality on the quality of the empirical gain estimator $\hat{\gain}_{\arm, n}$ with \Cref{lemma_gain_concentration_per_sample}.
    This result is a variant of \Cref{lemma_estimation_error}, which is expressed by putting the emphasis on the error probability of error rather than on the expected deviation from the mean. 
    
    \begin{lemma}
    \label[lemma]{lemma_gain_concentration_per_sample}
        The empirical estimator of arm $\arm$ satisfies, for all $n \ge 1$ and $x \ge 0$:
        \begin{align*}
            \Pr \parens*{
                \hat \gain_{\arm,n}- \gain_\arm \ge x }
                &\le 
                \exp
                \parens*{
                    - 
                    \frac{
                        n x^2
                        - 2 x \vecspan(\bias_\arm) (1 + \log_2 (1 + n))
                    }{
                        (1 + \vecspan(\bias_\arm))^2
                    }
                }
                \\
                &\le
                \exp
                \parens*{
                    - 
                    \frac{
                        n \parens*{
                            x 
                            - 
                            \frac{
                                3\vecspan(\bias_\arm) 
                            }{\sqrt{n}}
                        }^2
                        -9\vecspan(\bias)^2
                    }{
                        (1 + \vecspan(\bias_\arm))^2
                    }
                }
                .
        \end{align*}
    \end{lemma}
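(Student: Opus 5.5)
The plan is to follow the decomposition already used in the proof of \Cref{lemma_estimation_error}, but to index it by the number of samples $n$ rather than by time $t$, and to turn the time-uniform Azuma bound into a fixed-$n$ Chernoff bound. Write $\hat{\gain}_{\arm,n}$ for the average of the first $n$ rewards collected on arm $\arm$. Summing the Poisson equation $\gain_\arm + \bias_\arm(\state) = \reward_\arm(\state) + \kernel_\arm^1(\state)\bias_\arm$ along the pulls $\tau^\arm_1,\ldots,\tau^\arm_n$ gives
\begin{equation*}
    n(\hat{\gain}_{\arm,n} - \gain_\arm)
    = \noise_{\arm,n}
    + \sum_{k=1}^{n} \parens*{\unit_{\State_\arm(\tau_k^\arm)} - \unit_{\State_\arm(\tau_k^\arm+1)}}\bias_\arm ,
\end{equation*}
where $\noise_{\arm,n}$ is the sum of the first $n$ reward and transition martingale increments. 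Each increment $\Reward(\tau_k^\arm+1) - \reward_\arm(\State_\arm(\tau_k^\arm)) + \bias_\arm(\State_\arm(\tau_k^\arm+1)) - \kernel_\arm^1(\State_\arm(\tau_k^\arm))\bias_\arm$ takes values in an interval of length at most $1 + \vecspan(\bias_\arm)$. This holds conditionally on the history at $\tau_k^\arm$, because the reward and the next state of arm $\arm$ are then generated by $\rewardDistribution_\arm$ and $\kernel_\arm^1$. Azuma--Hoeffding at fixed $n$ therefore yields $\Pr(\noise_{\arm,n} \ge y) \le \exp(-2y^2/(n(1+\vecspan(\bias_\arm))^2))$, and since the target bound has no factor $2$, we can afford to be loose here.

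The second term is the telescoping bias term. Up to $\vecspan(\bias_\arm)$, it is bounded by $\vecspan(\bias_\arm)$ times $1 + J_n$, where $J_n$ counts the interruptions between consecutive pulls of $\arm$ among its first $n$ pulls. Under \algname{}, the doubling trick implies that arm $\arm$ is re-entered at most about $\log_2(1+n)$ times within its first $n$ pulls. Hence the bias term is deterministically at most $\vecspan(\bias_\arm)(1 + \log_2(1+n))$; call this quantity $c_n$.

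Combining the two with $y = nx - c_n$ gives, for $nx \ge c_n$,
\begin{equation*}
    \Pr(\hat{\gain}_{\arm,n} - \gain_\arm \ge x) \le \exp\parens*{-\frac{2(nx - c_n)^2}{n(1+\vecspan(\bias_\arm))^2}} .
\end{equation*}
Expanding, $(nx - c_n)^2/n \ge nx^2 - 2xc_n$. After dropping the factor $2$, this gives the first inequality, which also holds trivially when $nx < c_n$ because the right-hand side is then at least $1$.

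For the second inequality, complete the square: $nx^2 - 2xc_n = n(x - c_n/n)^2 - c_n^2/n$. The target expression is $n(x - 3\vecspan(\bias_\arm)/\sqrt n)^2 - 9\vecspan(\bias_\arm)^2 = nx^2 - 6\vecspan(\bias_\arm)\sqrt n\, x$. So it suffices to check that $2c_n \le 6\vecspan(\bias_\arm)\sqrt n$, that is, $1 + \log_2(1+n) \le 3\sqrt n$ for all $n \ge 1$, which is elementary.

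The main obstacle is the deterministic control of the switching count in terms of $n$ alone. Its justification uses the episode structure of \algname{}: episodes on $\arm$ start at visit counts $0, 1, 2, 4, \ldots$, possibly overshooting because of decision epochs, and overshooting only helps. The constant $1 + \log_2(1+n)$ is exactly the form that appears in the statement. A secondary subtlety is that the martingale is indexed by the random pull times. We handle it by the standard argument that the $k$-th pull of $\arm$ is a stopping time, so the increments form a martingale difference sequence adapted to $\History(\tau_k^\arm+1)$.
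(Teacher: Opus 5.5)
Your proposal is correct and is essentially the paper's own proof: the same Poisson-equation decomposition along the pulls of $\arm$, and the same doubling-trick bound $\vecspan(\bias_\arm)(1+\log_2(1+n))$ on the nearly telescopic bias term. It then uses the same Hoeffding/Azuma bound on the martingale part, the inequality $(u-v)^2 \ge u^2-2uv$, and finally $1+\log_2(1+n)\le 3\sqrt{n}$ for the second inequality. Your explicit treatment of the regime $nx<c_n$ and of the stopping-time indexing is, if anything, slightly more careful than the paper's step (4); the one imprecision you share with the paper is that interruptions among the first $n$ pulls are really bounded by $\lceil \log_2(1+n)\rceil$ (with exact doubling, three can occur within the first $5$ pulls, whereas $\log_2 6<3$), which is harmlessly absorbed by the factor $2$ that both arguments drop from the Azuma exponent.
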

    \begin{proof}
        Let $(\tau_k^\arm)$ be the sequence of stopping times enumerating the pulls of arm $\arm$, that is, $\tau_1^\arm := \inf \braces{t \ge 1: \Arm(t) = \arm}$ and $\tau_{i+1}^\arm := \inf \braces{t > \tau_i^\arm: \Arm(t) = \arm}$. 
        To begin with, introduce the following martingale noise terms:
        \begin{equation*}
        \begin{aligned}
            \noise_{\arm,n}^\reward 
            & := 
            \sum_{k=1}^{n} 
            \parens[\Big]{
                \Reward(\tau_k^a+1) 
                - 
                \reward_\arm (\State_\arm (\tau_k^\arm))
            },
            \\
            \noise_{\arm,n}^\kernel
            & := 
            \sum_{k=1}^{n} 
            \parens[\Big]{
                \bias_\arm (\State_\arm (\tau_k^\arm+1)) 
                - 
                \kernel_\arm^1 (\State_\arm(\tau_k^\arm)) \bias_\arm
            }
            .
        \end{aligned}
        \end{equation*}
        The quantities $\noise_{\arm, n}^\reward$ and $\noise_{\arm, n}^\kernel$ respectively measure how much the immediate rewards (resp.~state transitions) obtained by activating arm $\arm$ deviate from their mean after $n$ pulls of arm $\arm$.
        The aggregate noise is denoted $\noise_{\arm,n} := \noise_{\arm,n}^\reward + \noise_{\arm,n}^\kernel$.
        Now, we bound the deviations of $\gain_{\arm, n}$ as follows.
        For $x \ge 0$, we have
        \begin{align*}
            \PP \parens*{
                \hat \gain_{\arm,n}- \gain_\arm \ge x
            }
            & \overset{\eqnum{1}}= 
            \PP \parens*{
                \sum_{k=1}^n 
                \Reward(\tau_k^\arm +1)
                -
                \gain_\arm
                \ge n x
            }
            \\
            &\overset{\eqnum{2}}{=}  
            \PP \parens*{
                \sum_{k=1}^n
                \parens*{
                    \noise_{\arm,n} 
                    +
                    \parens*{
                        \unit_{\State_\arm(\tau_k^\arm)} 
                        - 
                        \unit_{\State_\arm(\tau_k^\arm+1)}
                    }
                    \bias_\arm
                }
                \ge n x
            }
            \\
            & \overset{\eqnum{3}}{\le} 
            \PP \parens*{
                \sum_{k=1}^n
                \noise_{\arm,n} 
                +
                \vecspan(\bias_\arm)\parens*{1+\log_2(1+n)}
                \ge n x
            }
            \\
            & \overset{\eqnum{4}}{\le} 
            \exp
            \parens*{
                - 
                \frac{
                    n \parens*{
                        x 
                        - 
                        \frac{
                            \vecspan(\bias_\arm)  
                            (1 + \log_2 (1 + n))
                        }{n}
                    }^2
                }{
                    (1 + \vecspan(\bias_\arm))^2
                }
            }
            \\ &
            \overset{\eqnum{5}}\le
            \exp
            \parens*{
                - 
                \frac{
                    n x^2
                    - 2 x \vecspan(\bias_\arm) (1 + \log_2 (1 + n))
                }{
                    (1 + \vecspan(\bias_\arm))^2
                }
            }
            \\
            &\overset{\eqnum{6}}\le
            \exp
            \parens*{
                - 
                \frac{
                    n \parens*{
                        x 
                        - 
                        \frac{
                            3\vecspan(\bias_\arm) 
                        }{\sqrt{n}}
                    }^2
                    -9\vecspan(\bias)^2
                }{
                    (1 + \vecspan(\bias_\arm))^2
                }
            }
        \end{align*}
        where 
        \texteqnum{1} unfolds the definition of the empirical estimate $\gain_{\arm, n}$ (see \Cref{equation_empirical_estimate});
        \texteqnum{2} follows from the Poisson equation of $\policy_\arm$;
        \texteqnum{3} bounds the switching term using \Cref{lemma_bound_nearly_telescopic_bias};
        \texteqnum{4} stems from Hoeffding inequality (\Cref{lemma_hoeffding});
        and
        \texteqnum{5} uses the identity $(u - v)^2 \ge u^2 - 2 u v$.
        As for 
        \texteqnum{6}, it follows from simple algebraic manipulations, that we detail below:
        Using that $1+\log_2(1+n) \le 3 \log(1+n) \le 3 \sqrt{n}$, we have
        \begin{align*}
            nx^2 - 2 x \vecspan(\bias_\arm) (1+\log(1+n)) 
            & \ge 
            nx^2 -6x \vecspan(\bias_\arm) \log(1+n) 
            \\ & \ge n x^2 -6x \vecspan(\bias_\arm) \sqrt{n}
            \\ & = n \parens*{x - \frac{3 \vecspan(\bias_\arm)}{\sqrt{n}}}^2 - 9 \vecspan(\bias_\arm)^2
            .
        \end{align*}
        Hence \texteqnum{6} above.
        This concludes the proof.
    \end{proof}

    \subsubsection{A bound on the number of episodes}

    At multiple point in the regret analysis, we use the following upper bound of the number of episodes. 
    The number of episodes can be bounded by $\OH(\log(T))$ thanks to the doubling trick that \algname{} uses. 

    \begin{lemma}[Number of episodes]
    \label[lemma]{lemma_number_of_episodes}
        The number of episodes of \algname{} up to horizon $T \ge 1$ is bounded by
        \begin{equation}
            K(T)
            :=
            \abs*{
                \braces*{
                    k \ge 1:
                    t_k \le T-1
                }
            }
            \le 
            \abs{\arms} 
            \log (2 T)
        \end{equation}
    \end{lemma}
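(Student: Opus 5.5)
We would prove the bound by counting episodes arm by arm, using only the doubling rule of \Cref{algo:UCB-POMB}, and then aggregating over arms. Throughout, $\log$ is read in base $2$, so the target is $K(T) \le \abs{\arms}(1 + \log_2 T)$; with the natural logarithm the claim cannot hold, since doubling only yields base-$2$ growth of the visit counts. For each arm $\arm$, let $j_\arm(T) := \abs{\braces{k : t_k \le T-1, \Arm(t_k) = \arm}}$, so that $K(T) = \sum_{\arm \in \arms} j_\arm(T)$.

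\textbf{Step 1 (growth of the visit counts).} Consider an episode $k$ with $\Arm(t_k) = \arm$ and $\visits_\arm(t_k) = n \ge 1$. The while-loop cannot exit before $\visits_\arm(t) \ge 2n$, so the episode ends with $\visits_\arm(t_{k+1}) \ge 2n$. An episode started with $n = 0$ ends with at least one pull. This holds except in the degenerate case where $t_k$ is already an activation time; that case we handle by noting that such an episode does not change the counts and is immediately followed by the same choice, and we absorb it into the accounting below. By induction on the successive episodes of arm $\arm$, the $i$-th of them ends with $\visits_\arm \ge 2^{i-1}$. Hence the $j$-th episode of $\arm$, when $j \ge 2$, starts with $\visits_\arm(t_k) \ge 2^{j-2}$.

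\textbf{Step 2 (aggregation over arms).} Let $t_\star \le T-1$ be the start of the last counted episode. At that time, for every arm $\arm$ with $j_\arm(T) \ge 2$, the last episode of $\arm$ started at some $t_k \le t_\star$, so $\visits_\arm(t_\star) \ge 2^{j_\arm(T) - 2}$. Since the counts of all arms sum to $t_\star - 1 \le T-2$, we get
\[
\sum_{\arm \in \arms} 2^{j_\arm(T) - 2} \le T + \abs{\arms},
\]
where the $+\abs{\arms}$ accounts for the arms with $j_\arm(T) \le 1$. Concavity of $\log_2$ (Jensen's inequality) then gives
\[
\sum_{\arm \in \arms} \parens*{j_\arm(T) - 2} \le \abs{\arms} \log_2 \parens*{\frac{T + \abs{\arms}}{\abs{\arms}}},
\]
so that $K(T) \le \abs{\arms}\parens*{2 + \log_2(1 + T/\abs{\arms})}$. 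For $\abs{\arms} \ge 2$ we have $1 + T/\abs{\arms} \le T$ as soon as $T \ge 2$, so this is at most $\abs{\arms}(1 + \log_2 T) = \abs{\arms}\log_2(2T)$. For $T = 1$, $K(1) = 0$ trivially.

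\textbf{Step 3 (the main obstacle).} The delicate part is the additive constant, and in particular the case $\abs{\arms} = 1$. There, the aggregation of Step 2 gives no gain, and the crude per-arm bound $j \le 2 + \log_2(T-2)$ overshoots $1 + \log_2 T$ by about one. To remove it, we plan to refine Step 1 using the first episode. With the convention $\hat{\gain}_\arm = 1$ on unvisited arms, the first episode is entered with $\visits_\arm(t_k) = 0$. Its end count is at least $1$, and the next episode starts at a time $t_k$ with $\visits_\arm(t_k) = t_k - 1$. We therefore intend to index the bound by \emph{time} rather than by count: the $j$-th episode starts at a time $t_k \ge 2^{j-1}$, since each episode after the first at least doubles the elapsed number of pulls of a sole arm. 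The condition $t_k \le T-1$ then gives $j \le 1 + \log_2(T-1) \le \log_2(2T)$. We expect to need this same time-indexed refinement in Step 2 to make the constants match exactly for general $\abs{\arms}$. If the degenerate zero-pull episodes cannot be ruled out, we would instead restrict $K(T)$ to episodes with at least one pull, which is all that the later applications of this lemma use.
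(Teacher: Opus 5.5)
\textbf{Verdict: the proposal has a genuine gap.} It does not reach the stated constant, and for $\abs{\arms}=1$ no argument can, because the statement is false there.

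\textbf{What is correct.} Your Steps 1--2 follow the paper's route: count each arm's episodes through the doubling rule, then aggregate over arms using concavity of $\log_2$. They correctly give $K(T) \le \abs{\arms}\bigl(2 + \log_2(1 + T/\abs{\arms})\bigr)$.

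\textbf{Where it breaks.}
The last inequality of Step 2 drops a $1$. From $1 + T/\abs{\arms} \le T$ you only get $\abs{\arms}(2 + \log_2 T) = \abs{\arms}\log_2(4T)$. Reaching $\abs{\arms}(1+\log_2 T)$ would need $1 + T/\abs{\arms} \le T/2$, which never holds for $\abs{\arms}=2$.

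Step 3 rests on $t_j \ge 2^{j-1}$, which is false. For a single arm the count at $t_k$ is $t_k - 1$, so it is $t_k - 1$ that doubles. With trivial epochs ($\activationset = [0,1]$) this gives $t_{k+1} = 2t_k - 1$, so episodes start at $1, 2, 3, 5, 9, \ldots$, and already $t_3 = 3 < 4$.

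This example also shows the one-arm statement is false. At $T = 6$, the episodes starting at $1, 2, 3, 5 \le T-1$ give $K(6) = 4 > \log_2 12$. More generally, for this instance and $T \ge 3$, $K(T) = 2 + \lfloor \log_2(T-2) \rfloor$, which exceeds $\log_2(2T)$ at every $T = 2^m + 2$ with $m \ge 2$. You are right that the natural logarithm is hopeless, but base $2$ does not rescue the one-arm case either.

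What you did prove, $K(T) = \OH(\abs{\arms}\log T)$, is all the later applications use, since the lemma only enters inside $\OH(\cdot)$ bounds.

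\textbf{Comparison with the paper's proof.} The paper does not deliver the exact constant either. It asserts $\visits_\arm(t^\arm_\ell) \ge 2^{\ell-1} - 1$, which fails at $\ell = 3$: an arm's third episode can start with count $2$. Its per-arm bound $1 + \log_2(1 + \visits_\arm(T))$ is violated by the same example at $T = 6$. Your count $2^{j-2}$ is the correct one. The paper's final step $\log_2(1+x) \le \log(2x)$ also only holds in base $2$.

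\textbf{How to get the constant for $\abs{\arms} \ge 2$.} Here the stated bound is true, but the missing idea is not time indexing.
\begin{itemize}
\item Let $t_\star$ be the start of the last counted episode and $\arm_\star := \Arm(t_\star)$.
\item Every arm $\arm \ne \arm_\star$ has \emph{completed} its $j_\arm$ counted episodes by $t_\star$. So $\visits_\arm(t_\star) \ge 2^{j_\arm - 1}$, rather than only $2^{j_\arm - 2}$.
\item For $\arm_\star$ itself, $2^{j_{\arm_\star} - 1} \le \max\{1, 2\visits_{\arm_\star}(t_\star)\}$.
\item The counts at $t_\star$ sum to $t_\star - 1 \le T-2$. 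Hence $\sum_\arm 2^{j_\arm - 1} \le 2T - 3 + (\abs{\arms}-1)/2$, which is at most $\abs{\arms}T$ when $\abs{\arms} \ge 2$.
\item Jensen then gives exactly $K(T) \le \abs{\arms}(1 + \log_2 T) = \abs{\arms}\log_2(2T)$.
\end{itemize}

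\textbf{Zero-pull episodes.} These cannot be ``absorbed''. Under the literal pseudocode such an episode leaves the data unchanged and recurs indefinitely. The one-pull convention must simply be imposed, as your fallback suggests.
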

    \begin{proof}
        Given an arm $\arm \in \arms$, let $\mathcal{K}_\arm (T) := \braces{k \ge 1: t_k \le T-1, \Arm(t_k) = \arm}$ be the set of episodes prior to time $T$ where $\arm$ is pulled.
        Denote $(t_\ell^\arm)_{\ell \ge 1}$ the enumeration of episodes initial times where arm $\arm$ is pulled, i.e., $t^\arm_{\ell+1} := \inf \braces{t_k > t_\ell^\arm : \Arm(t_k) = \arm}$.
        Thanks to the Doubling Trick rule, we have $\visits_\arm (t_\ell^\arm) \ge 2^{\ell-1} - 1$.
        So,
        \begin{equation*}
            \abs{\mathcal{K}_\arm (T)}
            \le
            1 + 
            \log_2 \parens*{
                1 + \visits_\arm (T)
            }
            .
        \end{equation*}
        Summing over arms, we obtain:
        \begin{equation}
            K(T)
            \le 
            \abs{\arms}
            + 
            \sum_{\arm \in \arms}
            \log_2 \parens*{
                1 + \visits_\arm (T)
            }
            =: 
            \abs{\arms}
            +
            \psi(\visits (T)).
        \end{equation}
        Note that the above function $\psi : \RR_+^\arms \to \RR_+$ is the parallel sum of the concave increasing function $x \mapsto \log_2 (1+x)$.
        The maximum is reached when $\visits_\arm (T) = \visits_{\arm'}(T)$ for all $\arm, \arm' \in \arms$.
        We conclude using that $\sum_{\arm \in \arms} \visits_\arm (T) = T$ and that $\log_2(1+x) \le \log(2x)$ for $x \ge 1$.
    \end{proof}
    
    \subsubsection{About the number of switches}

    In the previous paragraph, we have shown that the number of episodes is $\OH(\log(T))$.
    Because the number of switches is bounded by the number of episodes, the number of switches is $\OH(\log(T))$.
    However, this bound proves to be insufficient to improve the instance dependent regret bounds under $\abs{\arms^*}=1$ in \Cref{theorem_complete_upperbound_without_f_diverging}.
    To that end, we will use the more precise result below.

    \begin{lemma}[Number of switches under Doubling Trick]
    \label[lemma]{lemma_switches_doubling_trick}
        After $n$ pulls of arm $\arm$, the number of switches of that arm is bounded by $\log_2 (1+n)$, i.e., for all $T \ge 1$,
        \begin{equation}
            \sum_{t=1}^{T-1}
            \indicator{\Arm(t) = \arm, \Arm(t+1) \ne \arm}
            =
            \sum_{i=1}^{\visits_\arm (T)}
            \indicator{\tau_i^\arm + 1 \ne \tau_{i+1}^a}
            \le
            \log_2 \parens*{1 + \visits_\arm (T)}
            .
        \end{equation}
    \end{lemma}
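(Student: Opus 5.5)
The plan is to analyse the episode structure of \algname{} restricted to arm $\arm$: enumerate the episodes in which $\arm$ is played as $\ell = 1, 2, \ldots$, with starting times $t_\ell^\arm$. The first identity in the statement is bookkeeping. A switch off arm $\arm$ at time $t$, i.e.\ $\Arm(t) = \arm$ and $\Arm(t+1) \ne \arm$, happens exactly when some pull $\tau_i^\arm = t$ is not immediately followed by another pull of $\arm$. This is the same as $\tau_i^\arm + 1 \ne \tau_{i+1}^\arm$. Summing over $i \le \visits_\arm(T)$ gives the equality, since $\tau_i^\arm \le T-1$ exactly when $i \le \visits_\arm(T)$.

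For the inequality, the key observation is that every switch off $\arm$ is the end of an episode of $\arm$. Because the episode containing pull $i$ is the last one and is still ongoing, the number of switches up to time $T$ is at most the number $L$ of episodes of $\arm$ that have been \emph{completed} before time $T$.

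I then control the visit counts through the doubling trick in the while-loop of \Cref{algo:UCB-POMB}. An episode of $\arm$ started with count $n_\ell := \visits_\arm(t_\ell^\arm)$ can only end once $\visits_\arm(t) \ge 2 n_\ell$. Moreover, an episode started with $n_\ell = 0$ ends after at least one pull. Hence the counts after completed episodes satisfy $n_{\ell+1} \ge \max\braces{2 n_\ell, 1}$, which gives $1 + n_{\ell+1} \ge 2(1 + n_\ell)$ whenever $n_\ell \ge 1$.

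By induction, after $L$ completed episodes we get $1 + \visits_\arm(T) \ge 2^{L}$, possibly up to a factor depending on how the first episode with $n_1 = 0$ is handled. Taking $\log_2$ then yields $L \le \log_2(1 + \visits_\arm(T))$.

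The main obstacle is this off-by-one at the start. With $n_1 = 0$ the first episode only guarantees one pull, so the bound must be checked as $1 + \visits \ge 2^L$ exactly: $L = 1$ needs $\visits \ge 1$, $L = 2$ needs $\visits \ge 2$, and $L = 3$ needs $\visits \ge 4$. In general $\visits \ge 2^{L-1} \ge 2^L - 1$ holds for $L \ge 1$. This follows from $n_2 \ge 1$ and $n_{\ell+1} \ge 2 n_\ell$ for $\ell \ge 2$, which give $\visits \ge 2^{L-1}$. It also matters that decision-epoch delays only lengthen episodes and never shorten them, so they do not affect the argument.
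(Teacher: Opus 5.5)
Your bookkeeping is fine: the first identity holds, and the switches off $\arm$ are bounded by the number $L$ of completed episodes of $\arm$. Your recursion $n_2 \ge 1$, $n_{\ell+1} \ge 2 n_\ell$ for $\ell \ge 2$ is also fine. It is in fact more faithful to Algorithm~\ref{algo:UCB-POMB} than the paper's proof, because it lets decision-epoch delays lengthen episodes.

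The gap is in the final step. The recursion gives $\visits_\arm(T) \ge 2^{L-1}$, but your claim $2^{L-1} \ge 2^L - 1$ is false for every $L \ge 2$. For the same reason, $n_{\ell+1} \ge 2 n_\ell$ only gives $1 + n_{\ell+1} \ge 2(1+n_\ell) - 1$, not $1 + n_{\ell+1} \ge 2(1+n_\ell)$. Your small cases are also miscomputed. $L = 2 \le \log_2(1+\visits_\arm(T))$ requires $\visits_\arm(T) \ge 3$, and $L = 3$ requires $\visits_\arm(T) \ge 7$, whereas the recursion only guarantees $2$ and $4$. What your argument actually proves is $L \le 1 + \log_2 \visits_\arm(T)$ whenever $\visits_\arm(T) \ge 1$.

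This cannot be patched: your recursion is tight, and the statement is off by one for the algorithm's doubling rule $\visits_\arm(t) \ge 2 \visits_\arm(t_k)$. Take trivial decision epochs ($\activationset = [0,1]$) and two arms $\arm, \arm'$. Consider the run $\Arm(1) = \arm$, $\Arm(2) = \arm'$, $\Arm(3) = \arm$ (the episode started at count $1$ stops at count $2$), $\Arm(4) = \arm'$. This is a possible run of the algorithm: the episode lengths are forced by the doubling rule, and the arms chosen at $t = 3, 4$ depend only on index comparisons. It has two switches off $\arm$ before $T = 5$ with $\visits_\arm(5) = 2$, while $\log_2 3 < 2$.

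The paper's proof reaches $\log_2(1 + \visits_\arm(T))$ only by asserting the block structure $\tau^\arm_{2^i+j} = \tau^\arm_{2^i} + j$, i.e.\ blocks $\{1\}, \{2,3\}, \{4,\ldots,7\}, \ldots$, and it ignores delays. That structure corresponds to doubling $1 + \visits_\arm$, that is, stopping once $\visits_\arm(t) \ge 2\visits_\arm(t_k) + 1$. Without delays, the rule of Algorithm~\ref{algo:UCB-POMB} instead produces the blocks $\{1\}, \{2\}, \{3,4\}, \{5,\ldots,8\}, \ldots$.

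The correct conclusion is the bound $1 + \log_2 \visits_\arm(T)$ you actually derived. It is consistent with the bound $1 + \log_2(1+\visits_\arm(T))$ on episodes of $\arm$ in \Cref{lemma_number_of_episodes}. The extra additive constant is harmless downstream, where only $\OH(\log)$-rarely switching is used.
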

    
    \begin{proof}
        By design of the algorithm, arms are played in contiguous segments of doubling durations. 
        So, $\tau_{2^i+j}^\arm = \tau_{2^i}^\arm + j$ for all $i \ge 0$ and $j \in \braces{0, \ldots, 2^j-1}$.
        In particular, there are at most $i$ switches of arm $\arm$ prior to time $\tau_{2^i+j}^\arm$. 
        We conclude using $\visits_\arm (\tau_{k}^\arm) = k - 1$.
    \end{proof}

    \begin{lemma}[Almost-telescopic bias sum with non-consecutive pulls ]
    \label[lemma]{lemma_bound_nearly_telescopic_bias}
        Let $(e_{\state})_{\state \in \states}$ be the canonical basis of $\RR^\states$.
        Recall that $(\tau^\arm_k)$ denotes the enumeration of pulls of arm $\arm$, that is, $\tau^\arm_1 := \inf \braces{t \ge 1: \Arm(t) = \arm}$ and $\tau^\arm_{k+1} := \inf \braces{t > \tau^\arm_k: \Arm(t) = \arm}$.
        For every arm $\arm \in \arms$ and all $n \ge 1$ we have 
        \[
            \abs*{
                \sum_{k=1}^{n}
                \parens*{ 
                    \unit_{\State(\tau_k^\arm)}
                    - \unit_{\State(\tau_k^\arm+1)}
                } \bias_\arm
            }
            \le
            \vecspan(\bias_\arm)
            \parens*{
                1
                + 
                \sum_{k=1}^{n}
                \indicator{
                    \tau_k^\arm + 1 \ne \tau_{k+1}^\arm
                }
            }
        \]
    \end{lemma}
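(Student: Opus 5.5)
The plan is to regroup the sum into maximal runs of consecutive pulls of arm $\arm$; inside a run the sum telescopes exactly, and each boundary between runs contributes at most one bias span. Say a pull index $k \in \braces{1, \ldots, n}$ is a \emph{break} if $\tau_k^\arm + 1 \ne \tau_{k+1}^\arm$, and call $(k_j)_{j \ge 1}$ the increasing enumeration of breaks in $\braces{1, \ldots, n-1}$, together with $k_0 := 0$ and a final index $n$. Between two consecutive breaks, the pulls $\tau_{k_{j-1}+1}^\arm, \ldots, \tau_{k_j}^\arm$ are consecutive times, so $\State_\arm(\tau_k^\arm+1) = \State_\arm(\tau_{k+1}^\arm)$ for every non-break index $k$ in that range.

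On one maximal run $\braces{k_{j-1}+1, \ldots, k_j}$, the identity $\State_\arm(\tau_k^\arm+1) = \State_\arm(\tau_{k+1}^\arm)$ makes the sum telescope:
\begin{equation*}
    \sum_{k=k_{j-1}+1}^{k_j}
    \parens*{
        \unit_{\State(\tau_k^\arm)}
        - \unit_{\State(\tau_k^\arm+1)}
    } \bias_\arm
    =
    \bias_\arm(\State_\arm(\tau_{k_{j-1}+1}^\arm))
    - \bias_\arm(\State_\arm(\tau_{k_j}^\arm+1)).
\end{equation*}
Each such difference has absolute value at most $\vecspan(\bias_\arm)$. So the absolute value of the whole sum is at most $\vecspan(\bias_\arm)$ times the number of runs. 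The run count is one more than the number of breaks strictly before $n$, hence at most $1 + \sum_{k=1}^{n} \indicator{\tau_k^\arm + 1 \ne \tau_{k+1}^\arm}$. This is exactly the claimed bound.

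The only delicate point is the bookkeeping at the last run. Index $n$ itself may or may not be a break, and $\tau_{n+1}^\arm$ may be infinite. Both cases are absorbed by the ``$+1$'' and by counting breaks up to $n$ inclusive, which can only overcount. One should also note that $\State$ means $\State_\arm$ throughout, and that the argument is pathwise: no expectation or Markov structure is used.
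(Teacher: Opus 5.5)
Your proof is correct and is essentially the paper's argument: the sum telescopes inside each block of consecutive pulls of arm $\arm$, and each break between blocks costs at most $\vecspan(\bias_\arm)$. The paper formalizes this differently, by adding and subtracting $\bias_\arm(\State_\arm(\tau^\arm_{k+1}))$ to obtain the global telescope $\bias_\arm(\State_\arm(\tau^\arm_1)) - \bias_\arm(\State_\arm(\tau^\arm_{n+1}))$ plus correction terms that vanish unless $\tau^\arm_k + 1 \ne \tau^\arm_{k+1}$; your run-by-run bookkeeping never refers to $\State_\arm(\tau^\arm_{n+1})$, so it handles the case where $\arm$ is never pulled again without needing any convention.
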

    \begin{proof}
        The sum of the LHS is \emph{nearly} a telescopic sum. More precisely, this sum \emph{is} telescopic inside any block of consecutive pulls. Only when there is a \emph{gap} ---  when the algorithm leaves arm $\arm$ and come back later --- can the telescopic sum ``break", and this break cost at most $\vecspan(\bias_\arm)$. Formally,
        \begin{align*}
            &\abs*{
                \sum_{k=1}^{n}
                \parens*{ 
                    \unit_{\State(\tau_k^\arm)}
                    - \unit_{\State(\tau_k^\arm+1)}
                } \bias_\arm
            }
            \\
            & =
            \abs*{
                \bias_\arm \parens*{\State(\tau_1^\arm)}
                -
                \bias_\arm \parens*{\State(\tau_{n+1}^\arm)}
                +
                \sum_{k=1}^{n}
                \parens*{ 
                    \unit_{\State(\tau_{k+1}^\arm)}
                    - \unit_{\State(\tau_k^\arm+1)}
                } \bias_\arm
            }
            \\
            & =
            \abs*{
                \bias_\arm \parens*{\State(\tau_1^\arm)}
                -
                \bias_\arm \parens*{\State(\tau_{n+1}^\arm)}
                +
                \sum_{k=1}^{n}
                \indicator{
                    \State(\tau_k^\arm + 1) \ne \State(\tau_{k+1}^\arm)
                }
                \parens*{
                    \unit_{\State(\tau_{k+1}^\arm)}
                    - \unit_{\State(\tau_k^\arm+1)}
                } \bias_\arm
            }
            \\
             & \le
            \vecspan(\bias_\arm)
            \parens*{
                1
                + 
                \sum_{k=1}^{n}
                \indicator{
                    \State(\tau_k^\arm + 1) \ne \State(\tau_{k+1}^\arm)
                }
            }
            \\
            & \le 
            \vecspan(\bias_\arm)
            \parens*{
                1
                + 
                \sum_{k=1}^{n}
                \indicator{
                    \tau_k^\arm + 1 \ne \tau_{k+1}^\arm
                }
            }
            .
            \qedhere
        \end{align*}
    \end{proof}

    \subsection{Auxiliary probability results}

    In paragraph, we recall the version of Hoeffding inequality that we use in the paper. 

    \begin{lemma}[Hoeffding Lemma, \cite{hoeffding_probability_1963}]
    \label[lemma]{lemma_hoeffding}
        Let $(X_k)$ be a sequence of independent random variables of common mean $\mu$ such that $a \le X_k \le b$ a.s.
        Denoting $\hat{\mu}_n := \frac 1n \sum_{k=1}^n X_k$ for empirical mean, for all $n \ge 1$, $x \ge 0$,
        \begin{equation*}
            \Pr \parens*{
                \hat{\mu}_n - \mu
                \ge 
                x
            }
            \le 
            \exp \parens*{
                - \frac{2 n x^2}{(b-a)^2}
            }
            .
        \end{equation*}
    \end{lemma}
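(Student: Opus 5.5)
The plan is the classical Chernoff route: bound the moment generating function, then optimize the exponent. Fix $\lambda > 0$ and write $S_n := \sum_{k=1}^n (X_k - \mu)$. Since $\{\hat{\mu}_n - \mu \ge x\} = \{S_n \ge n x\}$, Markov's inequality applied to $e^{\lambda S_n}$ gives
\begin{equation*}
    \Pr \parens*{\hat{\mu}_n - \mu \ge x}
    \le
    e^{-\lambda n x} \, \EE \brackets*{e^{\lambda S_n}}
    =
    e^{-\lambda n x} \prod_{k=1}^n \EE \brackets*{e^{\lambda (X_k - \mu)}},
\end{equation*}
where the factorization uses independence of the $X_k$. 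Everything then reduces to a one-variable bound on each factor, namely
\begin{equation*}
    \EE \brackets*{e^{\lambda (X - \mu)}} \le \exp \parens*{\tfrac{\lambda^2 (b-a)^2}{8}}
\end{equation*}
for any random variable $X \in [a,b]$ with mean $\mu$. This bound is Hoeffding's lemma proper.

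I expect this moment generating function bound to be the main obstacle. First, by convexity of $y \mapsto e^{\lambda y}$ on $[a,b]$,
\begin{equation*}
    e^{\lambda X} \le \frac{b - X}{b-a} e^{\lambda a} + \frac{X - a}{b-a} e^{\lambda b}.
\end{equation*}
Taking expectations gives $\EE[e^{\lambda X}] \le (1-p) e^{\lambda a} + p e^{\lambda b}$ with $p := (\mu - a)/(b-a) \in [0,1]$. Next, set $u := \lambda (b-a)$ and rewrite the logarithm of $e^{-\lambda\mu}\big((1-p)e^{\lambda a} + p e^{\lambda b}\big)$ as
\begin{equation*}
    \varphi(u) := -p u + \log \parens*{1 - p + p e^{u}}.
\end{equation*}
A direct computation gives $\varphi(0) = 0$ and $\varphi'(0) = 0$. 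Moreover, $\varphi''(u) = q(1-q)$ with $q := p e^u/(1-p+pe^u) \in [0,1]$, so $\varphi'' \le \tfrac14$. Taylor's theorem with Lagrange remainder then yields $\varphi(u) \le u^2/8$, which is the claimed bound. The only care needed is that the Taylor expansion is done in $u$ rather than $\lambda$, so that the factor $(b-a)^2$ appears correctly.

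Plugging back in, we obtain $\Pr(\hat{\mu}_n - \mu \ge x) \le \exp\big(-\lambda n x + n\lambda^2 (b-a)^2/8\big)$ for every $\lambda > 0$. Minimizing the exponent in $\lambda$ gives $\lambda = 4x/(b-a)^2$, and substituting this value produces exactly $\exp\big(-2 n x^2/(b-a)^2\big)$.

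As a remark on how the lemma is used in the paper: it is applied in \Cref{lemma_gain_concentration_per_sample} to the martingale noise $\noise_{\arm,n}$, where the increments are not independent. The same argument covers that case (Azuma--Hoeffding). One conditions on $\History(\tau_k^\arm)$ and applies the moment generating function bound conditionally, since each increment is bounded in an interval of length $1 + \vecspan(\bias_\arm)$ and has zero conditional mean. The product then becomes an iterated tower-property bound, with the same final exponent.
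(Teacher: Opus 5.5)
Your proof is correct and matches the source. The paper gives no proof of its own and simply cites \cite{hoeffding_probability_1963}; your argument (Chernoff bound, then the convexity bound, then the estimate $\varphi'' \le \tfrac14$ for $\varphi(u) = -pu + \log(1-p+pe^u)$, then optimizing $\lambda$) is exactly the classical argument from that reference. Your closing remark is also apt: the paper invokes this independent-variable lemma for the noise in \Cref{lemma_gain_concentration_per_sample}, where independence fails. The conditional (Azuma--Hoeffding) version you sketch is what actually justifies that step, applied to increments with zero conditional mean lying in a predictable interval of length $1+\vecspan(\bias_\arm)$.
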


    \ifSubfilesClassLoaded{
        
        \bibliographystyle{plainnat}
        \bibliography{biblio}
    }{}
    
\end{document}

\numberwithin{theorem}{section}
\numberwithin{definition}{section}
\numberwithin{equation}{section}

\clearpage
\tableofcontents

\end{document}